\theoremstyle{plain}
\newtheorem{theorem}{Theorem}[section]
\newtheorem{proposition}[theorem]{Proposition}
\newtheorem{lemma}[theorem]{Lemma}
\newtheorem{corollary}[theorem]{Corollary}
\newtheorem{condition}[theorem]{Condition}
\theoremstyle{definition}
\newtheorem{definition}[theorem]{Definition}
\theoremstyle{remark}
\newtheorem{remark}[theorem]{Remark}
\newenvironment{Msg}[1]
  {\mdfsetup{
    frametitle={\colorbox{white}{\space \large #1\space}},
    innertopmargin=-3pt,
    innerbottommargin=7pt,
    innerrightmargin=7pt,
    innerleftmargin=7pt,
    frametitleaboveskip=-\ht\strutbox,
    frametitlealignment=\center,
    linewidth=1pt
    }
  \begin{mdframed}%
  }
{\end{mdframed}}
\icmltitlerunning{Provably Neural Active Learning Succeeds via Prioritizing Perplexing Samples}
\begin{document}
\twocolumn[
\icmltitle{Provably Neural Active Learning Succeeds via Prioritizing Perplexing Samples}
% \numberwithin{equation}{section}
% It is OKAY to include author information, even for blind
% submissions: the style file will automatically remove it for you
% unless you've provided the [accepted] option to the icml2024
% package.

% List of affiliations: The first argument should be a (short)
% identifier you will use later to specify author affiliations
% Academic affiliations should list Department, University, City, Region, Country
% Industry affiliations should list Company, City, Region, Country

% You can specify symbols, otherwise they are numbered in order.
% Ideally, you should not use this facility. Affiliations will be numbered
% in order of appearance and this is the preferred way.
% \icmlsetsymbol{equal}{*}

\begin{icmlauthorlist}
\icmlauthor{Dake Bu}{cityu}
\icmlauthor{Wei Huang*}{riken}
\icmlauthor{Taiji Suzuki}{utokyo,riken}
\icmlauthor{Ji Cheng}{cityu}
\icmlauthor{Qingfu Zhang}{cityu}
\icmlauthor{Zhiqiang Xu}{mbzuai}
\icmlauthor{Hau-San Wong*}{cityu}
%\icmlauthor{}{sch}
% \icmlauthor{Firstname8 Lastname8}{sch}
% \icmlauthor{Firstname8 Lastname8}{cityu,comp}
%\icmlauthor{}{sch}
%\icmlauthor{}{sch}
\end{icmlauthorlist}

\icmlaffiliation{cityu}{Department of Computer Science, City University of Hong Kong, Hong Kong SAR}
\icmlaffiliation{riken}{Center for Advanced Intelligence Project, RIKEN, Tokyo, Japan}
\icmlaffiliation{utokyo}{Department of Mathematical Informatics, the University of
 Tokyo, Tokyo, Japan}
\icmlaffiliation{mbzuai}{Mohamed bin Zayed University of Artificial Intelligence, Masdar, United Arab Emirates}

\icmlcorrespondingauthor{Wei Huang}{ wei.huang.vr@riken.jp}
\icmlcorrespondingauthor{Hau-San Wong}{ cshswong@cityu.edu.hk}

% You may provide any keywords that you
% find helpful for describing your paper; these are used to populate
% the "keywords" metadata in the PDF but will not be shown in the document
\icmlkeywords{Machine Learning, ICML}

\vskip 0.3in
]

% this must go after the closing bracket ] following \twocolumn[ ...

% This command actually creates the footnote in the first column
% listing the affiliations and the copyright notice.
% The command takes one argument, which is text to display at the start of the footnote.
% The \icmlEqualContribution command is standard text for equal contribution.
% Remove it (just {}) if you do not need this facility.
\begingroup
\makeatletter\def\Hy@Warning#1{}\makeatother\printAffiliationsAndNotice{}  
\endgroup
% \printAffiliationsAndNotice{}
% leave blank if no need to mention equal contribution
% \printAffiliationsAndNotice{\icmlEqualContribution} % otherwise use the standard text.
\begin{abstract}
Neural Network-based active learning (NAL) is a cost-effective data selection technique that utilizes neural networks to select and train on a small subset of samples. While existing work successfully develops various effective or theory-justified NAL algorithms, the understanding of the two commonly used query criteria of NAL: uncertainty-based and diversity-based, remains in its infancy.  In this work, we try to move one step forward by offering a unified explanation for the success of both query criteria-based NAL from a feature learning view. Specifically, we consider a feature-noise data model comprising easy-to-learn or hard-to-learn features disrupted by noise, and conduct analysis over 2-layer NN-based NALs in the pool-based scenario. We provably show that both uncertainty-based and diversity-based NAL are inherently amenable to one and the same principle, i.e., striving to prioritize samples that contain yet-to-be-learned features. We further prove that this shared principle is the key to their success-achieve small test error within a small labeled set. Contrastingly, the strategy-free passive learning exhibits a large test error due to the inadequate learning of yet-to-be-learned features, necessitating resort to a significantly larger label complexity for a sufficient test error reduction. Experimental results validate our findings.

\end{abstract}

\section{Introduction}
In the deep learning era, we witness the power of neural networks in representation learning. It is also well-known that their success relies on a substantial amount of data and extensive labeling efforts. On the other hand, active learning offers various approaches to select a small subset of unlabeled samples from a large pool of data for labeling and training, while achieving comparable generalization performance to learning on the entire dataset \cite{settles2009active, aggarwal2014active}. To enjoy the best of both worlds, people combine neural networks with active learning, giving rise to Neural Network-based Active Learning (NAL) or Deep Active Learning (DAL), such that over-parameterized neural models can work with limited size of labeled data. As summarized in \citet{takezoe2023deep}, NAL/DAL incorporates two primary criteria for querying (selecting) unlabeled samples: uncertainty-based \cite{roth2006margin, joshi2009multi} and diversity-based \cite{sener2017active, gissin2019discriminative}. Also, some studies leverage both criteria  to design NAL algorithms \cite{yin2017deep, shui2020deep}. \par
Notably, while various NAL algorithms, based on two query criteria, have achieved significant empirical success, they often come without  provable performance guarantees. To overcome this limitation, recent studies \cite{gu2014batch, gu2014online, wang2022deep} came up with theory-driven NAL algorithms. These studies reformulate the problem into a subset selection problem or multi-armed bandit problem, and then utilize theoretical analysis techniques to guarantee the test performance. However, the internal mechanism remains not well understood on why the two widely used query criteria in the NAL family work so well, which naturally leads us to the following questions.\par

\begin{Msg}{Essential Questions}
    1. What is the theoretical rationale  behind the success of the two query criteria-based NAL algorithms, namely uncertainty-based and diversity-based? \\
    2. Whether and how do the two query criteria of NAL connect to each other intrinsically? 
\end{Msg}

\subsection{Our Contribution}

To answer  the above questions, in this work, we delve into the \textbf{feature learning dynamics} of NAL algorithms. To start with, we draw inspiration from the data models in \citet{zou2023benefits, allenzhu2023understanding, lu2023benign} that consist of multiple task-related feature patches and noise patches with varying strengths and frequencies, similar to what is observed in real-world imbalanced datasets, and conjecture that successful NAL algorithms are able to ensure adequate learning of all types of task-related features.\par

In this spirit, we adopt a multi-view feature-noise data model that comprises two main components: i) easy-to-learn (i.e., strong $\&$ common) features or hard-to-learn (i.e., weak $\&$ rare) features, and ii) noise. In Figure \ref{fig:lions}, the easy-to-learn features are exemplified by the frontal male lions with brown fur in the first row, given their common and easily identifiable lion traits, while lions in all the other rows can be characterized as the hard-to-learn features since they exhibit distinctive poses, colors, ages, races, fur patterns, and even heterogeneity. Hard-to-learn features are less common in the dataset and correspond to weakly recognizable lion traits, compared to the easy-to-learn features.\par
\begin{figure}[H]
    \centering
    \includegraphics[width=\linewidth]{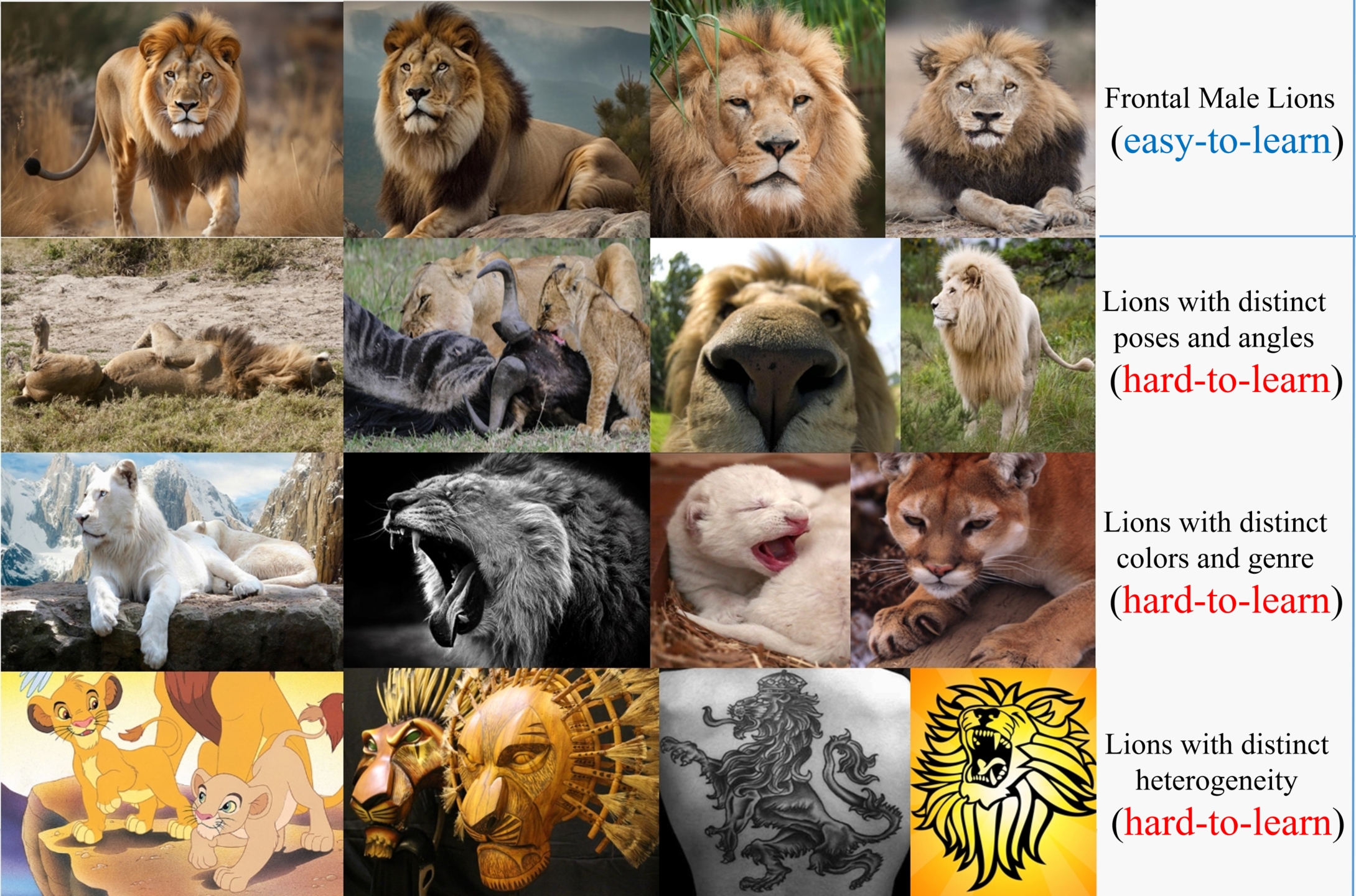}
    \caption{Lions in real-world dataset.}
    \label{fig:lions}
\end{figure}
Under our data model, we reformulate two representative NAL algorithms, i.e., Uncertainty Sampling and Diversity Sampling, in a pool-based setting, corresponding to two query criteria, respectively. Both are built upon a two-layer ReLU convolutional neural network, and trained by gradient descent. In accordance with the principle of each approach family \cite{takezoe2023deep}, the proposed Uncertainty Sampling queries based on the lowest confidence \cite{lewis1994heterogeneous}, and Diversity Sampling queries based on the largest distance between feature representations of unlabeled samples in the pool and those of labeled data \cite{sener2017active}. \par

Over our data and algorithm models, our theory sheds light on the benefits of the two primary query criteria in the NAL family. Surprisingly, our analysis unveils that the success of both criteria-based NAL stems from their inherent shared principle, leading to a unified view. Specifically, we make the following contributions in this work.

\begin{itemize}

    \item We offer valuable insights that from a feature learning view, the two query criteria-based NAL can be \textbf{unified} as one family. We provably show that the two query criteria-based NAL share the same working principle, i.e., prioritizing \hyperlink{msg:perplexing}{\textbf{perplexing samples}}-samples with yet-to-be-learned features. Our analysis reveals that in our scenario, those yet-to-be-learned features are actually those weak $\&$ rare features.

    \item We elucidate a marked disparity in the generalization capabilities between passive learning and NAL algorithms. Our analysis suggest that, both NAL algorithms can learn weak $\&$ rare features adequately via prioritizing \hyperlink{msg:perplexing}{\textbf{perplexing samples}}, and thus achieve a small test error. Contrastingly, the strategy-free passive learning exhibits a large test error. The disparity can be intensified in some out-of-distribution cases. Our experimental study corroborates this finding.
    
    \item We further uncover why and to what extent the two query criteria can alleviate labelling effort. The key lies in NAL's ability to effectively query \hyperlink{msg:perplexing}{\textbf{perplexing samples}} in the training distribution. But in contrast, we find that the strategy-free passive learning requires a significantly larger label complexity to adequately learn all types of features.
     
\begin{Msg}{Perplexing Samples}\hypertarget{msg:perplexing}{}
    Samples in the sampling pool that possess yet-to-be-learned features. We prove that both Uncertainty Sampling and Diversity Sampling inherently strive to query them.
\end{Msg}
   
\end{itemize}

\subsection{Related Work}
\textbf{Neural Active Learning.} Neural Network-based Active Learning (NAL) is one of the core data selection automation techniques in the field of Data-centric approaches for AutoML and Computer Version. As summarized in recent surveys \cite{zhan2021comparative, zhan2022comparative, takezoe2023deep}, there are two main query criteria: uncertainty-based, which chooses samples that the neural models feel most uncertain about \cite{seung1992query, lewis1994heterogeneous, roth2006margin, joshi2009multi, houlsby2011bayesian, cai2013maximizing, yang2016active, kampffmeyer2016semantic,gal2017deep, wang2022boosting, Kye_2023_ICCV, Duan_2024_WACV, cho2024querying} and diversity(representative)-based that selects samples that diverse from labeled set in the feature space \cite{stark2015captcha, du2015exploring, wang2016cost, sener2017active, gissin2019discriminative, sinha2019variational, shui2020deep}. Also, many works combine the two query criteria into the sampling (querying) strategy through weighted-sum optimization \cite{yin2017deep} or two-stage optimization \cite{ash2019deep, zhdanov2019diverse, shui2020deep}. In addition, to develop reliable algorithms, several design methods with theoretical guarantees, including theories such as VC bound \cite{balcan2006agnostic, zhu2022active}, Logistic Bound \cite{gu2014batch}, Rademacher Complexity \cite{gu2014online, shui2020deep}, and Neural Tangent Kernel \cite{wang2021neural,mohamadi2022making,kong2022neural,wang2022deep, wen2023ntkcpl}. However, despite the development of numerous effective and theory-justified algorithms, the existing studies have not yet offered a comprehensive explanation for the underlying mechanisms of the two query criteria widely applied in NAL. Largely different from prior work, our work pioneeringly explore the theoretical aspect of the two criteria, via studying the \textbf{feature learning dynamic} in NAL algorithms.\par

\textbf{Feature Learning in Learning Theory.} Recent years witness an extensive body of research in learning theory on structured data from the perspective of feature learning \cite{li2018learning, karp2021local,allenzhu2023understanding,chen2022understanding,chen23brobust,chen2023understanding,chen2023clip,chen2023why, zou2023understanding,li2023theoretical,kou2023how, kou2023implicit, huang2023graph,huang2023understanding, chidambaram2023provably,deng2023robust}. The essence of this line-of-research is to explicitly study the learning progress of features and memorization degree of noise under different data and algorithm scenarios, which serves as an intermediate proxy to examine the convergence of training and 0-1 loss. Specifically, \citet{cao2022benign} demonstrate the occurrence of \textit{benign overfitting} in Convolutional Neural Network over linearly separable data under distinct conditions. Subsequently, \citet{kou2023benign} conduct similar results with ReLU activation, \citet{meng2023benign} further derive results over XOR data, \citet{zou2023benefits} reveal the benefits of Mixup training over linearly separable data with common and rare features, and \citet{lu2023benign} explore the phenomenon of \textit{benign oscillation} over linearly separable data with common $\&$ weak and rare $\&$ strong features. Our work extends the line of research by investigating the rationale behind the two primary criteria in NAL family, over both linearly and non-linearly separable data scenarios that include common $\&$ strong and rare $\&$ weak features. Our study focuses on characterizing the \textbf{feature learning dynamics} in NAL algorithms and providing a mathematical explanation for the benefits and inner relationship of the two primary query criteria of NAL.\par

\section{Problem Settings}
\textbf{Notations.} For $l_p$ norm we utilize $\| \cdot \|_p $ to denote its computation. Considering two series $a_n$ and $b_n$, we denote $a_n=O\left(b_n\right)$ if there exists positive constant $C>0$ and $N>0$ such that for all $n \geq N$, $\left|a_n\right| \leq C\left|b_n\right|$. Similarly, we denote $a_n =\Omega\left(b_n\right)$ if $b_n=O\left(a_n\right)$ holds, $ a_n=\Theta\left(b_n\right)$ if $a_n=O\left(b_n\right)$ and $a_n=\Omega\left(b_n\right)$ both hold, $c_n=O(a_n, b_n)$ if $c_n =O(\min\{a_n, b_n\})$ holds and  $c_n=\Omega(a_n,b_n)$ if $c_n=\Omega(\max \{a_n, b_n\})$ holds. To omit logarithmic terms, we apply the notations $\widetilde{O}(\cdot), \widetilde{\Omega}(\cdot)$, and $\widetilde{\Theta}(\cdot)$. Our $\mathbb{1}(\cdot)$ is to denote the indicator variable of an event. We say $y=\operatorname{poly}\left(a_1, \ldots, a_k\right)$ if $y=O\left(\max \left\{a_1, \ldots, a_k\right\}^D\right)$ for some $D>0$, and $b=\operatorname{polylog}(a)$ if $b=$ poly $(\log (a))$.\par

\subsection{Data Distribution}
In this study, our focus is on the pool-based selective sampling scenario, where the algorithms initially train the model using an initial labeled set and subsequently query a single batch of unlabeled samples from a large sampling pool. Then the algorithms would retrain the model again with fresh initialization. We denote the size of the initial labeled set as $n_0$, the querying (sampling) size for all querying algorithms as $n^*$ ($n^* = \Omega(n_0)>n_0$), and the size of the labeled set after querying as $n_1 = n_0 + n^*$. We also define $\widetilde{n}$ as the maximum size of the labeled set after querying, such that $n_1 \leq \widetilde{n}$. Moreover, we have the initial labeled set represented as $\mathcal{D}_{n_0} \mathrel{\mathop:}= \{ \mathbf{x}^{(i)} \}_{i=1}^{n_0}$, and the sampling pool denoted as $\mathcal{P}$. Both of them are synthesized from the same data distribution $\mathcal{D}$, which is specified as follows. \par
\begin{definition}
    Let $\boldsymbol{\mu}_1 \perp \boldsymbol{\mu}_2  \in \mathbb{R}^d$ be two fixed feature vectors. Each data point $(\mathbf{x},  y)$, where $\mathbf{x}$ contains two patches as $\mathbf{x}$=$[\mathbf{x}_1^{T},  \mathbf{x}_2^T]^T$ $\in$ $\mathbb{R}^{2d}$ and $y$ $\in \{ -1 , 1 \}$ are generated from the distribution $\mathcal{D}$:
    \begin{itemize}
        \item The ground truth label y is synthesized from a Rademacher distribution.
        \item \textbf{Noise Patch.} One patch of $\mathbf{x}$ is selected as a noise patch $\boldsymbol{\xi}$, synthesized from Gaussian distribution $N(\mathbf{0}, \sigma_p^2 \cdot \mathbf{I})$.
        \item \textbf{Feature Patch.} For a feeble $p$ satisfying $p < 0.5$, the remaining patch of $\mathbf{x}$ is selected as label-related feature patch, and with high probability (1-$p$) the feature patch is a strong feature $y \cdot \boldsymbol{\mu}_1$, while only with probability $p$ the feature patch is a weak feature $y \cdot \boldsymbol{\mu}_2$.
    \end{itemize}
    We assume the following about the feature norms: \footnote{The choices of $\|\boldsymbol{\mu}_l\|$ aim to prevent learning of features completely disrupted by noise, while amplifying the distinguishability of the strong feature patch compared to the weak one. Our theory allows for a broader range of parameter settings (see Appendix \ref{app:Discussion of data distribution under other conditions} for general cases), but for the sake of simplicity in presentation, we here chose a feasible one.}: $ \forall l \in \{1, 2\}, \|\boldsymbol{\mu}_{l} \|_2^2 = \Omega( \sigma_p^2 \log (n_0/\delta), \widetilde{n}^{-1}d \sigma_p^{4})$, $\|\boldsymbol{\mu}_1\|_2^4 =\Omega( \sigma_p^4 d n_{0}^{-1}) $ and  $\|\boldsymbol{\mu}_2 \|_2^4 = O( \sigma_p^4 d n_{0}^{-1})$. 
\label{Def3.1}\end{definition}
This feature-noise data model captures the structure of an image, as depicted in Figure \ref{fig:lions}, by incorporating task-oriented distinctive patterns (features) and background patterns (noise) with different frequencies and strengths. Same as the patches setting in \citet{zou2023benefits, allenzhu2023understanding, lu2023benign}, the weak feature patches are orthogonal to the strong feature patches in our setting, which is reasonable since the rare features appear largely different to the common ones. Worth noting that this type of data setting is common in the widely-recognized feature learning line-of-research \cite{allenzhu2023understanding,cao2022benign,kou2023benign,zou2023benefits,meng2023benign}. \citet{allenzhu2023understanding} justify this type of data settings as plausible theoretical setups by highlighting the common occurrence of multiple one-task-oriented features in the latent space of Resnet, as shown in their Figure 2-4, 9. Furthermore, recent empirical and theoretical studies indicate the orthogonal nature of different features within the latent space of ViT and LLM \cite{yamagiwa2023ICA, jiang2024origins}. To extend our contributions to more practical scenarios, we also conduct rigorous study and draw similar theoretical findings over a non-linearly separable, non-orthogonal data distribution - the XOR data defined in Definition \ref{def_XORdata} - and obtained similar results.

\subsection{Querying Algorithms}

\textbf{Neural Setting.} This work considers a two-layer ReLU CNN adopted in \citet{kou2023benign, meng2023benign, kou2023implicit, chen2023why} as the base neural network for querying algorithms. The CNN function $f(\mathbf{W}, \mathbf{x})$ is defined as $\sum_{j=\pm 1} j \cdot F_j(\mathbf{W}, \mathbf{x})$, with $F_j(\mathbf{W}, \mathbf{x})$ as
\[
 F_j(\mathbf{W}, \mathbf{x}) =\frac{1}{m} \sum_{r=1}^{m} \left[\sigma\left(\left\langle\mathbf{w}_{j, r}, y \cdot \boldsymbol{\mu}\right\rangle\right) 
+\sigma\left(\left\langle\mathbf{w}_{j, r}, \boldsymbol{\xi}\right\rangle\right)\right].
\]
where the second layer is fixed as $\pm 1/m$, $m$ is the number of neurons, $\sigma(z)=\max\{z, 0\}$ is ReLU function, $\mathbf{w}_{j, r} \in \mathbb{R}^d$ denotes the weights of the $r$-th neuron of $F_j$, $\mathbf{W}_j \in \mathbb{R}^{m\times d}$ collects the weights in $F_j$ and $\mathbf{W}$ collects all weights.\par

\textbf{Training Setting.} We utilize gradient descent to train the neural model. Denote $n$ as the size of current labeled training set, denoted as $\mathcal{D}=\left\{\left(\mathbf{x}^{(i)}, y_i\right)\right\}_{i=1}^n$ generated from $\mathcal{D}$ over $\mathbf{x} \times y$. We apply the empirical logist loss:
\begin{equation}
L_S(\mathbf{W})=\frac{1}{n} \sum_{i=1}^n \ell\left[y_i \cdot f\left(\mathbf{W}, \mathbf{x}^{(i)}\right)\right],
\label{Eq:training loss}\end{equation}
where $\ell(z)=\log (1 + \exp (-z))$. The gradient update of the filters in the first layer can be written as follows:
\begin{equation}
\begin{aligned}
\mathbf{w}_{j, r}^{(t+1)} & =\mathbf{w}_{j, r}^{(t)}-\eta \cdot \nabla_{\mathbf{w}_{j, r}} L_S\left(\mathbf{W}^{(t)}\right) \\
& =\mathbf{w}_{j, r}^{(t)}-\frac{\eta}{n m} \sum_{i=1}^n {\ell_i^{\prime}}^{(t)} \cdot \sigma^{\prime}\left(\left\langle\mathbf{w}_{j, r}^{(t)}, \boldsymbol{\xi}_i\right\rangle\right) \cdot j y_i \boldsymbol{\xi}_i\\
&-\frac{\eta}{n m} \sum_{l=1}^2 \sum_{i \in U^l} \ell_i^{(t)} \cdot \sigma^{\prime(t)}\left(\left\langle\mathbf{w}_{j, r}^{(t)}, y_i \boldsymbol{\mu}_l\right\rangle\right) \cdot  j \boldsymbol{\mu}_l,
\end{aligned}
\label{Eq:w update}\end{equation}
where $U^l= \{ \mathbf{x} \in \mathcal{D} \mid \mathbf{x}_{\text {signal part }}=\boldsymbol{\mu}_l\}$ denote as the set of indices of $\mathcal{D}$ where the data's feature patch is $\boldsymbol{\mu}_l$, ${\ell_i^{\prime}}^{(t)}$ denotes $\ell^{\prime}\left[ y_i \cdot f(\mathbf{W}^{(t)}, \mathbf{x}^{(i)}) \right]$. The initial values of all elements in $\mathbf{W}^{(0)}$ are generated from independent and identically distributed (i.i.d.) Gaussian distributions with mean 0 and variance $\sigma_0^2$. The querying algorithms would have the neural models retrained after a single querying with the same model initialization.\par
\textbf{Querying Setting.} During the querying stage, all the querying algorithms select $n^*$ new unlabeled samples from $\mathcal{P}$, where the pool size $\lvert \mathcal{P} \rvert$ satisfies $\lvert \mathcal{P} \rvert =\Omega(  p^{-1} \sigma_p^4 d  \|\boldsymbol{\mu}_2 \|_2^{-4}, p^{-1} \log (1/\delta))$\footnote{The choice on $\lvert \mathcal{P} \rvert$ is to ensure the sufficient presence of weak features in $\mathcal{P}$.}. The three querying algorithms differentiate from each other by their own sampling rules as below:
\begin{itemize}
\item \textbf{Random Sampling} (strategy-free passive learning) randomly selects $n^*$ new samples from $\mathcal{P}$.
\item \textbf{Uncertainty Sampling} (uncertainty-based NAL) selects top $n^*$ new samples from $\mathcal{P}$ based on the lowest Confidence Score at time step $t$. The Confidence Score $ C \left(\mathbf{W}, \mathbf{x}\right)$ measures the model's confidence in predicting the label of sample $\mathbf{x}$, defined as below:
\[
\begin{split}
    C\left(\mathbf{W}, \mathbf{x}\right)
    &=\max \Big\{\frac{1}{1+\exp (-y \cdot f(\mathbf{W}, \mathbf{x}))},\\ 
    &\phantom{=}1-\frac{1}{1+\exp (-y \cdot f(\mathbf{W}, \mathbf{x}))}\Big\},
\end{split}
\label{query:confidence_score}\]
which represents the probability of the predicted label $y$ of logistic loss. In our scenario, the proposed Uncertainty Sampling is actually equivalent to many well-known uncertainty-based approaches such as Least Confidence \cite{lewis1994heterogeneous}, Margin \citet{roth2006margin}, and Entropy methods \cite{joshi2009multi}, as discussed in Lemma \ref{lemma:Equivalance of score function} in Appendix \ref{App:Order def}.
\item \textbf{Diversity Sampling} (diversity-based NAL) selects the top $n^*$ new samples from $\mathcal{P}$ based on the highest Feature Distance at time step $t$. The Feature Distance $D\left(\mathbf{W}, \mathbf{x} \ \mid \mathcal{D}_{n_0}\right)$ measures the $l_p$ distance between sample $\mathbf{x}$ and $\mathcal{D}_{n_0}$ in feature space, specified as:
\[
D (\mathbf{W}, \mathbf{x} \ | \mathcal{D}_{n_0} ) =\| \mathbf{Z}(\mathbf{x}, t) - \displaystyle \underset{\mathbf{x}^{(i)} \in \mathcal{D}_{n_0}}{\mathbb{E}} \mathbf{Z}(  \mathbf{x}^{(i)}, t) \|_p,
\]
where the $\mathbf{Z}(\mathbf{x}, t)$ is defined as the sum of feature maps in the feature space of CNN:
\[
\mathbf{Z}(\mathbf{x}, t) = \sum_{j} ( \sigma(\langle \mathbf{W}_j^{(t)}, \mathbf{x}_1 \rangle)) + \sigma(\langle \mathbf{W}_j^{(t)}, \mathbf{x}_2 \rangle)).
\]
Specifically, Lemma \ref{lemma of comparison} reveals that in our scenario, the proposed Diversity Sampling is equivalent for all values of $p$ within the range of $[1, \infty)$. This implies that our metric can be various distance measure, including Euclidean, Manhattan, or Minkowski distance.
\end{itemize}
The newly acquired samples are provided to an oracle to obtain their ground truth labels, which are then added to the training set. The whole procedure of the three querying algorithms are shown in Algorithm \ref{Alg:NAL}.

\textbf{Testing Setting.} The model performances at initial stage (before querying) and stage after querying are all measured by test error on a test distribution $\mathcal{D}^*$:\par
\begin{equation}
L_{\mathcal{D}^*}^{0-1}(\mathbf{W})\mathrel{\mathop:}=\mathbb{P}_{(\mathbf{x}, y) \sim \mathcal{D}^*}[y \cdot f(\mathbf{W}, \mathbf{x}) < 0].
\label{Eq:test error}\end{equation}
It is important to note that $\mathcal{D}^*$ shares the same definition as stated in Definition \ref{Def3.1}. However, it can have any occurrence probability of the weak feature, denoted as $p^*$, without the limitation of $p^* < 0.5$ compared to the training distribution. Also, the test loss is defined as :
\[L_{\mathcal{D}^*}(\mathbf{W})\mathrel{\mathop:}=\underset{{(\mathbf{x}, y) \sim \mathcal{D}^*}}{\mathbb{E}} \ell[y \cdot f(\mathbf{W}, \mathbf{x})].\] 
\par 
\begin{algorithm}[tb]
\caption{Querying Algorithms}
\label{Alg:NAL}
\begin{algorithmic}[1]
\REQUIRE Neural Network $f(\cdot; \cdot)$, initial labeled set $ \mathcal{D}_{n_0} \mathrel{\mathop:}= \{ \mathbf{x}^{(i)} \}_{i=1}^{n_0} \subseteq \mathcal{D} $, sampling pool $\mathcal{P} \subseteq \mathcal{D}$, test distribution $\mathcal{D}^*$, sample size $n^*=\widetilde{n}-n_0$, $ \sigma_0$, $T$
\STATE Initialize Neural Network $f(\mathbf{W}^{(0)}; \cdot)$
\FOR{$t \leftarrow 1$ to $T$}
\STATE Train Neural Network over $\mathcal{D}_{n_0}$ by $L_S(\mathbf{W})$
\ENDFOR
\STATE Querying: Sample $n^*$ new samples from $\mathcal{P}$ based on particular rules. New samples $ \mathcal{D}_{n^*}$ are labeled by oracle and included to the new labeled set $ \mathcal{D}_{n_1} \mathrel{\mathop:}= \mathcal{D}_{n_0} \cup \mathcal{D}_{n^*}$
\STATE Initialize Neural Network $f(\mathbf{W}^{(0)}; \cdot)$
\FOR{$t \leftarrow 1$ to $T$}
\STATE Train Neural Network over $\mathcal{D}_{n_1}$ by $L_S(\mathbf{W})$
\ENDFOR
\STATE Test performance of Neural Network $f(\mathbf{W}^{(T)}; \cdot)$  over $\mathcal{D}^{*}$ and obtain $L_{\mathcal{D}^*}^{0-1}\left(\mathbf{W}^{(T)}\right)$
\STATE \textbf{return} $L_{\mathcal{D}^*}^{0-1}\left(\mathbf{W}^{(T)}\right)$
\end{algorithmic}
\end{algorithm}

\section{Theoretical Results}
For both the initialization stage and the second stage, we consider the learning period $0 \leq t \leq T^*$, where $T^*=\eta^{-1}$ poly $\left(\varepsilon^{-1}, d, n_0, m\right) \geq \widetilde{\Omega}\left(\eta^{-1} \varepsilon^{-1} m n_0 d^{-1} \sigma_p^{-2}\right)$ is the maximum admissible iterations for the initial stage. The following provides our main theories over linearly separable data. For non-linear XOR data, please refer to our similar theoretical results in Appendix \ref{Appdx:XOR}.\par
We first adopt \textit{signal-noise decomposition} techniques in \citet{cao2022benign} over $\mathbf{w}_{j, r}^{(t)}$. By the update rule in (\ref{Eq:w update}), we can derive that there exist unique coefficients $\gamma_{j, r, l}^{(t)}$ and $\rho_{j, r, i}^{(t)}$ such that
\begin{equation}
\mathbf{w}_{j, r}^{(t)}=\mathbf{w}_{j, r}^{(0)}+j \cdot  \sum_{l=1}^2 \gamma_{j, r, l}^{(t)} \cdot \dfrac{\boldsymbol{\mu}_l}{\|\boldsymbol{\mu}_l\|_2^{2}} +\sum_{i=1}^{n} \rho_{j, r, i}^{(t)} \cdot \dfrac{\boldsymbol{\xi}_i}{\|\boldsymbol{\xi}_i\|_2^{2}}
\label{eq:firstsnd}\end{equation}
The normalization factors $\|\boldsymbol{\mu}_l\|_2^{-2}$ and $\|\boldsymbol{\xi}_i\|_2^{-2}$ leads to $\langle \mathbf{w}_{j, r}^{(t)}, \boldsymbol{\mu}_l\rangle \approx \gamma_{j,r,l}^{(t)}, \langle \mathbf{w}_{j, r}^{(t)}, \boldsymbol{\xi}_i \rangle \approx \rho_{j, r, i}^{(t)}$. Importantly, $\gamma_{j,r,l}^{(t)}$ characterizes \textbf{the learning progress} of feature $\boldsymbol{\mu}_l$, and $\rho_{j, r, i}^{(t)}$ characterizes the degree of noise memorization. Geometrically, the $\gamma_{j,r,l}$ indicates how well the model filters integrate the low-dimensional patterns of the task-oriented features in its latent projection space, and $\rho_{j,r,i}$ quantifies the extent to which model filters memorize the high-dimensional complex noise. Then, by conducting a scale analysis of the two coefficients, we can then assess the cases where models mainly focus on capturing underlying patterns while avoiding excessive fitting of noise, which we refer to as \textit{benign overfitting}. Additionally, this analysis helps us identify situations of \textit{harmful overfitting}, where the models become overly complex, primarily memorizing noise and leading to poor generalization on new, unseen data.\par

Our findings then reveal that in our case, both the two heuristic NAL methods inherently amenable to query those data with yet-to-be-learned features (i.e., features that model exhibits low $\gamma_{j,r,l}$). Consequently, the NNs are enabled to sufficiently learn all types of features, and then exhibit \textit{benign overfitting} even in the case where the label complexity is quite limited.\par

To present our findings, we make the following assumptions.
\begin{condition} Suppose that:
\begin{enumerate}
    \item The initial training size $n_0$, the maximum admissible size after querying $\widetilde{n}$, and the width of neural network $m$ satisfy $n_0  = \Omega(\log (m/\delta),  p^{-1} \log (1/\delta))$, $ \widetilde{n} =O(p^{-1} \sigma_p^4 d \|\boldsymbol{\mu}_2 \|_2^{-4})$, $m = \Omega( \log (n_0/\delta))$.
    \item Dimension $d$ is sufficiently large:
    $\forall l \in \{ 1, 2 \}$, $d =\Omega(\widetilde{n}\sigma_p^{-2} \|\boldsymbol{\mu}_{l} \|_2^2 \log \left(T^*\right), \widetilde{n}^2 \log (\widetilde{n} m/\delta)(\log(T^*))^2)$.
    \item The standard deviation of Gaussian initialization $\sigma_0$ is appropriately chosen such that $\forall l \in \{ 1, 2 \}$, $\sigma_0 = O( \|\boldsymbol{\mu}_{l} \|_2^{-1} (\log (m/\delta)^{-1/2}), \sigma_p^{-1} d^{-1} \widetilde{n}^{1/2} )$. The learning rate of all algorithms $\eta$ satisfies that $\eta =O( \sigma_p^{-2} d^{-1} \widetilde{n} ,\sigma_p^{-2} d^{-3 / 2} \widetilde{n}^2 m (\log (\widetilde{n} / \delta))^{1/2})$.
\end{enumerate}\label{Con4.1}
\end{condition}

The condition on $n_0$ is to guarantee there exists enough strong features in the initial training set with probability at least $1 - O(e^{-n_0 p})$, while the condition on $\widetilde{n}$ prevents the final training size from being too large, even for passive learning to perform well with considerable chance. The requirement on $d$ ensures the problem is in a sufficiently overparameterized setting, as in prior works \cite{chatterji2021finite, cao2022benign, frei2022benign, kou2023benign, lu2023benign, chidambaram2023provably}. The conditions on $\sigma_0$ and $\eta$ guarantee that gradient descent can effectively minimize the empirical loss. A detailed discussions over parameter settings are provided in Appendix \ref{Appendix: Discussions on the Parameters}.

The following results illustrate the presence of \textit{benign overfitting} (i.e., small training loss and small test error) as well as \textit{harmful overfitting} (i.e., small training loss but large test error) in the three querying algorithms.

\begin{proposition}\textbf{(Before Querying)} At the initial stage before querying, $\forall \varepsilon>0$, under Condition \ref{Con4.1},  with probability at least $1-\delta$, there exists $t=\widetilde{O}\left(\eta^{-1} \varepsilon^{-1} m n_0 d^{-1} \sigma_p^{-2}\right)$, the followings hold for all of the three querying algorithms:
\begin{enumerate}
    \item The training loss converges to $\varepsilon$, i.e., $L_S\left(\mathbf{W}^{(t)}\right) \leq \varepsilon$.

    \item The test error remains at constant level, i.e., $L_{\mathcal{D}^*}^{0-1}\left(\mathbf{W}^{(t)}\right) =\Theta(1) \geq 0.12 \cdot p^*$.
\end{enumerate}
\label{Prop4.3}\end{proposition}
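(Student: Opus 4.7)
The plan is to reduce all three algorithms to a single analysis: before any query, Uncertainty, Diversity, and Random Sampling are identical, each running gradient descent on the same initial set $\mathcal{D}_{n_0}$. So I only need to analyze one training trajectory using the signal-noise decomposition \eqref{eq:firstsnd} and track the coefficients $\gamma_{j,r,l}^{(t)}$ and $\rho_{j,r,i}^{(t)}$ over time.

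First I would establish a composition lemma for $\mathcal{D}_{n_0}$: letting $n_0^{(l)} = |U^l|$, a Chernoff bound combined with $n_0 = \Omega(p^{-1}\log(1/\delta))$ from Condition \ref{Con4.1} gives, with probability at least $1-\delta$, $n_0^{(1)} = (1-\Theta(p)) n_0$ and $n_0^{(2)} = \Theta(n_0 p)$; so the initial labeled set is dominated by strong-feature points and contains only an $O(p)$-fraction of weak-feature points. Next, plugging \eqref{eq:firstsnd} into \eqref{Eq:w update} and controlling the cross inner-products $\langle \boldsymbol{\mu}_l, \boldsymbol{\xi}_i\rangle$, $\langle \boldsymbol{\xi}_i,\boldsymbol{\xi}_{i'}\rangle$ with Gaussian concentration (guaranteed by the overparameterization condition on $d$), I obtain nearly decoupled recursions: $\gamma_{j,r,1}^{(t)}$ grows at rate $\Theta(\eta \|\boldsymbol{\mu}_1\|_2^2)$ times the average loss derivative on $U^1$, $\gamma_{j,r,2}^{(t)}$ grows only at rate $O(\eta p \|\boldsymbol{\mu}_2\|_2^2)$ due to the rarity of weak-feature examples, and $\rho_{j,r,i}^{(t)}$ grows at rate $\Theta(\eta \sigma_p^2 d / n_0)$ per step.

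For Part 1, I would run a standard induction in the style of the benign-overfitting literature (Cao et al.\ 2022, Kou et al.\ 2023) maintaining scale invariants on $\gamma_{j,r,l}^{(t)}$ and $\rho_{j,r,i}^{(t)}$. These invariants, together with the decoupled recursions, give that at $t = \widetilde O(\eta^{-1}\varepsilon^{-1} m n_0 d^{-1}\sigma_p^{-2})$ every training margin $y_i f(\mathbf{W}^{(t)}, \mathbf{x}^{(i)})$ exceeds $\log(1/\varepsilon)$: strong-feature samples are fit by the rapidly-growing $\gamma_{j,r,1}^{(t)}$, while weak-feature samples are fit primarily by their own noise memorization $\rho_{j,r,i}^{(t)}$ since $\gamma_{j,r,2}^{(t)}$ is still too small. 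Summing the logistic-loss upper bound yields $L_S(\mathbf{W}^{(t)}) \leq \varepsilon$. For Part 2, I would restrict to a test point $(\mathbf{x}, y)\sim\mathcal{D}^*$ whose feature patch is $\boldsymbol{\mu}_2$, an event of probability $p^*$. Two facts drive the lower bound: (a) by the upper bound on $\widetilde n$ and the smallness of $\|\boldsymbol{\mu}_2\|_2$ from Condition \ref{Con4.1}, $\gamma_{j,r,2}^{(t)} = O(1)$ remains too small to dominate the logit; (b) on a fresh independent $\boldsymbol{\xi}$, the remaining projection $\sum_i \rho_{j,r,i}^{(t)} \langle \boldsymbol{\xi}_i, \boldsymbol{\xi}\rangle/\|\boldsymbol{\xi}_i\|_2^2$ has a roughly symmetric $O(1)$-magnitude distribution. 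A Gaussian anti-concentration argument then yields $\Pr[y f(\mathbf{W}^{(t)},\mathbf{x}) < 0 \mid \text{feature}=\boldsymbol{\mu}_2] \geq 0.12$, hence $L_{\mathcal{D}^*}^{0-1}(\mathbf{W}^{(t)}) \geq 0.12 p^*$.

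The main obstacle will be keeping the upper bound on $\gamma_{j,r,2}^{(t)}$ tight while simultaneously forcing training loss convergence: weak-feature learning is not negligible on training samples (it does help fit them), so it must be carefully separated from noise memorization, since only the former generalizes to fresh weak-feature test points. This requires a coupled induction tracking the relative scales of $\gamma_{j,r,2}^{(t)}$, $\rho_{j,r,i}^{(t)}$, and all cross-inner-product error terms, analogous to the invariant-maintenance arguments in Kou et al.\ 2023, but here tailored to the imbalanced common-strong/rare-weak data model and pushed to a two-sided bound rather than a one-sided generalization guarantee.
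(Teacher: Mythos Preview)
Your plan for Part~1 is essentially the paper's: reduce all three algorithms to the single initial-stage trajectory, use the signal--noise decomposition \eqref{eq:firstsnd}, and run the Cao--Kou style two-stage induction to get $L_S(\mathbf{W}^{(t)})\leq\varepsilon$ at the stated time. No issues there.

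For Part~2, your sketch diverges from the paper at the key step. You write that the fresh-noise projection ``has a roughly symmetric $O(1)$-magnitude distribution'' and then invoke ``a Gaussian anti-concentration argument'' to get $\geq 0.12$. The paper does \emph{not} use a generic anti-concentration bound, because the relevant quantity $g(\boldsymbol{\xi})=\sum_r\sigma(\langle\mathbf{w}_{-y,r}^{(t)},\boldsymbol{\xi}\rangle)-\sum_r\sigma(\langle\mathbf{w}_{y,r}^{(t)},\boldsymbol{\xi}\rangle)$ is a \emph{nonlinear} (ReLU) function of $\boldsymbol{\xi}$, and standard Gaussian anti-concentration does not directly yield a dimension-free constant for such functionals. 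Instead, the paper (following Kou et al.\ 2023, Lemma~5.8) constructs a short shift vector $\mathbf{v}$ with $\|\mathbf{v}\|_2\leq 0.02\,\sigma_p$ such that
\[
\sum_{j'\in\{\pm 1\}}\bigl[g(j'\boldsymbol{\xi}+\mathbf{v})-g(j'\boldsymbol{\xi})\bigr]\;\geq\;4C_6\max_{j}\sum_r\gamma_{j,r,2}^{(t)},
\]
and then applies the pigeonhole principle: at least one of $\{\pm\boldsymbol{\xi},\pm\boldsymbol{\xi}+\mathbf{v}\}$ must lie in the ``bad'' set $\Omega_{\boldsymbol{\xi}}$. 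A TV-distance bound between $\mathcal{N}(\mathbf{0},\sigma_p^2\mathbf{I})$ and $\mathcal{N}(\mathbf{v},\sigma_p^2\mathbf{I})$ then converts this into $4P(\Omega_{\boldsymbol{\xi}})\geq 1-2\cdot 0.02$, i.e.\ $P(\Omega_{\boldsymbol{\xi}})\geq 0.24$, which after the factor $1/2$ from the label symmetry gives the stated $0.12$. Your proposal would need to replace the vague anti-concentration step by this shift-and-pigeonhole device (or something equivalent that handles the ReLU) to actually close the argument and recover the constant.
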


Proposition \ref{Prop4.3} outlines the scenarios of \textit{harmful overfitting} for all algorithms at the initial stage, which is not a surprise since the initial size $n_0$ is limited and always insufficient for adequate learning. Subsequently, the following lemma uncovers a crucial finding regarding the querying stage.

\begin{proposition}
    \textbf{(Querying Stage)} During Querying, under the same conditions as Proposition \ref{Prop4.3}, if\footnote{We can relax the requirement for the discrepancy of feature norms, as discussed in Appendix \ref{app:Discussion of data distribution under other conditions}. The specific choice made in our presentation was for the sake of simplicity and clarity.} $  \|\boldsymbol{\mu}_1\|_2^2 -  \|\boldsymbol{\mu}_2\|_2^2 = \Omega( {\sigma_p}^{2} (d n_0^{-1} \log (m/\delta^{\prime}))^{1/2})$, with probability at least $1-\Theta(\delta+\delta^{\prime})$, both Uncertainty Sampling and Diversity Sampling pick $n^*$ samples that exhibit lowest $\displaystyle \underset{j,r}{\mathbb{E}} \gamma_{j,r,l}^{(t)}$. 
\label{prop:sample stage: learning on sample}\end{proposition}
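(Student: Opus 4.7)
The plan is to reduce both querying criteria to a common object, namely the per-neuron feature signal $\gamma_{j,r,l}^{(t)}$ associated with the feature patch of a pool sample, and then exploit a uniform gap $\gamma_{j,r,1}^{(t)}>\gamma_{j,r,2}^{(t)}$ that is maintained throughout the initial-training window used in Proposition \ref{Prop4.3}. Starting from the decomposition (\ref{eq:firstsnd}), the near-orthogonality between $\boldsymbol{\mu}_1,\boldsymbol{\mu}_2$ and an unseen pool noise $\boldsymbol{\xi}$, which is a standard high-probability event under Condition \ref{Con4.1}, yields
\[
\langle\mathbf{w}_{j,r}^{(t)},\boldsymbol{\mu}_l\rangle=j\,\gamma_{j,r,l}^{(t)}+\mathrm{err}_1,\qquad
\langle\mathbf{w}_{j,r}^{(t)},\boldsymbol{\xi}\rangle=\mathrm{err}_2,
\]
with $\mathrm{err}_1=\widetilde{O}(\sigma_0\|\boldsymbol{\mu}_l\|_2)$ and $\mathrm{err}_2=\widetilde{O}(\sigma_0\sigma_p\sqrt{d}+n_0^{-1/2}\sigma_p\sqrt{d}\cdot\max_i\rho_{j,r,i}^{(t)})$. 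A preliminary step then establishes a uniform lower bound on $\min_r\gamma_{j,r,1}^{(t)}-\max_r\gamma_{j,r,2}^{(t)}$: by the signal part of the update rule (\ref{Eq:w update}), each feature coefficient grows at a rate proportional to (fraction of data carrying $\boldsymbol{\mu}_l$)$\cdot\|\boldsymbol{\mu}_l\|_2^2$, so both the $(1-p)/p$ frequency imbalance and the norm-discrepancy assumption $\|\boldsymbol{\mu}_1\|_2^2-\|\boldsymbol{\mu}_2\|_2^2=\Omega(\sigma_p^2\sqrt{dn_0^{-1}\log(m/\delta')})$ conspire to force the gap to strictly dominate $\mathrm{err}_1+\mathrm{err}_2$.

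For Uncertainty Sampling, because $C(\mathbf{W},\mathbf{x})$ is monotonically decreasing in $y\cdot f(\mathbf{W},\mathbf{x})$ (Lemma \ref{lemma:Equivalance of score function}), the confidence ranking of the pool equals the reverse ranking under $y\cdot f(\mathbf{W}^{(t)},\mathbf{x})$. Substituting the decomposition and noting that the $j=-y$ summand of the signal ReLU vanishes gives
\[
y\cdot f(\mathbf{W}^{(t)},\mathbf{x})=\frac{1}{m}\sum_{r=1}^{m}\gamma_{y,r,l}^{(t)}+\widetilde{O}(\mathrm{err}_1+\mathrm{err}_2),
\]
where $l$ denotes the feature carried by $\mathbf{x}$. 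A union bound over the $|\mathcal{P}|$ pool samples, valid thanks to $d=\Omega(\widetilde{n}^2\log(\widetilde{n}m/\delta)(\log T^*)^2)$, keeps the error term uniformly below the gap established above, so every $\boldsymbol{\mu}_2$-sample has a strictly smaller $y\cdot f$ than every $\boldsymbol{\mu}_1$-sample. The pool-size hypothesis $|\mathcal{P}|=\Omega(p^{-1}\sigma_p^4 d\|\boldsymbol{\mu}_2\|_2^{-4})$ guarantees at least $n^*$ weak-feature samples are present in $\mathcal{P}$ with high probability, so the $n^*$ least-confident picks are precisely the samples with the smallest $\mathbb{E}_{j,r}\gamma_{j,r,l}^{(t)}$.

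For Diversity Sampling, I apply the same decomposition coordinate-wise to $\mathbf{Z}(\mathbf{x},t)$: its $r$-th coordinate equals $\gamma_{y,r,l}^{(t)}+\widetilde{O}(\mathrm{err}_1+\mathrm{err}_2)$. The empirical mean $\mathbb{E}_{\mathbf{x}^{(i)}\in\mathcal{D}_{n_0}}\mathbf{Z}(\mathbf{x}^{(i)},t)$ concentrates, by $n_0=\Omega(p^{-1}\log(1/\delta))$, on a vector whose $r$-th coordinate is dominated by the $(1-p)\cdot\gamma_{y,r,1}^{(t)}$ contribution of the strong-feature fraction of $\mathcal{D}_{n_0}$. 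A direct computation of the $\ell_p$-norm of the difference then shows that pool samples carrying $\boldsymbol{\mu}_2$ realise a strictly larger distance than those carrying $\boldsymbol{\mu}_1$, again because the feature gap survives the error terms; the invariance of this ranking across $p\in[1,\infty)$ is inherited from Lemma \ref{lemma of comparison}. Combined with the pool-size hypothesis, this delivers the same conclusion for Diversity Sampling.

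The main obstacle is keeping the feature-gap calculation uniform over every neuron $r$ and every time $t$ in the window of Proposition \ref{Prop4.3}. The ReLU activation can silence individual neurons mid-training, so one must run an inductive argument on $t$ that simultaneously tracks (i) the activation patterns of $\sigma(\langle\mathbf{w}_{j,r}^{(t)},\cdot\rangle)$ on the two signal directions, (ii) the scalar growth of $\gamma_{j,r,l}^{(t)}$, and (iii) the slow accumulation of noise memorization $\rho_{j,r,i}^{(t)}$. The norm-gap hypothesis is engineered precisely to close this induction with enough slack to absorb both the initialization fluctuations controlled by $\sigma_0$ and the pool-wide union bounds needed in the two sampling analyses above.
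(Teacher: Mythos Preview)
Your Uncertainty-Sampling reduction is essentially the paper's: reduce the confidence ordering to $|f(\mathbf{W}^{(t)},\mathbf{x})|$, expand via the signal--noise decomposition, and show the feature gap $\gamma_{j,r,1}-\gamma_{j,r,2}$ beats the fresh-noise fluctuations $\langle\mathbf{w}_{j,r}^{(t)},\mathbf{z}\rangle$. One minor slip: $C(\mathbf{W},\mathbf{x})$ is monotone increasing in $|f|$, not decreasing in $y\cdot f$; this does not matter once you check that $y\cdot f>0$ on all pool samples in the relevant regime, but you should say so explicitly. Also, the paper bounds the noise term by treating $\langle\mathbf{w}_{j,r}^{(t)},\mathbf{z}\rangle$ directly as a Gaussian with variance $\sigma_p^2\|\mathbf{w}_{j,r}^{(t)}\|_2^2$ and then controlling $\|\mathbf{w}_{j,r}^{(t)}\|_2\le\Theta(\sigma_p^{-1}d^{-1/2}n_0^{1/2})$; your $\mathrm{err}_2$ formula, which expands $\mathbf{w}_{j,r}^{(t)}$ term-by-term, gives a looser and somewhat garbled scale but is workable.

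The Diversity-Sampling analysis, however, has a genuine gap. You write that the empirical mean $\mathbb{E}_{\mathbf{x}^{(i)}\in\mathcal{D}_{n_0}}\mathbf{Z}(\mathbf{x}^{(i)},t)$ is ``dominated by the $(1-p)\cdot\gamma_{y,r,1}^{(t)}$ contribution of the strong-feature fraction.'' This is false. For a \emph{labeled} sample $\mathbf{x}^{(i)}$, the noise patch $\boldsymbol{\xi}_i$ is a \emph{training} noise, so $\sigma(\langle\mathbf{w}_{j,r}^{(t)},\boldsymbol{\xi}_i\rangle)\approx\bar{\rho}_{j,r,i}^{(t)}$, and by Lemma~\ref{lem:final coefficient} we have $n_0^{-1}\sum_{i}\bar{\rho}_{j,r,i}^{(t)}=\Omega(1)$ while $\gamma_{j,r,l}^{(t)}=\Theta(\tau_l n_0\|\boldsymbol{\mu}_l\|_2^2/(\sigma_p^2 d))=o(1)$ under Condition~\ref{Con4.1}. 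Thus the dominant term in the reference is the \emph{noise-memorization} contribution $n_0^{-1}\sum_{i,j}\bar{\rho}_{j,r,i}^{(t)}$, not any feature term. Your coordinate formula $\mathbf{Z}_r(\mathbf{x},t)=\gamma_{y,r,l}^{(t)}+\widetilde{O}(\mathrm{err})$ is valid only for pool samples with fresh noise; it is wrong when applied to the labeled anchor set.

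This matters for the mechanism, though not the final direction. The paper's argument is: because the $\bar{\rho}$-dominated reference is the \emph{same} large shift inside both $D(\mathbf{W}^{(t)},\mathbf{x}\mid\mathcal{D}_{n_0})$ and $D(\mathbf{W}^{(t)},\mathbf{x}'\mid\mathcal{D}_{n_0})$, it fixes the sign of each coordinate of $\mathbf{Z}(\mathbf{x},t)-\mathbb{E}\mathbf{Z}(\mathbf{x}^{(i)},t)$, after which the absolute values drop and the comparison reduces to the per-neuron inequality in Lemma~\ref{lemma of comparison}, yielding the common sufficient event $\Omega_\gamma$ of (\ref{eq:Comparision_Diversity}). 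With your (incorrect) small reference, resolving the absolute values requires a separate argument you have not supplied. Fixing this is straightforward once you recognize that training-noise inner products are large, but the proposal as written misidentifies the dominant term and therefore does not go through.
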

Proposition \ref{prop:sample stage: learning on sample} provides a unifying insight that both NAL algorithms prioritize \hyperlink{msg:perplexing}{\textbf{perplexing samples}}-samples that exhibit a lack of learning progress (measured by $\displaystyle \underset{j,r}{\mathbb{E}} \gamma_{j,r,l}^{(t)}$). Lemma \ref{lem:final coefficient} indicates that these \hyperlink{msg:perplexing}{\textbf{perplexing samples}} here are essentially samples that contain weak $\&$ rare features. We discuss the nature of these \hyperlink{msg:perplexing}{\textbf{perplexing samples}} in general cases in Appendix \ref{app:Discussion of data distribution under other conditions}. Our inference process for the following theorem reveals that the ability to prioritize these samples is the main contributor to the success of both NAL algorithms.
\begin{theorem} \textbf{(After Querying)}
     If the sampling size $n^*$ of the three querying algorithms satisfies $ C_1 \sigma_p^4 d\|\boldsymbol{\mu}_2 \|_2^{-4} - p n_0 / 2 \leq n^*=\Theta(\widetilde{n} - n_0) \leq \widetilde{n} - n_0$, where $C_1$ is some positive constant. Then for $\forall \varepsilon>0$, under the same conditions as Proposition \ref{prop:sample stage: learning on sample},  with probability more than 1 - $\Theta(\delta + \delta^\prime)$, $\exists t=\widetilde{O}\left(\eta^{-1} \varepsilon^{-1} m (n_0+n^*) d^{-1} \sigma_p^{-2}\right)$ such that:
    \begin{itemize}
    \item For all of the three querying algorithms, the training loss converges to $\varepsilon$, i.e., $L_S\left(\mathbf{W}^{(t)}\right) \leq \varepsilon$.
    \item \textbf{Uncertainty Sampling} and \textbf{Diversity Sampling} algorithms have small test error: $L_{\mathcal{D}^*}^{0-1}\left(\mathbf{W}^{(t)}\right) \leq \exp (\Theta\left( \dfrac{-\widetilde{n}\|\boldsymbol{\mu}_l\|_2^4}{\sigma_p^4 d} \right)), l \in \{ 1,2 \}$.
    \item \textbf{Random Sampling} algorithm would remain constant order test error: $L_{\mathcal{D}^*}^{0-1}\left(\mathbf{W}^{(t)}\right) = \Theta(1) \geq 0.12 \cdot p^*$.
    \end{itemize}
    
\label{theory main}\end{theorem}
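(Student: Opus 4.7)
The plan is to split the argument into three stages that mirror the algorithm: (i) characterize the composition of the labeled set after querying, (ii) analyze the fresh-initialization training dynamics on that set via signal-noise decomposition, and (iii) translate the learned coefficients into a test-error bound.

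First, I would use Proposition \ref{prop:sample stage: learning on sample} to control the composition of $\mathcal{D}_{n_1}=\mathcal{D}_{n_0}\cup\mathcal{D}_{n^*}$. Lemma \ref{lem:final coefficient} (together with the dynamics at the initialization stage in Proposition \ref{Prop4.3}) should say that the samples carrying $\boldsymbol{\mu}_2$ are precisely those with smallest $\mathbb{E}_{j,r}\gamma_{j,r,l}^{(t)}$, so Uncertainty and Diversity Sampling query them first. Combined with a standard Chernoff/Bernoulli argument over $\mathcal{P}$ (which has $|\mathcal{P}|=\Omega(p^{-1}\sigma_p^4 d\|\boldsymbol{\mu}_2\|_2^{-4})$ candidates and therefore $\Omega(\sigma_p^4 d\|\boldsymbol{\mu}_2\|_2^{-4})$ weak-feature samples w.h.p.), the hypothesis $n^*\geq C_1\sigma_p^4 d\|\boldsymbol{\mu}_2\|_2^{-4}-pn_0/2$ ensures that after querying the number of weak-feature samples in $\mathcal{D}_{n_1}$ is $\Omega(\sigma_p^4 d\|\boldsymbol{\mu}_2\|_2^{-4})$ for the two NAL algorithms. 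In contrast, for Random Sampling the fraction of weak-feature samples in $\mathcal{D}_{n_1}$ remains $\Theta(p)$, and since $\widetilde{n}=O(p^{-1}\sigma_p^4 d\|\boldsymbol{\mu}_2\|_2^{-4})$ the corresponding count is only $O(\sigma_p^4 d\|\boldsymbol{\mu}_2\|_2^{-4})$ with the \emph{wrong} constant, which will be the source of its failure.

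Second, I would rerun the fresh-initialization gradient descent over $\mathcal{D}_{n_1}$ and expand $\mathbf{w}_{j,r}^{(t)}$ via the signal-noise decomposition in (\ref{eq:firstsnd}). The key is two-scale induction on $\gamma_{j,r,l}^{(t)}$ and $\rho_{j,r,i}^{(t)}$ along the lines of \citet{cao2022benign,kou2023benign}: the feature coefficient $\gamma_{j,r,l}$ grows like $\eta\|\boldsymbol{\mu}_l\|_2^2 |U^l|/(nm)$ per step modulated by the loss derivative, while the noise coefficient $\rho_{j,r,i}$ grows like $\eta\sigma_p^2 d/(nm)$. Under Condition \ref{Con4.1} and the overparameterization $d\gg \widetilde{n}\sigma_p^{-2}\|\boldsymbol{\mu}_l\|_2^2$, the inductive bookkeeping should give $\gamma_{j,r,l}^{(t)}=\Theta(1)$ whenever $|U^l|=\Omega(\sigma_p^4 d\|\boldsymbol{\mu}_l\|_2^{-4})$, while $\rho_{j,r,i}^{(t)}=o(1)$; a standard smoothness/Polyak-\L{}ojasiewicz-type argument then drives $L_S(\mathbf{W}^{(t)})\le\varepsilon$ at $t=\widetilde{O}(\eta^{-1}\varepsilon^{-1} m n_1 d^{-1}\sigma_p^{-2})$ for all three algorithms (yielding the first bullet). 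For NAL, both $\gamma_{j,r,1}^{(t)}$ and $\gamma_{j,r,2}^{(t)}$ reach $\Theta(1)$; for Random, only $\gamma_{j,r,1}^{(t)}$ does, while $\gamma_{j,r,2}^{(t)}$ stays $o(1)$ because the weak-feature count is a constant factor below the threshold needed to overpower noise memorization.

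Third, I would convert these feature/noise scales into a test error for $(\mathbf{x},y)\sim\mathcal{D}^*$. Writing $y\cdot f(\mathbf{W}^{(t)},\mathbf{x})=\sum_{j,r}[\sigma(\langle\mathbf{w}_{j,r},y\boldsymbol{\mu}_l\rangle)+\sigma(\langle\mathbf{w}_{j,r},\boldsymbol{\xi}\rangle)]/m$ and using $\langle\mathbf{w}_{j,r}^{(t)},\boldsymbol{\mu}_l\rangle\approx \gamma_{j,r,l}^{(t)}$ and the Gaussian tail of the independent test noise $\boldsymbol{\xi}$, a sub-Gaussian concentration bound on $\langle\mathbf{w}_{j,r}^{(t)},\boldsymbol{\xi}\rangle$ (variance $\Theta(\sigma_p^2 \sum_i (\rho_{j,r,i}^{(t)})^2/\|\boldsymbol{\xi}_i\|_2^2)=\Theta(\sigma_p^2 n_1^{-1}\cdot \gamma^2)$ after standard bounds) gives $\Pr[y\cdot f<0 \mid \text{feature}=\boldsymbol{\mu}_l]\le\exp(-\Theta(\widetilde{n}\|\boldsymbol{\mu}_l\|_2^4/(\sigma_p^4 d)))$ for NAL, which is the claimed bound after taking expectation over the feature type. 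For Random, the weak-feature conditional error is $\Theta(1)$ because $\gamma_{j,r,2}^{(t)}$ is small relative to the noise fluctuation, so the total error is $\Omega(p^*)$ with the explicit $0.12\cdot p^*$ constant coming from a Gaussian anti-concentration estimate.

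The main obstacle I expect is the third bullet: showing that Random Sampling \emph{fails} to learn $\boldsymbol{\mu}_2$. It is not enough to observe that $|U^2|$ is small; one must prove a matching \emph{lower} bound on the noise-to-feature ratio in the weak-feature direction, i.e., that $\max_{j,r}\gamma_{j,r,2}^{(t)}$ is dominated by $\sigma_p\sqrt{\sum_i (\rho_{j,r,i}^{(t)})^2/\|\boldsymbol{\xi}_i\|_2^2}$ at \emph{every} admissible $t\le T^*$, including after the loss has become small. This requires a delicate two-sided induction — an upper bound on $\gamma_{j,r,2}^{(t)}$ using that the weak-feature logit saturates once the per-sample loss derivative $\ell_i'{}^{(t)}$ becomes exponentially small, together with a lower bound on the noise coefficients coming from the balanced growth of $\rho_{j,r,i}^{(t)}$ over the majority strong-feature samples.
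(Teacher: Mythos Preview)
Your three-stage plan mirrors the paper's structure closely: the paper packages stages (ii)--(iii) into a single label-complexity lemma (Lemma~\ref{main:lem for thm}) and verifies the count hypotheses via an explicit proposition, exactly as you outline. Two points deserve correction.

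First, your scale picture in stage (ii) is inverted. In this regime it is the noise coefficients that dominate: Proposition~\ref{prop:E.8} and Lemma~\ref{lem:final coefficient} give $\bar\rho_{j,r,i}^{(t)}=\Theta(1)$ up to $\log T^*$ and $\sum_i \bar\rho_{j,r,i}^{(t)}=\widetilde\Theta(n)$, while $\gamma_{j,r,l}^{(t)}=\Theta(\tau_l\,\mathrm{SNR}_l^2)\cdot\sum_i\bar\rho_{j,r,i}^{(t)}$ is much smaller. What matters is the \emph{ratio} $\gamma_{j,r,l}^{(t)}/\sum_i\bar\rho_{j,r,i}^{(t)}=\Theta(\tau_l\,\mathrm{SNR}_l^2)$, and the paper establishes this uniformly over $[0,T^*]$ via the automatic-balance argument (Lemma~\ref{lem:E.16}). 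This already handles your ``delicate two-sided induction'' concern: you do not need a separate saturation argument for $\gamma_{j,r,2}$, because the ratio lemma pins it down at every $t$. Relatedly, training convergence in the paper is not a PL argument but the reference-point comparison of \citet{cao2022benign} (Lemma~\ref{lem:E.15}).

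Second, and this is the real gap, your route to the $0.12\cdot p^*$ lower bound will not go through as stated. The quantity whose sign decides misclassification is $g(\boldsymbol\xi)=\sum_r\sigma(\langle\mathbf{w}_{1,r}^{(t)},\boldsymbol\xi\rangle)-\sum_r\sigma(\langle\mathbf{w}_{-1,r}^{(t)},\boldsymbol\xi\rangle)$, a \emph{piecewise-linear} (ReLU) function of the Gaussian $\boldsymbol\xi$; ordinary Gaussian anti-concentration for linear functionals does not apply, and even knowing that $\gamma_{j,r,2}^{(t)}$ is small relative to $\sigma_p\|\mathbf{w}_{j,r}^{(t)}\|_2$ does not by itself force $|g(\boldsymbol\xi)|$ to exceed the signal term with constant probability. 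The paper instead uses a structural perturbation lemma (Lemma~\ref{app:lem5.8:in kou}, from \citet{kou2023benign}): under the harmful-count hypothesis $n_{s,l'}\le C_2\sigma_p^4 d\|\boldsymbol\mu_{l'}\|_2^{-4}$ there exists a short vector $\mathbf{v}$ with $\|\mathbf{v}\|_2\le 0.02\,\sigma_p$ such that $\sum_{j'\in\{\pm1\}}[g(j'\boldsymbol\xi+\mathbf{v})-g(j'\boldsymbol\xi)]$ exceeds $4C_6\max_j\sum_r\gamma_{j,r,l'}^{(t)}$ for every $\boldsymbol\xi$. A pigeonhole over $\{\pm\boldsymbol\xi,\pm\boldsymbol\xi+\mathbf{v}\}$ then forces one of the four shifted noises into the bad set $\Omega_{\boldsymbol\xi}$, and a TV-distance bound between $\mathcal N(\mathbf 0,\sigma_p^2\mathbf I)$ and $\mathcal N(\mathbf{v},\sigma_p^2\mathbf I)$ (Lemma~\ref{app:leme.2:in devroye}) converts this into $P(\Omega_{\boldsymbol\xi})\ge 0.24$, yielding the constant $0.12$. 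If you want to keep your direct approach, you would need an anti-concentration statement for $g$ itself, which is essentially what this perturbation/pigeonhole trick provides.
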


Theorem \ref{theory main} implies that NAL algorithms achieve \textit{benign overfitting}, whereas the passive learning remains \textit{harmful overfitting}. It worth noting that as $p^*$ increases, the test error of Random Sampling tends to explode, especially in out-of-distribution scenarios where $p^* > 0.5 > p$. In contrast, Uncertainty Sampling and Diversity Sampling consistently achieve low test errors regardless of the value of $p^*$, which highlights the superiority of Uncertainty Sampling and Diversity Sampling over Random Sampling. \par
Given that strategy-free passive learning can also adequately learn all types of features with ample data, the following corollary aim to show the extent to which NAL algorithms alleviate the burden of labeling.
\begin{corollary}
    \textbf{(Label Complexity)} Under the same conditions as stated in Theorem \ref{theory main}, with a probability of at least $1-\Theta(\delta + \delta^{\prime})$, we observe distinct label complexities for strategy-free passive learning and NAL algorithms in achieving Bayes-optimal generalization ability:
\begin{itemize}
\item For a fully trained neural model, the label complexity $n_{CNN}$ requires $\Omega(p^{-1} \sigma_p^2 d \|\boldsymbol{\mu}_2\|_2^{-4})$.
\item For two NAL algorithms, the maximum label complexity $\widetilde{n}$ only requires $\Omega(\sigma_p^2 d \|\boldsymbol{\mu}_2\|_2^{-4})$.
\end{itemize}
\label{coro:label complexity}\end{corollary}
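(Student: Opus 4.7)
My plan is to invert the test-error bound of Theorem~\ref{theory main} twice: once directly for the two NAL algorithms, and once after correcting for the fact that a passive learner only acquires weak-feature samples at rate $p$.

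For the NAL side, Theorem~\ref{theory main} already provides $L^{0-1}_{\mathcal{D}^*}(\mathbf{W}^{(t)}) \leq \exp(\Theta(-\widetilde{n}\|\boldsymbol{\mu}_l\|_2^4/(\sigma_p^4 d)))$ uniformly in $l\in\{1,2\}$, so the binding constraint comes from the weaker feature $\boldsymbol{\mu}_2$. Demanding that the right-hand side sit at the Bayes-optimal level and solving for $\widetilde{n}$ (absorbing constants and logarithmic factors into $\Omega(\cdot)$) produces the claimed lower bound $\widetilde{n}=\Omega(\sigma_p^2 d\|\boldsymbol{\mu}_2\|_2^{-4})$. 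No new feature-learning argument is required here, because Proposition~\ref{prop:sample stage: learning on sample} already guarantees that the $n^*$ queried samples fall predominantly into the weak-feature subset $U^2$, which is precisely the regime where the Theorem's test-error bound is activated.

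For the passive side, the feature-learning dynamics behind Proposition~\ref{Prop4.3} and Theorem~\ref{theory main}, specialized to a fully trained CNN under Random Sampling, show that benign overfitting on $\mathcal{D}^*$ requires the empirical weak-feature subset $U^2$ to satisfy $|U^2|=\Omega(\sigma_p^2 d\|\boldsymbol{\mu}_2\|_2^{-4})$, because the update of $\gamma^{(t)}_{j,r,2}$ in~(\ref{Eq:w update}) aggregates exactly over $U^2$. Under a size-$n_{\mathrm{CNN}}$ i.i.d.\ draw from $\mathcal{D}$ with weak-feature probability $p$, a Chernoff bound controls $|U^2|$ around $p\,n_{\mathrm{CNN}}$ with probability at least $1-\exp(-\Theta(p\,n_{\mathrm{CNN}}))$; combining this concentration with the required lower bound on $|U^2|$ immediately forces $n_{\mathrm{CNN}}=\Omega(p^{-1}\sigma_p^2 d\|\boldsymbol{\mu}_2\|_2^{-4})$, as claimed.

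The main technical obstacle is the matching \emph{necessary} condition underlying the passive bound, since Theorem~\ref{theory main} only supplies sufficiency. Concretely, I would have to show that when $|U^2|$ sits below the threshold $\Omega(\sigma_p^2 d\|\boldsymbol{\mu}_2\|_2^{-4})$, the feature coefficient $\gamma^{(t)}_{j,r,2}$ remains dominated by the noise-memorization coefficients $\rho^{(t)}_{j,r,i}$, so that on weak-feature test points the logit is still controlled by noise alignment and the test error stays at the constant order already isolated in Proposition~\ref{Prop4.3}. This is the same feature-versus-noise balance argument underlying the harmful-overfitting direction of Theorem~\ref{theory main}, now restricted to the $l=2$ channel; once it is in place, the corollary reduces to algebra on the Chernoff tail and the exponent in the Theorem's test-error estimate, so I would attack this lower-bound step first and then assemble both halves of the dichotomy.
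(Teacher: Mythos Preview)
Your proposal is correct and follows the same route as the paper. The only streamlining you are missing is that the ``main technical obstacle'' you flag---the necessary (harmful-overfitting) direction on the $l=2$ channel---is already packaged in Lemma~\ref{main:lem for thm}: its second bullet states that whenever $n_{s,l'}\le C_2\sigma_p^4 d\|\boldsymbol{\mu}_{l'}\|_2^{-4}$ for some $l'$, the test error stays at the constant level $\ge 0.12\,p^*_{l'}$. The paper's proof therefore does not re-run any feature-versus-noise balance argument; it simply reads off the two-sided threshold on $n_{s,2}$ from Lemma~\ref{main:lem for thm}, invokes the binomial concentration (Lemma~\ref{lem:rho n}) to convert $n_{s,2}\approx p\,n_{\mathrm{CNN}}$ for passive learning, and invokes Lemma~\ref{Lem:order_pool}/Proposition~\ref{prop:sample stage: learning on sample} to ensure the NAL algorithms place essentially all $n^*$ queries in $U^2$. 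Once you cite Lemma~\ref{main:lem for thm} rather than re-deriving its lower-bound half, your argument and the paper's coincide.
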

This corollary suggests that NAL algorithms can significantly reduce labeling effort, approximately on the order of $\Theta(p^{-1})$. This holds true even without the requirement of disparity between feature norms, as demonstrated in Appendix \ref{app:Discussion of data distribution under other conditions}. Hence, we can conclude that NAL algorithms are effective in minimizing labeling effort, particularly in imbalanced data scenarios where the degree of discrimination or rarity varies within the data. In collaboration with Proposition \ref{prop:sample stage: learning on sample} and Theorem \ref{theory main}, the essence lies in NAL's capability to effectively grasp yet-to-be-learned features.
\section{Proof Sketch}
In this section, we provide an overview of the proof outlines for our theory over linearly separable data. Here we denote $n$ as the number of training data in current labeled set, which is $n_0$ at initial stage and $n_1$ after sampling (querying). For $ s \in \{ 1,2 \}, l \in \{ 1,2 \} $, the notations of $n_{s, l}$ represent the number of feature $\boldsymbol{\mu}_l$ at the initial stage $s=1$ and stage after querying $s=2$. And for notation simplicity we denote $\tau_1$ and $\tau_2$ as the proportion of data with strong and weak feature in current dataset.\par
Here are the main challenges we faced and the techniques we used to address them:
\begin{itemize}
\item The synthesis of $\mathcal{D}_{n_0}$, $\mathcal{P}$, and the final labeled set obtained through Random Sampling require sequential martingale-type subset generations from distribution $\mathcal{D}$, which poses a big challenge to our analysis. Our solution was to treat the results as independent binomial random variables, which allow us to conduct a reliable analysis with high-probability results by leveraging the properties of binomial tails.

\item During querying, NAL algorithms need to query the samples with the lowest Confidence Score or the highest Feature Distance from the entire sampling pool $\mathcal{P}$. This involves $\lvert \mathcal{P} \rvert(\lvert \mathcal{P} \rvert-1)/2$ comparison operations. To better scrutinize the sampling dynamics, we defined two full orders and conducted an order-dependent querying analysis to examine the high probability events via combinatorial analysis.

\item Depicting the generalization capability of three different querying algorithms along the whole process was a big challenge. We addressed this by proposing a label complexity-based test error analysis regime, which allowed us to incorporate different scenarios into a single inferential process.

\end{itemize}

\subsection{Feature Learning and Noise Memorization Analysis}

Leverage the inductive techniques adopted in many works \cite{cao2022benign, kou2023benign, meng2023benign, kou2023implicit, chen2023why}, we can in our case study the coefficient scales. 

\begin{lemma} Under Condition \ref{Con4.1}, there exists $T_1=\Theta(\eta^{-1}nm\sigma_p^2d^{-1})$, for $t \in\left[T_1, T^*\right]$ we have the following hold for $\forall j \in\{ \pm 1\}, r \in[m]$ and $l \in \{1, 2\}$:
\begin{itemize}
    \item $\sum_{i=1}^n \rho_{j, r, i}^{(t)} \cdot \mathbb{1}(\rho_{j, r, i}^{(t)}>0)=\Omega(n) $,
    \item $\gamma_{j, r, l}^{(t)}=\Theta\left( \tau_l n \cdot \sigma_p^{-2}d^{-1}\| \boldsymbol{\mu}_l\|_2^2 \right)$.
\end{itemize}
\label{lem:final coefficient}\end{lemma}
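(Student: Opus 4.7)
The plan is to exhibit tight two-sided bounds on the signal-noise coefficients by combining the scalar recurrences implied by the decomposition \eqref{eq:firstsnd} with a two-phase inductive scale analysis, following the template of \citet{cao2022benign, kou2023benign, meng2023benign}. Substituting \eqref{eq:firstsnd} into \eqref{Eq:w update} and exploiting the high-probability near-orthogonality of $\{\boldsymbol{\mu}_1,\boldsymbol{\mu}_2,\boldsymbol{\xi}_1,\dots,\boldsymbol{\xi}_n\}$ yields the scalar recurrences
\[
\gamma_{j,r,l}^{(t+1)} = \gamma_{j,r,l}^{(t)} - \tfrac{\eta\|\boldsymbol{\mu}_l\|_2^2}{nm}\sum_{i\in U^l}\ell_i^{\prime(t)}\sigma'\!\bigl(\langle \mathbf{w}_{j,r}^{(t)}, y_i\boldsymbol{\mu}_l\rangle\bigr),
\]
\[
\rho_{j,r,i}^{(t+1)} = \rho_{j,r,i}^{(t)} - \tfrac{\eta\|\boldsymbol{\xi}_i\|_2^2}{nm}\,\ell_i^{\prime(t)}\sigma'\!\bigl(\langle \mathbf{w}_{j,r}^{(t)}, \boldsymbol{\xi}_i\rangle\bigr)\,jy_i,
\]
so $\gamma$ and $\rho$ evolve almost independently once cross-talk is controlled.

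First I would collect the preliminary geometric facts. Under Condition~\ref{Con4.1}, standard Gaussian concentration gives $\|\boldsymbol{\xi}_i\|_2^2 = \Theta(\sigma_p^2 d)$ and $|\langle \boldsymbol{\xi}_i,\boldsymbol{\xi}_{i'}\rangle| = \widetilde{O}(\sigma_p^2\sqrt{d})$ uniformly over $i\ne i'$; binomial concentration yields $n_{s,l} = \Theta(\tau_l n)$; and the choice of $\sigma_0$ keeps $|\langle \mathbf{w}_{j,r}^{(0)},\boldsymbol{\mu}_l\rangle|$ and $|\langle \mathbf{w}_{j,r}^{(0)},\boldsymbol{\xi}_i\rangle|$ well below the eventual coefficient scales, so that ReLU activation patterns are eventually dictated by the accumulated $\gamma$'s and $\rho$'s rather than by initialization. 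A symmetry argument over the Gaussian $\mathbf{w}_{j,r}^{(0)}$ further guarantees that for each $(j,l)$ and each $i$, a constant fraction of neurons $r\in[m]$ land on the ``correct side'' at initialization, seeding subsequent monotone growth.

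Next I would run the two-phase induction, maintaining for every $t\le T^*$ the upper bounds $\gamma_{j,r,l}^{(t)} \le \widetilde O(\tau_l n\sigma_p^{-2}d^{-1}\|\boldsymbol{\mu}_l\|_2^2)$ and $|\rho_{j,r,i}^{(t)}| \le \widetilde O(1)$, together with matching lower bounds on the positive parts for the aligned neurons. In the early phase $t\le T_1$, one verifies $y_i f(\mathbf{W}^{(t)},\mathbf{x}^{(i)}) = o(1)$ on the training set, whence $-\ell_i^{\prime(t)} = \Theta(1)$. Summing the recurrences under this constant-order loss derivative over $T_1$ iterations then pushes, for the aligned neurons, $\rho_{j,r,i}^{(T_1)}$ up to $\Omega(1)$ and $\gamma_{j,r,l}^{(T_1)}$ up to $\Theta(\tau_l n\sigma_p^{-2}d^{-1}\|\boldsymbol{\mu}_l\|_2^2)$, pinning down the $T_1$ timescale. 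In the late phase $T_1\le t\le T^*$ the loss derivatives begin to decay, so further growth is slow; the positive parts of the coefficients therefore remain $\Omega$ of their values at $T_1$, while the running upper bounds continue to propagate.

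The main obstacle will be closing the induction against cross contamination. In the $\rho$ recurrence, the spurious contributions of the form $\ell_{i'}^{\prime(t)}\sigma'(\langle \mathbf{w}_{j,r}^{(t)},\boldsymbol{\xi}_{i'}\rangle)\langle\boldsymbol{\xi}_i,\boldsymbol{\xi}_{i'}\rangle/\|\boldsymbol{\xi}_{i'}\|_2^2$ must be shown negligible compared with the diagonal update, which reduces to the dimension lower bound $d = \widetilde\Omega(\widetilde n^2)$ in Condition~\ref{Con4.1}. Analogously, leakage from $\boldsymbol{\xi}$-directions into the $\gamma$ updates is absorbed by $\|\boldsymbol{\mu}_l\|_2^2 = \Omega(\widetilde n^{-1}d\sigma_p^4)$ and by the near-orthogonality $|\langle \boldsymbol{\mu}_l,\boldsymbol{\xi}_i\rangle| = \widetilde{O}(\sigma_p\|\boldsymbol{\mu}_l\|_2)$ from Gaussian concentration. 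Once all cross terms are bounded by $o(1)$ of the diagonal contributions, the induction closes and the claimed $\Theta$- and $\Omega$-scales for $\gamma$ and $\rho$ hold for every $t\in[T_1,T^*]$.
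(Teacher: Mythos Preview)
Your proposal is correct and essentially identical to the paper's approach: the paper likewise derives the scalar recurrences for $\gamma_{j,r,l}^{(t)}$ and $\rho_{j,r,i}^{(t)}$ from the signal--noise decomposition, establishes the near-orthogonality and initialization estimates you list, and runs the same two-stage induction (constant-order loss derivative up to $T_1$, then slow decay) from \citet{cao2022benign,kou2023benign,meng2023benign}, with the cross-contamination terms absorbed via the dimension condition $d=\widetilde\Omega(\widetilde n^2)$ in Condition~\ref{Con4.1}. The one refinement the paper makes explicit is that it packages the second stage via a direct $\Theta$-control on the \emph{ratio} $\sum_{i}\bar\rho_{j,r,i}^{(t)}/\gamma_{j,r,l}^{(t)}=\Theta(\tau_l^{-1}\operatorname{SNR}_l^{-2})$ uniformly in $t$, which cleanly yields both the upper and lower $\gamma$-bounds simultaneously; your ``growth is slow, so values persist'' argument gives the same conclusion but is slightly less tidy for the upper bound.
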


It is evident that there is a noticeable disparity in the learning efficiency of features, as $\rho_{j, r, i}^{(t)}$ is directly proportional to both the data proportion $\tau_l$ and the feature norms $\|\boldsymbol{\mu}_l \|_2$. Furthermore, according to Lemma \ref{lem:rho n}, we can model the data synthesis from $\mathcal{D}$ as a binomial variable. This allows effective control over the probability tails, resulting in $\tau_2=\Theta(p)$ and $\tau_1=\Theta(1-p)$. Thus, we can conclude that the \hyperlink{msg:perplexing}{\textbf{perplexing samples}} are actually those $\boldsymbol{\mu}_2$-equipped samples. Subsequently, we can now examine the querying stage closely.

\subsection{Order-dependent Sampling (Querying) Analysis}
To rigorously analyze the statistics of the querying stage, we define two orders, namely Uncertainty Order $\preceq_{C}^{(t)}$ and Diversity Order $\preceq_{D}^{(t)}$. For $\forall \mathbf{x}, \mathbf{x}^{\prime} \in \mathcal{P}$, we have $\mathbf{x}^{\prime} \preceq_{C}^{(t)} \mathbf{x}$ if $ C\left(\mathbf{W}^{(t)}, \mathbf{x}^{\prime}\right) \geq C\left(\mathbf{W}^{(t)}, \mathbf{x}\right)$, and $\mathbf{x}^{\prime} \preceq_{D}^{(t)} \mathbf{x} \text { if \  } D\left(\mathbf{W}^{(t)}, \mathbf{x}^{\prime} \ \mid \mathcal{D}_{n}\right) \leq D\left(\mathbf{W}^{(t)}, \mathbf{x} \ \mid \mathcal{D}_{n}\right), \forall p \in [1, \infty)$. Specifically, if the Confidence Score of all elements in a set $\mathbf{X}$ at time step $t$ are all less than those in the set $\mathbf{X}^\prime$, we utilize the same notation to describe the Uncertainty Order between sets: $ \mathbf{X} \preceq_C^{(t)} \mathbf{X}^{\prime}$. Similarly, we also have set-level notation for $\preceq_{D}^{(t)}$. The detailed definitions are delayed to Appendix \ref{app:details of querying algorithms}.\par

The following lemma presents our important findings when examining the two orders of samples. 
\begin{lemma}
 Under the same conditions in Proposition \ref{prop:sample stage: learning on sample}, for $\mathbf{x}, \mathbf{x}^{\prime} \in \mathcal{P}$, denote $\boldsymbol{\mu}_{l_{\mathbf{x}}}, \boldsymbol{\mu}_{l_{\mathbf{x}^{\prime}}}$ as the feature patches in $\mathbf{x}$ and $\mathbf{x}^{\prime}$ separately, where $l_{\mathbf{x}},l_{\mathbf{x^{\prime}}} \in \{1, 2\}$, it holds that

 \begin{itemize}
          \item $\mathbf{x}^{\prime} \preceq_{C}^{(t)} \mathbf{x}$ has a sufficient event that
    \begin{equation}
    \begin{aligned}
        &\{ \underbrace{\Theta(\underset{r}{\mathbb{E}} (\gamma_{y^{\prime}, r, l_{\mathbf{x}^{\prime}}})) - \Theta(\underset{r}{\mathbb{E}}(\gamma_{y, r, l_{\mathbf{x}}}))}_{\text{Learning Progress Disparity: Feature in $\mathbf{x}$ vs. Feature in $\mathbf{x}^{\prime}$}} \\
        & > \max_{j, r, l}\{ \left| \left\langle\mathbf{w}_{j, r}^{(t)}, \mathbf{z}_l\right\rangle \right|\}\}.
    \end{aligned}
    \label{eq:Comparision_Uncertainty}\end{equation}
        \item $\mathbf{x}^{\prime} \preceq_{D}^{(t)} \mathbf{x}$ has a sufficient event that
     \begin{equation}
     \begin{aligned}
     &\{ \underbrace{\lvert  \Theta(\underset{r}{\mathbb{E}}(\gamma_{y, r, l_{\mathbf{x}}}))- \sum_l \tau_l \cdot \Theta(\underset{i_l \in U_0^l, r}{\mathbb{E}}(\gamma_{y_{i_l}, r, l})) \rvert}_{\text{Learning Progress Disparity: Feature in $\mathbf{x}$ vs. Features in Initial Set}} \\
     &- \lvert  \underbrace{\Theta(\underset{r}{\mathbb{E}} (\gamma_{y^{\prime}, r, l_{\mathbf{x}^{\prime}}}))- \sum_l \tau_l \cdot \Theta(\underset{i_l \in U_0^l, r}{\mathbb{E}}(\gamma_{y_{i_l}, r, l}))}_{\text{Learning Progress Disparity: Feature in $\mathbf{x}^{\prime}$ vs. Features in Initial Set}} \rvert\\
     & > \max_{j, r, l}\{ \left| \left\langle\mathbf{w}_{j, r}^{(t)}, \mathbf{z}_l\right\rangle \right|\} \},
     \end{aligned}
    \label{eq:Comparision_Diversity}\end{equation}
    where $U_0^l= \{ \mathbf{x} \in \mathcal{D}_0 \mid \mathbf{x}_{\text {signal part }}=\boldsymbol{\mu}_l\}$. 

 \end{itemize}
\label{lemma of comparison}\end{lemma}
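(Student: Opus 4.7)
The plan is to reduce both orders to a comparison of per-neuron learning-progress coefficients plus uniformly bounded noise residuals via the signal--noise decomposition in (\ref{eq:firstsnd}). For any pool sample $\mathbf{x}$ with feature patch $y\boldsymbol{\mu}_{l_\mathbf{x}}$ and noise patch $\boldsymbol{\xi}$, I would first substitute (\ref{eq:firstsnd}) into $\langle \mathbf{w}_{j,r}^{(t)}, y\boldsymbol{\mu}_{l_\mathbf{x}}\rangle$ and $\langle \mathbf{w}_{j,r}^{(t)},\boldsymbol{\xi}\rangle$. Orthogonality of $\boldsymbol{\mu}_1,\boldsymbol{\mu}_2$ together with the high-dimensional concentration provided by Condition~\ref{Con4.1} gives
\begin{equation*}
\langle \mathbf{w}_{j,r}^{(t)}, y\boldsymbol{\mu}_{l_\mathbf{x}}\rangle = yj\,\gamma_{j,r,l_\mathbf{x}}^{(t)} + \Delta(j,r,\mathbf{x}),
\end{equation*}
where $\Delta$ collects the initialization inner product $\langle \mathbf{w}_{j,r}^{(0)},\boldsymbol{\mu}_{l_\mathbf{x}}\rangle$ and all feature--noise cross terms $\rho_{j,r,i}^{(t)}\langle \boldsymbol{\xi}_i,\boldsymbol{\mu}_{l_\mathbf{x}}\rangle/\|\boldsymbol{\xi}_i\|_2^2$, each of which is bounded by the universal noise scale $\max_{j,r,l}|\langle \mathbf{w}_{j,r}^{(t)},\mathbf{z}_l\rangle|$. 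A parallel bound controls $\langle \mathbf{w}_{j,r}^{(t)},\boldsymbol{\xi}\rangle$ for the \emph{fresh} pool noise. Applying ReLU collapses the $j=-y$ signal term and leaves the clean representation
\begin{equation*}
y f(\mathbf{W}^{(t)},\mathbf{x}) = \tfrac{1}{m}\sum_{r}\gamma_{y,r,l_\mathbf{x}}^{(t)} + R_C(\mathbf{x}), \qquad |R_C(\mathbf{x})| \leq \max_{j,r,l}\bigl|\langle \mathbf{w}_{j,r}^{(t)},\mathbf{z}_l\rangle\bigr|.
\end{equation*}

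For the Uncertainty Order, I would note that $C(\mathbf{W},\mathbf{x})=1/(1+\exp(-|f(\mathbf{W},\mathbf{x})|))$ is strictly monotone in $|f|$, while Lemma~\ref{lem:final coefficient} combined with Condition~\ref{Con4.1} ensures the signal part dominates so that $yf>0$, i.e.\ $|f|=yf$. Hence $\mathbf{x}' \preceq_C^{(t)} \mathbf{x}$ is equivalent to $y'f(\mathbf{W}^{(t)},\mathbf{x}') \geq yf(\mathbf{W}^{(t)},\mathbf{x})$. Substituting the above expression for both samples, writing $\tfrac{1}{m}\sum_r \gamma_{y,r,l_\mathbf{x}}^{(t)} = \Theta(\mathbb{E}_r\gamma_{y,r,l_\mathbf{x}}^{(t)})$, and absorbing $R_C(\mathbf{x})-R_C(\mathbf{x}')$ into twice the universal noise scale, produces exactly the sufficient condition in (\ref{eq:Comparision_Uncertainty}).

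For the Diversity Order, I would perform an analogous coordinate-wise expansion of $\mathbf{Z}(\mathbf{x},t)$ and its mean over $\mathcal{D}_{n_0}$. Splitting the initial set indices by feature type $U_0^l$, the $r$-th entry takes the form
\begin{equation*}
\bigl[\mathbf{Z}(\mathbf{x},t)-\mathbb{E}\,\mathbf{Z}(\mathbf{x}^{(i)},t)\bigr]_r = \Bigl(\gamma_{y,r,l_\mathbf{x}}^{(t)} - \sum_{l}\tau_l\,\mathbb{E}_{i\in U_0^l}\gamma_{y_i,r,l}^{(t)}\Bigr) + R_D^{(r)}(\mathbf{x}),
\end{equation*}
with $|R_D^{(r)}(\mathbf{x})|$ controlled by the same universal noise scale. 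The reverse triangle inequality for $\|\cdot\|_p$ then bounds $D(\mathbf{W}^{(t)},\mathbf{x}|\mathcal{D}_{n_0})$ from below by the $l_p$ norm of the signal vector minus the $l_p$ norm of the residual vector, and symmetrically from above for $\mathbf{x}'$. Because the signal entries have identical magnitude across $r$ (up to the $\Theta$-factor hidden in Lemma~\ref{lem:final coefficient}), one has $\|\text{signal}\|_p = m^{1/p}\,|\text{entry}|$ for every $p\in[1,\infty)$, so the ratio of signal to residual $l_p$ norms is $p$-independent; this yields the $p$-uniform statement and the sufficient event in (\ref{eq:Comparision_Diversity}).

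The main obstacle will be isolating a \emph{single} high-probability event on which $\max_{j,r,l}|\langle \mathbf{w}_{j,r}^{(t)},\mathbf{z}_l\rangle|$ simultaneously bounds all of the above residuals for every pool sample: the fresh-noise inner products $\langle \boldsymbol{\xi}_i,\boldsymbol{\xi}\rangle$ and $\langle \mathbf{w}_{j,r}^{(0)},\boldsymbol{\xi}\rangle$ vary with $\mathbf{x}\in\mathcal{P}$, forcing a union bound over $|\mathcal{P}|$ indices. This step consumes both the dimension and width conditions in Condition~\ref{Con4.1}, and is where the coefficient scale bounds $\rho_{j,r,i}^{(t)}=O(1)$ and $\gamma_{j,r,l}^{(t)}=\Theta(\tau_l n \sigma_p^{-2}d^{-1}\|\boldsymbol{\mu}_l\|_2^2)$ from Lemma~\ref{lem:final coefficient} must be invoked together with the pool-size condition $|\mathcal{P}|=\Omega(p^{-1}\sigma_p^4 d\|\boldsymbol{\mu}_2\|_2^{-4})$ to make the bound tight for both strong and weak features simultaneously.
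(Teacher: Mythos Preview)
Your treatment of the Uncertainty Order is essentially the paper's argument: expand $yf(\mathbf{W}^{(t)},\mathbf{x})$ via Lemma~\ref{app:lemma_wt} into a clean $\gamma$-average plus a fresh-noise residual controlled by $\max_{j,r,l}|\langle\mathbf{w}_{j,r}^{(t)},\mathbf{z}_l\rangle|$, then compare. That part is fine.

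The Diversity Order argument, however, has a genuine gap. In your coordinate-wise expansion
\[
\bigl[\mathbf{Z}(\mathbf{x},t)-\mathbb{E}_{i}\mathbf{Z}(\mathbf{x}^{(i)},t)\bigr]_r
=\Bigl(\gamma_{y,r,l_\mathbf{x}}^{(t)}-\sum_l\tau_l\,\mathbb{E}_{i\in U_0^l}\gamma_{y_i,r,l}^{(t)}\Bigr)+R_D^{(r)}(\mathbf{x}),
\]
the residual $R_D^{(r)}(\mathbf{x})$ is \emph{not} bounded by the fresh-noise scale. The labeled-set mean contributes the training-noise memorization term
\[
\frac{1}{n_0}\sum_{i=1}^{n_0}\sum_{j}\sigma\bigl(\langle\mathbf{w}_{j,r}^{(t)},\boldsymbol{\xi}_i\rangle\bigr)\;=\;\frac{1}{n_0}\sum_{i,j}\Theta\bigl(\bar\rho_{j,r,i}^{(t)}\bigr)\;=\;\widetilde\Theta(1),
\]
because $\sum_i\bar\rho_{j,r,i}^{(t)}=\widetilde\Theta(n_0)$ by Lemma~\ref{lem:final coefficient}. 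This term dwarfs both the $\gamma$-contributions (which are $O(\tau_l n_0\mathrm{SNR}_l^2)\ll 1$ under Condition~\ref{Con4.1}) and the fresh-noise scale $\max_{j,r,l}|\langle\mathbf{w}_{j,r}^{(t)},\mathbf{z}_l\rangle|=O(n_0^{1/2}d^{-1/2})$. With $R_D$ dominating the signal, your reverse triangle inequality yields $D(\mathbf{x})\ge\|R_D(\mathbf{x})\|_p-\|S(\mathbf{x})\|_p$ and $D(\mathbf{x}')\le\|R_D(\mathbf{x}')\|_p+\|S(\mathbf{x}')\|_p$, which is useless for establishing $D(\mathbf{x})>D(\mathbf{x}')$ because the large $\bar\rho$-contributions to $\|R_D(\mathbf{x})\|_p$ and $\|R_D(\mathbf{x}')\|_p$ are nearly identical and do \emph{not} cancel under the norm.

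The paper's route is different: it observes that the shared $\bar\rho$-term dominates each coordinate $\mathbf{Z}_r(\mathbf{x},t)-\bar{\mathbf{Z}}_r$ and forces it to be negative for every $r$ and for both $\mathbf{x}$ and $\mathbf{x}'$. This fixes the sign of the absolute value coordinate-wise, so $D(\mathbf{x})>D(\mathbf{x}')$ is implied by the per-$r$ inequality $\bar{\mathbf{Z}}_r-\mathbf{Z}_r(\mathbf{x},t)>\bar{\mathbf{Z}}_r-\mathbf{Z}_r(\mathbf{x}',t)$, in which the $\bar\rho$-term cancels exactly and only the $\gamma$-comparison plus fresh noise remains. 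That sign argument---not a reverse triangle inequality---is what delivers~(\ref{eq:Comparision_Diversity}) and also explains the $p$-independence, since the coordinate-wise inequality is norm-free. Your identified ``main obstacle'' (the union bound over $\mathcal{P}$) is not the crux here; the missing ingredient is the handling of the memorized training noise in $\bar{\mathbf{Z}}$.
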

\begin{remark}
This lemma demonstrate that Uncertainty Sampling holds the comparisons of the model's learning progress of features in $\mathcal{P}$, as shown in (\ref{eq:Comparision_Uncertainty}). On the other hand, Diversity Sampling cares the comparisons of the disparity between model's learning progress of samples and the labeled training set, as shown in (\ref{eq:Comparision_Diversity}).
\end{remark}

\begin{figure*}[t!]
\centering
\subfigure[Full trained model]{\includegraphics[width=0.4975\textwidth]{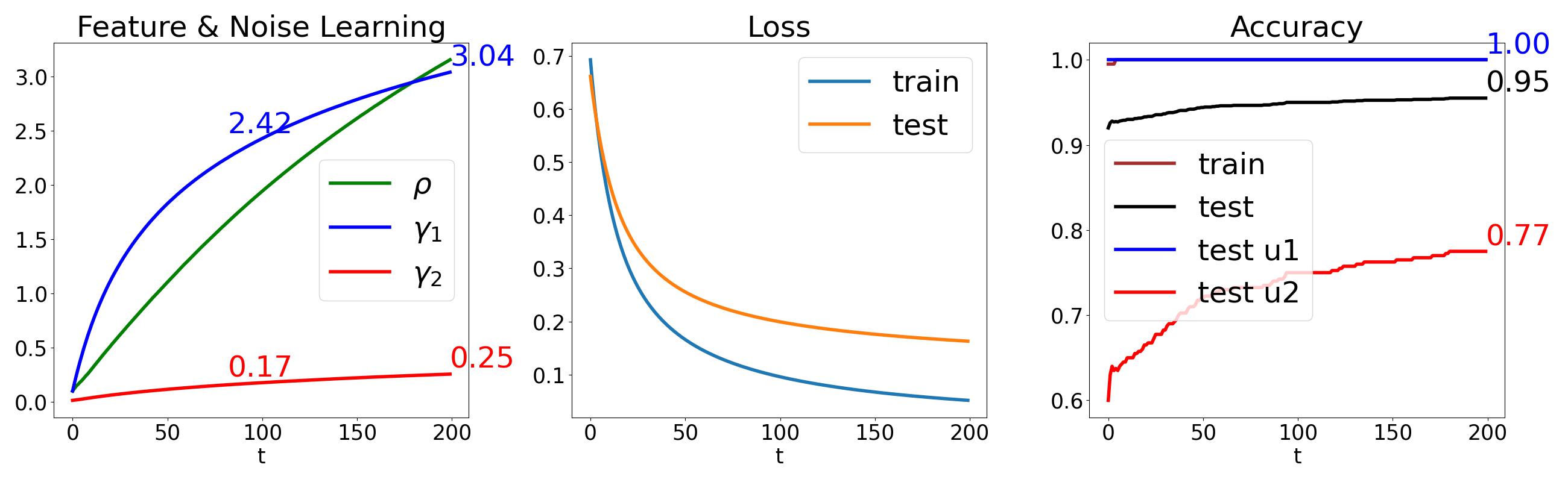}}
\subfigure[Random Sampling]{\includegraphics[width=0.4975\textwidth]{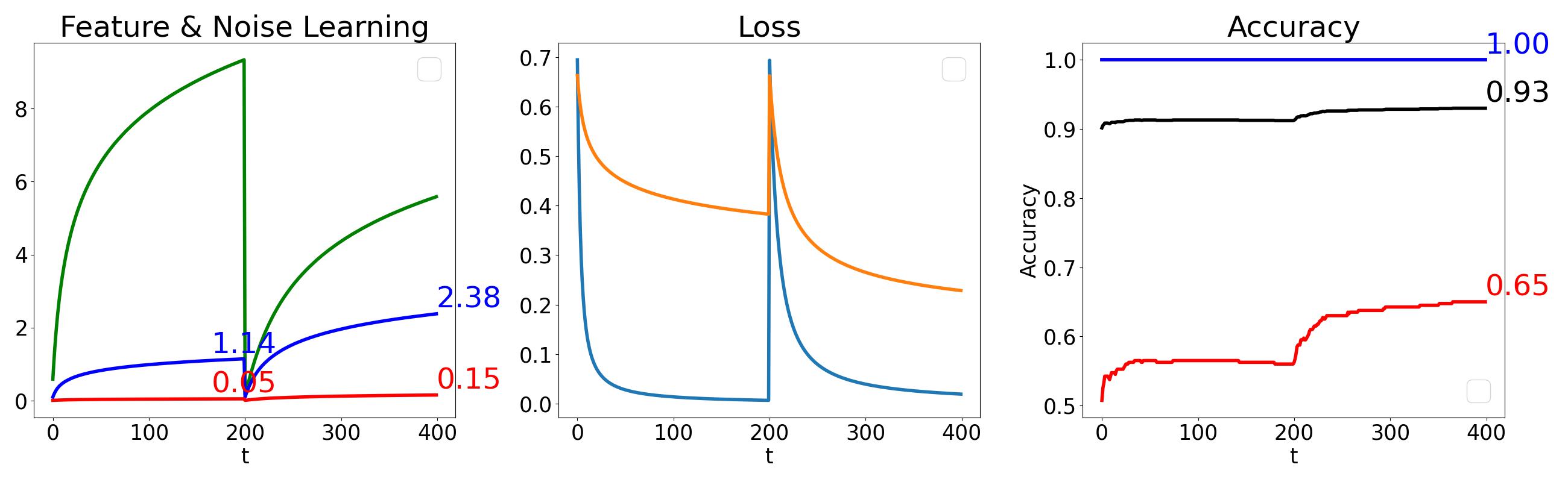}}
\subfigure[Uncertainty Sampling]{\includegraphics[width=0.4975\textwidth]{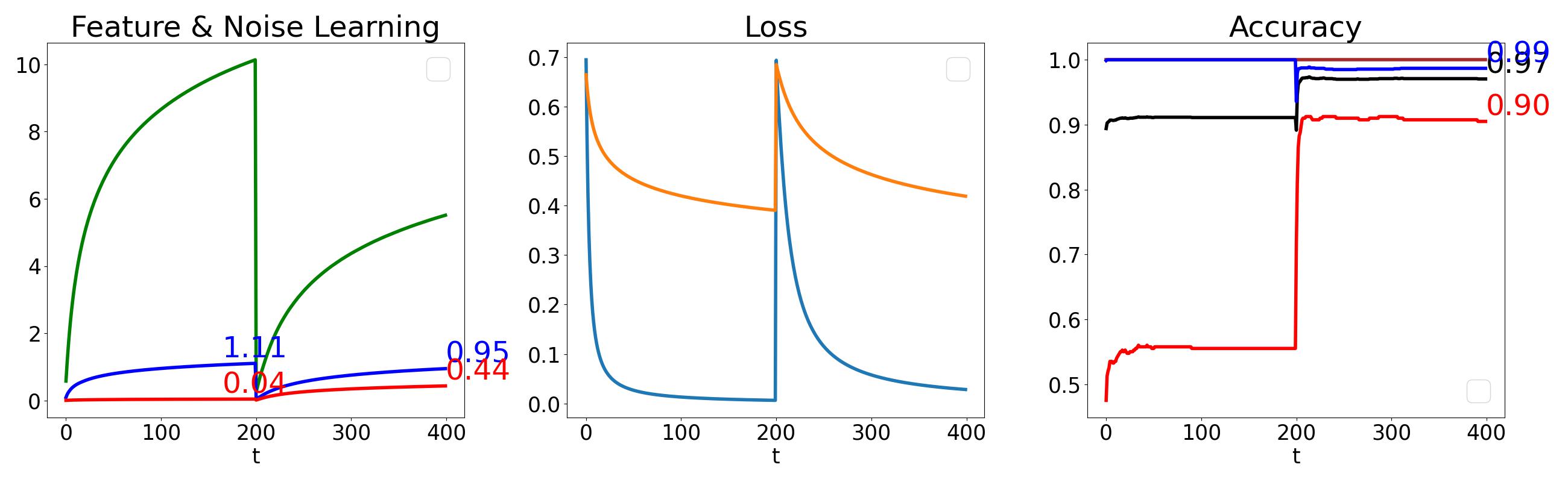}}
\subfigure[Diversity Sampling]{\includegraphics[width=0.4975\textwidth]{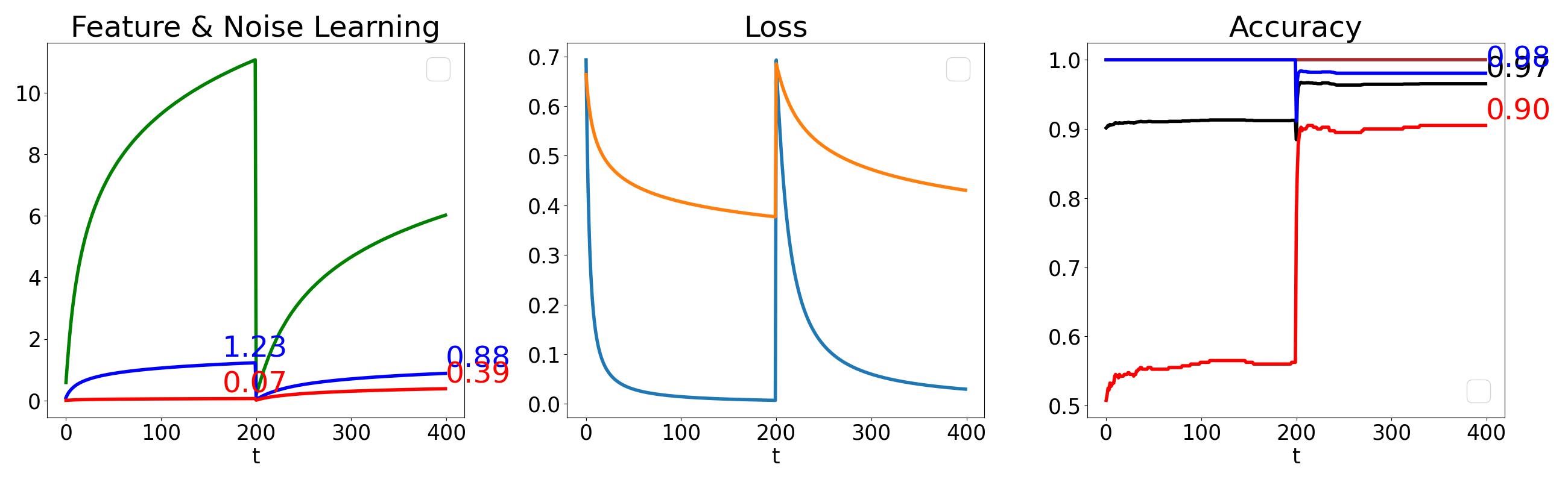}}
\caption{Learning/memorization progress of features and noise ($\gamma_l$ represents $\max_{j,k} \gamma_{j,k,l}^{(t)}$, and $\rho$ represents $\max_{j,k,i} \rho_{j,k,i}^{(t)}$, train/test losses, and test accuracy of the full-trained model and the three querying algorithms, with $T^*=200$,  $d=2000$, $\|\boldsymbol{\mu}_1\|=9$, $p=p^*=0.2$, $\|\boldsymbol{\mu}_2\|=3$, $n_{CNN}=200$, $n_0=10$, $n^{*}=30$ and $\lvert \mathcal{P} \rvert = 190$.}
\label{fig:baseline_lineardata}
\end{figure*}

We note that (\ref{eq:Comparision_Diversity}) is irrelevant to the $l_p$ distance measure metric (i.e., $\forall p \in [1, \infty) $). This is because we can eliminate the scaling term $m^{\frac{1}{p}}$ at two sides of the inequality when examining the probability lower bound (see more details in Appendix \ref{app:Order-dependent Sampling(Querying Analysis)}). Based on Lemma \ref{lem:final coefficient}, the event (\ref{eq:Comparision_Uncertainty}) and event (\ref{eq:Comparision_Diversity}) could be all simplified to the following shared sufficient event
\[
\{\Theta(\underset{j,r}{\mathbb{E}}(\gamma_{j, r, l_{\mathbf{x}^{\prime}}})) - \Theta(\underset{j,r}{\mathbb{E}}(\gamma_{j, r, l_{\mathbf{x}}})) > \max_{j, r, l}\{ \left| \left\langle\mathbf{w}_{j, r}^{(t)}, \mathbf{z}_l\right\rangle \right|\} \}.
\]
This implies that both the event $\{\mathbf{x}^{\prime} \preceq_{C}^{(t)} \mathbf{x}\}$ and the event $\{\mathbf{x}^{\prime} \preceq_{D}^{(t)} \mathbf{x}\}$ have a common occurrence where the model's learning of $\boldsymbol{\mu}_{l_{\mathbf{x}}}$ is considerably worse compared to its learning of $\boldsymbol{\mu}_{l_{\mathbf{x}^{\prime}}}$. Based on this observation and Lemma \ref{lem:final coefficient}, we can deduce the following lemma with some effort.

\begin{lemma}
Under the same conditions as Proposition \ref{prop:sample stage: learning on sample}, denoting $\mathbf{X}_{\mathcal{P}}^1 \subsetneqq \mathcal{P}$ as the collection of all the data points with strong feature $\boldsymbol{\mu}_1$ in $\mathcal{P}$, and  $\mathbf{X}_{\mathcal{P}}^2 \subsetneqq \mathcal{P}$ as the collection of data points with weak feature $\boldsymbol{\mu}_2$, we have the conclusion that with probability more than 1-$\Theta(\delta^\prime)$, $\mathbf{X}_{\mathcal{P}}^1 \preceq_{C}^{(t)} \mathbf{X}_{\mathcal{P}}^2$ and $\mathbf{X}_{\mathcal{P}}^1 \preceq_{D}^{(t)} \mathbf{X}_{\mathcal{P}}^2$ ($\forall p \in \left[1, \infty\right)$) both hold.
\label{Lem:order_pool}\end{lemma}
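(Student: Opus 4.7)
The plan is to reduce both set-level orderings to pairwise orderings via Lemma~\ref{lemma of comparison}, then verify the sufficient event in each case with overwhelming probability uniformly in the pair. Fix an arbitrary pair $(\mathbf{x}^{1},\mathbf{x}^{2})\in\mathbf{X}_{\mathcal{P}}^{1}\times\mathbf{X}_{\mathcal{P}}^{2}$ with labels $(y^{1},y^{2})\in\{\pm 1\}^{2}$. Applying Lemma~\ref{lemma of comparison} with the substitutions $\mathbf{x}'=\mathbf{x}^{1}$ and $\mathbf{x}=\mathbf{x}^{2}$, so that $l_{\mathbf{x}'}=1$ and $l_{\mathbf{x}}=2$, it suffices to verify the displayed sufficient events (\ref{eq:Comparision_Uncertainty}) and (\ref{eq:Comparision_Diversity}) specialised to this choice in order to conclude $\mathbf{x}^{1}\preceq_{C}^{(t)}\mathbf{x}^{2}$ and $\mathbf{x}^{1}\preceq_{D}^{(t)}\mathbf{x}^{2}$ respectively.

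For the lower bound on the LHS, I would substitute the uniform estimate $\gamma_{j,r,l}^{(t)}=\Theta(\tau_{l}\,n\,\sigma_{p}^{-2}d^{-1}\|\boldsymbol{\mu}_{l}\|_{2}^{2})$ from Lemma~\ref{lem:final coefficient}, which holds for every $(j,r)$ and therefore passes through $\mathbb{E}_{r}$. Combined with the binomial-tail concentration $\tau_{1}=\Theta(1-p)$ and $\tau_{2}=\Theta(p)$ that already underlies the paper's data-generation analysis, the Uncertainty LHS becomes
\[
\Theta\!\bigl(n\sigma_{p}^{-2}d^{-1}\bigr)\cdot\bigl[(1-p)\|\boldsymbol{\mu}_{1}\|_{2}^{2}-p\|\boldsymbol{\mu}_{2}\|_{2}^{2}\bigr],
\]
and the feature-norm-gap hypothesis $\|\boldsymbol{\mu}_{1}\|_{2}^{2}-\|\boldsymbol{\mu}_{2}\|_{2}^{2}=\Omega(\sigma_{p}^{2}(dn_{0}^{-1}\log(m/\delta'))^{1/2})$ together with $p<1/2$ makes this positive and of order $\Omega(n\sigma_{p}^{-2}d^{-1}(\|\boldsymbol{\mu}_{1}\|_{2}^{2}-\|\boldsymbol{\mu}_{2}\|_{2}^{2}))$. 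For the Diversity LHS, writing $\bar{\gamma}=\tau_{1}\gamma_{1}+\tau_{2}\gamma_{2}$ with $\tau_{l}$ evaluated on $\mathcal{D}_{n_{0}}$, a direct algebraic expansion using $\gamma_{1}>\bar{\gamma}>\gamma_{2}$ gives $|\gamma_{2}-\bar{\gamma}|-|\gamma_{1}-\bar{\gamma}|=(1-2\tau_{2})(\gamma_{1}-\gamma_{2})$; under $\tau_{2}\leq p<1/2$ this is again positive and of the same signal order $\Omega(\gamma_{1})$.

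The pair-independent RHS $\max_{j,r,l}|\langle\mathbf{w}_{j,r}^{(t)},\mathbf{z}_{l}\rangle|$ is upper-bounded by decomposing $\mathbf{w}_{j,r}^{(t)}$ via the signal-noise decomposition~(\ref{eq:firstsnd}) and using standard sub-Gaussian concentration of $\langle\mathbf{w}_{j,r}^{(0)},\mathbf{z}_{l}\rangle$ together with the noise-coefficient controls supporting Lemma~\ref{lem:final coefficient}; under Condition~\ref{Con4.1} this bound is smaller than the signal-scale LHS by at least a polynomial factor in $d^{-1/2}$. Since the RHS is a single global quantity and the LHS estimate depends on the pair only through the (fixed) feature types and the four label patterns, a single $O(1)$-term union bound over the underlying concentration events delivers both orderings simultaneously for every pair with probability at least $1-\Theta(\delta')$. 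The uniformity in $p\in[1,\infty)$ for Diversity follows because, as noted after (\ref{eq:Comparision_Diversity}), the $m^{1/p}$ scaling cancels across the inequality. The main obstacle I anticipate is the Diversity bookkeeping: the two absolute-value terms nearly cancel, so the constants hidden in the two layers of $\Theta(\cdot)$ must be tracked carefully to ensure the residual stays signal-scale rather than collapsing into the noise-scale RHS; this is precisely where the assumption $p<1/2$, together with the concentration $\tau_{2}\leq p(1+o(1))$, is essential to keep the factor $(1-2\tau_{2})$ bounded away from zero.
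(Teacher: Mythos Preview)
Your overall strategy mirrors the paper's: invoke Lemma~\ref{lemma of comparison} pairwise, lower-bound the LHS via Lemma~\ref{lem:final coefficient}, upper-bound the RHS by Gaussian concentration, and aggregate. Your algebra for the Diversity LHS---yielding the residual $(\tau_1-\tau_2)(\gamma_1-\gamma_2)$---matches the paper's derivation in (\ref{Eq:OmegaD}).

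There is, however, a genuine gap in your aggregation step. You assert that the RHS $\max_{j,r,l}|\langle\mathbf{w}_{j,r}^{(t)},\mathbf{z}_{l}\rangle|$ is ``pair-independent'' and ``a single global quantity'', permitting an $O(1)$-term union bound. This is false: in Lemma~\ref{lemma of comparison} the vectors $\mathbf{z}_1,\mathbf{z}_2$ are the \emph{noise patches of the two specific samples} $\mathbf{x}',\mathbf{x}$ being compared (cf.\ the proof of Proposition~\ref{prop_App:order_newdata}, where $\mathbf{x}=[y\boldsymbol{\mu}_2,\mathbf{z}_2]$ and $\mathbf{x}'=[y'\boldsymbol{\mu}_1,\mathbf{z}_1]$). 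Different pairs involve different fresh Gaussians, so the RHS event genuinely varies across the $|\mathbf{X}_{\mathcal{P}}^1|\cdot|\mathbf{X}_{\mathcal{P}}^2|=\Theta(p(1-p)|\mathcal{P}|^2)$ pairs. The paper handles this by first establishing the per-pair bound $1-\delta'$ (Proposition~\ref{prop_App:order_newdata}) and then multiplying over all pairs to obtain $(1-\delta')^{\Theta(|\mathcal{P}|^2)}=1-\Theta(\delta')$; the polynomial-in-$|\mathcal{P}|$ blow-up is absorbed because the per-pair failure is exponentially small under the feature-gap hypothesis. Your argument can be repaired the same way---a union bound over the $|\mathcal{P}|$ pool noise vectors, or over all pairs as the paper does---but the $O(1)$ claim as stated does not stand.
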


This lemma directly implies the result in Proposition \ref{prop:sample stage: learning on sample}.

\subsection{Label Complexity-based Test Error Analysis}
To assess the generalization ability of all the three querying algorithms before and after querying, we establish a comprehensive analysis regime that examines the impact of label complexity for each type of feature on the test error, via a single inferential process. Specifically, We introduce the following lemma, employing a standard proving technique utilized in prior research \cite{chatterji2021finite, frei2022benign, kou2023benign, meng2023benign}.

\begin{lemma}
    Under Condition \ref{Con4.1}, $\forall \varepsilon>0$, $\exists\ t=\widetilde{O}\left(\eta^{-1} \varepsilon^{-1} m n_0 d^{-1} \sigma_p^{-2}\right)$, we have the following two situations before and after querying (i.e., $\forall s \in \{0, 1\}$) for three quering algorithms:
\begin{itemize}
    \item The training loss converges to $\varepsilon$, i.e., $L_S\left(\mathbf{W}^{(t)}\right) \leq \varepsilon$.
    \item If $\forall l \in \{1, 2\}, n_{s, l} 
     \geq  C_1 \sigma_p^4 d \|\boldsymbol{\mu}_l\|_2^{-4}$ holds, the test error achieves Bayes-optimal: $L_{\mathcal{D}^*}^{0-1}\left(\mathbf{W}^{(t)}\right) \leq p^*_{1} \cdot \exp \left( \dfrac{-n_{s, 1}\|\boldsymbol{\mu}_1\|_2^4}{C_3 \sigma_p^4 d} \right) + p^*_{2} \cdot \exp \left( \dfrac{-n_{s, 2}\|\boldsymbol{\mu}_2 \|_2^4}{C_4 \sigma_p^4 d}. \right)$
    \item If $\exists l^\prime \in \{1, 2\}, n_{s, {l^\prime}} \leq C_2 \sigma_p^4 d \|\boldsymbol{\mu}_{l^\prime}\|_2^{-4}$ holds, the test error stays constant-level: $L_{\mathcal{D}^*}^{0-1}\left(\mathbf{W}^{(t)}\right) \geq 0.12 \cdot p^*_{l^\prime}.$
\end{itemize}
Here $p^*_l$ denotes the occurrence probability of feature $\boldsymbol{\mu}_{l}$, $C_1$, $C_2$, 
$C_3$ and $C_4$ are some positive constants.
\label{main:lem for thm}\end{lemma}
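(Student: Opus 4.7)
The plan is to obtain all three assertions from a single decomposition of $y^{*}\cdot f(\mathbf{W}^{(t)},\mathbf{x}^{*})$ at a fresh test point, fed by the coefficient scales already produced in Lemma \ref{lem:final coefficient}. First, for the training-loss bound, I would follow the now-standard two-stage gradient descent analysis used in the feature learning line of work (Cao et al.\ 2022, Kou et al.\ 2023, Meng et al.\ 2023). Specifically, I would verify inductively that $\gamma_{j,r,l}^{(t)}$ and $\rho_{j,r,i}^{(t)}$ remain within the scales stated in Lemma \ref{lem:final coefficient}, then prove an approximate descent lemma showing that once noise memorization is activated, $L_{S}(\mathbf{W}^{(t+1)}) \le L_{S}(\mathbf{W}^{(t)}) - \Omega(\eta m^{-1} n_0 d^{-1}\sigma_{p}^{-2})\cdot L_{S}(\mathbf{W}^{(t)})$. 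Summing the geometric decrease yields $L_{S}(\mathbf{W}^{(t)}) \le \varepsilon$ at some $t=\widetilde{O}(\eta^{-1}\varepsilon^{-1} m n_0 d^{-1}\sigma_{p}^{-2})$; the argument is identical for each of the three querying algorithms because it depends only on the current labeled set.

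Second, for the test error, I would plug the decomposition (\ref{eq:firstsnd}) into $f(\mathbf{W}^{(t)},\mathbf{x}^{*})$ and split $y^{*}\cdot f$ into (i) a signal term on the feature patch, whose dominant contribution after ReLU is $\Theta(\underset{j,r}{\mathbb{E}}\gamma_{j,r,l}^{(t)})$ with $l$ being the test feature, (ii) an initialization term of order $\widetilde{O}(\sigma_{0}\|\boldsymbol{\mu}_{l}\|_2)$, and (iii) a cross-noise term $\sum_{i}(\rho_{j,r,i}^{(t)}/\|\boldsymbol{\xi}_i\|_2^{2})\langle \boldsymbol{\xi}_i,\boldsymbol{\xi}^{*}\rangle$. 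Because $\boldsymbol{\xi}^{*}\sim N(\mathbf{0},\sigma_{p}^{2}\mathbf{I})$ is independent of the training draws, the cross-noise term is, conditional on $\mathbf{W}^{(t)}$, a mean-zero sub-Gaussian with variance controlled by $\sum_i (\rho_{j,r,i}^{(t)})^{2}\sigma_{p}^{2}/\|\boldsymbol{\xi}_i\|_2^{2}=O(n_s/(\sigma_{p}^{2}d))$, using $\rho_{j,r,i}^{(t)}=O(1)$ from Lemma \ref{lem:final coefficient} and the Gaussian-norm concentration $\|\boldsymbol{\xi}_i\|_2^{2}=\Theta(\sigma_{p}^{2}d)$ under Condition \ref{Con4.1}. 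With Lemma \ref{lem:final coefficient} giving $\gamma_{j,r,l}^{(t)}=\Theta(n_{s,l}\|\boldsymbol{\mu}_l\|_2^{2}/(\sigma_{p}^{2}d))$, the signal-to-noise ratio at test is $\Theta(n_{s,l}\|\boldsymbol{\mu}_{l}\|_2^{4}/(\sigma_{p}^{4}d))^{1/2}$.

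Third, the two regimes follow by applying tail bounds in opposite directions to the same sub-Gaussian random variable. In the Bayes-optimal regime $n_{s,l}\ge C_{1}\sigma_{p}^{4}d\|\boldsymbol{\mu}_{l}\|_{2}^{-4}$, the signal dominates the noise standard deviation by a sufficiently large multiplicative constant, so a Hoeffding/sub-Gaussian tail inequality yields $\Pr[y^{*}f<0\mid \text{feature}=l]\le \exp(-n_{s,l}\|\boldsymbol{\mu}_{l}\|_2^{4}/(C_3\sigma_{p}^{4}d))$ (and analogously with $C_4$ for $l=2$); splitting by the test feature probabilities $p_1^{*}, p_2^{*}$ and union-bounding gives the stated upper bound. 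Conversely, when some $n_{s,l'}\le C_{2}\sigma_{p}^{4}d\|\boldsymbol{\mu}_{l'}\|_2^{-4}$, the signal is at most a fixed constant times the noise standard deviation, and a Gaussian anti-concentration/Berry–Esseen-type lower bound on the cross-noise term yields $\Pr[y^{*}f<0\mid \text{feature}=l']\ge 0.12$, so the total error is at least $0.12\cdot p_{l'}^{*}$.

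The main obstacle is propagating these Gaussian-type bounds through the ReLU, so that both an upper tail (benign case) and an anti-concentration lower tail (harmful case) remain usable. To resolve it, I would show, on a high-probability event from Condition \ref{Con4.1} on $\sigma_0$, $d$, $m$, that for each feature $l$ the set of neurons with $\gamma_{j,r,l}^{(t)}>0$ is large enough and has consistent activation patterns on $y\boldsymbol{\mu}_l$ (so ReLU does not kill the signal), while the noise-patch activations $\sigma'(\langle\mathbf{w}_{j,r}^{(t)},\boldsymbol{\xi}^{*}\rangle)$ are nearly balanced between $\{\pm 1\}$-neurons so that the noise inner product on $\boldsymbol{\xi}^{*}$ preserves its sub-Gaussian behavior and admits anti-concentration. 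This activation-pattern bookkeeping, done piecewise-linearly as in Kou et al.\ 2023 and Meng et al.\ 2023, together with the scale computations above, yields the three bullets of Lemma \ref{main:lem for thm} with the claimed failure probability $\Theta(\delta+\delta')$ under Condition \ref{Con4.1}.
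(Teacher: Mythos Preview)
Your training-loss and upper-bound plans are in the right spirit but differ in mechanism from the paper. For convergence, the paper does not prove a multiplicative descent lemma; it constructs a reference point $\mathbf{W}^{*}$ (with $\mathbf{w}_{j,r}^{*}=\mathbf{w}_{j,r}^{(0)}+5\log(2/\varepsilon)\sum_{i:y_i=j}\boldsymbol{\xi}_i/\|\boldsymbol{\xi}_i\|_2^{2}$) and telescopes $\|\mathbf{W}^{(t)}-\mathbf{W}^{*}\|_F^{2}$ to obtain $\frac{1}{t-T_1+1}\sum_{s=T_1}^{t}L_S(\mathbf{W}^{(s)})\le \|\mathbf{W}^{(T_1)}-\mathbf{W}^{*}\|_F^{2}/(\eta(t-T_1+1))+\varepsilon$. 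For the test-error upper bound, rather than decomposing the cross-noise term coefficient-by-coefficient and then bookkeeping ReLU activations, the paper treats $g(\boldsymbol{\xi})=\sum_{r}\sigma(\langle\mathbf{w}_{-y,r}^{(t)},\boldsymbol{\xi}\rangle)$ directly as a $\bigl(\sum_r\|\mathbf{w}_{-y,r}^{(t)}\|_2\bigr)$-Lipschitz function of the fresh Gaussian $\boldsymbol{\xi}$ and applies Gaussian Lipschitz concentration (Vershynin, Thm.\ 5.2.2). Combined with $\|\mathbf{w}_{j,r}^{(t)}\|_{2}=\Theta(\sigma_{p}^{-1}d^{-1/2}n^{1/2})$ and $\langle\mathbf{w}_{y,r}^{(t)},y\boldsymbol{\mu}_{l}\rangle=\Theta(\gamma_{y,r,l}^{(t)})$, this yields the exponential tail with no activation-pattern analysis at all; your route may be completable but is noticeably longer.

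The substantive gap is the lower bound. ``Gaussian anti-concentration / Berry--Esseen'' does not apply here: the quantity you need to be large with constant probability is $\sum_{r}\sigma(\langle\mathbf{w}_{1,r}^{(t)},\boldsymbol{\xi}\rangle)-\sum_{r}\sigma(\langle\mathbf{w}_{-1,r}^{(t)},\boldsymbol{\xi}\rangle)$, a difference of sums of \emph{correlated} rectified Gaussians, not a sum of independent terms, and ``balanced activations'' does not by itself deliver a small-ball bound. The paper instead uses a pigeonhole--TV argument (following Kou et al.\ 2023): it constructs a short vector $\mathbf{v}$ with $\|\mathbf{v}\|_{2}\le 0.02\sigma_{p}$ such that $\sum_{j'\in\{\pm 1\}}\bigl(g(j'\boldsymbol{\xi}+\mathbf{v})-g(j'\boldsymbol{\xi})\bigr)\ge 4C_{6}\max_{j}\sum_{r}\gamma_{j,r,l'}^{(t)}$ for \emph{every} $\boldsymbol{\xi}$, so by pigeonhole one of $\pm\boldsymbol{\xi},\pm\boldsymbol{\xi}+\mathbf{v}$ lies in the bad set $\Omega_{\boldsymbol{\xi}}$. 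Since the TV distance between $N(\mathbf{0},\sigma_{p}^{2}\mathbf{I})$ and $N(\mathbf{v},\sigma_{p}^{2}\mathbf{I})$ is at most $\|\mathbf{v}\|_{2}/(2\sigma_{p})$, the four events have nearly equal probability, forcing $\mathbb{P}(\Omega_{\boldsymbol{\xi}})\ge 0.24$ and hence $L^{0\text{-}1}_{\mathcal{D}^{*}}\ge 0.12\cdot p^{*}_{l'}$. The construction of $\mathbf{v}$ (along the dominant $\boldsymbol{\xi}_i$-directions in $\mathbf{w}^{(t)}$) is precisely where the hypothesis $n_{s,l'}\le C_{2}\sigma_{p}^{4}d\|\boldsymbol{\mu}_{l'}\|_{2}^{-4}$ enters, to keep $\mathbf{v}$ short enough; this is the key idea your proposal is missing.
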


By Condition \ref{Con4.1}, along with the findings from Lemma \ref{Lem:order_pool} and Lemma \ref{main:lem for thm}, we can deduce that only the two NAL algorithms are able to obtain ample $\boldsymbol{\mu}_2$ for adequate learning after querying, which support the results in Proposition \ref{Prop4.3} and Theorem \ref{theory main}. Also, by Lemma \ref{lem:rho n} and Lemma \ref{main:lem for thm}, Random Sampling necessitates a label complexity that is approximately $\Theta(p^{-1})$ times larger to sufficiently learn $\boldsymbol{\mu}_2$. This finding aligns with the conclusions in Corollary \ref{coro:label complexity}.

\section{Experiments}\label{section:EXP}

In this section, we demonstrate the validity of our theoretical analysis through simulations. The experiments regarding the theories of XOR data as well as other data settings are also conducted, please refer to Appendix \ref{Expe:XORdata} for further details.\par
Here we generate synthetic data exactly following Definition \ref{Def3.1}. Specifically, we let the dimensionality as $d=2000$, and strengths of the strong and weak feature as $\|\boldsymbol{\mu}_1 \|_2 = 9$ and $ \|\boldsymbol{\mu}_2 \|_2 = 3$, respectively. For the occurrence probability, we let $p=p^*=0.2$. For size setting of data, we let the $n_{CNN}$=200, $n_0$=10, $n^*=30$ and $\hat{n}=40$, and set $\lvert \mathcal{P} \rvert = 190$. For model initialization, we let $\sigma_p = 1$ and $\sigma_0 = 0.01$. The parameters are initialized using the default method in PyTorch, and the models are trained using gradient descent with a learning rate of 0.1 for 200 iterations at the initial stage and the stage after sampling. All the data points are generated beforehand and shared by all the algorithms, thus the results are fairly comparable. \par

Figure \ref{fig:baseline_lineardata} illustrates the effectiveness of both Uncertainty Sampling and Diversity Sampling in comparison to Random Sampling and full-trained ReLU CNN model with ample quantity of training samples. It’s evident that the learning of weak $\&$ rare feature (quantified by $\gamma_2$) in hard-to-learn samples are significantly poorer than strong $\&$ common feature (quantified by $\gamma_1$) in easy-to-learn samples at the initial stage. After querying, we see explicitly that both the NAL algorithms learn the weak $\&$ rare feature well and achieve comparable test performance compared to full trained model after querying. In contrast, Random Sampling continues to exhibit limited learning progress of weak features and results in poor test accuracy. The results verify Proposition \ref{Prop4.3} and Theorem \ref{theory main}. Illustrations of the querying stage details are deferred to Appendix \ref{app:Sampling Information}.

\section{Potential Extension and Implication for Practical NALs}\label{Section: Potential Expention and Implication}

In this section, we first explore the potential extensions of our findings to broader theoretical realm, then elaborate on the practical implications derived from our theories.\par

\textbf{Potential Extension to Multi-round NALs.} The intrinsic principle we uncovered underlying both NAL methods is not tied to the single-round setting, and a fine-grained analysis can be conducted on complex iterative processes, as discussed in Appendix \ref{Appendix: Multi-round}.

\textbf{Potential Extension to Broader NALs: BADGE \cite{ash2019deep} as an Exemplar.} The key idea behind BADGE is to prioritize samples exhibiting large and diverse gradients. Our analysis reveals that such samples in our context tend to have smaller-scale latent representations ($\gamma_{j,r,l}$ is smaller) or more diverse gradient directions (many diverging $\gamma_{j,r,l}$) due to the non-increasing nature of the logistic loss function. These characteristics align with the cases described in Lemma \ref{lemma of comparison}, which in our context refers to samples with lower $\gamma_{j,r,l}$ that correspond to yet-to-be-learned features. Therefore, BADGE is well-grounded in the principles uncovered by our theoretical analysis. A more detailed discussion is provided in Appendix \ref{Appendix: BADGE}.\par

\textbf{Potential Extension to Examine Criteria Preference.}
Our results of test error is based on the conditions that there is a clear learning progress disparity between different task-oriented features, under which we see that both NALs inherently favour samples with yet-to-be-learnt features. However, when this disparity does not hold prominently as dicussed in Appendix \ref{app: Case 2: flexible cases}, the behaviors of uncertainty-based and diversity-based sampling may diverge. For example, uncertainty sampling can more precisely prioritize samples with underexplored features when label budgets are not highly constrained. Conversely, diversity sampling may be preferred when label complexity is very limited, as it can enhance the model's ability to capture diverse low-dimensional patterns. This argument is consistent with the claim in recent survey \cite{zhan2021comparative}. Our theory also suggests that when the ``easiness'' of learning various task-oriented features is balanced, uniform random sampling may suffice, without clear advantages for NALs. Additionally, in scenarios of active fine-tuning where the task objective changes, the task-oriented representation could shift, reducing the effectiveness of NAL methods that leverage prior neural representations for sampling. In such cases, random sampling may already be a satisfactory choice. A refined discussion is in Appendix \ref{Appendix: Criteria Preference}.

\textbf{Practical Lessons from Our Theoretical Results.}
Our theoretical analysis yields several important practical insights, as detailed in Appendix \ref{Appendix: Practical Lessons}. First, we find that NALs have the potential to surpass the performance of fully-trained neural networks. As corroborated by the results in \citet{lu2023benign}, NALs can more effectively balance the learning progress across features with different lengths. Additionally, our work suggests that techniques capable of capturing the meaningful orthogonal components of a NN's features or gradients, such as ICA \cite{yamagiwa2023ICA}, could help identify samples underrepresented in NN's latent space. State-of-the-art methods like BADGE \cite{ash2019deep} leverages this idea upon the gradient components.

\section{Conclusion}
In this work, we theoretically demystify and unify the primary query criteria-based NAL methods. We prove they inherently prioritize \hyperlink{msg:perplexing}{\textbf{perplexing samples}} - those with yet-to-be-learned features. This ensures adequate learning of all feature types, underpinning their strong generalization with limited labeled data. Future work can extend our theory to other complex NAL scenarios, such as multi-model committee and stream-based sampling. Additionally, the potential extensions and implications discussed in Section \ref{Section: Potential Expention and Implication} represent valuable directions for further fine-grained exploration.

\section*{Acknowledgements}
We thank the anonymous reviewers for their instrumental comments. DB and HW are supported in part by
the Research Grants Council of the Hong Kong Special Administration Region (Project No. CityU 11206622). WH was partially supported by JSPS KAKENHI (24K20848). TS was partially supported by JSPS KAKENHI (24K02905) and JST CREST (JPMJCR2115, JPMJCR2015).

\section*{Impact Statement}
This paper presents work whose goal is to advance the field of Machine Learning. There are many potential societal consequences of our work, none which we feel must be specifically highlighted here.

\bibliographystyle{plainnat}
\bibliography{main}

\begin{thebibliography}{104}
\providecommand{\natexlab}[1]{#1}
\providecommand{\url}[1]{\texttt{#1}}
\expandafter\ifx\csname urlstyle\endcsname\relax
  \providecommand{\doi}[1]{doi: #1}\else
  \providecommand{\doi}{doi: \begingroup \urlstyle{rm}\Url}\fi

\bibitem[Aggarwal et~al.(2014)Aggarwal, Kong, Gu, Han, and
  Philip]{aggarwal2014active}
Charu~C Aggarwal, Xiangnan Kong, Quanquan Gu, Jiawei Han, and S~Yu Philip.
\newblock Active learning: A survey.
\newblock In \emph{Data classification}, pages 599--634. Chapman and Hall/CRC,
  2014.

\bibitem[Allen-Zhu and Li(2023)]{allenzhu2023understanding}
Zeyuan Allen-Zhu and Yuanzhi Li.
\newblock Towards understanding ensemble, knowledge distillation and
  self-distillation in deep learning.
\newblock In \emph{The Eleventh International Conference on Learning
  Representations}, 2023.

\bibitem[Allen-Zhu et~al.(2019)Allen-Zhu, Li, and Song]{allenconvergence}
Zeyuan Allen-Zhu, Yuanzhi Li, and Zhao Song.
\newblock A convergence theory for deep learning via over-parameterization.
\newblock In \emph{Proceedings of the 36th International Conference on Machine
  Learning}, volume~97, pages 242--252. PMLR, 2019.

\bibitem[Ash et~al.(2020)Ash, Zhang, Krishnamurthy, Langford, and
  Agarwal]{ash2019deep}
Jordan~T. Ash, Chicheng Zhang, Akshay Krishnamurthy, John Langford, and Alekh
  Agarwal.
\newblock Deep batch active learning by diverse, uncertain gradient lower
  bounds.
\newblock In \emph{International Conference on Learning Representations}, 2020.

\bibitem[Ba et~al.(2022)Ba, Erdogdu, Suzuki, Wang, Wu, and Yang]{Ba2022onstep}
Jimmy Ba, Murat~A Erdogdu, Taiji Suzuki, Zhichao Wang, Denny Wu, and Greg Yang.
\newblock High-dimensional asymptotics of feature learning: How one gradient
  step improves the representation.
\newblock In \emph{Advances in Neural Information Processing Systems},
  volume~35, pages 37932--37946. Curran Associates, Inc., 2022.

\bibitem[Balcan et~al.(2006)Balcan, Beygelzimer, and
  Langford]{balcan2006agnostic}
Maria{-}Florina Balcan, Alina Beygelzimer, and John Langford.
\newblock Agnostic active learning.
\newblock In \emph{Proceedings of the 23rd international conference on Machine
  Learning}, volume 148, pages 65--72, 2006.

\bibitem[Cai et~al.(2013)Cai, Zhang, and Zhou]{cai2013maximizing}
Wenbin Cai, Ya~Zhang, and Jun Zhou.
\newblock Maximizing expected model change for active learning in regression.
\newblock In \emph{2013 IEEE 13th International Conference on Data Mining},
  pages 51--60, 2013.

\bibitem[Cao and Gu(2019{\natexlab{a}})]{cao2018boundsgd}
Yuan Cao and Quanquan Gu.
\newblock Generalization bounds of stochastic gradient descent for wide and
  deep neural networks.
\newblock In \emph{Advances in Neural Information Processing Systems},
  volume~32. Curran Associates, Inc., 2019{\natexlab{a}}.

\bibitem[Cao and Gu(2019{\natexlab{b}})]{cao2019generalization}
Yuan Cao and Quanquan Gu.
\newblock Generalization error bounds of gradient descent for learning
  over-parameterized deep {R}e{LU} networks.
\newblock \emph{arXiv preprint arkiv: 1902.01384}, 2019{\natexlab{b}}.

\bibitem[Cao et~al.(2020)Cao, Fang, Wu, Zhou, and Gu]{cao2020understanding}
Yuan Cao, Zhiying Fang, Yue Wu, Ding-Xuan Zhou, and Quanquan Gu.
\newblock Towards understanding the spectral bias of deep learning.
\newblock \emph{arXiv preprint arkiv: 1912.01198}, 2020.

\bibitem[Cao et~al.(2022{\natexlab{a}})Cao, Chen, Belkin, and
  Gu]{cao2022benign}
Yuan Cao, Zixiang Chen, Misha Belkin, and Quanquan Gu.
\newblock Benign overfitting in two-layer convolutional neural networks.
\newblock In \emph{Advances in Neural Information Processing Systems},
  volume~35, pages 25237--25250, 2022{\natexlab{a}}.

\bibitem[Cao et~al.(2022{\natexlab{b}})Cao, Gu, and Belkin]{cao2022risk}
Yuan Cao, Quanquan Gu, and Mikhail Belkin.
\newblock Risk bounds for over-parameterized maximum margin classification on
  sub-gaussian mixtures.
\newblock \emph{arXiv preprint arkiv: 2104.13628}, 2022{\natexlab{b}}.

\bibitem[Chatterji and Long(2021)]{chatterji2021finite}
Niladri~S. Chatterji and Philip~M. Long.
\newblock Finite-sample analysis of interpolating linear classifiers in the
  overparameterized regime.
\newblock \emph{Journal of Machine Learning Research}, 22\penalty0
  (129):\penalty0 1--30, 2021.

\bibitem[Chen et~al.(2023{\natexlab{a}})Chen, Cao, and Gu]{chen23brobust}
Jinghui Chen, Yuan Cao, and Quanquan Gu.
\newblock Benign overfitting in adversarially robust linear classification.
\newblock In Robin~J. Evans and Ilya Shpitser, editors, \emph{Proceedings of
  the Thirty-Ninth Conference on Uncertainty in Artificial Intelligence},
  volume 216, pages 313--323, 2023{\natexlab{a}}.

\bibitem[Chen et~al.(2021{\natexlab{a}})Chen, Huang, Nguyen, and
  Weng]{yilan2021svm}
Yilan Chen, Wei Huang, Lam Nguyen, and Tsui-Wei Weng.
\newblock {O}n the equivalence between neural network and support vector
  machine.
\newblock In \emph{Advances in Neural Information Processing Systems},
  volume~34, pages 23478--23490. Curran Associates, Inc., 2021{\natexlab{a}}.

\bibitem[Chen et~al.(2023{\natexlab{b}})Chen, Huang, Zhou, Bian, Han, and
  Cheng]{chen2023understanding}
Yongqiang Chen, Wei Huang, Kaiwen Zhou, Yatao Bian, Bo~Han, and James Cheng.
\newblock Understanding and improving feature learning for out-of-distribution
  generalization.
\newblock In \emph{Thirty-seventh Conference on Neural Information Processing
  Systems}, 2023{\natexlab{b}}.

\bibitem[Chen et~al.(2020)Chen, Cao, Gu, and Zhang]{Chen2020genaralized}
Zixiang Chen, Yuan Cao, Quanquan Gu, and Tong Zhang.
\newblock A generalized neural tangent kernel analysis for two-layer neural
  networks.
\newblock In \emph{Advances in Neural Information Processing Systems},
  volume~33, pages 13363--13373. Curran Associates, Inc., 2020.

\bibitem[Chen et~al.(2021{\natexlab{b}})Chen, Cao, Zou, and
  Gu]{chen2021howmuchoverparameterization}
Zixiang Chen, Yuan Cao, Difan Zou, and Quanquan Gu.
\newblock How much over-parameterization is sufficient to learn deep {R}e{LU}
  networks?
\newblock \emph{arXiv preprint arkiv: 1911.12360}, 2021{\natexlab{b}}.

\bibitem[Chen et~al.(2022)Chen, Deng, Wu, Gu, and Li]{chen2022understanding}
Zixiang Chen, Yihe Deng, Yue Wu, Quanquan Gu, and Yuanzhi Li.
\newblock Towards understanding mixture of experts in deep learning.
\newblock In \emph{Advances in Neural Information Processing Systems},
  volume~35, pages 23049--23062, 2022.

\bibitem[Chen et~al.(2023{\natexlab{c}})Chen, Deng, Li, and Gu]{chen2023clip}
Zixiang Chen, Yihe Deng, Yuanzhi Li, and Quanquan Gu.
\newblock Understanding transferable representation learning and zero-shot
  transfer in {CLIP}.
\newblock \emph{arXiv preprint arkiv: 2310.00927}, 2023{\natexlab{c}}.

\bibitem[Chen et~al.(2023{\natexlab{d}})Chen, Zhang, Kou, Chen, Hsieh, and
  Gu]{chen2023why}
Zixiang Chen, Junkai Zhang, Yiwen Kou, Xiangning Chen, Cho-Jui Hsieh, and
  Quanquan Gu.
\newblock Why does sharpness-aware minimization generalize better than {SGD}?
\newblock In \emph{Thirty-seventh Conference on Neural Information Processing
  Systems}, 2023{\natexlab{d}}.

\bibitem[Chidambaram et~al.(2023)Chidambaram, Wang, Wu, and
  Ge]{chidambaram2023provably}
Muthu Chidambaram, Xiang Wang, Chenwei Wu, and Rong Ge.
\newblock Provably learning diverse features in multi-view data with midpoint
  mixup.
\newblock In \emph{Proceedings of the 40th International Conference on Machine
  Learning}, volume 202, pages 5563--5599, 2023.

\bibitem[Cho et~al.(2024)Cho, Kim, Lee, Shin, and Yoo]{cho2024querying}
Seong~Jin Cho, Gwangsu Kim, Junghyun Lee, Jinwoo Shin, and Chang~D. Yoo.
\newblock Querying easily flip-flopped samples for deep active learning.
\newblock \emph{arXiv preprint arkiv:2401.09787}, 2024.

\bibitem[Deng et~al.(2023)Deng, Yang, Mirzasoleiman, and Gu]{deng2023robust}
Yihe Deng, Yu~Yang, Baharan Mirzasoleiman, and Quanquan Gu.
\newblock Robust learning with progressive data expansion against spurious
  correlation.
\newblock In \emph{Thirty-seventh Conference on Neural Information Processing
  Systems}, 2023.

\bibitem[Devroye et~al.(2023)Devroye, Mehrabian, and Reddad]{devroye2018total}
Luc Devroye, Abbas Mehrabian, and Tommy Reddad.
\newblock The total variation distance between high-dimensional gaussians with
  the same mean.
\newblock \emph{arXiv preprint arkiv: 1810.08693}, 2023.

\bibitem[Du et~al.(2015)Du, Wang, Zhang, Zhang, Liu, Shen, and
  Tao]{du2015exploring}
Bo~Du, Zengmao Wang, Lefei Zhang, Liangpei Zhang, Wei Liu, Jialie Shen, and
  Dacheng Tao.
\newblock Exploring representativeness and informativeness for active learning.
\newblock \emph{IEEE transactions on cybernetics}, 47\penalty0 (1):\penalty0
  14--26, 2015.

\bibitem[Duan et~al.(2024)Duan, Caffo, Bai, Sair, and Jones]{Duan_2024_WACV}
Ruxiao Duan, Brian Caffo, Harrison~X. Bai, Haris~I. Sair, and Craig Jones.
\newblock Evidential uncertainty quantification: A variance-based perspective.
\newblock In \emph{Proceedings of the IEEE/CVF Winter Conference on
  Applications of Computer Vision (WACV)}, pages 2132--2141, January 2024.

\bibitem[Frei et~al.(2022)Frei, Chatterji, and Bartlett]{frei2022benign}
Spencer Frei, Niladri~S Chatterji, and Peter Bartlett.
\newblock Benign overfitting without linearity: Neural network classifiers
  trained by gradient descent for noisy linear data.
\newblock In \emph{Proceedings of Thirty Fifth Conference on Learning Theory},
  volume 178, pages 2668--2703, 2022.

\bibitem[Frei et~al.(2023)Frei, Chatterji, and Bartlett]{Frei2023Random}
Spencer Frei, Niladri~S. Chatterji, and Peter~L. Bartlett.
\newblock Random feature amplification: Feature learning and generalization in
  neural networks.
\newblock \emph{Journal of Machine Learning Research}, 24\penalty0
  (303):\penalty0 1--49, 2023.

\bibitem[Gal et~al.(2017)Gal, Islam, and Ghahramani]{gal2017deep}
Yarin Gal, Riashat Islam, and Zoubin Ghahramani.
\newblock Deep {B}ayesian active learning with image data.
\newblock In \emph{Proceedings of the 34th International Conference on Machine
  Learning}, volume~70, pages 1183--1192, 2017.

\bibitem[Gissin and Shalev-Shwartz(2019)]{gissin2019discriminative}
Daniel Gissin and Shai Shalev-Shwartz.
\newblock Discriminative active learning.
\newblock \emph{arXiv preprint arkiv: 1907.06347}, 2019.

\bibitem[Gu(2014)]{gu2014online}
Quanquan Gu.
\newblock \emph{online and Active Learning of Big networks: Theory and
  Algorithms}.
\newblock Dissertation, University of Illinois at Urbana-Champaign, Urbana, IL,
  09 2014.

\bibitem[Gu et~al.(2014)Gu, Zhang, and Han]{gu2014batch}
Quanquan Gu, Tong Zhang, and Jiawei Han.
\newblock Batch-mode active learning via error bound minimization.
\newblock In \emph{Uncertainty in Artificial Intelligence - Proceedings of the
  30th Conference, UAI 2014}, pages 300--309, 2014.

\bibitem[Houlsby et~al.(2011)Houlsby, Huszár, Ghahramani, and
  Lengyel]{houlsby2011bayesian}
Neil Houlsby, Ferenc Huszár, Zoubin Ghahramani, and Máté Lengyel.
\newblock {B}ayesian active learning for classification and preference
  learning.
\newblock \emph{arXiv preprint arkiv: 1112.5745}, 2011.

\bibitem[Huang et~al.(2020)Huang, Du, Xu, and Liu]{huang2020implicit}
Wei Huang, Weitao Du, Richard Yi~Da Xu, and Chunrui Liu.
\newblock Implicit bias of deep linear networks in the large learning rate
  phase.
\newblock \emph{arXiv preprint arkiv: 2011.12547}, 2020.

\bibitem[Huang et~al.(2021)Huang, Du, and Xu]{huang2021orthogonal}
Wei Huang, Weitao Du, and Richard Yi~Da Xu.
\newblock {O}n the neural tangent kernel of deep networks with orthogonal
  initialization.
\newblock \emph{arXiv preprint arkiv: 2004.05867}, 2021.

\bibitem[Huang et~al.(2022)Huang, Li, Du, Yin, Xu, Chen, and
  Zhang]{huang2022GNTK}
Wei Huang, Yayong Li, Weitao Du, Jie Yin, Richard Yi~Da Xu, Ling Chen, and Miao
  Zhang.
\newblock Towards deepening {G}raph neural networks: A gntk-based optimization
  perspective.
\newblock \emph{arXiv preprint arkiv: 2103.03113}, 2022.

\bibitem[Huang et~al.(2023{\natexlab{a}})Huang, Cao, Wang, Cao, and
  Suzuki]{huang2023graph}
Wei Huang, Yuan Cao, Haonan Wang, Xin Cao, and Taiji Suzuki.
\newblock {G}raph neural networks provably benefit from structural information:
  A feature learning perspective.
\newblock \emph{arXiv preprint arkiv: 2306.13926}, 2023{\natexlab{a}}.

\bibitem[Huang et~al.(2023{\natexlab{b}})Huang, Liu, Chen, Xu, Zhang, and
  Weng]{huang2023analyzing}
Wei Huang, Chunrui Liu, Yilan Chen, Richard Yi~Da Xu, Miao Zhang, and Tsui-Wei
  Weng.
\newblock Analyzing deep {PAC}-{B}ayesian learning with neural tangent kernel:
  Convergence, analytic generalization bound, and efficient hyperparameter
  selection.
\newblock \emph{Transactions on Machine Learning Research}, 2023{\natexlab{b}}.
\newblock ISSN 2835-8856.

\bibitem[Huang et~al.(2023{\natexlab{c}})Huang, Shi, Cai, and
  Suzuki]{huang2023understanding}
Wei Huang, Ye~Shi, Zhongyi Cai, and Taiji Suzuki.
\newblock Understanding convergence and generalization in federated learning
  through feature learning theory.
\newblock In \emph{The Twelfth International Conference on Learning
  Representations}, 2023{\natexlab{c}}.

\bibitem[Jacot et~al.(2020)Jacot, Gabriel, and Hongler]{jacot2020NTK}
Arthur Jacot, Franck Gabriel, and Clément Hongler.
\newblock Neural tangent kernel: Convergence and generalization in neural
  networks.
\newblock \emph{arXiv preprint arkiv: 1806.07572}, 2020.

\bibitem[Jiang et~al.(2024)Jiang, Rajendran, Ravikumar, Aragam, and
  Veitch]{jiang2024origins}
Yibo Jiang, Goutham Rajendran, Pradeep Ravikumar, Bryon Aragam, and Victor
  Veitch.
\newblock {O}n the origins of linear representations in large language lodels.
\newblock \emph{arXiv preprint arkiv: 2403.03867}, 2024.

\bibitem[Joshi et~al.(2009)Joshi, Porikli, and
  Papanikolopoulos]{joshi2009multi}
Ajay~J. Joshi, Fatih Porikli, and Nikolaos Papanikolopoulos.
\newblock Multi-class active learning for image classification.
\newblock In \emph{2009 IEEE Conference on Computer Vision and Pattern
  Recognition}, pages 2372--2379, 2009.

\bibitem[Kampffmeyer et~al.(2016)Kampffmeyer, Salberg, and
  Jenssen]{kampffmeyer2016semantic}
Michael Kampffmeyer, Arnt-Borre Salberg, and Robert Jenssen.
\newblock Semantic segmentation of small objects and modeling of uncertainty in
  urban remote sensing images using deep convolutional [n]eural networks.
\newblock In \emph{Proceedings of the IEEE Conference on Computer Vision and
  Pattern Recognition (CVPR) Workshops}, June 2016.

\bibitem[Karp et~al.(2021)Karp, Winston, Li, and Singh]{karp2021local}
Stefani Karp, Ezra Winston, Yuanzhi Li, and Aarti Singh.
\newblock local signal adaptivity: Provable feature learning in neural networks
  beyond kernels.
\newblock In \emph{Advances in Neural Information Processing Systems},
  volume~34, pages 24883--24897, 2021.

\bibitem[Kim and Suzuki(2024)]{kim2024transformers}
Juno Kim and Taiji Suzuki.
\newblock Transformers learn nonlinear features in context: Nonconvex
  mean-field dynamics on the attention landscape.
\newblock \emph{arXiv preprint arkiv: 2402.01258}, 2024.

\bibitem[Kim et~al.(2024)Kim, Yamamoto, Oko, Yang, and
  Suzuki]{kim2024symmetric}
Juno Kim, Kakei Yamamoto, Kazusato Oko, Zhuoran Yang, and Taiji Suzuki.
\newblock Symmetric mean-field langevin dynamics for distributional minimax
  problems.
\newblock In \emph{The Twelfth International Conference on Learning
  Representations}, 2024.

\bibitem[Kong et~al.(2022)Kong, Jeon, Na, Lee, Lee, and Jung]{kong2022neural}
Seo~Taek Kong, Soomin Jeon, Dongbin Na, Jaewon Lee, Hong-Seok Lee, and Kyu-Hwan
  Jung.
\newblock A neural pre-conditioning active learning algorithm to reduce label
  complexity.
\newblock In \emph{Advances in Neural Information Processing Systems},
  volume~35, pages 32842--32853, 2022.

\bibitem[Kou et~al.(2023{\natexlab{a}})Kou, Chen, Cao, and Gu]{kou2023how}
Yiwen Kou, Zixiang Chen, Yuan Cao, and Quanquan Gu.
\newblock How does semi-supervised learning with pseudo-labelers work? a case
  study.
\newblock In \emph{The Eleventh International Conference on Learning
  Representations}, 2023{\natexlab{a}}.

\bibitem[Kou et~al.(2023{\natexlab{b}})Kou, Chen, Chen, and Gu]{kou2023benign}
Yiwen Kou, Zixiang Chen, Yuanzhou Chen, and Quanquan Gu.
\newblock Benign overfitting in two-layer {R}e{LU} convolutional neural
  networks.
\newblock In \emph{Proceedings of the 40th International Conference on Machine
  Learning}, volume 202, pages 17615--17659, 2023{\natexlab{b}}.

\bibitem[Kou et~al.(2023{\natexlab{c}})Kou, Chen, and Gu]{kou2023implicit}
Yiwen Kou, Zixiang Chen, and Quanquan Gu.
\newblock Implicit bias of gradient descent for two-layer {R}e{LU} and leaky
  {R}e{LU} networks on nearly-orthogonal data.
\newblock In \emph{Thirty-seventh Conference on Neural Information Processing
  Systems}, 2023{\natexlab{c}}.

\bibitem[Kye et~al.(2023)Kye, Choi, Byun, and Chang]{Kye_2023_ICCV}
Seong~Min Kye, Kwanghee Choi, Hyeongmin Byun, and Buru Chang.
\newblock {TiDAL}: Learning training dynamics for active learning.
\newblock In \emph{Proceedings of the IEEE/CVF International Conference on
  Computer Vision (ICCV)}, pages 22335--22345, October 2023.

\bibitem[Lewis and Catlett(1994)]{lewis1994heterogeneous}
David~D. Lewis and Jason Catlett.
\newblock Heterogeneous uncertainty sampling for supervised learning.
\newblock In \emph{Machine Learning Proceedings 1994}, pages 148--156. Morgan
  Kaufmann, 1994.

\bibitem[Li et~al.(2023)Li, Wang, Liu, and Chen]{li2023theoretical}
Hongkang Li, Meng Wang, Sijia Liu, and Pin-Yu Chen.
\newblock A theoretical understanding of shallow vision transformers: Learning,
  generalization, and sample complexity.
\newblock In \emph{The Eleventh International Conference on Learning
  Representations}, 2023.

\bibitem[Li and Liang(2018)]{li2018learning}
Yuanzhi Li and Yingyu Liang.
\newblock Learning overparameterized neural networks via stochastic gradient
  descent on structured data.
\newblock In \emph{Advances in Neural Information Processing Systems},
  volume~31, 2018.

\bibitem[Lu et~al.(2023)Lu, Wu, Yang, and Zou]{lu2023benign}
Miao Lu, Beining Wu, Xiaodong Yang, and Difan Zou.
\newblock Benign oscillation of stochastic gradient descent with large learning
  rate.
\newblock In \emph{NeurIPS 2023 Workshop on Mathematics of Modern Machine
  Learning}, 2023.

\bibitem[Mei et~al.(2018)Mei, Montanari, and Nguyen]{mei2018meanfield}
Song Mei, Andrea Montanari, and Phan-Minh Nguyen.
\newblock A mean field view of the landscape of two-layer neural networks.
\newblock \emph{Proceedings of the National Academy of Sciences}, 115\penalty0
  (33):\penalty0 E7665--E7671, 2018.

\bibitem[Mei et~al.(2019)Mei, Misiakiewicz, and Montanari]{mei2019MFdimension}
Song Mei, Theodor Misiakiewicz, and Andrea Montanari.
\newblock Mean-field theory of two-layers neural networks: Dimension-free
  bounds and kernel limit.
\newblock In \emph{Proceedings of the Thirty-Second Conference on Learning
  Theory}, volume~99, pages 2388--2464. PMLR, 25--28 Jun 2019.

\bibitem[Meng et~al.(2023)Meng, Zou, and Cao]{meng2023benign}
Xuran Meng, Difan Zou, and Yuan Cao.
\newblock Benign overfitting in two-layer {R}e{LU} convolutional neural
  networks for {XOR} data.
\newblock \emph{arXiv preprint arkiv: 2310.01975}, 2023.

\bibitem[Mohamadi et~al.(2022)Mohamadi, Bae, and
  Sutherland]{mohamadi2022making}
Mohamad~Amin Mohamadi, Wonho Bae, and Danica~J. Sutherland.
\newblock Making look-ahead active learning strategies feasible with neural
  tangent kernels.
\newblock In \emph{Advances in Neural Information Processing Systems},
  volume~35, pages 12542--12553, 2022.

\bibitem[Nitanda(2024)]{nitanda2024improvedparticleapproxerror}
Atsushi Nitanda.
\newblock Improved particle approximation error for mean field neural networks.
\newblock \emph{arXiv preprint arXiv:2405.15767}, 2024.

\bibitem[Nitanda and Suzuki(2017)]{NitandaSuzuki2017SPGDensembel}
Atsushi Nitanda and Taiji Suzuki.
\newblock Stochastic particle gradient descent for infinite ensembles.
\newblock \emph{arXiv preprint arXiv:1712.05438}, 2017.

\bibitem[Nitanda et~al.(2021)Nitanda, Wu, and
  Suzuki]{Nitanda2021particledualaveraging}
Atsushi Nitanda, Denny Wu, and Taiji Suzuki.
\newblock Particle dual averaging: optimization of mean field neural networks
  with global convergence rate analysis.
\newblock In \emph{Advances in Neural Information Processing Systems},
  volume~34, pages 19608--19621, 2021.

\bibitem[Nitanda et~al.(2022)Nitanda, Wu, and Suzuki]{Nitanda2022convex}
Atsushi Nitanda, Denny Wu, and Taiji Suzuki.
\newblock Convex analysis of the mean field langevin dynamics.
\newblock In \emph{Proceedings of The 25th International Conference on
  Artificial Intelligence and Statistics}, volume 151, pages 9741--9757. PMLR,
  2022.

\bibitem[Nitanda et~al.(2023{\natexlab{a}})Nitanda, Oko, Wu, Takenouchi, and
  Suzuki]{Nitanda2023primaldualanalysis}
Atsushi Nitanda, Kazusato Oko, Denny Wu, Nobuhito Takenouchi, and Taiji Suzuki.
\newblock Primal and dual analysis of entropic fictitious play for finite-sum
  problems.
\newblock In \emph{Proceedings of the 40th International Conference on Machine
  Learning}, volume 202, pages 26266--26282. PMLR, 2023{\natexlab{a}}.

\bibitem[Nitanda et~al.(2023{\natexlab{b}})Nitanda, Oko, Wu, Takenouchi, and
  Suzuki]{NitandaFinitesum}
Atsushi Nitanda, Kazusato Oko, Denny Wu, Nobuhito Takenouchi, and Taiji Suzuki.
\newblock Primal and dual analysis of entropic fictitious play for finite-sum
  problems.
\newblock In \emph{Proceedings of the 40th International Conference on Machine
  Learning}, volume 202, pages 26266--26282. PMLR, 2023{\natexlab{b}}.

\bibitem[Nitanda et~al.(2024)Nitanda, Oko, Suzuki, and
  Wu]{nitanda2024improvedcomputationalcomplexity}
Atsushi Nitanda, Kazusato Oko, Taiji Suzuki, and Denny Wu.
\newblock Improved statistical and computational complexity of the mean-field
  langevin dynamics under structured data.
\newblock In \emph{The Twelfth International Conference on Learning
  Representations}, 2024.

\bibitem[Oko et~al.(2022)Oko, Suzuki, Nitanda, and Wu]{oko2022particle}
Kazusato Oko, Taiji Suzuki, Atsushi Nitanda, and Denny Wu.
\newblock Particle stochastic dual coordinate ascent: Exponential convergent
  algorithm for mean field neural network optimization.
\newblock In \emph{International Conference on Learning Representations}, 2022.

\bibitem[Otto and Villani(2000)]{Otto2000LSI}
Felix Otto and Cédric Villani.
\newblock Generalization of an inequality by talagrand and links with the
  logarithmic sobolev inequality.
\newblock \emph{Journal of Functional Analysis}, 173\penalty0 (2):\penalty0
  361--400, 2000.

\bibitem[Roth and Small(2006)]{roth2006margin}
Dan Roth and Kevin Small.
\newblock Margin-based active learning for structured output spaces.
\newblock In \emph{Machine Learning: ECML 2006}, pages 413--424, 2006.

\bibitem[Rotskoff and Vanden-Eijnden(2018)]{Rotskoff2018Interacting}
Grant~M. Rotskoff and Eric Vanden-Eijnden.
\newblock Trainability and{ A}ccuracy of neural networks: An interacting
  particle system approach.
\newblock \emph{arXiv preprint arXiv:1805.00915}, 2018.

\bibitem[Sener and Savarese(2018)]{sener2017active}
Ozan Sener and Silvio Savarese.
\newblock Active learning for convolutional neural networks: A core-set
  approach.
\newblock In \emph{International Conference on Learning Representations}, 2018.

\bibitem[Settles(2009)]{settles2009active}
Burr Settles.
\newblock Active learning literature survey.
\newblock Technical Report TR1648, University of Wisconsin-Madison Department
  of Computer Sciences, 2009.

\bibitem[Seung et~al.(1992)Seung, Opper, and Sompolinsky]{seung1992query}
H~Sebastian Seung, Manfred Opper, and Haim Sompolinsky.
\newblock Query by committee.
\newblock In \emph{Proceedings of the fifth annual workshop on Computational
  Learning theory}, pages 287--294, 1992.

\bibitem[Shui et~al.(2020)Shui, Zhou, Gagn{\'e}, and Wang]{shui2020deep}
Changjian Shui, Fan Zhou, Christian Gagn{\'e}, and Boyu Wang.
\newblock Deep active learning: unified and principled method for query and
  training.
\newblock In \emph{International Conference on Artificial Intelligence and
  Statistics}, pages 1308--1318, 2020.

\bibitem[Sinha et~al.(2019)Sinha, Ebrahimi, and Darrell]{sinha2019variational}
Samarth Sinha, Sayna Ebrahimi, and Trevor Darrell.
\newblock Variational adversarial active learning.
\newblock In \emph{Proceedings of the IEEE/CVF International Conference on
  Computer Vision (ICCV)}, October 2019.

\bibitem[Sirignano and Spiliopoulos(2020)]{Sirignano2020CentralLimit}
Justin Sirignano and Konstantinos Spiliopoulos.
\newblock Mean field analysis of neural networks: A central limit theorem.
\newblock \emph{Stochastic Processes and their Applications}, 130\penalty0
  (3):\penalty0 1820--1852, 2020.

\bibitem[Stark et~al.(2015)Stark, Haz{\i}rbas, Triebel, and
  Cremers]{stark2015captcha}
Fabian Stark, Caner Haz{\i}rbas, Rudolph Triebel, and Daniel Cremers.
\newblock {CAPTCHA} recognition with active deep learning.
\newblock In \emph{Workshop new challenges in Neural computation}, volume 2015,
  page~94, 2015.

\bibitem[Suzuki et~al.(2023{\natexlab{a}})Suzuki, Nitanda, and
  Wu]{suzuki2023uniformintime}
Taiji Suzuki, Atsushi Nitanda, and Denny Wu.
\newblock Uniform-in-time propagation of chaos for the mean-field gradient
  langevin dynamics.
\newblock In \emph{The Eleventh International Conference on Learning
  Representations}, 2023{\natexlab{a}}.

\bibitem[Suzuki et~al.(2023{\natexlab{b}})Suzuki, Wu, and
  Nitanda]{Suzuki2023Convergence}
Taiji Suzuki, Denny Wu, and Atsushi Nitanda.
\newblock Convergence of mean-field langevin dynamics: Time-space
  discretization, stochastic gradient, and variance reduction.
\newblock In \emph{Advances in Neural Information Processing Systems},
  volume~36, pages 15545--15577. Curran Associates, Inc., 2023{\natexlab{b}}.

\bibitem[Suzuki et~al.(2023{\natexlab{c}})Suzuki, Wu, Oko, and
  Nitanda]{Suzuki2023Sparseparties}
Taiji Suzuki, Denny Wu, Kazusato Oko, and Atsushi Nitanda.
\newblock Feature learning via mean-field langevin dynamics: Classifying sparse
  parities and beyond.
\newblock In \emph{Advances in Neural Information Processing Systems},
  volume~36, pages 34536--34556. Curran Associates, Inc., 2023{\natexlab{c}}.

\bibitem[Takezoe et~al.(2023)Takezoe, Liu, Mao, Chen, Feng, Zhang, and
  Wang]{takezoe2023deep}
Rinyoichi Takezoe, Xu~Liu, Shunan Mao, Marco~Tianyu Chen, Zhanpeng Feng,
  Shiliang Zhang, and Xiaoyu Wang.
\newblock Deep active learning for computer vision: Past and future.
\newblock \emph{APSIPA Transactions on Signal and Information Processing},
  12\penalty0 (1):\penalty0 --, 2023.

\bibitem[Tian et~al.(2023)Tian, Wang, Chen, and Du]{tian2023scan}
Yuandong Tian, Yiping Wang, Beidi Chen, and Simon Du.
\newblock Scan and snap: Understanding training dynamics and token composition
  in 1-layer transformer.
\newblock \emph{arXiv preprint arkiv: 2305.16380}, 2023.

\bibitem[Tian et~al.(2024)Tian, Wang, Zhang, Chen, and Du]{tian2024joma}
Yuandong Tian, Yiping Wang, Zhenyu Zhang, Beidi Chen, and Simon Du.
\newblock {JoMA}: Demystifying multilayer transformers via joint dynamics of
  {MLP} and attention.
\newblock \emph{arXiv preprint arkiv: 2310.00535}, 2024.

\bibitem[Vershynin(2018)]{vershynin2018high}
Roman Vershynin.
\newblock \emph{High-dimensional Probability: An Introduction with Applications
  in Data science}, volume~47.
\newblock Cambridge university press, 2018.

\bibitem[Wainwright(2019)]{wainwright2019high}
Martin~J Wainwright.
\newblock \emph{High-dimensional statistics: A non-asymptotic Viewpoint},
  volume~48.
\newblock Cambridge university press, 2019.

\bibitem[Wang et~al.(2022{\natexlab{a}})Wang, Huang, Wu, Tong, Margenot, and
  He]{wang2022deep}
Haonan Wang, Wei Huang, Ziwei Wu, Hanghang Tong, Andrew~J Margenot, and Jingrui
  He.
\newblock Deep active learning by leveraging training dynamics.
\newblock In \emph{Advances in Neural Information Processing Systems},
  volume~35, pages 25171--25184, 2022{\natexlab{a}}.

\bibitem[Wang et~al.(2016)Wang, Zhang, Li, Zhang, and Lin]{wang2016cost}
Keze Wang, Dongyu Zhang, Ya~Li, Ruimao Zhang, and Liang Lin.
\newblock Cost-effective active learning for deep image classification.
\newblock \emph{IEEE Transactions on Circuits and Systems for Video
  Technology}, 27\penalty0 (12):\penalty0 2591--2600, 2016.

\bibitem[Wang et~al.(2022{\natexlab{b}})Wang, Li, Yang, Hu, Zeng, Huang, Xu,
  and Xu]{wang2022boosting}
Tianyang Wang, Xingjian Li, Pengkun Yang, Guosheng Hu, Xiangrui Zeng, Siyu
  Huang, Cheng{-}Zhong Xu, and Min Xu.
\newblock Boosting active learning via improving test performance.
\newblock In \emph{Thirty-Sixth {AAAI} Conference on Artificial Intelligence,
  {AAAI} 2022}, pages 8566--8574, 2022{\natexlab{b}}.

\bibitem[Wang et~al.(2021)Wang, Awasthi, Dann, Sekhari, and
  Gentile]{wang2021neural}
Zhilei Wang, Pranjal Awasthi, Christoph Dann, Ayush Sekhari, and Claudio
  Gentile.
\newblock Neural active learning with performance guarantees.
\newblock In \emph{Advances in Neural Information Processing Systems},
  volume~34, pages 7510--7521, 2021.

\bibitem[Wen et~al.(2023)Wen, Pizarro, and Williams]{wen2023ntkcpl}
Ziting Wen, Oscar Pizarro, and Stefan Williams.
\newblock {NTKCPL}: Active learning on top of self-supervised model by
  estimating true coverage.
\newblock \emph{arXiv preprint arkiv: 2306.04099}, 2023.

\bibitem[Yamagiwa et~al.(2023)Yamagiwa, Oyama, and Shimodaira]{yamagiwa2023ICA}
Hiroaki Yamagiwa, Momose Oyama, and Hidetoshi Shimodaira.
\newblock Discovering universal geometry in embeddings with ica.
\newblock \emph{arXiv preprint arkiv: 2305.13175}, 2023.

\bibitem[Yang and Hu(2022)]{yanggreg2022featurelearing}
Greg Yang and Edward~J. Hu.
\newblock Feature learning in infinite-width neural networks.
\newblock \emph{arXiv preprint arkiv: 2011.14522}, 2022.

\bibitem[Yang and Loog(2016)]{yang2016active}
Yazhou Yang and Marco Loog.
\newblock Active learning using uncertainty information.
\newblock In \emph{2016 23rd International Conference on Pattern Recognition
  (ICPR)}, pages 2646--2651, 2016.

\bibitem[Yehudai and Shamir(2019)]{Gilad2019RandomFeature}
Gilad Yehudai and Ohad Shamir.
\newblock {O}n the power and limitations of random features for understanding
  neural networks.
\newblock In \emph{Advances in Neural Information Processing Systems},
  volume~32. Curran Associates, Inc., 2019.

\bibitem[Yin et~al.(2017)Yin, Qian, Cao, Li, Wei, Zheng, and
  Davidson]{yin2017deep}
Changchang Yin, Buyue Qian, Shilei Cao, Xiaoyu Li, Jishang Wei, Qinghua Zheng,
  and Ian Davidson.
\newblock Deep similarity-based batch mode active learning with
  exploration-exploitation.
\newblock In \emph{2017 IEEE International Conference on Data Mining (ICDM)},
  pages 575--584, 2017.

\bibitem[Yu et~al.(2023)Yu, Buchanan, Pai, Chu, Wu, Tong, Haeffele, and
  Ma]{yu2023white}
Yaodong Yu, Sam Buchanan, Druv Pai, Tianzhe Chu, Ziyang Wu, Shengbang Tong,
  Benjamin~D. Haeffele, and Yi~Ma.
\newblock White-box transformers via sparse rate reduction.
\newblock \emph{arXiv preprint arkiv: 2306.01129}, 2023.

\bibitem[Zhan et~al.(2021)Zhan, Liu, Li, and Chan]{zhan2021comparative}
Xueying Zhan, Huan Liu, Qing Li, and Antoni~B. Chan.
\newblock A comparative survey: Benchmarking for pool-based active learning.
\newblock In \emph{Proceedings of the Thirtieth International Joint Conference
  on Artificial Intelligence, {IJCAI-21}}, pages 4679--4686, 2021.

\bibitem[Zhan et~al.(2022)Zhan, Wang, hao Huang, Xiong, Dou, and
  Chan]{zhan2022comparative}
Xueying Zhan, Qingzhong Wang, Kuan hao Huang, Haoyi Xiong, Dejing Dou, and
  Antoni~B. Chan.
\newblock A comparative survey of deep active learning.
\newblock \emph{arXiv preprint arkiv: 2203.13450}, 2022.

\bibitem[Zhdanov(2019)]{zhdanov2019diverse}
Fedor Zhdanov.
\newblock Diverse mini-batch active learning.
\newblock \emph{arXiv preprint arkiv: 1901.05954}, 2019.

\bibitem[Zhu and Nowak(2022)]{zhu2022active}
Yinglun Zhu and Robert Nowak.
\newblock Active learning with neural networks: Insights from nonparametric
  statistics.
\newblock In S.~Koyejo, S.~Mohamed, A.~Agarwal, D.~Belgrave, K.~Cho, and A.~Oh,
  editors, \emph{Advances in Neural Information Processing Systems}, volume~35,
  pages 142--155. Curran Associates, Inc., 2022.

\bibitem[Zou et~al.(2020)Zou, Cao, Zhou, and Gu]{Zou2020}
Difan Zou, Yuan Cao, Dongruo Zhou, and Quanquan Gu.
\newblock Gradient descent optimizes over-parameterized deep {{R}e{LU}}
  networks.
\newblock \emph{Machine Learning}, 109:\penalty0 467--492, 2020.

\bibitem[Zou et~al.(2023{\natexlab{a}})Zou, Cao, Li, and Gu]{zou2023benefits}
Difan Zou, Yuan Cao, Yuanzhi Li, and Quanquan Gu.
\newblock The benefits of mixup for feature learning.
\newblock In \emph{Proceedings of the 40th International Conference on Machine
  Learning}, volume 202, pages 43423--43479, 2023{\natexlab{a}}.

\bibitem[Zou et~al.(2023{\natexlab{b}})Zou, Cao, Li, and
  Gu]{zou2023understanding}
Difan Zou, Yuan Cao, Yuanzhi Li, and Quanquan Gu.
\newblock Understanding the generalization of adam in learning neural networks
  with proper regularization.
\newblock In \emph{The Eleventh International Conference on Learning
  Representations}, 2023{\natexlab{b}}.

\end{thebibliography}
\newpage
\onecolumn

\newpage
\onecolumn
\appendix

\section{Additional Related Work: Theory of Feature Learning in Overparameterized Neural Network}

The rapid progress of deep neural networks has prompted growing interest in understanding their underlying theoretical principles, particularly regarding the optimization and generalization properties of overparameterized models. A key development in this area is the study of the Neural Tangent Kernel (NTK) \citep{jacot2020NTK, Chen2020genaralized, cao2018boundsgd, cao2019generalization, cao2020understanding, allenconvergence, chen2021howmuchoverparameterization, Zou2020, huang2020implicit, yilan2021svm, huang2021orthogonal, huang2022GNTK, huang2023analyzing, yanggreg2022featurelearing}. This  has provided powerful insights into the training dynamics of wide neural networks, revealing that their behavior in the $\ell_2$-loss setting closely mirrors the function approximation in reproducing kernel Hilbert spaces (RKHS), where the kernel is associated with the network architecture. However, instead of \textit{feature learning}, this line of research suggest that the parameter update dynamics can be approximated by the first-order Taylor expansion at initialization, where the NN with \textit{wide enough width} can effectively perform linear regression over a prescribed feature map, which cannot characterize the NN's ability to perform \textit{feature learning} \cite{yanggreg2022featurelearing}.

In parallel, an active research direction is the analysis of NN under mean-field regime \cite{mei2018meanfield, mei2019MFdimension}, which allows the network parameters to evolve away from the initialization, thereby enabling \textit{feature learning} for various target functions \cite{Ba2022onstep, Suzuki2023Sparseparties}. Recently, Mean-Field Langevin Dynamics (MFLD) has attracted increased attention, where Gaussian noise is added to the gradient to encourage ``exploration'' \cite{mei2018meanfield, Suzuki2023Convergence}. This framework lifts the learning of finite-width neural networks to an infinite-dimensional optimization problem in the space of probability measures, and by exploiting the convexity of the loss function in this measure space, MFLD can achieve near-optimal global convergence under gradient-based optimization \citep{NitandaSuzuki2017SPGDensembel, mei2018meanfield, Nitanda2021particledualaveraging, Nitanda2022convex, Nitanda2023primaldualanalysis, NitandaFinitesum, nitanda2024improvedcomputationalcomplexity, oko2022particle, Otto2000LSI, Rotskoff2018Interacting, Sirignano2020CentralLimit, suzuki2023uniformintime, Suzuki2023Sparseparties, Suzuki2023Convergence, nitanda2024improvedparticleapproxerror, kim2024symmetric, kim2024transformers}. Despite the remarkable ability of NNs under the MFLD regime to learn complex ``features'', their superior performance still requires a large width at the order of $e^{O(d)}$ \cite{Suzuki2023Sparseparties}. Moreover, the optimization behavior of MFLD differs from the widely-applied SGD-based neural network algorithms, leaving the real-world \textit{feature learning} phenomenon of commonly-utilized deep learning algorithms largely unexplained.\par

To overcome the technical challenges and shed light on the practical \textit{feature learning} observed in GD/SGD-based learning algorithms, the seminal work by \citet{allenzhu2023understanding} takes a step forward. It first attempted to explain the observed success of ensemble methods in deep learning by adopting the NTK framework, but recognized the limitations of this approach. To tackle this challenge and fill the understanding gap, \citet{allenzhu2023understanding} considers a multi-view data model, which is a more complex version of the data model examined in the main body of our work. \citet{allenzhu2023understanding} justify this multi-view data model as plausible theoretical setups by empirically demonstrating the common occurrence of multiple one-task-oriented features in the latent space of ResNet, as shown in their Figures 2-4 and 9. Given the plausibility and suitability of this data setting for theoretical investigations of \textit{feature learning dynamics}, a considerable body of research has delved into examining the capabilities of different learning algorithms under different structured conditions \cite{li2018learning, karp2021local,Gilad2019RandomFeature,cao2022risk, chen2022understanding,chen2023understanding,chen2023clip,chen23brobust,chen2023why, zou2023understanding,li2023theoretical, kou2023benign, kou2023how, kou2023implicit, meng2023benign, huang2023graph,huang2023understanding, chidambaram2023provably,deng2023robust, Frei2023Random, tian2023scan, tian2024joma}. Notably, the width requirement for this line of research is considerably weaker compared to the NTK and MFLD regimes, which allows for a more fine-grained analysis of \textit{feature learning dynamics} based on inner product-based feature direction reconstruction.\par
We believe our work extend this line of research by showing that the two primary criteria-based NALs are inherently prioritizing those underrepresented samples with yet-to-be-learned features. We hope this insight can help the community gain a deeper understanding of the heuristic NAL methods, and develop new principled approaches that can alleviate the data hungriness of deep learning.

\section{Discussions on the Parameter Settings}\label{Appendix: Discussions on the Parameters}
In this section, we motivate the settings of our systems and dicuss the consequences of violating the requirements. 

\subsection{Choice of Systems}

We would like to motivate our choice of systems in detail as below.

\begin{itemize}
    \item \textbf{The system of learning dynamic: $d, n, m, ||\boldsymbol{\mu}||, \sigma_0, \eta$.} The choice of $d, n, m$ aligns with the feature learning line of research \cite{li2018learning, karp2021local, frei2022benign,chen2022understanding,allenzhu2023understanding,chen2023understanding,chen2023clip,chen23brobust,chen2023why, zou2023understanding,li2023theoretical,kou2023how, huang2023graph, kou2023implicit, chidambaram2023provably,deng2023robust,huang2023understanding}, with the aim of ensuring the learning problem is in a small but sufficiently overparameterized regime where the benign overfitting - overparamiterized NN can generalize well when trained to convergence - could occur. This phenomenon is non-trivial against prior belief that overfit is always harmful-the greater the capacity of a model to fit data distribution, the worse the model's test results will be. The system chosen allows for analysis of learning progress of features, as the weak requirement on network width $m$ allows us to conduct a fine-grained analysis based on inner product arguments (i.e., scale analysis of $\gamma, \rho$), which fundamentally differs from the NTK line of research \cite{jacot2020NTK} that requires an infinitely wide network to perform linear regression over a prescribed feature map, rather than learning the features themselves. Moreover, this system ensures a small Signal-to-Noise Ratio (SNR), under which the memorization of noise would become the primary contributor to the volume of the NN's weight matrices, allowing a more balanced and controllable coefficient updates \cite{kou2023benign, meng2023benign}.
    \item \textbf{The system of sampling dynamic: $\widetilde{n},n_0,n^{*},p,\lvert \mathcal{P} \rvert,||\boldsymbol{\mu}_1||, ||\boldsymbol{\mu}_2||,\sigma_p$.} The choice of this system is to (i) avoid the cases where all sampling methods would succeed or fail simultaneously, and (ii) ensure there is a marked learning progress disparity between well-learned and yet-to-be-learned features within the initial stage. The reason to maintain these conditions is to help reveal the underlying rationale behind NAL. It's worth noting that we also provide discussions in Appendix \ref{app:Discussion of data distribution under other conditions} on the general settings beyond the specific system chosen in the main body of the work. In these broader scenarios, there might be various patterns in the learning progress of features.
\end{itemize}
In all, albeit the two systems interact and operate together, they have distinct tasks. The first system is tailored to the non-trivial learning problem at hand. Meanwhile, the choice of the second system aims to help reveal the non-trivial connections between the two NAL methods, by closely tracking the learning progress of task-oriented features after sampling.

\subsection{Consequences of Violating System Requirements}
The following outlines the consequences that may arise where the requirements over the systems are violated:

\begin{enumerate}
    \item The choice of $d$. The large $d$ technically ensures the per-sample loss contributions are in a controllable order during training, preventing any individual's noise from exerting outsized influence on the dynamics. When $d$ decreases with respect to $n, m$, the control of the order over $<\frac{\boldsymbol{\mu}_{l}}{||\boldsymbol{\mu}_{l}||},\frac{\boldsymbol{\xi}_{i}}{|| \boldsymbol{\xi}_{i}||}>,<\frac{\boldsymbol{\xi}_{i}}{|| \boldsymbol{\xi}_{i}||},\frac{\boldsymbol{\xi}_{j}}{|| \boldsymbol{\xi}_{j}||}>,<\mathbf{w}_{j,r}^{(0)},\frac{\boldsymbol{\mu}_{l}}{||\boldsymbol{\mu}_{l}||}>,<\mathbf{w}_{j,r}^{(0)},\frac{\boldsymbol{\xi}_{i}}{|| \boldsymbol{\xi}_{i}||}>,\forall l,i\neq j$ no longer hold with high probability as listed in Appendix \ref{app:preliminary lemmas}, and our technical results on training convergence can not be assured to hold with high chance. Also, a small $d$ leads to a large Signal-to-Noise Ratio (SNR), where the memorization of noise is no longer the dominant factor in the NN's weight matrix volume. This makes the \textit{automatic balance of coefficient updates} techniques in \citet{kou2023benign, meng2023benign} cannot hold, which serves as a convenient lever to observe the bounds on the coefficients and matrix volume update.
    \item The choices of occurrence probability $p$, initial size $n_0$, query size $n^*$, pool size $\lvert \mathcal{P} \rvert$, feature norm $\|\boldsymbol{\mu}_{l}\|$ jointly determine the sampling results.
    \begin{itemize}
        \item Combinations of $p, \|\boldsymbol{\mu}_{l}\|$ reflect the diverse ``easiness'' to learn particular features, leading to varied sampling scenarios as discussed in Appendix \ref{app: Case 2: flexible cases}.
        \item As $p, n_0$ and $n^*$ increase, the chance of getting all features well-learned goes up, reducing NAL's advantage over random sampling as discussed in Appendix \ref{app: Case 2: flexible cases}.
        \item Lower $p$ values (e.g. $p<0.5$) allow NAL to better alleviate labeling efforts by prioritizing the samples with yet-to-be-learned features, but if $p \rightarrow 0$ or $\lvert \mathcal{P} \rvert$ decreases, there might be few yet-to-be-learned features in the pool, limiting NAL's ability to select enough of them to ensure sufficient learning, as discussed in Appendix \ref{app: Case 2: flexible cases}.
        \item Smaller $n_0$ may limit the learning of all features at initial stage, and all sampling methods might behave similarly since all types of features require further learning as discussed in Appendix \ref{app: Case 2: flexible cases}. Decreases in $n_0$, $n^*$, and $\lvert \mathcal{P} \rvert$ would make it challenging to reliably control the proportions of samples as in Lemma \ref{lem:rho n}.
    \end{itemize}

    \item The choices of $\sigma_0$ and $\eta$ aim to ensure effective optimization via GD. As $\sigma_0$ grows, the model has a stronger ``belief'' that is harder to change. While analysis under larger $\eta$ is also doable \cite{lu2023benign}, a small $\eta$ is preferred to better present our main findings.
\end{enumerate}
Amidst parameter variations, we believe our findings are non-trivial. 

\section{Theoretical Results: XOR data version}\label{Appdx:XOR}
In a similar vein to the theoretical results on linearly separable data, we now present a theory specifically tailored for XOR data. The purpose or effect of each result is similar to those obtained for linearly separable data, so we will omit the detailed description of each result. The experiments and proofs can be found in Appendix \ref{app:expXOR} and Appendix \ref{App:proofs of XOR}.
\begin{definition} \cite{meng2023benign}
Let $\mathbf{a}, \mathbf{b} \in \mathbb{R}^d \backslash\{\mathbf{0}\}$ with $\mathbf{a} \perp \mathbf{b}$ be two fixed vectors. For $\boldsymbol{\mu} \in \mathbb{R}^d$ and $\bar{y} \in\{ \pm 1\}$, we say that $\boldsymbol{\mu}$ and $\bar{y}$ are jointly generated from distribution $\mathcal{D}_{\mathrm{XOR}}(\mathbf{a}, \mathbf{b})$ if the pair $(\boldsymbol{\mu}, \bar{y})$ is randomly and uniformly drawn from the set $\{(\mathbf{a}+\mathbf{b},+1),(-\mathbf{a}-\mathbf{b},+1),(\mathbf{a}-\mathbf{b},-1),(-\mathbf{a}+\mathbf{b},-1)\}$.
\label{def_XOR}\end{definition}

\begin{definition} For $l \in \{1, 2\}$, let $\{ \mathbf{a}_1, \mathbf{b}_1 \} \perp \{ \mathbf{a}_{2}, \mathbf{b}_{2}\}  \in \mathbb{R}^d \backslash\{\mathbf{0}\}$, with $ \mathbf{a}_l \perp \mathbf{b}_l$ be two pair of fixed vectors satisfying $\| \mathbf{a}_l \|^2 + \| \mathbf{b}_l \|^2 = \| \boldsymbol{\mu}_l \|_2^2$, where $\| \boldsymbol{\mu}_l \|_2$ represents feature strength. Then each data point $(\mathbf{x}, y)$ with $\mathbf{x}=\left[\mathbf{x}^{(1) \top}, \mathbf{x}^{(2) \top}\right]^{\top} \in \mathbb{R}^{2 d}$ and $y \in\{ \pm 1\}$ is generated from $\mathcal{D}$ as follows:
\begin{itemize}
    \item \textbf{Feature Patch.} For a feeble $p$ satisfying $p < 0.5$, one patch of $\mathbf{x}$ is randomly selected as feature patch, and with high probability (1-$p$) the feature patch $\mathbf{x}_1$ is easy-to-learn feature $ \boldsymbol{\mu}_1$, while only with probability $p$ the feature patch is hard-to-learn feature $\boldsymbol{\mu}_2$. $\boldsymbol{\mu}_l \in \mathbb{R}^d$ and $\bar{y} \in\{ \pm 1\}$ are jointly generated from $\mathcal{D}_{\mathrm{XOR}}(\mathbf{a}_l, \mathbf{b}_l)$.
    \item \textbf{Noise Patch.} The other patch of $\mathbf{x}$ is assigned as a randomly generated Gaussian vector $\boldsymbol{\xi} \sim N\left(\mathbf{0}, \sigma_p^2 \cdot\left(\mathbf{I}-\sum_{l}{(\mathbf{a}_l \mathbf{a}_l^{\top} /\|\mathbf{a}_l\|^2-\mathbf{b}_l \mathbf{b}_l^{\top} /\|\mathbf{b}_l\|^2)}\right)\right)$.
    \item The ground truth label y is synthesized from a Rademacher distribution.
\end{itemize}
    Here we assume the two types of feature differ: $(1-p)\|\boldsymbol{\mu}_1\|_2^4 =\Omega( \sigma_p^4 d n_{0}^{-1}) $ and  $p \|\boldsymbol{\mu}_2 \|_2^4 = O(\sigma_p^4 d n_{0}^{-1})$. Also, we assume the noise cannot completely disturb the learning of features: $ \widetilde{n} \|\boldsymbol{\mu}_l \|_2^4 = \Omega( \sigma_p^4d), l \in \{ 1, 2 \}$.
\label{def_XORdata}\end{definition}

For $(\mathbf{x}, y) \sim \mathcal{D}$ in Definition \ref{def_XORdata}, it's safe to say that:
\[
(\mathbf{x}, y) \stackrel{d}{=}(-\mathbf{x}, y) \text {, and therefore } \mathbb{P}_{(\mathbf{x}, y) \sim \mathcal{D}}(y \cdot\langle\boldsymbol{\theta}, \mathbf{x}\rangle>0)=1 / 2 \text { for any } \boldsymbol{\theta} \in \mathbb{R}^{2 d} .
\]
In other words, all linear predictors will provably fail to learn the XOR-type data $\mathcal{D}$.
\begin{condition}
For certain $\varepsilon, \delta>0$, suppose that
\begin{enumerate}
    \item The initial training size $n_0$, the maximum admissible size after querying $\widetilde{n}$, and the width of neural network $m$ satisfy $n_0  = \Omega(\log (m/\delta),  p^{-1} \log (1/\delta))$, $ \widetilde{n} =O(p^{-1} \sigma_p^4 d \|\boldsymbol{\mu}_2 \|_2^{-4})$, $m = \Omega( \log (\widetilde{n}/\delta))$.
    \item The dimension $d$ satisfies: $d=\widetilde{\Omega}\left(\widetilde{n}^2, \widetilde{n}\|\boldsymbol{\mu}_l\|_2^2 \sigma_p^{-2}\right) \cdot \operatorname{polylog}(1 / \varepsilon) \cdot \operatorname{polylog}(1 / \delta)$, for $l \in \{1, 2\}$.

    \item Random initialization scale $\sigma_0$ satisfies: $\sigma_0 \leq \widetilde{O}\left(\min \left\{\sqrt{n_0} /\left(\sigma_p d\right), n_0\|\boldsymbol{\mu}_l\|_2 /\left(\sigma_p^2 d\right)\right\}\right)$, for $l \in \{1, 2\}$, the learning rate $\eta$ satisfies: $\eta=\widetilde{O}\left(\left[\max \left\{\sigma_p^2 d^{3 / 2} /\left(n_0^2 \sqrt{m}\right), \sigma_p^2 d /(n_0 m)\right]^{-1}\right)\right.$.
    \item The angle $\theta$ between $\mathbf{a}_l + \mathbf{b}_l$ and $\mathbf{a}_l - \mathbf{b}_l$ satisfies $\cos \theta < 1 / 2$, for $\forall l \in \{1, 2\}$.
\end{enumerate}
\label{condition_XOR}\end{condition}
\begin{proposition} \textbf{(Before Querying)}
For any $\varepsilon, \delta>0$, if Condition \ref{condition_XOR} holds, when the probability of the appearance of weak feature in each data point generated from the testing distribution $\mathcal{D}^*$ is $p^*$, then with probability at least $1-2 \delta$, the following results hold at a certain $t=\Omega\left(n_0 m /\left(\eta \varepsilon \sigma_p^2 d\right)\right)$ :
\begin{itemize}
    \item The training loss converges below $\varepsilon$, i.e., $L_S\left(\mathbf{W}^{(t)}\right) \leq \varepsilon$.
    \item The test error achieve sub-optimal constant-level $L_{\mathcal{D}^*}^{0-1}\left(\mathbf{W}^{(t)}\right) \geq p^* \cdot 0.12$.
\end{itemize}
\label{prop_XOR_ini}\end{proposition}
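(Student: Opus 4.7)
The plan is to follow the same two-stage template used for the linearly separable result (Proposition \ref{Prop4.3}), but adapt the signal--noise decomposition to accommodate the XOR structure of Definition \ref{def_XORdata}. Concretely, since each feature patch is one of $\pm(\mathbf{a}_l\pm \mathbf{b}_l)$, I would decompose each filter as
\begin{equation*}
\mathbf{w}^{(t)}_{j,r} = \mathbf{w}^{(0)}_{j,r} + \sum_{l=1}^{2}\Bigl(\gamma^{(t)}_{j,r,\mathbf{a}_l}\tfrac{\mathbf{a}_l}{\|\mathbf{a}_l\|_2^{2}} + \gamma^{(t)}_{j,r,\mathbf{b}_l}\tfrac{\mathbf{b}_l}{\|\mathbf{b}_l\|_2^{2}}\Bigr) + \sum_{i=1}^{n_0}\rho^{(t)}_{j,r,i}\tfrac{\boldsymbol{\xi}_i}{\|\boldsymbol{\xi}_i\|_2^{2}},
\end{equation*}
which is the XOR-counterpart of (\ref{eq:firstsnd}) and has been shown to be well-defined in \citet{meng2023benign}. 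The preliminary high-probability facts (near-orthogonality of $\mathbf{w}^{(0)}_{j,r}$ with all $\boldsymbol{\xi}_i,\mathbf{a}_l,\mathbf{b}_l$, sub-Gaussian control of $\|\boldsymbol{\xi}_i\|_2^{2}$, and the small $\langle\boldsymbol{\xi}_i,\boldsymbol{\xi}_j\rangle$) follow from Condition \ref{condition_XOR}(2) as in the linearly separable case.

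For the first bullet, I would run the two-phase coefficient analysis. In the first phase, the noise coefficients $\rho^{(t)}_{j,r,i}$ grow from $\widetilde O(\sigma_0 \sigma_p \sqrt{d})$ to $\Theta(1)$ by $T_1=\Theta(\eta^{-1} n_0 m \sigma_p^{-2} d^{-1})$, and one shows by induction that the signal coefficients $\gamma^{(t)}_{j,r,\mathbf{a}_l},\gamma^{(t)}_{j,r,\mathbf{b}_l}$ remain in a controlled range dictated by $\tau_l n_0 \sigma_p^{-2}d^{-1}\|\mathbf{a}_l\|^{2}$ (and the analogous for $\mathbf{b}_l$). In the second phase, using that the scale of $\ell_i'{}^{(t)}$ for ``well-fit'' samples shrinks and the automatic-balance argument of \citet{meng2023benign,kou2023benign}, I can show $L_S(\mathbf{W}^{(t)})\le \varepsilon$ at $t=\widetilde O(\eta^{-1}\varepsilon^{-1} n_0 m \sigma_p^{-2}d^{-1})$ via an online-learning potential $\sum_{j,r,i}\rho^{(t)}_{j,r,i}$ whose telescoping bound yields the stated rate. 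These steps are essentially the XOR analog of Lemma \ref{lem:final coefficient}, and the arithmetic goes through because Condition \ref{condition_XOR} is designed to make the noise-memorization term dominate the weight update.

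For the second bullet, the key quantitative input is the ``small $n_0$'' budget: by the binomial concentration argument behind Lemma \ref{lem:rho n} and Condition \ref{condition_XOR}(1), the number $n_{0,2}$ of $\boldsymbol{\mu}_2$-samples in $\mathcal{D}_{n_0}$ is $\Theta(pn_0)$, and by Definition \ref{def_XORdata} we have $p\,n_0\,\|\boldsymbol{\mu}_2\|_2^{4}=O(\sigma_p^{4}d)$, i.e.\ the weak-feature signal coefficients end up at a scale strictly smaller than the noise-memorization scale. Leveraging the coefficient bounds of the first bullet, I would then show that for a test point $(\mathbf{x},y)$ whose feature patch is a weak XOR feature $\boldsymbol{\mu}_2$, the output $y\cdot f(\mathbf{W}^{(t)},\mathbf{x})$ is a zero-mean sub-Gaussian quantity whose signal component $\sum_{l'}(\gamma_{j,r,\mathbf{a}_{l'}}\pm\gamma_{j,r,\mathbf{b}_{l'}})$ is dominated by the noise component $\langle \mathbf{w}^{(t)}_{j,r},\boldsymbol{\xi}\rangle$. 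An anti-concentration argument (using the Gaussian-like behavior of the noise patch after the ReLU, as in Lemma 5.4 of \citet{kou2023benign} or the XOR counterpart in \citet{meng2023benign}) then yields $\mathbb{P}(y f<0\mid \text{feature}=\boldsymbol{\mu}_2)\ge 0.12$, and since this case occurs with probability $p^*$ under $\mathcal{D}^*$, the overall error is at least $0.12\cdot p^*$. I would also use Condition \ref{condition_XOR}(4) ($\cos\theta<1/2$) here, because it prevents the two XOR directions $\mathbf{a}_l+\mathbf{b}_l$ and $\mathbf{a}_l-\mathbf{b}_l$ from collapsing and keeps the anti-concentration constant bounded away from zero.

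The main obstacle I expect is the test-error lower bound rather than the training-convergence step. The XOR structure means the four possible feature realizations $\pm \mathbf{a}_l\pm\mathbf{b}_l$ interact non-trivially through the ReLU: one has to carefully separate the contribution of $\gamma_{j,r,\mathbf{a}_l}$ and $\gamma_{j,r,\mathbf{b}_l}$ (which are not orthogonal in the functional sense after ReLU activation) and show that, absent sufficient $\boldsymbol{\mu}_2$-examples, no sign combination of these coefficients allows the network to classify all four corners correctly. I expect this to require a dedicated case analysis over the four XOR configurations combined with the activation-pattern stability lemma (analogous to Lemma \ref{lemma of comparison}-style inner-product bounds), together with the anti-concentration of the noise-driven output, to pin the error down to the explicit constant $0.12$.
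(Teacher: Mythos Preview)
Your high-level plan (signal--noise decomposition, two-phase coefficient analysis for convergence, then an anti-concentration argument on the weak-feature test points) matches the paper's, but two choices differ in ways worth noting.

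\textbf{Decomposition basis.} You decompose along $(\mathbf{a}_l,\mathbf{b}_l)$; the paper instead sets $\mathbf{u}_l=\mathbf{a}_l+\mathbf{b}_l$, $\mathbf{v}_l=\mathbf{a}_l-\mathbf{b}_l$ and tracks coefficients $\gamma_{j,r,\mathbf{u}_l}^{(t)},\gamma_{j,r,\mathbf{v}_l}^{(t)}$ directly (this is also what \citet{meng2023benign} do). The advantage is that the actual feature patches are exactly $\pm\mathbf{u}_l,\pm\mathbf{v}_l$, so the ReLU acts on $\langle \mathbf{w}_{j,r},\pm\mathbf{u}_l\rangle$ or $\langle \mathbf{w}_{j,r},\pm\mathbf{v}_l\rangle$ and one obtains monotone scalar recursions for each coefficient separately. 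In your basis the ReLU sees $\gamma_{j,r,\mathbf{a}_l}\pm\gamma_{j,r,\mathbf{b}_l}$, which is precisely the ``not orthogonal in the functional sense after ReLU'' complication you flag as the main obstacle; the paper sidesteps the four-corner case analysis entirely by working in $(\mathbf{u}_l,\mathbf{v}_l)$. The condition $\cos\theta<1/2$ is used in that coefficient-scale analysis (to keep $\langle \mathbf{w}_{+1,r},\mathbf{v}_l\rangle$ and $\langle \mathbf{w}_{-1,r},\mathbf{u}_l\rangle$ negligible), not in the lower-bound step itself.

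\textbf{Test-error lower bound.} Your description of $y\cdot f(\mathbf{W}^{(t)},\mathbf{x})$ as a ``zero-mean sub-Gaussian quantity'' and the appeal to generic Gaussian anti-concentration is not quite how the constant $0.12$ comes out. The paper (following Lemma~5.8 of \citet{kou2023benign}) constructs a short shift vector $\mathbf{v}$ with $\|\mathbf{v}\|_2\le 0.02\sigma_p$ such that $\sum_{j'\in\{\pm1\}}[g(j'\boldsymbol{\xi}+\mathbf{v})-g(j'\boldsymbol{\xi})]$ dominates the signal term $\max_{j}\sum_r\gamma_{j,r,\boldsymbol{\mu}_{l'}}^{(t)}$; by pigeonhole one of $\pm\boldsymbol{\xi},\pm\boldsymbol{\xi}+\mathbf{v}$ lies in the bad set $\Omega_{\boldsymbol{\xi}}$, and a TV-distance bound between $\mathcal N(0,\sigma_p^2\mathbf{I})$ and $\mathcal N(\mathbf{v},\sigma_p^2\mathbf{I})$ (at most $0.01$) then gives $P(\Omega_{\boldsymbol{\xi}})\ge 0.24$, hence the $0.12\cdot p^*$. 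This pigeonhole\,+\,TV route handles the non-Gaussian ReLU output cleanly and delivers the explicit constant; a bare sub-Gaussian anti-concentration argument would need extra work to cope with the rectification and is not what the paper does.
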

\begin{proposition}
    \textbf{(Querying Stage)} During Querying, under the same conditions as Proposition \ref{prop_XOR_ini}, if $(1-p) \| \boldsymbol{\mu}_{1} \|_2^2-p \| \boldsymbol{\mu}_{2} \|_2^2= \Omega( {\sigma_p}^{2} d^{1/2} n_0^{-1/2} (\log (m/\delta^{\prime}))^{1/2})$ and the size of the sampling pool $\lvert \mathcal{P} \rvert$ is adequately substantial, satisfying: $\lvert \mathcal{P} \rvert =\Omega(  p^{-1} \sigma_p^4 d  \|\boldsymbol{\mu}_2 \|_2^{-4}, p^{-1} \log (1/\delta))$, then with probability at least $1-\Theta(\delta+\delta^{\prime})$, both Uncertainty Sampling and Diversity Sampling pick samples with hard-to-learn features $\boldsymbol{\mu}_2$ in $\mathcal{P}$.
\label{app:propXOR:sample stage:learning on sample}\end{proposition}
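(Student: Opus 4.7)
The plan is to mirror the structure of the proof of Proposition~\ref{prop:sample stage: learning on sample} (linearly separable case) but adapt each step to the XOR setting. First, I would establish a signal–noise decomposition for the XOR data analogous to (\ref{eq:firstsnd}), but tracking coefficients along the sub-directions $\mathbf{a}_l$ and $\mathbf{b}_l$ for $l \in \{1,2\}$ (since the XOR feature patch is a signed sum $\pm\mathbf{a}_l\pm\mathbf{b}_l$). Using the inductive coefficient-scale techniques developed in \citet{meng2023benign}, I would prove an XOR analogue of Lemma~\ref{lem:final coefficient}, showing that at some $T_1=\Theta(\eta^{-1} n m \sigma_p^2 d^{-1})$ the aggregate learning progress satisfies $\gamma_{j,r,l}^{(t)} = \Theta(\tau_l n \sigma_p^{-2} d^{-1} \|\boldsymbol{\mu}_l\|_2^2)$. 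Combined with the binomial tail argument (as in Lemma~\ref{lem:rho n}) giving $\tau_1 = \Theta(1-p)$, $\tau_2 = \Theta(p)$ on the initial set, and with the assumed separation $(1-p)\|\boldsymbol{\mu}_1\|_2^4 = \Omega(\sigma_p^4 d n_0^{-1})$ versus $p\|\boldsymbol{\mu}_2\|_2^4 = O(\sigma_p^4 d n_0^{-1})$, this would yield a clear learning-progress disparity $\gamma_{j,r,1}^{(t)} \gg \gamma_{j,r,2}^{(t)}$ at the end of the initial training stage.

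Second, I would derive XOR-adapted versions of the ordering relations $\preceq_C^{(t)}$ and $\preceq_D^{(t)}$ parallel to Lemma~\ref{lemma of comparison}. For any pair $\mathbf{x},\mathbf{x}'\in\mathcal{P}$ whose feature patches are $\boldsymbol{\mu}_{l_{\mathbf{x}}},\boldsymbol{\mu}_{l_{\mathbf{x}'}}$ with $l_\mathbf{x}=1$ and $l_{\mathbf{x}'}=2$, I would show that the sufficient events for $\mathbf{x}\preceq_C^{(t)}\mathbf{x}'$ and $\mathbf{x}\preceq_D^{(t)}\mathbf{x}'$ both reduce to the same disparity condition
\[
\Theta(\underset{j,r}{\mathbb{E}}\,\gamma_{j,r,1}^{(t)}) - \Theta(\underset{j,r}{\mathbb{E}}\,\gamma_{j,r,2}^{(t)}) > \max_{j,r,l} \bigl|\langle \mathbf{w}_{j,r}^{(t)}, \mathbf{z}_l\rangle\bigr|,
\]
where $\mathbf{z}_l$ denotes the residual noise-initialization component. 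The hypothesis $(1-p)\|\boldsymbol{\mu}_1\|_2^2 - p\|\boldsymbol{\mu}_2\|_2^2 = \Omega(\sigma_p^2 d^{1/2} n_0^{-1/2} (\log(m/\delta'))^{1/2})$ and the standard concentration bound on $\|\mathbf{w}_{j,r}^{(0)}\|$ from Condition~\ref{condition_XOR} together make this disparity hold with probability at least $1-\Theta(\delta')$.

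Third, I would invoke the pool-size condition $\lvert\mathcal{P}\rvert=\Omega(p^{-1}\sigma_p^4 d\|\boldsymbol{\mu}_2\|_2^{-4}, p^{-1}\log(1/\delta))$ and a binomial Chernoff bound to guarantee that $\mathbf{X}_\mathcal{P}^2 := \{\mathbf{x}\in\mathcal{P}:\mathbf{x}_{\text{signal}}=\boldsymbol{\mu}_2\}$ has cardinality at least $n^*$ with probability $1-O(\delta)$. A union bound over the $\lvert\mathcal{P}\rvert(\lvert\mathcal{P}\rvert-1)/2$ pairwise comparisons then yields $\mathbf{X}_\mathcal{P}^1 \preceq_C^{(t)} \mathbf{X}_\mathcal{P}^2$ and $\mathbf{X}_\mathcal{P}^1 \preceq_D^{(t)} \mathbf{X}_\mathcal{P}^2$ simultaneously, so the top-$n^*$ samples under either criterion lie entirely in $\mathbf{X}_\mathcal{P}^2$.

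The main obstacle will be the XOR adaptation in step two. Unlike the linearly separable case where each feature patch is simply $y\cdot\boldsymbol{\mu}_l$, the XOR structure admits four possible realizations per $l$, and each ReLU neuron activates only on certain ones. Showing that the confidence-score gap and the feature-distance gap continue to be governed by the aggregate coefficient $\gamma_{j,r,l}^{(t)}$, rather than by the fine-grained sub-directions $\mathbf{a}_l\pm\mathbf{b}_l$, requires a careful case analysis over neuron activation patterns and exploits the angle condition $\cos\theta<1/2$ from Condition~\ref{condition_XOR} to ensure the two activation regimes within each $l$ do not cancel destructively. Re-establishing the bound on $\max_{j,r,l}|\langle\mathbf{w}_{j,r}^{(t)},\mathbf{z}_l\rangle|$ in the XOR setting, where $\mathbf{z}_l$ decomposes into sub-components of different scales, is the principal technical burden.
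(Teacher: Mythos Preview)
Your proposal follows essentially the same route as the paper: signal--noise decomposition for the XOR features, coefficient-scale estimates \`a la \citet{meng2023benign}, reduction of both orderings $\preceq_C^{(t)}$ and $\preceq_D^{(t)}$ to a single learning-progress disparity event $\Omega_\gamma$, and a union bound over pairwise comparisons in $\mathcal{P}$. The paper's own remark after (\ref{eqXOR:sufficient OMEGA}) confirms that the proof is ``nearly identical'' to the linearly separable case, with only the scales of $\|\mathbf{w}_{j,r}^{(t)}\|_2$ and $\gamma$ differing.

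Two small corrections are worth flagging. First, $\mathbf{z}_l$ in the disparity event is the \emph{fresh Gaussian noise patch of a pool sample}, not a residual of the initialization; consequently the quantity you must bound is $\langle\mathbf{w}_{j,r}^{(t)},\mathbf{z}_l\rangle\sim\mathcal{N}(0,\|\mathbf{w}_{j,r}^{(t)}\|_2^2\sigma_p^2)$, and the relevant norm is the \emph{trained} weight norm $\|\mathbf{w}_{j,r}^{(t)}\|_2=\Theta(\sigma_p^{-1}d^{-1/2}n^{1/2}\log(\eta\sigma_p^2 dt/(nm)))$ from Proposition~\ref{prop:F.3}, not $\|\mathbf{w}_{j,r}^{(0)}\|_2$. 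Second, the XOR coefficient scales carry an extra $\log(\eta\sigma_p^2 dt/(nm))$ factor that you omit; it also appears in $\|\mathbf{w}_{j,r}^{(t)}\|_2$ and cancels in the Gaussian-tail exponent, so the final probability bound is unchanged, but you should track it. Finally, you overestimate the ``principal technical burden'': the decomposition is taken along $\mathbf{u}_l=\mathbf{a}_l+\mathbf{b}_l$ and $\mathbf{v}_l=\mathbf{a}_l-\mathbf{b}_l$, and the cross inner products $\langle\mathbf{w}_{-1,r}^{(t)},\mathbf{u}_l\rangle,\langle\mathbf{w}_{+1,r}^{(t)},\mathbf{v}_l\rangle$ stay at initialization scale throughout (see (\ref{eq:xor_ub-of-mu-same})), so no delicate case analysis over activation patterns is needed in the ordering step itself---the angle condition $\cos\theta<1/2$ is consumed earlier, in the coefficient dynamics.
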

\begin{theorem} \textbf{(After Querying)}
     If the sampling size $n^*$ of the two types of Sampling algorithm satisfies $ \dfrac{\hat{C}_1 \sigma_p^4 d}{\|\boldsymbol{\mu}_2 \|_2^4} - \dfrac{p n_0}{2} \leq n^*=\Theta(\widetilde{n} - n_0) \leq \widetilde{n} - n_0$, where $\hat{C}_1$ is some positive constant, under the same conditions as Proposition \ref{app:propXOR:sample stage:learning on sample}, the $\mathcal{D}^*$ and $p^*$ follows the same definitions in Proposition \ref{prop_XOR_ini}, then with probability at least 1 - $\Theta(\delta + \delta^\prime)$, we have the following results hold at a certain $t=\Omega\left((n_0+n^*) m /\left(\eta \varepsilon \sigma_p^2 d\right)\right)$:
    \begin{itemize}
    \item For both the Random Sampling method and Uncertainty Sampling method, the training loss converges to $\varepsilon$, i.e., $L_S\left(\mathbf{W}^{(t)}\right) \leq \varepsilon$.
    \item \textbf{Uncertainty Sampling} and \textbf{Diversity Sampling} algorithms both have negligible test error: $L_{\mathcal{D}^*}^{0-1}\left(\mathbf{W}^{(t)}\right) \leq \exp (\Theta\left( \dfrac{-\widetilde{n}\|\boldsymbol{\mu}_l\|_2^4}{\sigma_p^4 d} \right)), l \in \{ 1,2 \}$.
    \item \textbf{Random Sampling} algorithm would remain the sub-optimal constant-level test error: $L_{\mathcal{D}^*}^{0-1}\left(\mathbf{W}^{(t)}\right) \geq p^* \cdot 0.12$.
    \end{itemize}
\label{theory XOR}\end{theorem}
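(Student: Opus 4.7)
The plan is to adapt the label-complexity-based test-error framework used for linearly separable data (Lemma \ref{main:lem for thm}) to the XOR setting, then combine it with Proposition \ref{app:propXOR:sample stage:learning on sample} to produce the three-way contrast. First, I would extend the signal-noise decomposition of (\ref{eq:firstsnd}) to XOR: since the relevant feature directions are the four vectors $\mathbf{a}_l, \mathbf{b}_l$, $l\in\{1,2\}$, the weight $\mathbf{w}_{j,r}^{(t)}$ decomposes as the initialization plus coefficients $\gamma_{j,r,l,\mathbf{a}}^{(t)}, \gamma_{j,r,l,\mathbf{b}}^{(t)}$ along each feature direction together with the noise-direction coefficients $\rho_{j,r,i}^{(t)}$. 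Using the inductive scale analysis of \citet{kou2023benign, meng2023benign} under Condition \ref{condition_XOR}, I would establish the XOR analog of Lemma \ref{lem:final coefficient}: for $t\in[T_1,T^*]$, $\sum_i \rho_{j,r,i}^{(t)}\mathbb{1}(\rho_{j,r,i}^{(t)}>0)=\Omega(n)$ and the feature coefficients scale as $\Theta\bigl(\tau_l\, n\, \sigma_p^{-2} d^{-1} \|\boldsymbol{\mu}_l\|_2^2\bigr)$. Standard convex-loss-on-logits arguments then yield the training-loss bound $L_S(\mathbf{W}^{(t)}) \le \varepsilon$ in the first bullet, uniformly across all three algorithms.

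Second, I would count each feature type in the final labeled set. A binomial concentration bound (XOR analog of Lemma \ref{lem:rho n}) implies $\mathcal{D}_{n_0}$ contains $\Theta(p n_0)$ samples with $\boldsymbol{\mu}_2$ and $\Theta((1-p)n_0)$ with $\boldsymbol{\mu}_1$. Proposition \ref{app:propXOR:sample stage:learning on sample} guarantees that for Uncertainty and Diversity Sampling all $n^*$ queried samples carry $\boldsymbol{\mu}_2$, hence $n_{2,\boldsymbol{\mu}_2}^{\mathrm{NAL}}\ge n^* + pn_0/2 \ge \hat C_1 \sigma_p^4 d/\|\boldsymbol{\mu}_2\|_2^4$, while $n_{2,\boldsymbol{\mu}_1}^{\mathrm{NAL}} = \Omega(n_0)= \Omega(\sigma_p^4 d/\|\boldsymbol{\mu}_1\|_2^4)$ by Condition \ref{condition_XOR}. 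For Random Sampling, a matching binomial bound yields $n_{2,\boldsymbol{\mu}_2}^{\mathrm{RS}} = \Theta(p\widetilde{n})$, which under $\widetilde{n}=O(p^{-1}\sigma_p^4 d\|\boldsymbol{\mu}_2\|_2^{-4})$ is at most $C_2\sigma_p^4 d/\|\boldsymbol{\mu}_2\|_2^4$, i.e.\ below the Bayes-optimal threshold.

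Third, I would prove the XOR version of Lemma \ref{main:lem for thm}. For a test point, I decompose $y\cdot f(\mathbf{W}^{(t)},\mathbf{x})$ along its feature patch and noise patch using the coefficient scales from Step~1. When $n_{2,l}\ge C_1\sigma_p^4 d\|\boldsymbol{\mu}_l\|_2^{-4}$ for both $l$, the feature term dominates the independent test-noise fluctuation, producing, by a sub-Gaussian tail on the noise-memorization inner product, the upper bound $p_l^*\exp\bigl(\Theta(-n_{2,l}\|\boldsymbol{\mu}_l\|_2^4/(\sigma_p^4 d))\bigr)$; the assumption $\cos\theta<1/2$ in Condition \ref{condition_XOR} ensures that contributions along $\mathbf{a}_l$ and $\mathbf{b}_l$ add rather than cancel across the four XOR sign patterns, preserving the exponent. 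When $\exists l'$ with $n_{2,l'}\le C_2\sigma_p^4 d\|\boldsymbol{\mu}_{l'}\|_2^{-4}$, the feature signal is dominated by Gaussian-initialization fluctuation and noise memorization; invoking the XOR symmetry $(\mathbf{x},y)\stackrel{d}{=}(-\mathbf{x},y)$ together with Gaussian anti-concentration yields $L_{\mathcal{D}^*}^{0-1}\ge 0.12\,p_{l'}^*$. Plugging the NAL counts from Step~2 into the first branch produces the second bullet; plugging the Random Sampling count with $l'=2$ into the second branch produces the third bullet.

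The main obstacle will be the lower bound in the second branch of Step~3, because XOR is not resolvable by any single feature direction: both $\mathbf{a}_{l'}$ and $\mathbf{b}_{l'}$ must be learned in coordinated sign pairs. I expect to handle this by showing that when $n_{2,\boldsymbol{\mu}_{l'}}$ is below the threshold, the updates along $\mathbf{a}_{l'}$ and $\mathbf{b}_{l'}$ remain within a constant multiple of the initialization-scale noise, so that at least one of the four sign combinations $\pm\mathbf{a}_{l'}\pm\mathbf{b}_{l'}$ has essentially random sign prediction; the quantitative condition $\cos\theta<1/2$ is precisely what prevents cross-cancellation and makes the symmetry argument translate into the $0.12\,p^*$ constant. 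Aside from this, the main technical burden is verifying that the sampling-step high-probability events of Proposition \ref{app:propXOR:sample stage:learning on sample} compose cleanly with the post-querying training dynamics under the fresh initialization, which I would control by a union bound over the two stages with budgets $\delta$ and $\delta'$.
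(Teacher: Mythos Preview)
Your high-level plan is essentially the paper's: signal-noise coefficient scales, feature-type counting after querying (Proposition \ref{appXOR:Prop:num_n12}), and a label-complexity-based test-error dichotomy (Lemma \ref{appXOR:lemma for thm}). Two technical choices differ from the paper and are worth aligning.

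\textbf{Decomposition basis.} You decompose along $\mathbf{a}_l,\mathbf{b}_l$; the paper decomposes along $\mathbf{u}_l=\mathbf{a}_l+\mathbf{b}_l$ and $\mathbf{v}_l=\mathbf{a}_l-\mathbf{b}_l$, tracking $\gamma_{j,r,\mathbf{u}_l}^{(t)},\gamma_{j,r,\mathbf{v}_l}^{(t)}$. This is not cosmetic: the data patches are exactly $\pm\mathbf{u}_l$ and $\pm\mathbf{v}_l$, and the key dynamical fact (Proposition \ref{prop:F.3}) is that $F_{+1}$ neurons learn the $\mathbf{u}$-directions while $F_{-1}$ neurons learn the $\mathbf{v}$-directions, with the cross terms $\langle\mathbf{w}_{-1,r}^{(t)},\mathbf{u}_l\rangle,\langle\mathbf{w}_{+1,r}^{(t)},\mathbf{v}_l\rangle$ staying at initialization scale. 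In the $\mathbf{a},\mathbf{b}$ basis this clean separation of roles disappears, and the ReLU arguments needed for the coefficient-scale proof become entangled. The condition $\cos\theta<1/2$ is used in \citet{meng2023benign} precisely to control the $\mathbf{u}/\mathbf{v}$ dynamics, not to prevent $\mathbf{a}/\mathbf{b}$ cancellation as you suggest.

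\textbf{Lower bound.} Your sketch invokes the distributional symmetry $(\mathbf{x},y)\stackrel{d}{=}(-\mathbf{x},y)$ plus Gaussian anti-concentration. That symmetry only kills \emph{linear} predictors; it does not by itself lower-bound the error of the nonlinear ReLU CNN, and the ``coordinated sign pairs'' obstacle you anticipate is not what the paper actually confronts. The paper instead runs the same pigeonhole-plus-TV argument as in the linearly separable case (Lemmas \ref{appXOR:lem5.8:in kou}--\ref{appXOR:leme.2:in devroye}): construct a short shift $\mathbf{v}_l$ with $\|\mathbf{v}_l\|_2\le 0.02\sigma_p$ so that $\sum_{j'}\bigl(g(j'\boldsymbol{\xi}+\mathbf{v}_l)-g(j'\boldsymbol{\xi})\bigr)$ dominates the signal term, pigeonhole among $\{\pm\boldsymbol{\xi},\pm\boldsymbol{\xi}+\mathbf{v}_l\}$ to force one into the bad event, and use the TV bound $\|\mathbf{v}_l\|_2/(2\sigma_p)$ to transfer mass, yielding $P(\Omega_{\boldsymbol{\xi}})\ge 0.24$ and hence $L_{\mathcal{D}^*}^{0-1}\ge 0.12\,p^*_{l'}$. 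This is where the threshold $n_{s,l'}\le \hat C_2\sigma_p^4 d/\|\boldsymbol{\mu}_{l'}\|_2^4$ enters, via the norm control on $\mathbf{v}_l$; XOR brings no extra difficulty here.
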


\section{Discussions over General Scenarios}
Our findings align with the concept of ``Active Learning," where models resemble students (models) actively selecting valuable practice questions (samples) to prepare for exams (tasks). Students prioritize perplexing questions based on high uncertainty of their answers, or rare knowledge points (features), in order to enhance their understanding of yet-to-be-mastered (lack of learning progress) knowledge points (features) in test questions. Similar to students, for most black-box deep neural models, the ``learning progress" of particular ``feature" is not readily available for algorithm developer due to their inherent opacity. From a feature learning view, that's why NAL algorithms need to indirectly prioritize those yet-to-be-learned features, since this is the key for their good generalization ability and achieve \textit{benign overfitting}. Our study shows that uncertainty-based and diversity-based NAL inherently strive to prioritize yet-to-be-learned feature-assisted samples (i.e., \hyperlink{msg:perplexing}{\textbf{perplexing samples}}) via different comparisons in a heuristic manner. We believe future work can figure out if developed interpretable models \cite{yu2023white} reduced labelling efforts by prioritizing \hyperlink{msg:perplexing}{\textbf{perplexing samples}}.\par

Below, we present several discussions regarding general scenarios and the potential wider applicability of our theorems, beyond the specific conditions considered in the main body of our work. It is important to note that our point-mass querying approach and one-round querying settings were adopted to better unveil the inherent principle of query criteria-based NAL algorithms in a rigorous manner, albeit other complex NAL algorithms may be better suited for real-world complex data distribution and corresponding tasks. Note that our multiple task-oriented feature-noise data modellings follow the modellings in \citet{allenzhu2023understanding,chen2022understanding,chen2023understanding,chen2023clip,chen23brobust,chen2023why, zou2023understanding,li2023theoretical,kou2023how, kou2023implicit}, which empirically mirror the latent representation of models like Resnet \cite{allenzhu2023understanding} or transformer \cite{yamagiwa2023ICA, jiang2024origins}.

\subsection{Discussion of the Role of Benign Oscillation}\label{Appendix: Discussion of Oscillation}

In the work by \citet{lu2023benign}, they analyze the role of a large learning rate in the context of feature learning. Their data modeling includes weak features present in each data point, strong features present in a small fraction of data points, and noise. Although our work differs in terms of the data modeling and analysis framework, we might also observe the impact of a large learning rate. In Figures \ref{fig:baseline_lineardata}, \ref{fig:baseline_lineardata_other data}, and \ref{fig:baseline_XOR}, we can see that Uncertainty Sampling and Diversity Sampling algorithms empirically outperform the fully-trained model. Drawing insights from the results in \citet{lu2023benign}, we attribute this phenomenon to the large learning rate, which drives the model to be trained to focus more on weak and rare features. It is worth noting that although our training loss does not exhibit the \textit{benign oscillation} phenomenon mentioned in \citet{lu2023benign}, this probably could be due to the difference in optimization algorithms (GD with logistic loss in our work versus SGD with square loss in \citet{lu2023benign}).

\subsection{Potential Extension over State-of-arts and Criteria-combined NALs: BADGE as an Exampler}\label{Appendix: BADGE}

We believe our analysis can indeed be extended to reveal the success of methods like BADGE \cite{ash2019deep} that combine uncertainty and diversity criteria. We show they share a common principle of prioritizing samples with yet-to-be-learned features. Like the inner product arguments in prior theoretical results \cite{li2018learning, karp2021local,allenzhu2023understanding,chen2022understanding,chen2023understanding,chen2023clip,chen23brobust,chen2023why, zou2023understanding,li2023theoretical,kou2023how, huang2023graph, kou2023implicit, chidambaram2023provably,deng2023robust,huang2023understanding}, our theory characterizes learning progress via the coefficients $\gamma_{j,r,l}$, which high-levelly represent how well the NN has integrated low-dimensional task-oriented patterns into its latent space. We believe the underlying principle of BADGE \cite{ash2019deep} aligns well with this view:
\begin{itemize}
    \item \textbf{Core idea of BADGE.} The key idea behind BADGE is to query samples that exhibit large and diverse gradients within a single batch, achieved through $k$-MEANS ++ or $k$-DPP in the pseudo gradient space.
    \item \textbf{Connection between gradient and latent space of NN.} Since our analysis utilizes the well-applied non-increasing logistic loss, the smaller the magnitude of the latent representation, the larger the magnitude of the gradient embedding will be. Additionally, the diversity of the latent vectors' directions will be preserved in the gradient space. Based on Lemma \ref{app:lemma_wt}, we see that the rows of the latent representations are roughly of the order as $\gamma_{j,r,l}^{(t)}$.
    \item \textbf{BADGE also prioritizes samples with yet-to-be-learned feature.} We now know the BADGE tends to prioritize samples with smaller scale latent representations (smaller $\gamma_{j,r,l}$) or more diverse directions (many diverging $\gamma_{j,r,l}$). These samples correspond to the cases described in Lemma \ref{lemma of comparison}, which in our context refers to samples with lower $\gamma_{j,r,l}$ that have yet-to-be-learned features.
\end{itemize}
Therefore, we claim that BADGE, in the context of our analysis regime, can be explained as a well-motivated NAL method. The key reason is that the two core ideas of BADGE align with the shared underlying rationale of NAL that we has uncovered. One of our future work would serve to give a fine-grained analysis of the success factors behind BADGE, and we also believe our theoretical framework has the potential to extend to the understanding of some other state-of-the-art methods.

\subsection{Extension over Data Distribution under Other Conditions}\label{app:Discussion of data distribution under other conditions}
The theory presented in our main study focuses on a data model that includes weak and rare features, strong and common features, and noise. This setting is motivated by real-world imbalanced datasets, as illustrated in Figure \ref{fig:lions}. However, \textbf{thanks to our general analysis framework}, we can also discuss more general scenarios with broader conditions. In the following sections, we first discuss a theory version that \textbf{relaxes the conditions on feature norms}. This case suggests that rare features may also possess sufficiently discriminative label-related features, such as Simba in the last row of Figure \ref{fig:lions}, even though they are rare occurrences in the overall data distribution. Secondly, we introduce \textbf{a more general theoretical results}. While our discussions below focused on results for linearly separable data, we assert that the same results hold for non-linearly separable XOR data, as the requirements for the parameters are indeed similar. The proofs of all results in this section can be readily obtained based on our results in Appendix \ref{app:Order-dependent Sampling(Querying Analysis)},  \ref{appXOR:Order-dependent Sampling(Querying Analysis)} , \ref{App:Label Complexity-based Test Error Analysis} or \ref{AppXOR:Label Complexity-based Test Error Analysis}. \par
To start, we present the condition-relaxed versions of Proposition \ref{prop_App:order_newdata}, which describe the order situation of samples in $\mathcal{P}$ under relaxed conditions. Here we denote $\tau_l$ as the proportion of $\boldsymbol{\mu}_l$-equipped data in $\mathcal{D}_{n_0}$.
\begin{proposition} (\textbf{Proposition \ref{prop_App:order_newdata} with relaxed conditions on feature norms})
     Under Condition \ref{Con4.1}, there exist $t=\widetilde{O}\left(\eta^{-1} \varepsilon^{-1} m n d^{-1} \sigma_p^{-2}\right)$ that for $\forall \mathbf{x}, \mathbf{x}^\prime \in \mathcal{P} \subsetneq \mathcal{D}$ where $\mathbf{x}$ contains feature patch $y \cdot \boldsymbol{\mu}_2$ while $\mathbf{x}^\prime$ contains feature patch $y^\prime \cdot \boldsymbol{\mu}_1$, with probability at least $1-8 m \exp \left\{-\Theta\left({\left[ \tau_1 \left\|\boldsymbol{\mu}_1\right\|_2^2- \tau_2 \left\|\boldsymbol{\mu}_2\right\|_2^2\right]^2}/{(\sigma_p^{4} d / n_0 )}\right)\right\}$, we have $\mathbf{x}^\prime \preceq^{(t)} \mathbf{x}$. 
\label{prop_App_relaxednorm_con:order_newdata}
\end{proposition}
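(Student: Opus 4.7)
The plan is to follow the same strategy as the proof of Lemma \ref{Lem:order_pool}, but to retain the full Gaussian tail governing the comparison failure event rather than applying an up-front feature-norm gap condition. This way, the regime where $\|\boldsymbol{\mu}_1\|_2^2 - \|\boldsymbol{\mu}_2\|_2^2$ is small, previously ruled out in Proposition \ref{prop:sample stage: learning on sample}, is absorbed into an explicit failure-probability bound involving $\tau_1\|\boldsymbol{\mu}_1\|_2^2 - \tau_2\|\boldsymbol{\mu}_2\|_2^2$.

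First, I would invoke Lemma \ref{lemma of comparison} at a time $t\in[T_1,T^*]$ to reduce both $\mathbf{x}'\preceq_C^{(t)}\mathbf{x}$ and $\mathbf{x}'\preceq_D^{(t)}\mathbf{x}$ to the shared sufficient event
\[
\Theta\bigl(\mathbb{E}_{j,r}\gamma_{j,r,1}^{(t)}\bigr)-\Theta\bigl(\mathbb{E}_{j,r}\gamma_{j,r,2}^{(t)}\bigr) \;>\; \max_{j,r,l}\bigl|\langle\mathbf{w}_{j,r}^{(t)},\mathbf{z}_l\rangle\bigr|.
\]
By Lemma \ref{lem:final coefficient}, for $t\geq T_1 = \Theta(\eta^{-1}n_0 m \sigma_p^2 d^{-1})$ the LHS equals $\Theta\bigl(n_0 \sigma_p^{-2} d^{-1}[\tau_1\|\boldsymbol{\mu}_1\|_2^2 - \tau_2\|\boldsymbol{\mu}_2\|_2^2]\bigr)$. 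Since $\mathbf{x}'$ carries $\boldsymbol{\mu}_1$ and $\mathbf{x}$ carries $\boldsymbol{\mu}_2$, this gap is positive in the non-degenerate regime governed by the probability stated in the proposition.

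Second, I would bound the RHS via the signal--noise decomposition \eqref{eq:firstsnd}. Writing $\mathbf{w}_{j,r}^{(t)} = \mathbf{w}_{j,r}^{(0)} + j\sum_{l'}\gamma_{j,r,l'}^{(t)}\boldsymbol{\mu}_{l'}/\|\boldsymbol{\mu}_{l'}\|_2^2 + \sum_i \rho_{j,r,i}^{(t)}\boldsymbol{\xi}_i/\|\boldsymbol{\xi}_i\|_2^2$, the signal and noise-memorization contributions drop to lower order by the near-orthogonality of $\boldsymbol{\mu}_{l'}$ and of the training noise patches to the fresh Gaussian vector $\mathbf{z}_l$, as quantified by standard inner-product concentration estimates. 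The dominant random term is $\langle\mathbf{w}_{j,r}^{(0)},\mathbf{z}_l\rangle\sim N(0,\sigma_0^2\|\mathbf{z}_l\|_2^2)$, and after dividing through by the common normalization attached to both sides of the comparison one obtains a sub-Gaussian tail of scale $\sqrt{\sigma_p^4 d/n_0}$. Applying this Gaussian tail bound, union-bounded over the at most $8m$ indices $(j,r,l)\in\{\pm 1\}\times[m]\times\{1,2\}$ with the threshold set to the LHS of the first step, yields failure probability at most $8m\exp\!\bigl(-\Theta([\tau_1\|\boldsymbol{\mu}_1\|_2^2-\tau_2\|\boldsymbol{\mu}_2\|_2^2]^2/(\sigma_p^4 d/n_0))\bigr)$, matching the claimed bound; the conclusion $\mathbf{x}'\preceq^{(t)}\mathbf{x}$ then follows via Lemma \ref{lemma of comparison}.

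The main obstacle lies in the second step: re-doing the scale bookkeeping of Lemma \ref{lem:final coefficient} outside the ``clean'' gap regime of Proposition \ref{prop:sample stage: learning on sample}, so that the signal and noise-memorization contributions to $\langle\mathbf{w}_{j,r}^{(t)},\mathbf{z}_l\rangle$ can be verified to be strictly lower-order than the Gaussian initialization term, and so that the effective variance entering the Gaussian tail reads cleanly as $\sigma_p^4 d/n_0$ rather than carrying residual $\tau_l$- or $\sigma_0$-dependent factors. Care must also be taken that the same chain of estimates works uniformly for both the Uncertainty order $\preceq_C^{(t)}$ and the Diversity order $\preceq_D^{(t)}$, for every $p\in[1,\infty)$, so that the single probability bound covers both notions of ``$\preceq^{(t)}$'' simultaneously.
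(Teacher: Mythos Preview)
Your overall plan---reduce both orders to the shared event $\Omega_\gamma$, evaluate the $\gamma$-gap via Lemma~\ref{lem:final coefficient}, control $\max_{j,r,l}|\langle\mathbf{w}_{j,r}^{(t)},\mathbf{z}_l\rangle|$ by a Gaussian tail, then union-bound over $8m$ indices---is exactly the paper's route. The paper's proof of Proposition~\ref{prop_App:order_newdata} in fact already produces the probability bound $1-8m\exp\{-\Theta([\tau_1\|\boldsymbol{\mu}_1\|_2^2-\tau_2\|\boldsymbol{\mu}_2\|_2^2]^2/(\sigma_p^4 d/n_0))\}$ in~\eqref{eq:sufficient OMEGA} \emph{before} ever invoking the feature-norm gap assumption, so the relaxed version is literally obtained by stopping there. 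Your worry about ``re-doing the scale bookkeeping outside the clean gap regime'' is therefore unnecessary.

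There is, however, a genuine error in your second step. You claim that after the signal--noise decomposition of $\mathbf{w}_{j,r}^{(t)}$, the signal and noise-memorization contributions to $\langle\mathbf{w}_{j,r}^{(t)},\mathbf{z}_l\rangle$ are lower order and the dominant random piece is $\langle\mathbf{w}_{j,r}^{(0)},\mathbf{z}_l\rangle\sim N(0,\sigma_0^2\|\mathbf{z}_l\|_2^2)$. This is false: by Lemma~\ref{app:lemma_wt} (see~\eqref{eq:12}), the weight norm $\|\mathbf{w}_{j,r}^{(t)}\|_2=\Theta(\sigma_p^{-1}d^{-1/2}n^{1/2})$ is governed by the noise-memorization term $\sum_i\rho_{j,r,i}^{(t)}\boldsymbol{\xi}_i/\|\boldsymbol{\xi}_i\|_2^2$, while $\|\mathbf{w}_{j,r}^{(0)}\|_2=\Theta(\sigma_0\sqrt{d})$ is at most of the same order (and typically strictly smaller) under Condition~\ref{Con4.1}. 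If you carried your initialization-based variance through, the exponent you get would depend on $\sigma_0$ and would not match the stated $\sigma_p^4 d/n_0$ scale except when $\sigma_0$ sits exactly at its upper bound.

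The fix---and this is what the paper does---is much simpler than decomposing: since $\mathbf{z}_l\sim N(0,\sigma_p^2\mathbf{I})$ is a \emph{fresh} Gaussian independent of the trained weights, one has directly $\langle\mathbf{w}_{j,r}^{(t)},\mathbf{z}_l\rangle\sim N(0,\sigma_p^2\|\mathbf{w}_{j,r}^{(t)}\|_2^2)$ (this is~\eqref{Eq:<w·z>}). Plugging in $\|\mathbf{w}_{j,r}^{(t)}\|_2\le\Theta(\sigma_p^{-1}d^{-1/2}n_0^{1/2})$ from Lemma~\ref{app:lemma_wt} immediately yields a Gaussian tail with the correct scale, and after the union bound over $(j,r,l)$ you recover the stated failure probability. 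No decomposition of $\mathbf{w}_{j,r}^{(t)}$ is needed, and the argument is automatically uniform over $\preceq_C^{(t)}$ and $\preceq_D^{(t)}$ for every $p\in[1,\infty)$ because the reduction to $\Omega_\gamma$ via Lemma~\ref{lemma of comparison} already handles both.
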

\textit{Proof of Proposition \ref{prop_App_relaxednorm_con:order_newdata}.} See the proving process of Proposition \ref{prop_App:order_newdata}.\par
This theorem serve as the key to analysis of the querying statistics, as samples with the lower $\underset{j, r}{\mathbb{E}}(\gamma_{j, r, l})$ are \hyperlink{msg:perplexing}{\textbf{perplexing samples}}. Based on the coefficient scale presented in Lemma \ref{app:lem:final coefficient}, we can obtain the probability lower bound for $\mathbf{x}^\prime \preceq^{(t)} \mathbf{x}$, which is 
\begin{equation}
    P(\mathbf{x}^\prime \preceq^{(t)} \mathbf{x}) \geq 1-8 m \exp \left\{-\Theta\left(\frac{\left[ \tau_1 \left\|\boldsymbol{\mu}_1\right\|_2^2- \tau_2 \left\|\boldsymbol{\mu}_2\right\|_2^2\right]^2}{\sigma_p^{4} d / n_0 }\right)\right\}.
\label{eq:flexible}\end{equation}
Thus we can conclude that \hyperlink{msg:perplexing}{\textbf{perplexing samples}} are samples with lower $\tau_l \left\|\boldsymbol{\mu}_l\right\|_2^2$. We then can relax the conditions on feature norms by imposing specific conditions on $p$. Additionally, we can relax both conditions on feature norms and conditions on $p$ to consider a more general case. The upcoming sections will discuss these scenarios in detail.
\subsubsection{Case 1: Relaxed Conditions on Feature Norms}\label{app: Case 1: relaxed conditions on feature norms}

In the main body of our work, we have the conditions on feature norms: $\|\boldsymbol{\mu}_1\|_2^4 =\Omega( \sigma_p^4 d n_{0}^{-1})$, $ \|\boldsymbol{\mu}_2 \|_2^4 = O(\sigma_p^4 d n_{0}^{-1})$ and $\|\boldsymbol{\mu}_1\|_2^2 -  \|\boldsymbol{\mu}_2\|_2^2 = \Omega( {\sigma_p}^{2} d^{1/2} n_0^{-1/2} (\log (m/\delta^{\prime}))^{1/2})$ for the ease of presentations. In this section we provide a theory version that relaxes these requirements (i.e., no discrepancy in terms of feature norms). The essence is that we can impose stricter assumptions on $p$ to ensure there exists a learning progress disparity between the two features. Despite this, the inherent principle of the two-criteria-based NAL approach would still drive the algorithms to preferentially query the samples containing the yet-to-be-learned features. The rigorous rationale behind these will be thoroughly explored in Appendix \ref{app:feature learning and Noise Memorization Analysis} and Appendix \ref{App:Label Complexity-based Test Error Analysis}. Here, we can leverage the deduction results in Appendix \ref{app:feature learning and Noise Memorization Analysis}, Appendix \ref{app:Order-dependent Sampling(Querying Analysis)} and Appendix \ref{App:Label Complexity-based Test Error Analysis} to readily form the following results.\par

\begin{definition} (\textbf{Definition with relaxed conditions on feature norms})
    Let $\boldsymbol{\mu}_1 \perp \boldsymbol{\mu}_2  \in \mathbb{R}^d$ be two fixed feature vectors. Each data point $(\mathbf{x},  y)$, where $\mathbf{x}$ contains two patches as $\mathbf{x}$=$[\mathbf{x}_1^{T},  \mathbf{x}_2^T]^T$ $\in$ $\mathbb{R}^{2d}$ and $y$ $\in \{ -1 , 1 \}$ are generated from the distribution $\mathcal{D}$:
    \begin{itemize}
        \item The ground truth label y is synthesized from a Rademacher distribution.
        \item \textbf{Noise Patch.} One patch of $\mathbf{x}$ is selected as a noise patch $\boldsymbol{\xi}$, synthesized from Gaussian distribution $N(\mathbf{0}, \sigma_p^2 \cdot \mathbf{I})$.
        \item \textbf{Feature Patch.} For a feeble $p$ satisfying $p < O( n_0  \sigma_p^4 d \|\boldsymbol{\mu}_{2}\|_2^{-4} ), (\|\boldsymbol{\mu}_{1}\|_2^2+\|\boldsymbol{\mu}_{2}\|_2^2 )^{-1} ( \|\boldsymbol{\mu}_{1}\|_2^2 + {\sigma_p}^{2} d^{1/2} n_0^{-1/2} (\log (8 m/\delta^{\prime}))^{1/2})$, the remaining patch of $\mathbf{x}$ is selected as label-related feature patch, and with high probability (1-$p$) the feature patch is a common feature $y \cdot \boldsymbol{\mu}_1$, while only with probability $p$ the feature patch is a rare feature $y \cdot \boldsymbol{\mu}_2$.
    \end{itemize}
    Here we only require that the learning of features would not completely disturbed by noise: $\forall l \in \{1, 2\}, \|\boldsymbol{\mu}_{l} \|_2^2 = \Omega( \sigma_p^2 \log (n_0/\delta), n_0^{-1}d \sigma_p^{4})$.
\label{app:relaxfeaturenorm_Def3.1}\end{definition}
The specific condition on the occurrence probability $p$ serves two purposes. Firstly, it ensures that strategy-free passive learning cannot sample enough rare data to adequately learn the rare label-related feature $\boldsymbol{\mu}_{2}$, as observed in the real-world scenario depicted in Figure \ref{fig:lions}. Secondly, it helps distinguish the learning progress between $\boldsymbol{\mu}_{1}$ and $\boldsymbol{\mu}_{2}$.\par
We can prove that three querying algorithms still exhibit \textit{harmful overfitting} at the initial stage.
\begin{proposition} \textbf{(Before Querying)} At the initial stage before querying, $\forall \varepsilon>0$, under Condition \ref{Con4.1}, with probability at least $1-\delta$, there exists $t=\widetilde{O}\left(\eta^{-1} \varepsilon^{-1} m n_0 d^{-1} \sigma_p^{-2}\right)$, the followings hold for all of the three querying algorithms:
\begin{enumerate}
    \item The training loss converges to $\varepsilon$, i.e., $L_S\left(\mathbf{W}^{(t)}\right) \leq \varepsilon$.
    \item The test error remains at constant level, i.e., $L_{\mathcal{D}^*}^{0-1}\left(\mathbf{W}^{(t)}\right) =\Theta(1) \geq 0.12 \cdot p^*$.
\end{enumerate}
\label{app:relax_munorm_prop4.3}\end{proposition}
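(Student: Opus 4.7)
The plan is to directly leverage Lemma \ref{main:lem for thm}, the label complexity-based test error analysis established in the main body of the paper. This lemma cleanly decouples two nearly independent ingredients: (i) under Condition \ref{Con4.1} the gradient descent dynamics drive the empirical logistic loss below any target $\varepsilon$ within $\widetilde{O}(\eta^{-1}\varepsilon^{-1} m n_0 d^{-1}\sigma_p^{-2})$ iterations, and (ii) the $0$-$1$ test error of the resulting predictor is controlled by the per-feature label counts $n_{s,l}$ via a sharp dichotomy. The first assertion of the proposition is therefore immediate from the first bullet of Lemma \ref{main:lem for thm} applied at $s=0$. Since all three querying algorithms share identical training data $\mathcal{D}_{n_0}$ and identical random initialization at the initial stage, the convergence statement applies uniformly to all of them.

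For the second assertion, the task reduces to verifying the hypothesis of the third bullet of Lemma \ref{main:lem for thm}, namely that there exists $l^\prime\in\{1,2\}$ with $n_{0,l^\prime}\leq C_2\sigma_p^4 d\|\boldsymbol{\mu}_{l^\prime}\|_2^{-4}$. I would take $l^\prime=2$ and argue via a binomial concentration argument: each sample in $\mathcal{D}_{n_0}$ independently carries the rare feature $\boldsymbol{\mu}_2$ with probability $p$, so $n_{0,2}\sim \mathrm{Bin}(n_0,p)$ with mean $n_0 p$. The first clause of the relaxed $p$-condition in Definition \ref{app:relaxfeaturenorm_Def3.1}, namely $p < O(n_0^{-1}\sigma_p^4 d\|\boldsymbol{\mu}_2\|_2^{-4})$ after absorbing the intended constant, is precisely calibrated so that $n_0 p$ lies strictly below $C_2\sigma_p^4 d\|\boldsymbol{\mu}_2\|_2^{-4}/2$. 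A one-sided multiplicative Chernoff bound then yields $n_{0,2}\leq C_2\sigma_p^4 d\|\boldsymbol{\mu}_2\|_2^{-4}$ with failure probability at most $\exp(-\Omega(n_0 p))$, and this failure probability is absorbed into the $\delta$ budget by the clause $n_0 \geq \Omega(p^{-1}\log(1/\delta))$ of Condition \ref{Con4.1}. Applying the third bullet of Lemma \ref{main:lem for thm} with $l^\prime = 2$ and $p^*_{l^\prime}=p^*$ then gives $L_{\mathcal{D}^*}^{0-1}(\mathbf{W}^{(t)})\geq 0.12\cdot p^*$, and since $p^*$ is treated as a constant of the test distribution, this is simultaneously a $\Theta(1)$ lower bound.

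The overall conclusion follows by a simple union bound combining the high-probability convergence event driven by the feature-learning machinery culminating in Lemma \ref{lem:final coefficient} with the binomial concentration event above, giving total failure probability $\Theta(\delta)$. I expect the main technical care, rather than a genuine obstacle, to be bookkeeping: one must audit that the relaxed-norm scenario does not invalidate the scale analysis underlying Lemma \ref{main:lem for thm}. Fortunately, that lemma's statement is phrased purely in terms of the label counts $n_{s,l}$ and the norms $\|\boldsymbol{\mu}_l\|_2$, with no explicit dependence on a norm gap between $\boldsymbol{\mu}_1$ and $\boldsymbol{\mu}_2$; the discrepancy assumption only plays a role during the querying-comparison arguments of Lemma \ref{lemma of comparison} and Proposition \ref{prop:sample stage: learning on sample}, which are not invoked at the initial stage. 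The substantive verification, therefore, is simply matching the constant in the first clause of the relaxed $p$-condition to the threshold constant $C_2$ supplied by Lemma \ref{main:lem for thm}, after which the entire proof collapses to the binomial tail estimate plus a direct invocation of that lemma.
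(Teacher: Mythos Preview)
Your proposal is correct and matches the paper's approach: the paper does not supply a standalone proof for this relaxed-norm proposition but states that it follows from the label complexity framework in Appendix~\ref{App:Label Complexity-based Test Error Analysis}, i.e., exactly the combination of Lemma~\ref{main:lem for thm} (training convergence and the dichotomy on per-feature counts) with a binomial tail bound on $n_{0,2}$, as you outline. One small bookkeeping remark: you invoke the first clause of the $p$-condition in Definition~\ref{app:relaxfeaturenorm_Def3.1} (correcting what appears to be a typo in the exponent of $n_0$), but in fact the $\widetilde{n}=O(p^{-1}\sigma_p^4 d\|\boldsymbol{\mu}_2\|_2^{-4})$ clause already present in Condition~\ref{Con4.1}, together with $n_0\leq\widetilde{n}$ and Lemma~\ref{lem:rho n}, directly yields $n_{0,2}\leq (3/2)p n_0\leq (3/2)p\widetilde{n}\leq C_2\sigma_p^4 d\|\boldsymbol{\mu}_2\|_2^{-4}$, which is the route the paper takes in Proposition~\ref{app:Prop:num_n12}.
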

Then, we can still have a look on the querying stage based on the techniques in Appendix \ref{app:Order-dependent Sampling(Querying Analysis)}. 
\begin{proposition}
    \textbf{(Querying Stage)} During Querying, under the same conditions as Proposition \ref{app:relax_munorm_prop4.3}, then with probability at least $1-\Theta(\delta+\delta^{\prime})$, Uncertainty Sampling and Diversity Sampling would all pick $n^*$ samples that models exhibit lowest $\displaystyle \underset{j,r}{\mathbb{E}} \gamma_{j,r,l}^{(t)}$ (i.e., \hyperlink{msg:perplexing}{\textbf{perplexing samples}}). Moreover, those \hyperlink{msg:perplexing}{\textbf{perplexing samples}} are samples with rare feature $\boldsymbol{\mu}_2$.
\label{app:relax_munorm_prop:sample stage: learning on sample}\end{proposition}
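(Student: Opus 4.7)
The plan is to reduce this statement directly to Proposition~\ref{prop_App_relaxednorm_con:order_newdata}, using the relaxed condition on $p$ from Definition~\ref{app:relaxfeaturenorm_Def3.1} as the substitute for the previous feature-norm gap. By Lemma~\ref{lem:final coefficient} together with Lemma~\ref{lem:rho n}, the initial-stage coefficients satisfy $\gamma_{j,r,l}^{(t)} = \Theta(\tau_l n_0 \sigma_p^{-2} d^{-1} \|\boldsymbol{\mu}_l\|_2^2)$ with $\tau_1 = \Theta(1-p)$ and $\tau_2 = \Theta(p)$, with probability $1-\Theta(\delta)$ over the draw of $\mathcal{D}_{n_0}$. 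So the statement ``lowest $\mathbb{E}_{j,r}\gamma_{j,r,l}^{(t)}$'' and ``contains $\boldsymbol{\mu}_2$'' will coincide once the learning-progress gap is shown positive, which is exactly what the condition on $p$ is engineered to give.

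First, I would rearrange the upper bound on $p$ in Definition~\ref{app:relaxfeaturenorm_Def3.1}. The bound
$$p \leq \frac{\|\boldsymbol{\mu}_1\|_2^2 + \sigma_p^2 (d/n_0)^{1/2}(\log(8m/\delta'))^{1/2}}{\|\boldsymbol{\mu}_1\|_2^2+\|\boldsymbol{\mu}_2\|_2^2}$$
is equivalent (up to constants absorbed by $\tau_1,\tau_2=\Theta(1-p),\Theta(p)$) to
$$\tau_1 \|\boldsymbol{\mu}_1\|_2^2 - \tau_2 \|\boldsymbol{\mu}_2\|_2^2 \;=\; \Omega\!\bigl(\sigma_p^2 (d/n_0)^{1/2}(\log(m/\delta'))^{1/2}\bigr).$$
Plugging this lower bound into the exponent of equation~(\ref{eq:flexible}) yields, for any fixed pair $\mathbf{x}\in\mathbf{X}_{\mathcal{P}}^2$ and $\mathbf{x}'\in\mathbf{X}_{\mathcal{P}}^1$, the pairwise inequality $\mathbf{x}'\preceq^{(t)}\mathbf{x}$ with probability at least $1 - 8m\exp(-\Theta(\log(m/\delta'))) \geq 1-\Theta(\delta')$. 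Note that this holds for both Uncertainty and Diversity orders simultaneously, since Lemma~\ref{lemma of comparison} reduces the two sufficient events to the same scale comparison on $\mathbb{E}_r \gamma_{y,r,l_{\mathbf{x}}}$ versus $\mathbb{E}_r \gamma_{y',r,l_{\mathbf{x}'}}$.

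Second, I would apply a union bound over all $|\mathcal{P}|^2$ cross-type pairs. The exponent above can be strengthened by rescaling the constant hidden in the condition on $p$, so that $\log(m/\delta')$ is replaced by $\log(|\mathcal{P}|^2 m/\delta')$; since $|\mathcal{P}|=\mathrm{poly}$ in the problem parameters, this only enlarges the admissible range of $p$ logarithmically and keeps the stated condition intact. The resulting event $\{\mathbf{X}_{\mathcal{P}}^1 \preceq^{(t)} \mathbf{X}_{\mathcal{P}}^2\}$ (in the set sense, as in Lemma~\ref{Lem:order_pool}) then holds with probability at least $1-\Theta(\delta')$. Combined with a Chernoff bound on $|\mathbf{X}_{\mathcal{P}}^2|$ using the pool-size assumption $|\mathcal{P}|=\Omega(p^{-1}\sigma_p^4 d\|\boldsymbol{\mu}_2\|_2^{-4},\, p^{-1}\log(1/\delta))$, we get $|\mathbf{X}_{\mathcal{P}}^2|\geq n^*$ with probability $1-\Theta(\delta)$. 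On the intersection of these events, the top-$n^*$ selection under either $\preceq_C^{(t)}$ or $\preceq_D^{(t)}$ must lie entirely inside $\mathbf{X}_{\mathcal{P}}^2$, i.e., consists of $\boldsymbol{\mu}_2$-equipped samples, which by the first paragraph are precisely the samples with the lowest $\mathbb{E}_{j,r}\gamma_{j,r,l}^{(t)}$.

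The main obstacle is bookkeeping the condition on $p$ against the union bound: one must verify that after inflating the log-factor from $\log(m/\delta')$ to $\log(|\mathcal{P}|^2 m/\delta')$, the relaxed condition in Definition~\ref{app:relaxfeaturenorm_Def3.1} still accommodates a nontrivial range for $p$ (and the first clause $p<O(n_0\sigma_p^4 d\|\boldsymbol{\mu}_2\|_2^{-4})$ is the one that ensures strategy-free passive learning cannot inadvertently sample enough $\boldsymbol{\mu}_2$-data, keeping the perplexing-sample story meaningful). A secondary technicality is that Diversity Sampling's sufficient event (\ref{eq:Comparision_Diversity}) compares each pool sample to the \emph{labeled} set rather than pairwise within $\mathcal{P}$; this is handled by noting that Lemma~\ref{lemma of comparison} has already reduced both criteria to the same scalar inequality on the $\gamma$-coefficients of the feature in $\mathbf{x}$ versus $\mathbf{x}'$, so no separate concentration argument is needed for the diversity case beyond what is inherited from Proposition~\ref{prop_App_relaxednorm_con:order_newdata}.
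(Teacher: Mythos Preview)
Your proposal is correct and follows essentially the same route as the paper: reduce to the pairwise comparison Proposition~\ref{prop_App_relaxednorm_con:order_newdata} via equation~(\ref{eq:flexible}), using the condition on $p$ in Definition~\ref{app:relaxfeaturenorm_Def3.1} to force $\tau_1\|\boldsymbol{\mu}_1\|_2^2-\tau_2\|\boldsymbol{\mu}_2\|_2^2$ large enough, then lift to the whole pool as in Lemma~\ref{app:Lem:order_pool_app}, and finally identify the perplexing samples with $\boldsymbol{\mu}_2$-samples via Lemma~\ref{lem:final coefficient} and Lemma~\ref{lem:rho n}. Your treatment of the pool-wide step (union bound over $|\mathcal{P}|^2$ pairs, plus an explicit Chernoff step to guarantee $|\mathbf{X}^2_{\mathcal{P}}|\ge n^*$) is in fact cleaner than the paper's product argument $(1-\delta')^{p(1-p)|\mathcal{P}|^2}=1-\Theta(\delta')$, which implicitly needs the same smallness of $\delta'$ you flag.

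One caveat on your algebraic rearrangement: as written in Definition~\ref{app:relaxfeaturenorm_Def3.1}, the upper bound $p\le(\|\boldsymbol{\mu}_1\|_2^2+\|\boldsymbol{\mu}_2\|_2^2)^{-1}(\|\boldsymbol{\mu}_1\|_2^2+\sigma_p^2(d/n_0)^{1/2}(\log(8m/\delta'))^{1/2})$ rearranges to $(1-p)\|\boldsymbol{\mu}_1\|_2^2-p\|\boldsymbol{\mu}_2\|_2^2\ge -\sigma_p^2(d/n_0)^{1/2}(\log(8m/\delta'))^{1/2}$, not $\ge +\Omega(\cdot)$. This sign discrepancy is almost certainly a typo in the paper's definition (the intended numerator should carry a minus), and the paper's own narrative makes clear the condition is engineered precisely so that the learning-progress gap is $\Omega(\sigma_p^2 d^{1/2}n_0^{-1/2}(\log(m/\delta'))^{1/2})$. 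Your argument is sound once that sign is corrected; it is not a gap in your reasoning.
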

Similar to the theories presented in the main body of our study, we can establish the following theorem.
\begin{theorem} \textbf{(After Querying)}
     If the sampling size $n^*$ of the three querying algorithms satisfies $ C_1 \sigma_p^4 d\|\boldsymbol{\mu}_2 \|_2^{-4} - p n_0 / 2 \leq n^*=\Theta(\widetilde{n} - n_0) \leq \widetilde{n} - n_0$, where $C_1$ is some positive constant. Then for $\forall \varepsilon>0$, under the same conditions as Proposition \ref{prop:sample stage: learning on sample},  with probability more than 1 - $\Theta(\delta + \delta^\prime)$, there exists $t=\widetilde{O}\left(\eta^{-1} \varepsilon^{-1} m (n_0+n^*) d^{-1} \sigma_p^{-2}\right)$ such that:
    \begin{itemize}
    \item For all of the three querying algorithms, the training loss converges to $\varepsilon$, i.e., $L_S\left(\mathbf{W}^{(t)}\right) \leq \varepsilon$.
    \item \textbf{Uncertainty Sampling} and \textbf{Diversity Sampling} algorithms have negligible near Bayes-optimal test error: $L_{\mathcal{D}^*}^{0-1}\left(\mathbf{W}^{(t)}\right) \leq \exp (\Theta\left( \dfrac{-\widetilde{n}\|\boldsymbol{\mu}_l\|_2^4}{\sigma_p^4 d} \right)), l \in \{ 1,2 \}$.
    \item \textbf{Random Sampling} algorithm would remain constant order test error: $L_{\mathcal{D}^*}^{0-1}\left(\mathbf{W}^{(t)}\right) = \Theta(1) \geq 0.12 \cdot p^*$.
    \end{itemize}
    
\label{app:relax_munorm_theory main}\end{theorem}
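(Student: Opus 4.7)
The plan is to dissect the theorem into its three assertions and derive each by reducing to Lemma \ref{main:lem for thm}, which already supplies a bridge from per-feature label counts $n_{s,l}$ to the test-error regime. The training-loss statement is uniform across the three algorithms and follows directly from the convergence bullet of Lemma \ref{main:lem for thm} applied at $s=1$, with $n_0$ replaced by $n_0+n^{*}$; this immediately yields the admissibility $t=\widetilde{O}(\eta^{-1}\varepsilon^{-1}m(n_0+n^{*})d^{-1}\sigma_p^{-2})$.

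For Uncertainty and Diversity Sampling, the key is to verify the balanced-coverage premise $n_{1,l}\geq C_{1}\sigma_p^{4}d\|\boldsymbol{\mu}_l\|_2^{-4}$ for both $l\in\{1,2\}$. By Proposition \ref{app:relax_munorm_prop:sample stage: learning on sample}, with probability at least $1-\Theta(\delta+\delta^{\prime})$ both NAL methods select every one of the $n^{*}$ queries from the $\boldsymbol{\mu}_2$-equipped subset of $\mathcal{P}$, so $n_{1,2}\geq n_{0,2}+n^{*}$. A Chernoff bound on $n_{0,2}\sim\mathrm{Bin}(n_0,p)$—the binomial control packaged inside Lemma \ref{lem:rho n}—gives $n_{0,2}\geq pn_0/2$ with probability at least $1-\exp(-\Omega(pn_0))$ under the standing condition $n_0=\Omega(p^{-1}\log(1/\delta))$, and together with the hypothesis $n^{*}\geq C_{1}\sigma_p^{4}d\|\boldsymbol{\mu}_2\|_2^{-4}-pn_0/2$ this delivers $n_{1,2}\geq C_{1}\sigma_p^{4}d\|\boldsymbol{\mu}_2\|_2^{-4}$. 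Meanwhile $n_{1,1}\geq n_{0,1}=\Omega((1-p)n_0)=\Omega(n_0)$ by a symmetric Chernoff step, and the standing $\|\boldsymbol{\mu}_1\|_2^{4}=\Omega(\sigma_p^{4}dn_0^{-1})$ forces $C_{1}\sigma_p^{4}d\|\boldsymbol{\mu}_1\|_2^{-4}=O(n_0)\leq n_{1,1}$. Feeding both counts into the second bullet of Lemma \ref{main:lem for thm} and using $n_1=\Theta(\widetilde{n})$ collapses the exponent to $\Theta(-\widetilde{n}\|\boldsymbol{\mu}_l\|_2^{4}/(\sigma_p^{4}d))$, as claimed.

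For Random Sampling the route is to invoke the third bullet of Lemma \ref{main:lem for thm} with $l^{\prime}=2$. Since the $n^{*}$ queries are i.i.d.\ from $\mathcal{D}$ (inherited from the i.i.d.\ pool), $n_{1,2}$ is stochastically dominated by $\mathrm{Bin}(n_1,p)$. Combining $n_1\leq\widetilde{n}=O(p^{-1}\sigma_p^{4}d\|\boldsymbol{\mu}_2\|_2^{-4})$ from Condition \ref{Con4.1} with the strengthened upper bound on $p$ in Definition \ref{app:relaxfeaturenorm_Def3.1} yields $\mathbb{E}[n_{1,2}]\leq O(\sigma_p^{4}d\|\boldsymbol{\mu}_2\|_2^{-4})$, and a multiplicative Chernoff upper tail (again enabled by Lemma \ref{lem:rho n}) forces $n_{1,2}\leq C_{2}\sigma_p^{4}d\|\boldsymbol{\mu}_2\|_2^{-4}$ with probability at least $1-\delta$. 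Lemma \ref{main:lem for thm} then certifies $L^{0-1}_{\mathcal{D}^{*}}(\mathbf{W}^{(t)})\geq 0.12\cdot p^{*}_{2}=0.12\cdot p^{*}$.

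The main obstacle I anticipate is the joint probabilistic bookkeeping: the combined event must simultaneously control (i) the ordering event of Proposition \ref{app:relax_munorm_prop:sample stage: learning on sample}, (ii) two-sided Chernoff bounds on $n_{0,1}$ and $n_{0,2}$, and (for Random Sampling) an upper Chernoff bound on $n_{1,2}$, and (iii) the per-algorithm training-convergence event inside Lemma \ref{main:lem for thm}. A union bound absorbed into the existing $\delta+\delta^{\prime}$ budget suffices, but the most delicate quantitative step is verifying that the relaxed-$p$ constraint in Definition \ref{app:relaxfeaturenorm_Def3.1}—precisely because the feature-norm gap has been dropped—is still strong enough to pin Random Sampling's $\boldsymbol{\mu}_2$-count below the sub-threshold regime of the third bullet of Lemma \ref{main:lem for thm}.
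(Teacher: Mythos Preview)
Your proposal is correct and follows essentially the same route as the paper. The paper's proof consists of (i) invoking Lemma~\ref{app:lemma for thm} (the formal version of Lemma~\ref{main:lem for thm}) for the three-way case split on per-feature counts, and (ii) verifying the relevant count thresholds via Proposition~\ref{app:Prop:num_n12}, whose proof uses exactly the binomial bounds of Lemma~\ref{lem:rho n}, the pool-ordering Lemma~\ref{app:Lem:order_pool_app}, and the condition $\widetilde{n}=O(p^{-1}\sigma_p^4 d\|\boldsymbol{\mu}_2\|_2^{-4})$; you have effectively reconstructed Proposition~\ref{app:Prop:num_n12} inline.
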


\subsubsection{Case 2: Flexible Cases}\label{app: Case 2: flexible cases}

Indeed, we can relax both the conditions on feature norms and the conditions on $p$ to explore more general cases. By (\ref{eq:flexible}), if $\tau_1 \left\|\boldsymbol{\mu}_1\right\|_2^2 \approx \tau_2 \left\|\boldsymbol{\mu}_2\right\|_2^2$, the learning progress of the two types of features would be alike (i.e., $\underset{j, r}{\mathbb{E}}(\gamma_{j, r, 1}) \approx \underset{j, r}{\mathbb{E}}(\gamma_{j, r, 2})$), and we cannot clearly observe which type of feature-equipped samples are likely to be queried. Thanks to our sample-complexity analysis regimes in Appendix \ref{App:Label Complexity-based Test Error Analysis}, we can clearly examine two scenarios at the initial stage based on (\ref{lem:rho n}) and Lemma \ref{app:lemma for thm}: 
\begin{itemize}
    \item \textit{Benign Overfitting}: if $\tau_l \|\boldsymbol{\mu}_l\|_2^{4} \geq 2C_1 \sigma_p^4 d n_0^{-1}$, the learning of $\boldsymbol{\mu}_l$-equipped data would be adequate, and the test error of algorithms achieve Bayes-optimal.
    \item \textit{Harmful Overfitting}: if $\tau_l \|\boldsymbol{\mu}_l\|_2^{4} \leq 2C_2/3 \sigma_p^4 d n_0^{-1}$, the learning of $\boldsymbol{\mu}_l$-equipped data would be inadequate, and the test error of algorithms remains constant level.
\end{itemize}
Then, we can list some cases with certain $p$ ($\tau_2=\Theta(p)$ by Lemma \ref{lem:rho n})  $, \|\boldsymbol{\mu}_l\|_2, l \in \{1, 2\}$ in our analysis regime:
\begin{enumerate}
    \item When the learning of $\boldsymbol{\mu}_1$ and $\boldsymbol{\mu}_2$ are all adequate, we can conclude that $n_0$ is already sufficient for training in this case.
    \item When the learning of $\boldsymbol{\mu}_1$ and $\boldsymbol{\mu}_2$ are all inadequate at the initial stage, all querying algorithms (i.e., Random Sampling, Uncertainty Sampling and Diversity Sampling) can help leverage learning of features. While our theory indicates that the two NAL algorithms would tend to prioritize samples with comparatively poorer learned feature (i.e., $\{ \boldsymbol{\mu}_l \mid \tau_l \|\boldsymbol{\mu}_l\|_2^{4} = \min (\tau_1 \|\boldsymbol{\mu}_1\|_2^{4}, \tau_2 \|\boldsymbol{\mu}_2\|_2^{4}) \}$), the difference in generalization ability between Random Sampling and the two NAL algorithms would depend on certain parameters (i.e., $p, n^*, \lvert \mathcal{P} \rvert, \|\boldsymbol{\mu}_1\|_2, \|\boldsymbol{\mu}_2\|_2$).
    \item When the learning of $\boldsymbol{\mu}_{l_1}$ is adequate while the learning of $\boldsymbol{\mu}_{l_2}$ is inadequate ($l_1 \ne l_2 \in \{1, 2\}$), we have the following cases based on our theory:
    \begin{itemize}
        \item If $\tau_{l_1} \left\|\boldsymbol{\mu}_{l_1}\right\|_2^2 \approx \tau_{l_2} \left\|\boldsymbol{\mu}_{l_2}\right\|_2^2$, the prioritization by two NAL algorithms is not obvious, and they would perform similarly to Random Sampling.
        \item If $\tau_{l_1} \left\|\boldsymbol{\mu}_{l_1}\right\|_2^2 > \tau_{l_2} \left\|\boldsymbol{\mu}_{l_2}\right\|_2^2 $, two NAL algorithms would tend to prioritize \hyperlink{msg:perplexing}{\textbf{perplexing samples}} (i.e., samples with $\boldsymbol{\mu}_{l_2}$), and their prioritization has lower probability bound in (\ref{eq:flexible}). Meanwhile, the difference in generalization ability between Random Sampling and the two NAL algorithms would depend on certain parameters (i.e., $p, n^*, \lvert \mathcal{P} \rvert, \|\boldsymbol{\mu}_1\|_2, \|\boldsymbol{\mu}_2\|_2$). Specifically, under Condition \ref{Con4.1}, Definition \ref{Def3.1} and Definition \ref{app:relaxfeaturenorm_Def3.1} provide two parameter settings satisfying $\tau_{l_1} \| \boldsymbol{\mu}_{l_1} \|_2^2-\tau_{l_2} \| \boldsymbol{\mu}_{l_2} \|_2^2=\Omega( {\sigma_p}^{2} d^{1/2} n_0^{-1/2} (\log (m/\delta^{\prime}))^{1/2})$, where the two NAL algorithms succeed while Random Sampling fails (i.e., Theorem \ref{theory main} and Theorem \ref{app:relax_munorm_theory main}). Other general scenarios can also be rigorously analyzed with the prioritization probability lower bound in (\ref{eq:flexible}) and permutation probability.
    \end{itemize}
    \item Other cases would be similar to the second or third case (i.e., where $\exists l \in \{1, 2\}, 2C_2/3 \sigma_p^4 d n_0^{-1} \leq \tau_l \|\boldsymbol{\mu}_l\|_2^{4} \leq  2C_1 \sigma_p^4 d n_0^{-1}$).
\end{enumerate}
In real-world scenarios, the pool-based setting often resembles a wide range of flexible cases. From the perspective of feature learning, our theoretical observations suggest that the occurrence probability and strength of different task-specific features can profoundly impact the efficiency of NAL algorithms.

\subsection{Cases of Criteria Preference}\label{Appendix: Criteria Preference}
Our work has uncovered a non-trivial connection between the two query criteria-based NAL methods. Specifically, they share a sufficient condition - which we also called it as the shared principle - that is vital to the success of NAL methods, which holds when the learning progress of the well-learned features greatly surpasses the learning of the yet-to-be-learned features to a certain degree

$${\underbrace{\Theta(\underset{j,r}{\mathbb{E}}(\gamma_{j,r,1}))-\Theta(\underset{j,r}{\mathbb{E}}(\gamma_{j,r,2}))}_{\text{Learning Progress Disparity: }\text{well-learned Feature } vs. \text{yet-to-be-learned Feature}}}>\max_{j,r,l}|<\mathbf{w}_{j,r}^{(t)},\mathbf{z}_l>|.$$

However, as discussed in Appendix \ref{app: Case 2: flexible cases} above, when this shared sufficient condition (or principle) does not hold, the behaviors of the two heuristic criteria-based sampling methods may differ.\par

\textbf{Cases favoring uncertainty-based Sampling.} Specifically, when the label budget is not highly limited and there is sufficient opportunity to capture all feature types, uncertainty-based sampling may be preferred. Our analysis shows that compared to uncertainty sampling, diversity sampling has a stricter requirement, with a less than 1 scalar $(\tau_1 - \tau_2)$ in the left side of inequalities (\ref{Eq:OmegaD}) and (\ref{EqXOR:OmegaD}), versus (\ref{Eq:OmegaC}) and (\ref{EqXOR:OmegaC}) for uncertainty sampling. This allows uncertainty sampling to more precisely prioritize samples with yet-to-be-learned features, more easily ensuring adequate learning across all feature types.\par

\textbf{Cases favoring diversity-based Sampling.} However, when label complexity is quite limited (as per Appendix \ref{app: Case 2: flexible cases}) where all task-oriented features require further labelling budget, we may favor diversity-based sampling. Despite all sampling algorithms increasing test accuracy by addressing insufficient learning of certain features, diversity sampling's efficiency in obtaining diverse features could enhance the model's ability to grasp diverse low-dimensional patterns. This in turn could enrich generalization, even when the test distribution differs from training.\par

Our statements here align with discussions in the recent survey \cite{zhan2021comparative}. We believe this nuanced perspective deserves further exploration. \par

\textbf{Cases favoring Strategy-free Random Sampling.} As discussed in Appendix \ref{app: Case 2: flexible cases}, our theory suggests that when $\tau_{1}\|\boldsymbol{\mu}_{1}\|^2\approx\tau_{2}\|\boldsymbol{\mu}_{2}\|^2$ where $\tau_{l}$ denotes the proportion of $\boldsymbol{\mu}_{l}$ in training set, it indicates a balanced ``easiness'' to learn multiple task-oriented features. In such cases, the learning progress of these features tends to be similar, and the prioritization by NAL methods may not be clearly evident. In other words, if there is no distinct gap between well-learned and yet-to-be-learned features, uniform sampling might be sufficient, and the advantage of NAL methods only emerges when there is a clear distinction of ``learning easiness'' among various task-oriented feature categories.\par

Additionally, when it comes to the scenarios of active fine-tuning, where the task objective is heavily or slightly changing. In such situations, the task-oriented low-dimensional patterns may shift, and the model's optimal representation could differ from before. As a result, NAL methods that leverage prior neural representations for sampling may not be as effective, and uniform sampling could be a satisfactory choice.

\subsection{Discussions of Multi-round NALs}\label{Appendix: Multi-round}
Our theory suggests that the core principle underlying both NAL methods is their tendency to prioritize the selection of samples containing yet-to-be-learned features. This fundamental characteristic is not inherently tied to the single-round setting, but rather reflects an intrinsic property of the two primary criteria-based NAL family.

In the multi-round iterative process, the learning progress of different features may diverge across rounds and potentially align with the various cases discussed in Appendix \ref{app: Case 2: flexible cases}. However, we expect the NAL methods to continue performing well due to their innate focus on prioritizing the selection of samples containing yet-to-be-learned features.

\subsection{Discussions of Practical Lessons of our Results}\label{Appendix: Practical Lessons}

Here are some key takeaways of our theory:
\begin{itemize}
    \item \textbf{Potential of NAL to surpass fully-trained NN.} As discussed in Appendix \ref{Appendix: Discussion of Oscillation}, and corroborated by the results in \citet{lu2023benign}, fully-trained neural networks tend to learn hard-to-learn features in an inefficient manner, as they place disproportionate emphasis on the easy-to-learn ones. In contrast, our analysis suggests that the NAL approach prioritizes samples with low $\gamma_{j,r,l}$, making it more likely to achieve a balanced rise in $\gamma_{j, r, 1}$ and $\gamma_{j,r,2}$ during the new round of training. This implies that NAL has a better chance of ensuring sufficient learning of all features within a certain number of iterations, compared to fully-trained neural networks. This conclusion is partially validated by the empirical results presented in our Figures \ref{fig:baseline_lineardata}, \ref{fig:baseline_lineardata_other data}, and \ref{fig:baseline_XOR}, where the NALs outperform the neural networks. In real-world settings, we conjecture that NAL might have this potential when the neural network is sufficiently overparameterized and has the capacity to capture all relevant patterns of the problem instances within limited iterations.
    \item \textbf{Care orthogonal components of features or gradients.} Our theory suggests that if techniques can be adopted to capture the meaningful orthogonal components of a neural network's features or gradients (e.g., using ICA \cite{yamagiwa2023ICA}), then the samples with low-magnitude latent feature components or high-magnitude gradient components might align with the perplexing samples in our work. This is because our theory indicates that yet-to-be-learned features are often underrepresented in the neural network's latent space, and if the loss is non-increasing, the length in the latent space might be inversely proportional to the length in the corresponding gradient space. Notably, existing state-of-the-art methods such as BADGE \cite{ash2019deep} also leverage a similar idea with respect to the gradient component of the last layer.
    \item \textbf{Incorporate Signal-to-Noise Ratio (SNR) Measurement.} Our discussions in Appendix \ref{app:Discussion of data distribution under other conditions} denote that the perplexing samples are often characterized by their rarity and low SNR (the scale ratio between feature and noise). Techniques, whether learnable or unlearnable, that can accurately or approximately measure the SNR of multiple task-oriented features in a NN's latent space may help develop a principled NAL approach, and for specific tasks and datasets, it may be feasible to develop such task-oriented SNR measurement methods.
\end{itemize}

\section{Additional Experiments}\label{Expe:XORdata}

\subsection{Sampling Information of Main Results}\label{app:Sampling Information}

\begin{figure}[H]
  \centering
\includegraphics[width=0.73\linewidth]{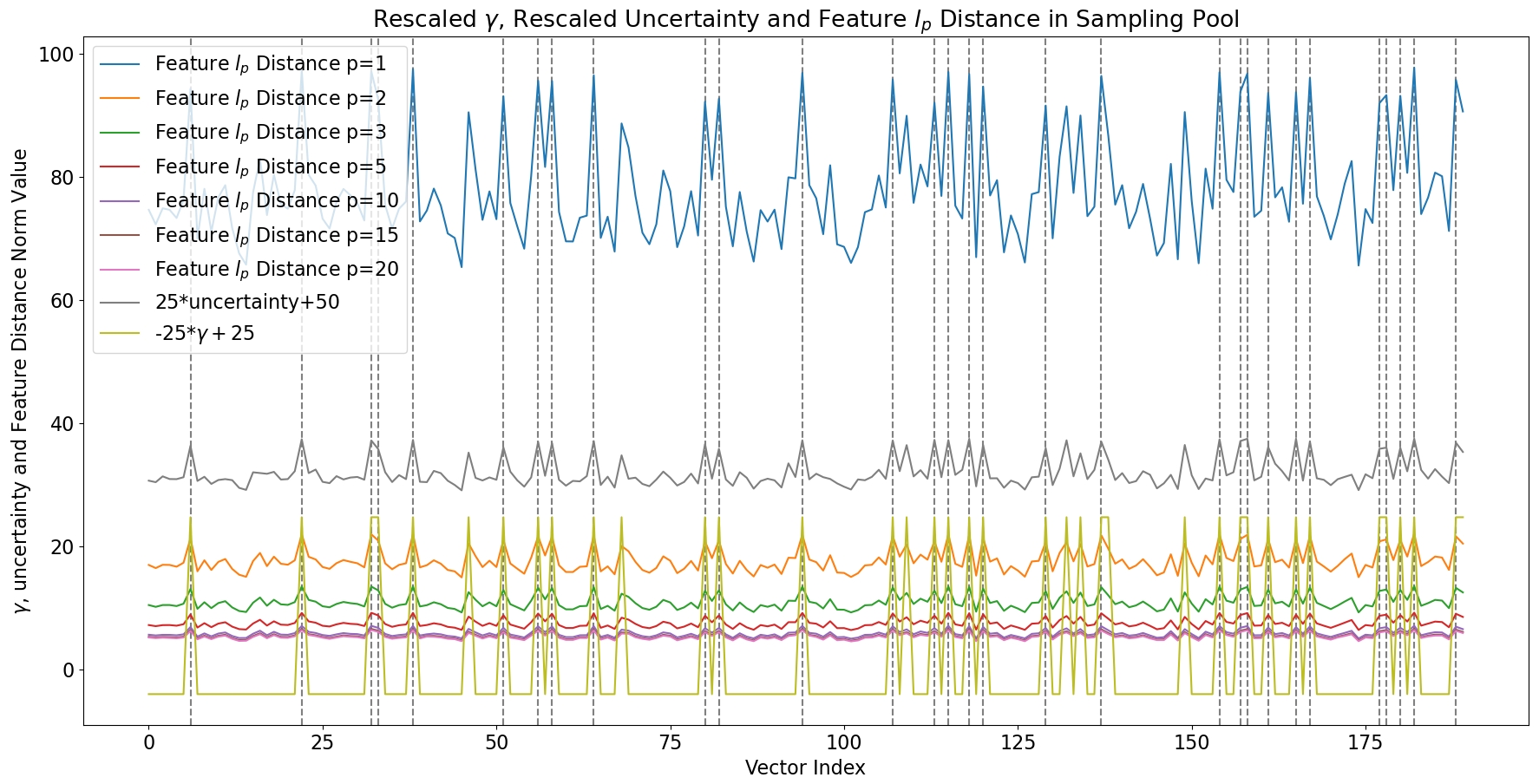}
     \caption{Rescaled $\gamma$ ($\gamma={\mathbb{E}} \gamma_{j,k,l}^{(t)}$), Uncertainty (i.e., $-$Confidence Score) and Feature Distance (with various $p$ of $l_p$ norm) of the samples in sampling pool $\mathcal{P}$, where $\gamma$ represents the learning progress of feature in particular sample. The dashed line in the graph represents the top 30 samples with the highest Feature Distance.}
    \label{fig:Lp_norms}
\end{figure}
\begin{figure}[H]
\centering
\subfigure[Random Sampling]{\includegraphics[width=0.49\textwidth]{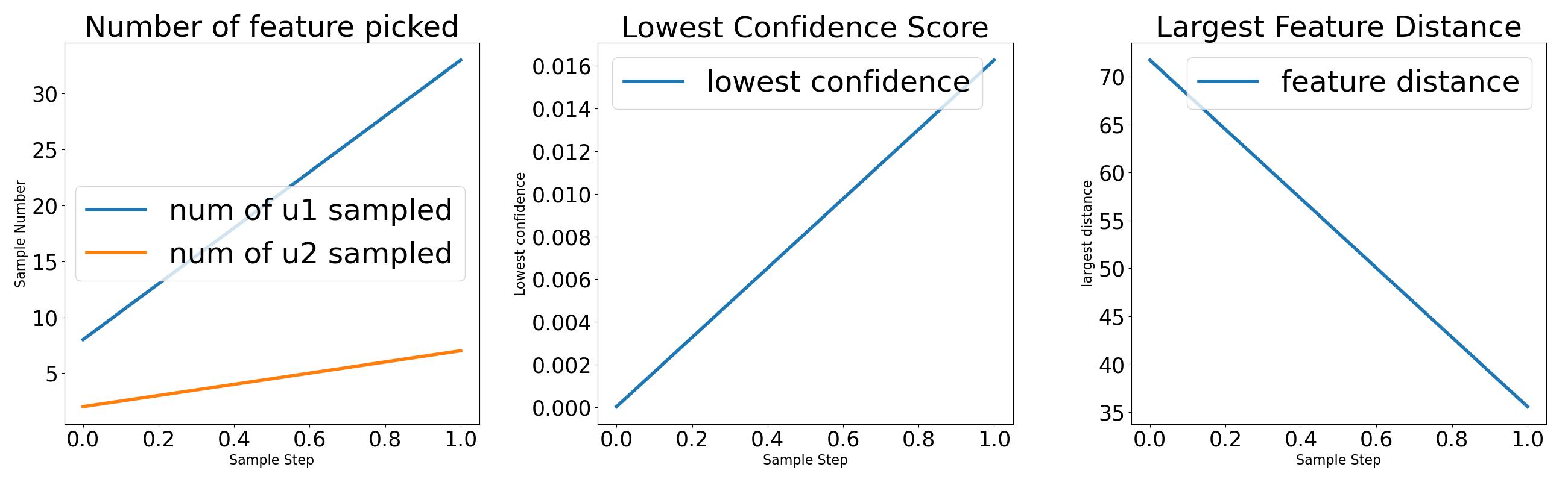}}
\subfigure[Uncertainty Sampling]{\includegraphics[width=0.49\textwidth]{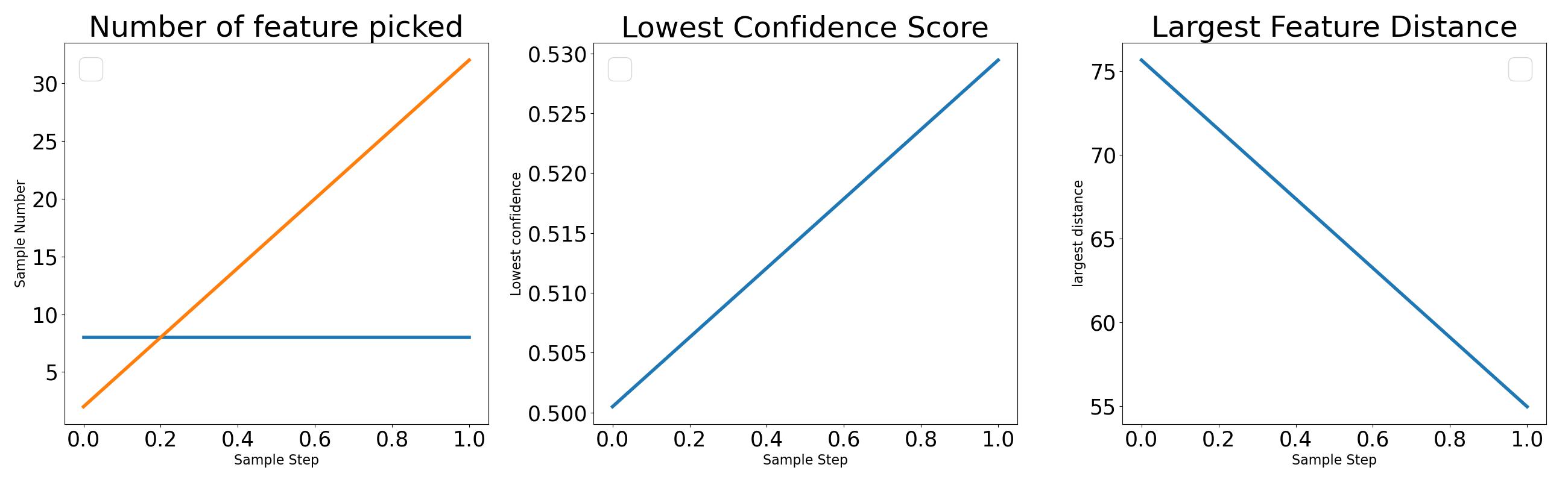}}
\subfigure[Diversity Sampling]{\includegraphics[width=0.5\textwidth]{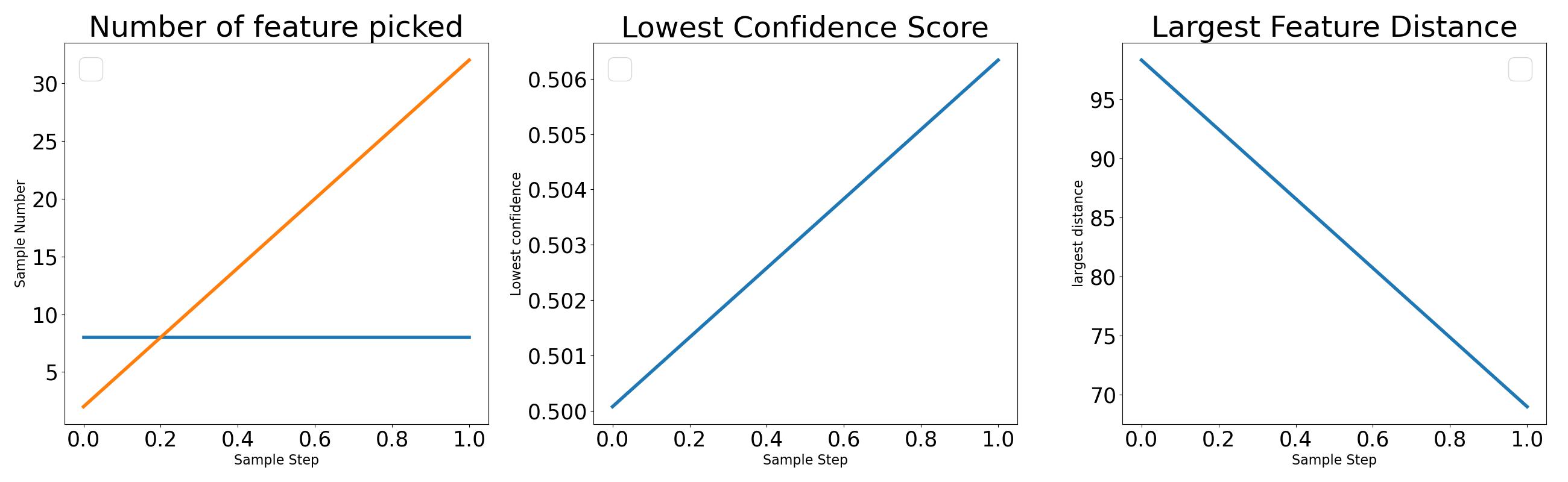}}
\caption{Comparison of querying information between two NAL algorithms, illustrating training size changes in labeled data sets, Confidence Score, and Feature Distance before and after querying.}
\label{fig:sample infor}\end{figure}
Here we give more visualized details of the querying stage. The parameter settings are the same in Section \ref{section:EXP}. Figure \ref{fig:Lp_norms} visualized the rescaled $\underset{j,k,l}{\mathbb{E}} \gamma_{j,k,l}$, uncertainty(-Confidence Score) and Feature Distance of each samples in the unlabeled sampling pool $\mathcal{P}$, where the dash line corresponds to the top $n^*$ samples based on Diversity Order. It's obvious that regardless of the value $p$, the Uncertainty Order and Diversity Order of samples remain the same, and corresponds to the order of $\underset{j,k,l}{\mathbb{E}} \gamma_{j,k,l}$. This validates our unification claims in Proposition \ref{prop:sample stage: learning on sample}, and Lemma \ref{Lem:order_pool}. Figure \ref{fig:sample infor} makes it clear that the two NAL algorithms successfully obtain those hard-to-learn samples, while Random Sampling hardly obtain hard-to-learn samples as it selects samples in a random manner.

\subsection{Experiments: Data Model under Other Conditions}\label{Experiments: data model under other conditions}

\begin{figure}[H]
\centering
    \subfigure[Full trained model]{\includegraphics[width=0.4975\textwidth]{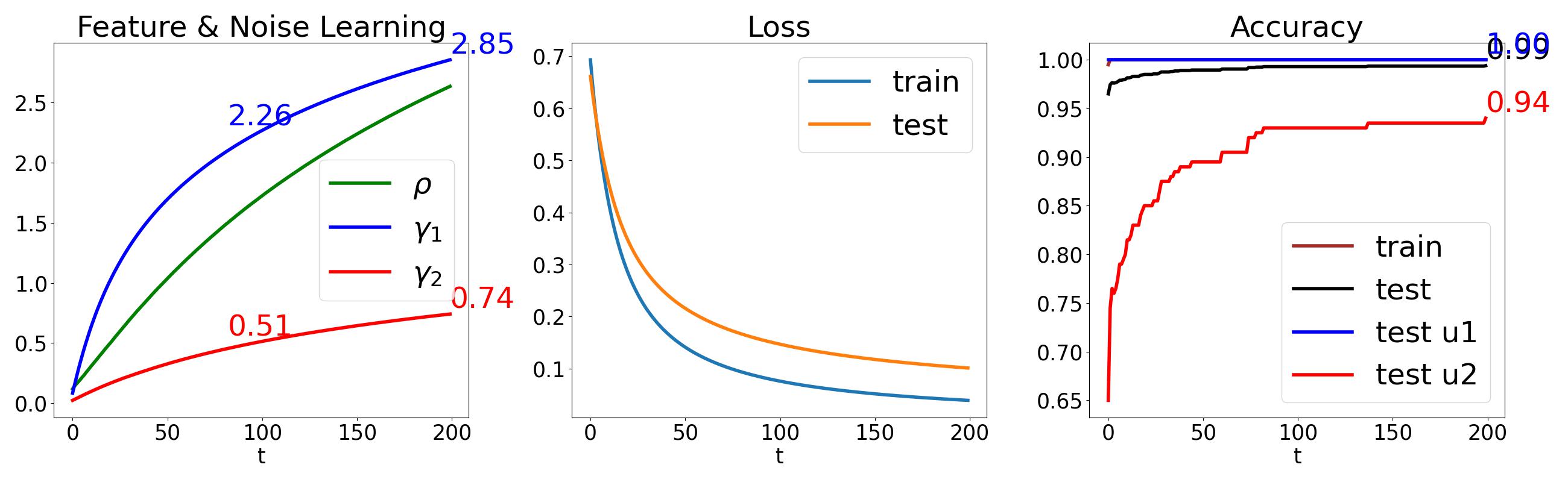}}
\subfigure[Random Sampling]{\includegraphics[width=0.4975\textwidth]{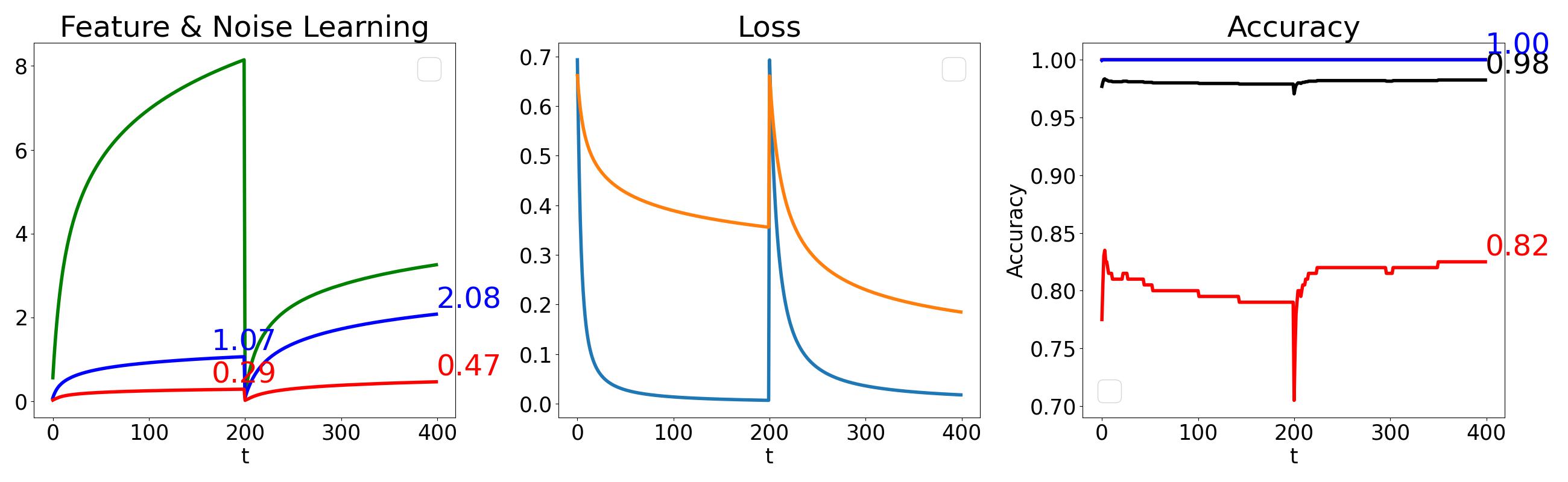}}
\subfigure[Uncertainty Sampling]{\includegraphics[width=0.4975\textwidth]{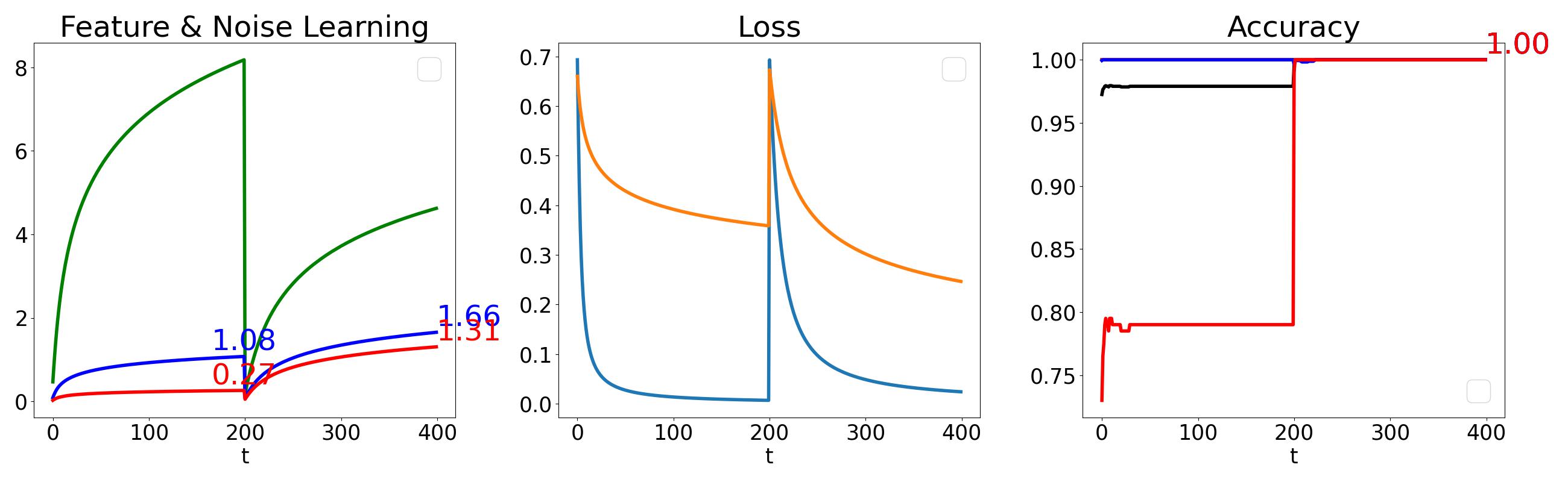}}
\subfigure[Diversity Sampling]{\includegraphics[width=0.4975\textwidth]{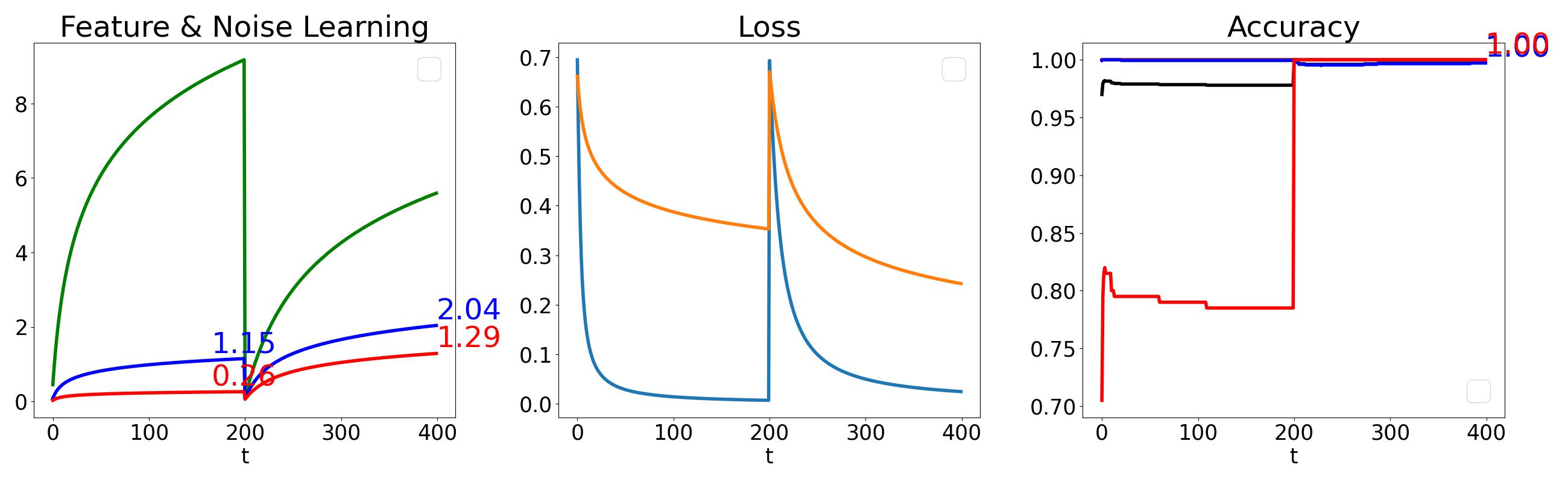}}
\caption{Learning/memorization progress of features and noise ($\gamma_l$ represents $\max_{j,k} \gamma_{j,k,l}^{(t)}$, and $\rho$ represents $\max_{j,k,i} {j,k,i}^{(t)}$), train/test losses, and test accuracy of the full-trained model and the three querying algorithms, with $T^*=200$,  $d=2000$, $\|\boldsymbol{\mu}_1\|=8$, $p=p^*=0.1$, $\|\boldsymbol{\mu}_2\|=8$, $n_{CNN}=200$, $n_0=10$, $n^{*}=30$ and $\lvert \mathcal{P} \rvert = 190$.}
\label{fig:baseline_lineardata_other data}
\end{figure}

\begin{figure}[H]
\centering
\subfigure[Random Sampling]{\includegraphics[width=0.49\textwidth]{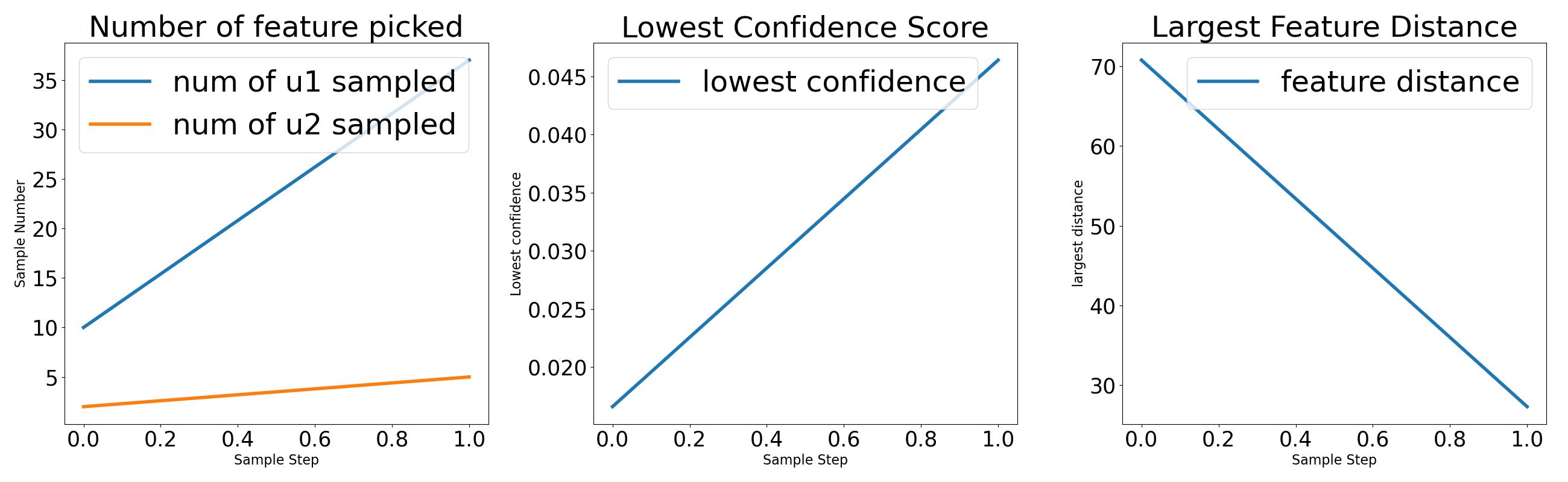}}
\subfigure[Uncertainty Sampling]{\includegraphics[width=0.49\textwidth]{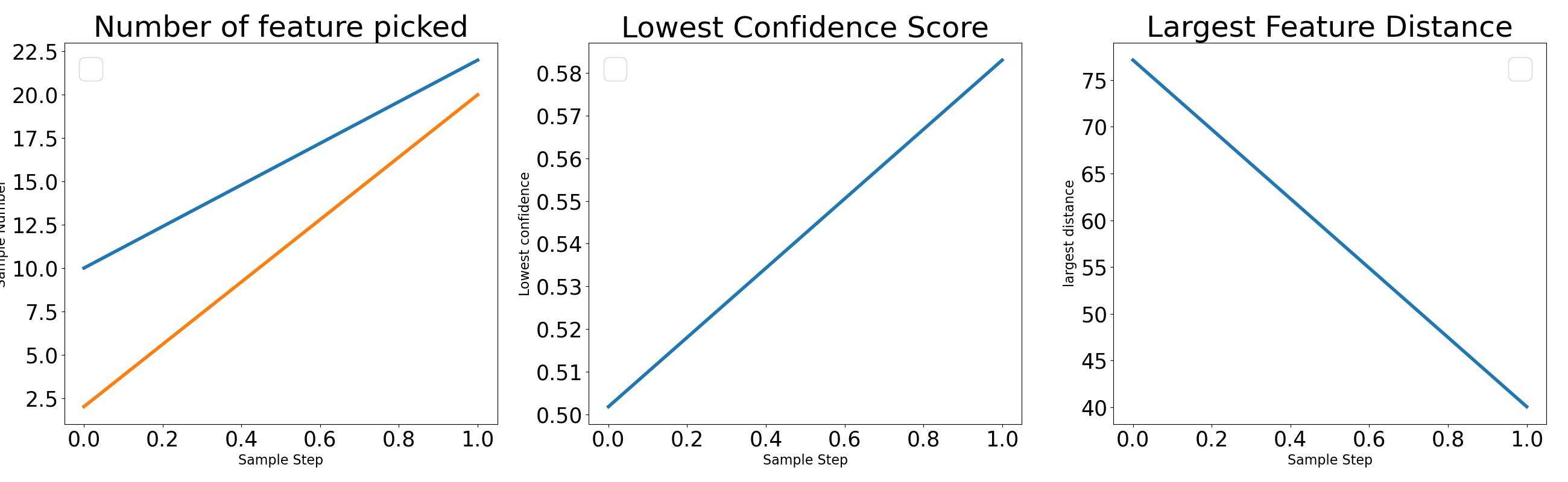}}
\subfigure[Diversity Sampling]{\includegraphics[width=0.50\textwidth]{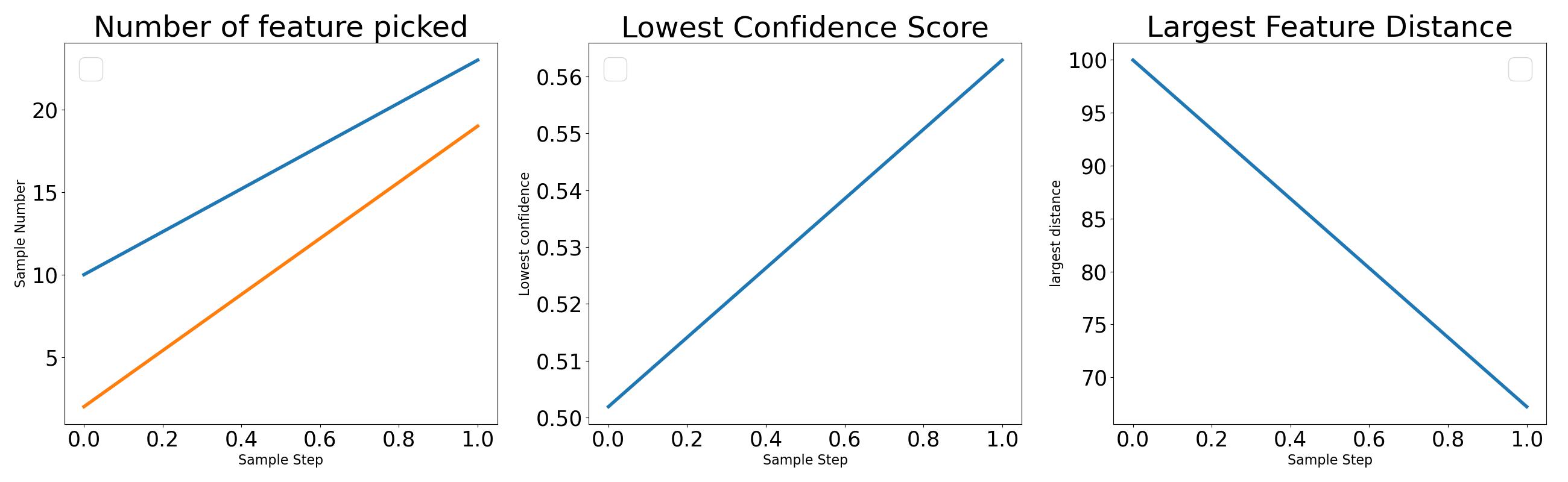}}
\caption{Comparison of querying information between two NAL algorithms, illustrating training size changes in labeled data sets, Confidence Score, and Feature Distance before and after querying. ($T^*=200$,  $d=2000$, $\|\boldsymbol{\mu}_1\|=9$, $p=p^*=0.2$, $\|\boldsymbol{\mu}_2\|=3$, $n_{CNN}=200$, $n_0=10$, $n^{*}=30$ and $\lvert \mathcal{P} \rvert = 190$)}
\label{fig:sample infor_other data}\end{figure}
We investigate the scenario where the strengths (i.e., feature norms) of different features do not vary significantly, as discussed in the main body of our work. Specifically, we set them as the same: $\|\boldsymbol{\mu}_1\|_2=\|\boldsymbol{\mu}_2\|_2=8$. Other parameters are listed as the following: $T^*=200$,  $p=p^*=0.1$, $d=2000$, $n_{CNN}=200$, $n_0=10$, $n^{*}=30$, $\lvert \mathcal{P} \rvert = 190$, $\sigma_p = 1$ and $\sigma_0 = 0.01$. In this case, where $\tau_1\|\boldsymbol{\mu}_1\| < \tau_2\|\boldsymbol{\mu}_2\|$, the \hyperlink{msg:perplexing}{\textbf{perplexing samples}} are those samples equipped with $\boldsymbol{\mu}_2$. It is worth noting that our chosen value of $p=0.1$ is not small enough to satisfy the condition in Definition \ref{app:relaxfeaturenorm_Def3.1}. Instead, our parameter setting falls under the second bullet point of the third case discussed in Appendix \ref{app: Case 2: flexible cases}. Figure \ref{fig:baseline_lineardata_other data} demonstrates the success of both NAL algorithms, while Figure \ref{fig:sample infor_other data} illustrates the sample information. It is clear that both NAL algorithms prioritize the \hyperlink{msg:perplexing}{\textbf{perplexing samples}} more effectively than Random Sampling, resulting in a lower test error rate.

\subsection{Experiments: XOR Data Versions}\label{app:expXOR}

\begin{figure}[H]
\centering
\subfigure[Full-trained 2 layer CNN]{\includegraphics[width=0.49\textwidth]{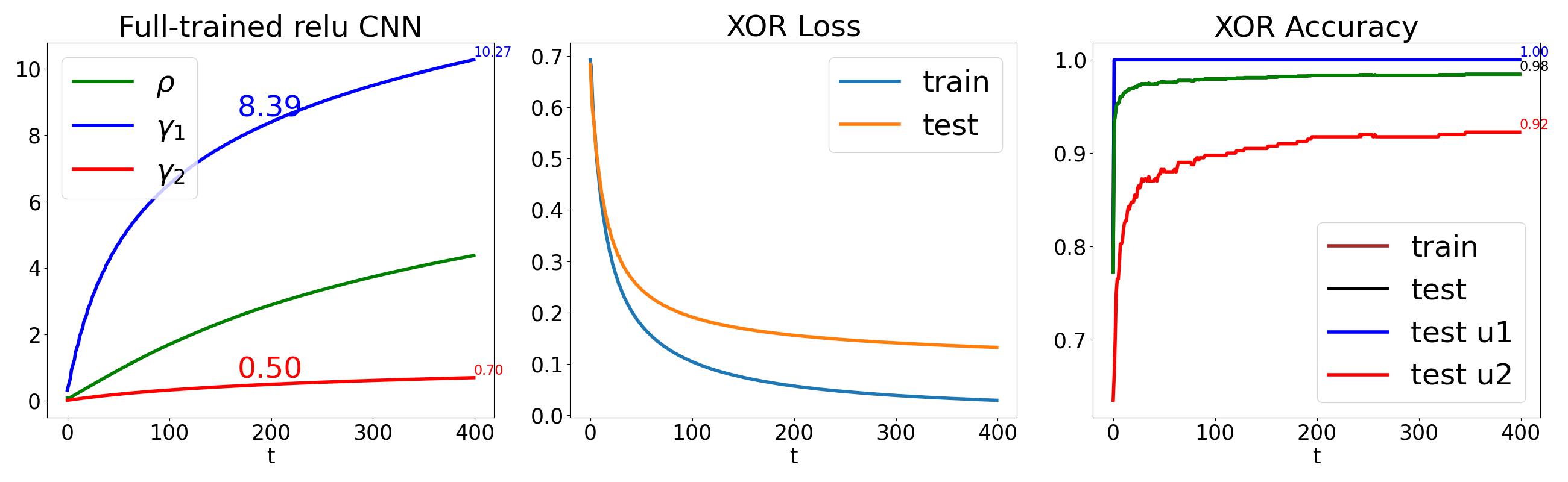}}
\subfigure[Random Sampling]{\includegraphics[width=0.49\textwidth]{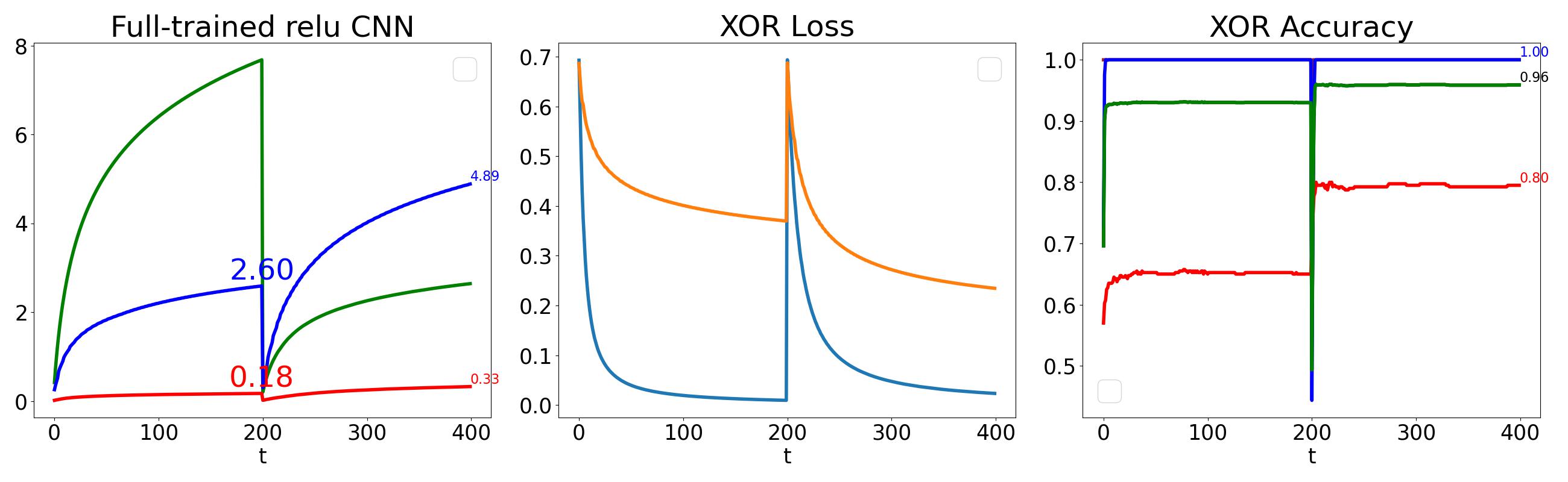}}
\subfigure[Uncertainty Sampling]{\includegraphics[width=0.49\textwidth]{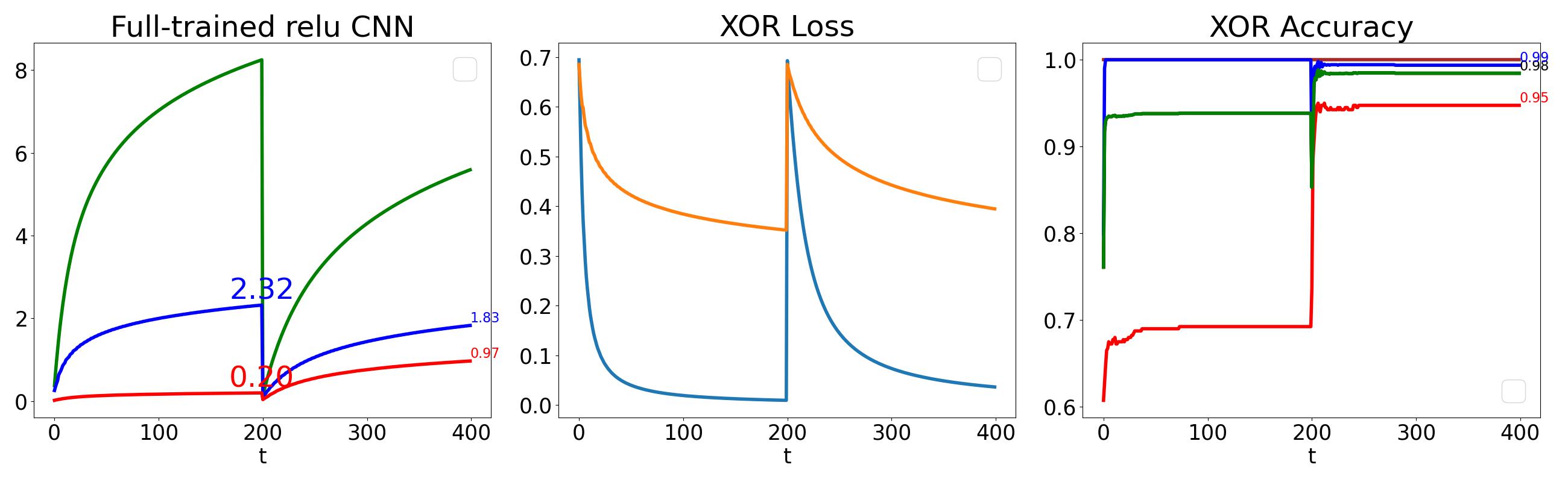}}
\subfigure[Diversity Sampling]{\includegraphics[width=0.49\textwidth]{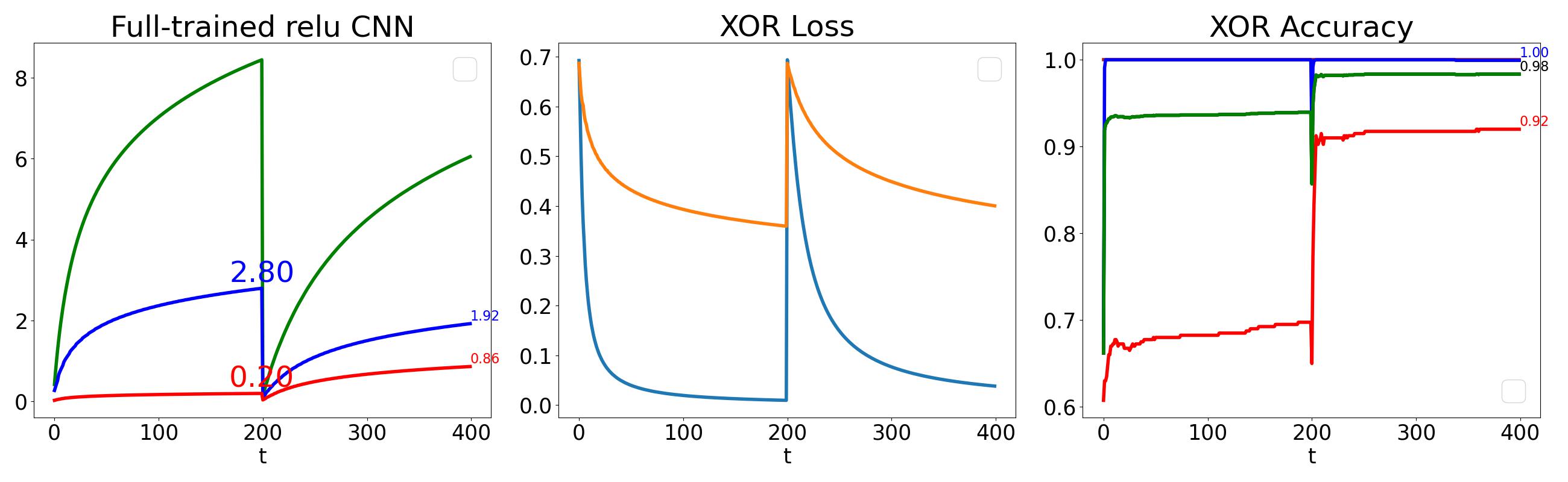}}
\caption{Learning/memorization progress of features and noise ($\gamma_l$ represents $\max_{j,k} \{\gamma_{j,k,\mathbf{u}_l}^{(t)}, \gamma_{j,k,\mathbf{v}_l}^{(t)}\}$, and $\rho$ represents $\max_{j,k,i} \rho_{j,k,i}^{(t)}$), train/test losses, and test accuracy of the full-trained model and the three querying algorithms, with $ \cos{\theta}=0.4, T^*=200$, $d=2000$, $\|\boldsymbol{\mu}_1\|=20$, $p=p^*=0.2$, $\|\boldsymbol{\mu}_2\|=6$, $n_{CNN}=200$, $n_0=10$, $n^{*}=30$ and $\lvert \mathcal{P} \rvert = 190$.}
\label{fig:baseline_XOR}
\end{figure}

\begin{figure}[H]
\centering
\subfigure[Random Sampling]{\includegraphics[width=0.49\textwidth]{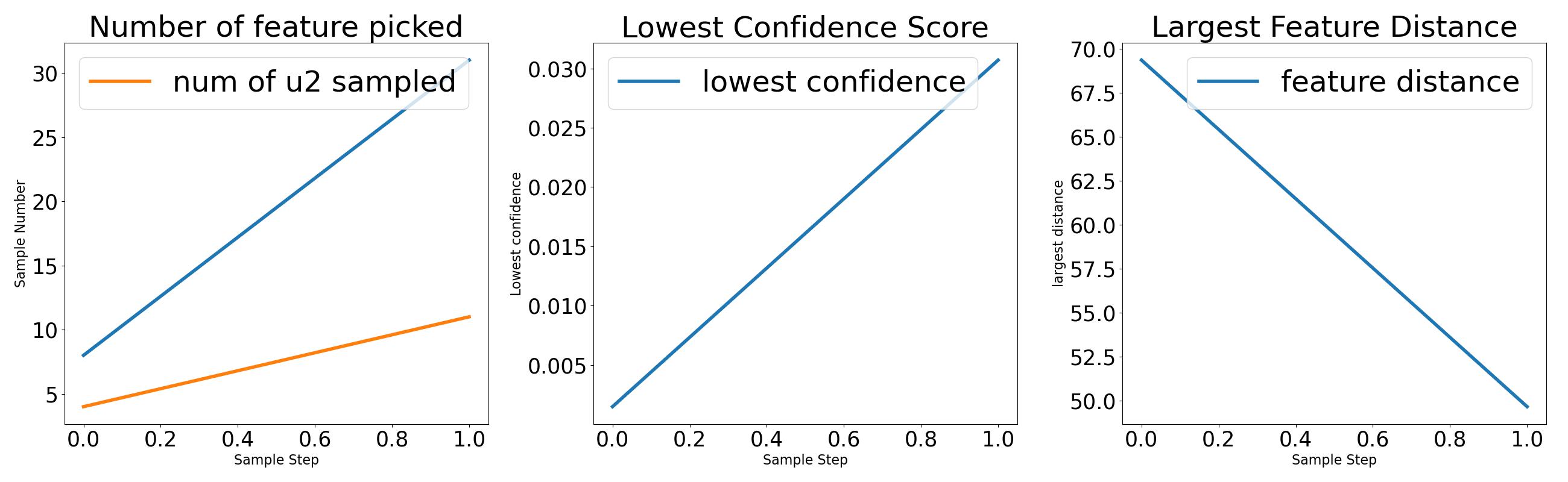}}
\subfigure[Uncertainty Sampling]{\includegraphics[width=0.49\textwidth]{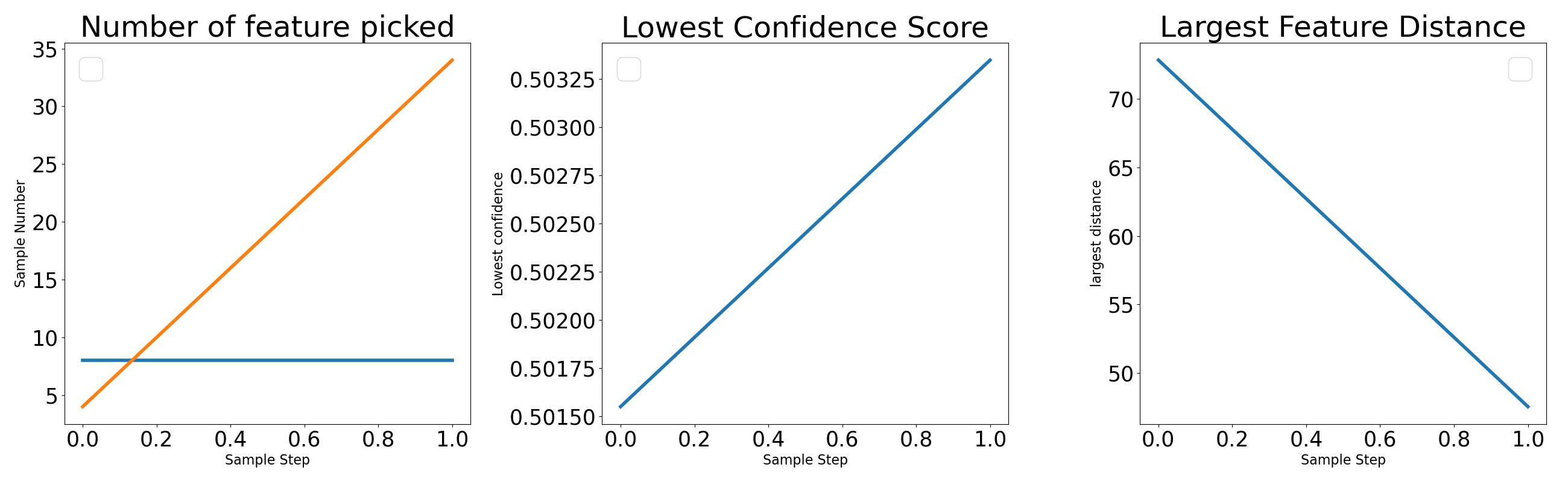}}
\subfigure[Diversity Sampling]{\includegraphics[width=0.50\textwidth]{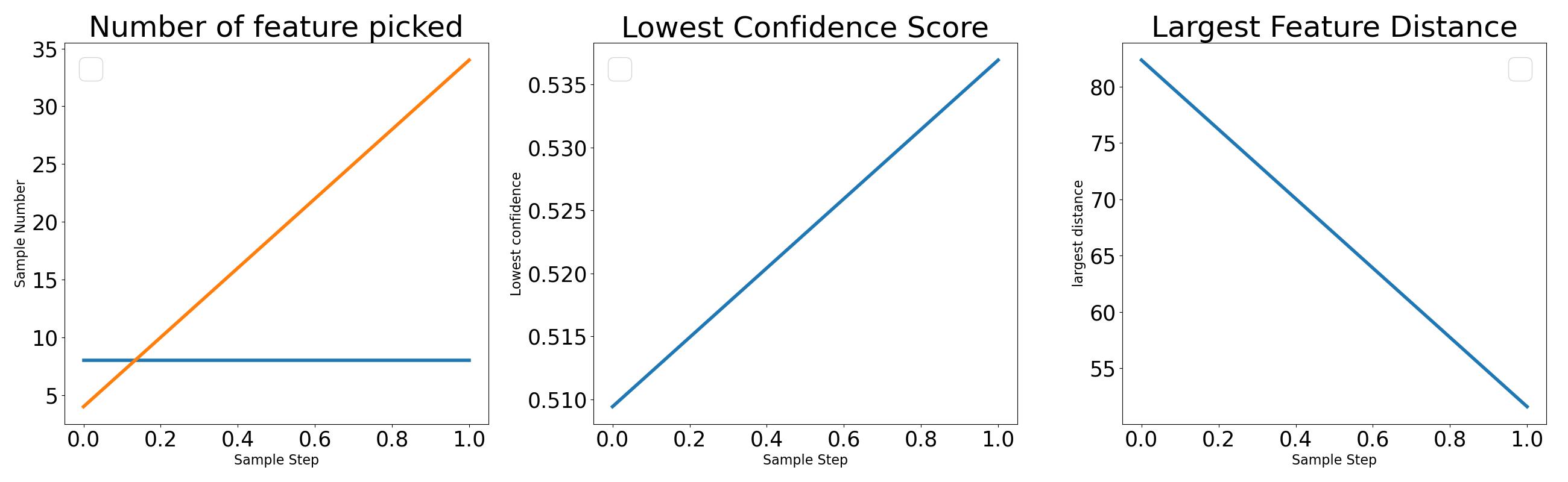}}
\caption{Comparison of querying information between two NAL algorithms over XOR data, illustrating training size changes in labeled data sets, Confidence Score, and Feature Distance before and after querying. ($ \cos{\theta}=0.4, T^*=200$, $d=2000$, $\|\boldsymbol{\mu}_1\|=20$, $p=p^*=0.2$, $\|\boldsymbol{\mu}_2\|=6$, $n_{CNN}=200$, $n_0=10$, $n^{*}=30$ and $\lvert \mathcal{P} \rvert = 190$)}
\label{fig:baseline confidence_XOR}
\end{figure}

We also conduct experiments on XOR data. We set the parameters as: $ \cos{\theta}=0.4, T^*=200$, $d=2000$, $\|\boldsymbol{\mu}_1\|=20$, $p=p^*=0.2$, $\|\boldsymbol{\mu}_2\|=6$, $n_{CNN}=200$, $n_0=10$, $n^{*}=30$ and $\lvert \mathcal{P} \rvert = 190$. Figure \ref{fig:baseline_XOR} and Figure \ref{fig:baseline confidence_XOR} clearly demonstrate that the two NAL algorithms succeed via prioritizing \hyperlink{msg:perplexing}{\textbf{perplexing samples}}-samples with $\boldsymbol{\mu}_2$ features.

\section{Details of Querying Algorithms}\label{app:details of querying algorithms}
\subsection{2-layer ReLU CNN}\label{App:RELUCNN}
We adopted the 2-layer ReLU CNN, which is representative for non-linear neural models. Also, this neural setting makes both the model's uncertainty towards samples and the latent feature representation available, paving the way to design NAL algorithms based on this neural settings. The first layer of the model is composed of $2m$ neurons/filters, with $m$ positive and $m$ negative, each of which is applied separately to the two patches $\mathbf{x}_{1}$ and $\mathbf{x}_{2}$, with a ReLU function $\sigma(z)=\max \{0, z\}$. Specifically, the parameters of the second pooling layer are set to $+\frac{1}{m}$ and $-\frac{1}{m}$ respectively. The network can thus be expressed as $f(\mathbf{W}, \mathbf{x})=F_{+1}\left(\mathbf{W}_{+1}, \mathbf{x}\right)-F_{-1}\left(\mathbf{W}_{-1}, \mathbf{x}\right)$, where the partial network functions for positive and negative neurons/filters. For $j \in\{ +1, -1\}$, $F_{j}\left(\mathbf{W}_{j}, \mathbf{x}\right)$ is defined as follows:
\begin{equation}
\begin{aligned}
F_j\left(\mathbf{W}_j, \mathbf{x}\right)&=\frac{1}{m} \sum_{r=1}^m[\sigma\left(\left\langle\mathbf{w}_{j, r}, \mathbf{x}_1\right\rangle\right) 
+\sigma\left(\left\langle\mathbf{w}_{j, r}, \mathbf{x}_2\right\rangle\right)]\\
&=\frac{1}{m} \sum_{r=1}^m\left[\sigma\left(\left\langle\mathbf{w}_{j, r}, y \cdot \boldsymbol{\mu}\right\rangle\right) 
+\sigma\left(\left\langle\mathbf{w}_{j, r}, \boldsymbol{\xi}\right\rangle\right)\right].
\end{aligned}
\label{equ:cnndefinition}
\end{equation}
We denotes $\mathbf{w}_{j, r} \in \mathbb{R}^d$ as the weight vector for the $r$-th neuron/filter in $\mathbf{W}_j$, where $\mathbf{W}_j$ is the aggregate of model weights associated with $F_j$ filters. We use $\mathbf{W}$ to denote the aggregate of all model weights. Without loss of generality, we let the derivative of the ReLU function at 0 is equal to 1, denoted as $\sigma^\prime(0)=1$.
\subsection{Score and Order of Samples}\label{App:Order def}
We claim that the following definitions and lemmas hold for both linearly s
\begin{definition}\textbf{(Confidence Score)}
    The Confidence Score $ C \left(\mathbf{W}^{(t)}, \mathbf{x}\right)$ is defined as below:
\begin{equation}
\begin{split}
    C\left(\mathbf{W}^{(t)}, \mathbf{x}\right)
    &=\max \Big\{\frac{1}{1+\exp \big\{-y \cdot f\left(\mathbf{W}^{(t)}, \mathbf{x}\right)\big\}},\\ 
    &\phantom{=\max\Big\{}1-\frac{1}{1+\exp \big\{-y \cdot f\left(\mathbf{W}^{(t)}, \mathbf{x}\right)\big\}}\Big\}
\end{split}
\label{Eq:confidence_score}\end{equation}
The Confidence Score $ C \left(\mathbf{W}^{(t)}, \mathbf{x}\right)$ represents the probability of the predicted label $y$ of logistic loss. 
\label{Def:confidence_score}\end{definition}
\begin{definition}\textbf{(Uncertainty Order)}
    We denote the sampling pool as $\mathcal{P}$ that $\mathcal{P} \subsetneq \mathcal{D}$. For $t>0$, $\forall \mathbf{x}$ and $\mathbf{x}^{\prime} \in \mathcal{P}$, we define the Uncertainty Order $\prec_C^{(t)}$ and $\preceq_C^{(t)}$, which denote the order of the model's uncertainty upon its prediction upon $ \mathbf{x}$ and $ \mathbf{x}^{\prime}$ at the time step $t$:
    \begin{equation}
    \begin{aligned}
    & \mathbf{x} \prec_C^{(t)} \mathbf{x}^{\prime} \text { if \  } C\left(\mathbf{W}^{(t)}, \mathbf{x}\right) > C\left(\mathbf{W}^{(t)}, \mathbf{x}^{\prime}\right), \\
    & \mathbf{x} \preceq_C^{(t)} \mathbf{x}^{\prime} \text { if \  } C\left(\mathbf{W}^{(t)}, \mathbf{x}\right) \geq C\left(\mathbf{W}^{(t)}, \mathbf{x}^{\prime}\right). \\
    &
    \end{aligned}
    \end{equation}
    We say the model uncertainty at time step $t$ upon $\mathbf{x}$ is less than $\mathbf{x}^{\prime}$ if $\mathbf{x} \prec_C^{(t)} \mathbf{x}^{\prime}$. Specifically, if the model's uncertainty towards its predictions upon all elements in a set $\mathbf{X}$ at time step $t$ are all less than those in the set $\mathbf{X}^\prime$, we utilize the same notation to describe the Uncertainty Order at time step $t$ between sets: $ \mathbf{X} \prec_C^{(t)} \mathbf{X}^{\prime}$.
\label{Def:confidence_order}\end{definition}

\begin{lemma}
The Uncertainty Order is a full order. In addition, for $\forall \mathbf{x}$ and $\mathbf{x}^{\prime} \in \mathcal{P}$, at $t>0$ we have:
\begin{equation}
\mathbf{x} \preceq_C^{(t)} \mathbf{x}^{\prime} \Leftrightarrow\left|f\left(\mathbf{W}^{(t)}, \mathbf{x}\right)\right| \geq \left|f\left(\mathbf{W}^{(t)}, \mathbf{x}^{\prime}\right)\right|
\label{Eq:order_absolute}\end{equation}
\label{Lemma:order_absolute}\end{lemma}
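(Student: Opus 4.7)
The plan is to reduce the Confidence Score to a strictly monotone function of $|f(\mathbf{W}^{(t)}, \mathbf{x})|$, after which both assertions of the lemma follow from standard properties of $\geq$ on $\mathbb{R}$.

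First, I would record the elementary identity $1 - \sigma(z) = \sigma(-z)$ for the sigmoid $\sigma(z) = 1/(1+e^{-z})$. Combined with the fact that $\sigma$ is strictly increasing, this yields
\[
\max\{\sigma(z),\; 1-\sigma(z)\} \;=\; \max\{\sigma(z),\; \sigma(-z)\} \;=\; \sigma(|z|).
\]
Applying this with $z = y \cdot f(\mathbf{W}^{(t)}, \mathbf{x})$ and noting that $|y| = 1$ (so $|y \cdot f(\mathbf{W}^{(t)}, \mathbf{x})| = |f(\mathbf{W}^{(t)}, \mathbf{x})|$), Definition~\ref{Def:confidence_score} rewrites as
\[
C(\mathbf{W}^{(t)}, \mathbf{x}) \;=\; \sigma\bigl(|f(\mathbf{W}^{(t)}, \mathbf{x})|\bigr).
\]

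From here, the equivalence in \eqref{Eq:order_absolute} is immediate: by strict monotonicity of $\sigma$, the inequality $C(\mathbf{W}^{(t)}, \mathbf{x}) \geq C(\mathbf{W}^{(t)}, \mathbf{x}')$ (which defines $\mathbf{x} \preceq_C^{(t)} \mathbf{x}'$) holds if and only if $|f(\mathbf{W}^{(t)}, \mathbf{x})| \geq |f(\mathbf{W}^{(t)}, \mathbf{x}')|$. For the ``full order'' claim, observe that $\preceq_C^{(t)}$ is induced by pulling back the total order $\geq$ on $\mathbb{R}$ through the scalar map $\mathbf{x} \mapsto C(\mathbf{W}^{(t)}, \mathbf{x})$ (equivalently, through $\mathbf{x} \mapsto |f(\mathbf{W}^{(t)}, \mathbf{x})|$); reflexivity, transitivity, and totality (any two samples are comparable) therefore inherit directly from $\geq$ on $\mathbb{R}$, and antisymmetry holds modulo samples with identical confidence, which with Gaussian noise form a measure-zero set.

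I expect no substantive obstacles here: the only subtle step is the identity $\max\{\sigma(z), 1-\sigma(z)\} = \sigma(|z|)$, which is one line. The value of the lemma lies not in its difficulty but in the representation it provides, since the reformulation via $|f(\mathbf{W}^{(t)}, \mathbf{x})|$ is what downstream arguments (such as Lemma~\ref{lemma of comparison} and the analysis of the Uncertainty Sampling criterion in Appendix~\ref{app:Order-dependent Sampling(Querying Analysis)}) will directly exploit.
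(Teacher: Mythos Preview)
Your proposal is correct and takes essentially the same approach as the paper: both reduce the Confidence Score to a strictly increasing function of $|f(\mathbf{W}^{(t)},\mathbf{x})|$ and then invoke monotonicity. Your presentation is slightly cleaner in that you isolate the identity $\max\{\sigma(z),1-\sigma(z)\}=\sigma(|z|)$ explicitly, whereas the paper folds this into a chain of equivalences without naming it.
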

\begin{proof}
\begin{align*}
& \mathbf{x} \preceq_C^{(t)} \mathbf{x}^{\prime} \Leftrightarrow  C\left(\mathbf{W}^{(t)}, \mathbf{x}\right) \geq C\left(\mathbf{W}^{(t)}, \mathbf{x}^{\prime}\right)  \\
& \Leftrightarrow  \dfrac{1}{1+\exp \{|f(\mathbf{W}, \mathbf{x})|\}} \geq \dfrac{1}{1+\exp \left\{-\left|f\left(\mathbf{W}^{(t)}, \mathbf{x}^{\prime}\right)\right|\right\}} \\
& \Leftrightarrow  \left|f\left(\mathbf{W}^{(t)}, \mathbf{x}\right)\right| \geq \left|f\left(\mathbf{W}^{(t)}, \mathbf{x}^{\prime}\right)\right| \qedhere
\end{align*}
As one can always get $f\left(\mathbf{W}^{(t)}, \mathbf{x}\right) \in \mathbb{R}$ by a given $\mathbf{x}$ at time step $t$, the Uncertainty Order is a full order. 
\end{proof}
In Lemma \ref{lemma:Equivalance of score function}, we will show that sampling based on the Uncertainty Order is equivalent to various typical sampling methods based on the score functions defined in many typical Model Uncertainty-based Approaches, such as Least Confidence \cite{lewis1994heterogeneous}, Margin \citet{roth2006margin} and Entropy \cite{joshi2009multi} methods under our data model scenario, thus it's representative to the main idea of the approaches family while elegant.
\begin{definition}
    The following are the definitions of the score functions of LeastConf \cite{lewis1994heterogeneous}, Margin \citet{roth2006margin} and Entropy \cite{joshi2009multi}.

\begin{itemize}
    \item Least Confidence selects data points whose predicted label $y$ have the lowest posterior probability, so the score function of LeastConf is:
    \begin{equation}
        Score(\mathbf{W}^{(t)}, \mathbf{x}) = - P(y|\mathbf{x}, \mathbf{W}^{(t)}), 
    \label{Eq:leastConf}\end{equation}
    \item The score function of Margin is:
    \begin{equation}
        Score(\mathbf{W}^{(t)}, \mathbf{x}) = - [p(y|\mathbf{x}, \mathbf{W}^{(t)}) - P(-y|\mathbf{x}, \mathbf{W}^{(t)})],
    \label{Eq:Margin}\end{equation}
    \item The score function of Entropy is:
    \begin{equation}
        Score(\mathbf{W}^{(t)}, \mathbf{x}) = -  [P(y|\mathbf{x}, \mathbf{W}^{(t)}) \log P(y|\mathbf{x}, \mathbf{W}^{(t)}) + P(-y|\mathbf{x}, \mathbf{W}^{(t)}) \log P(-y|\mathbf{x}, \mathbf{W}^{(t)})],
    \label{Eq:Entropy}\end{equation}
\end{itemize}
\end{definition}
\begin{lemma}
    Sampling based on the score functions defined in  (\ref{Eq:leastConf}), (\ref{Eq:Margin}) and (\ref{Eq:Entropy}) are equivalent to sampling based on the Confidence Order in Definition \ref{Def:confidence_order}.
\label{lemma:Equivalance of score function}
\end{lemma}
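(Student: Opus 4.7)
The plan is to reduce each of the three score-function-based sampling rules to the Confidence Order $\preceq_C^{(t)}$ by showing that each Score is a strictly monotone transformation of the Confidence Score $C(\mathbf{W}^{(t)}, \mathbf{x})$. Since the querying algorithm picks the top-$n^*$ samples under a given score, two scores induce identical selected subsets whenever one is a strictly monotone function of the other on the relevant range, so the task reduces to three short monotonicity checks in the binary label setting. For brevity I would set $g(z) := 1/(1+\exp(-z))$ and $f := f(\mathbf{W}^{(t)}, \mathbf{x})$ throughout.

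First I would handle LeastConf. Interpreting $y$ in (\ref{Eq:leastConf}) as the predicted label, one has $P(y\mid \mathbf{x}, \mathbf{W}^{(t)}) = \max\{g(f), 1 - g(f)\}$, which by Definition \ref{Def:confidence_score} coincides with $C(\mathbf{W}^{(t)}, \mathbf{x})$. Thus the LeastConf score is exactly $-C(\mathbf{W}^{(t)}, \mathbf{x})$, so ranking by largest LeastConf score (lowest posterior of the predicted label) is identical to ranking by lowest Confidence Score, i.e. the Confidence Order.

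Next, for Margin, I would apply the binary identity $P(y\mid\mathbf{x}) + P(-y\mid\mathbf{x}) = 1$, yielding $P(y\mid\mathbf{x}) - P(-y\mid\mathbf{x}) = 2C(\mathbf{W}^{(t)},\mathbf{x}) - 1$, so the Margin score equals $1 - 2C(\mathbf{W}^{(t)},\mathbf{x})$, an affine strictly decreasing function of $C$. For Entropy, define $h(p) := -p\log p - (1-p)\log(1-p)$; since $P(y\mid\mathbf{x}) \ge 1/2 \ge P(-y\mid\mathbf{x})$ when $y$ denotes the predicted label, the Entropy score equals $-h(C(\mathbf{W}^{(t)},\mathbf{x}))$ with argument in $[1/2,1]$, and $h'(p) = \log((1-p)/p) < 0$ for $p > 1/2$ shows $h$ is strictly decreasing on $[1/2,1]$, making $-h$ strictly increasing in $C$. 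Hence both the Margin and Entropy rankings coincide with the Confidence Order.

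The main, rather mild, obstacle is fixing conventions: the symbol $y$ in the score definitions must be interpreted as the predicted label so that $P(y\mid\mathbf{x}) = \max\{g(f), 1 - g(f)\}$ rather than $g(y\cdot f)$ for the ground-truth $y$. Once this convention is pinned down and the binary identity $P(y\mid\mathbf{x}) + P(-y\mid\mathbf{x}) = 1$ is invoked, each of the three score rules collapses to a univariate strictly monotone function of $C(\mathbf{W}^{(t)},\mathbf{x})$, and combining with Lemma \ref{Lemma:order_absolute} gives the claimed equivalence of the induced sample selections.
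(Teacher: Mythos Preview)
Your approach matches the paper's: each of LeastConf, Margin, and Entropy is shown to be a strictly monotone function of the Confidence Score $C(\mathbf{W}^{(t)},\mathbf{x})$ (using $P(-y\mid\mathbf{x})=1-C$ in the binary setting), so all three induce the same top-$n^*$ selection as the Uncertainty Order via Lemma~\ref{Lemma:order_absolute}. One small slip: with your definition $h(p)=-p\log p-(1-p)\log(1-p)$, the Entropy score in (\ref{Eq:Entropy}) equals $h(C)$, not $-h(C)$; since $h$ is strictly decreasing on $[1/2,1]$, the Entropy score is \emph{decreasing} in $C$ (matching the direction of LeastConf and Margin), and your conclusion stands.
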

\begin{proof}
    By definitions, $C \left(\mathbf{W}^{(t)}, \mathbf{x}\right)= P(y|\mathbf{x}, \mathbf{W}^{(t)})=-Score(\mathbf{W}^{(t)}, \mathbf{x})$, showing the equivalence of LeastConf methods and ours. Then by Lemma \ref{Lemma:order_absolute} and the property: $P(-y|\mathbf{x}, \mathbf{W}^{(t)}) = 1 -C \left(\mathbf{W}^{(t)}, \mathbf{x}\right)$, it's easy to verify that $\left|f\left(\mathbf{W}^{(t)}, \mathbf{x}\right)\right| \propto C \left(\mathbf{W}^{(t)}, \mathbf{x}\right) \propto [C \left(\mathbf{W}^{(t)}, \mathbf{x}\right)-(1-C \left(\mathbf{W}^{(t)}, \mathbf{x}\right))]$, and $ \left|f\left(\mathbf{W}^{(t)}, \mathbf{x}\right)\right| \propto  C \left(\mathbf{W}^{(t)}, \mathbf{x}\right) \propto [C \left(\mathbf{W}^{(t)}, \mathbf{x}\right) \log C \left(\mathbf{W}^{(t)}, \mathbf{x}\right) + (1-C \left(\mathbf{W}^{(t)}, \mathbf{x}\right))\log(1-C \left(\mathbf{W}^{(t)}, \mathbf{x}\right))]$. Therefore, the priority order of the samples based on those score functions are the same as the Uncertainty Order, thus the proof is completed.
\end{proof}
\begin{definition}
    \textbf{(Feature Distance)} The latent feature representation of a sample $\mathbf{x}$=$[\mathbf{x}_1^{T}, \mathbf{x}_2^T]^T$ in the latent feature space $\mathcal{Z} \subseteq \mathbb{R}^{m}$ of our ReLU CNN at timestep $t$ is:
    \[
     \mathbf{Z}(\mathbf{x}, t) = \sum_{j} ( \sigma(\langle \mathbf{W}_j^{(t)}, \mathbf{x}_1 \rangle)) + \sigma(\langle \mathbf{W}_j^{(t)}, \mathbf{x}_2 \rangle))
    \]
    Apparently $\mathbf{Z}(\mathbf{x}, t) \in \mathbb{R}^{m}$. The Feature Distance is measured by the $l_p$ ($p \in \left[1, \infty\right)$) distance between sample's feature representation and the average feature representation of the current labeled set $ \mathcal{D}_{n} \mathrel{\mathop:}= \{ \mathbf{x}^{(i)} \}_{i=1}^{n}$:
    \begin{equation}
    D\left(\mathbf{W}^{(t)}, \mathbf{x} \ \mid \mathcal{D}_{n}\right) =\| \mathbf{Z}(\mathbf{x}, t) - \displaystyle \underset{\mathbf{x}^{(i)} \in \mathcal{D}_{n}}{\mathbb{E}} \mathbf{Z}(  \mathbf{x}^{(i)}, t) \|_p
    \label{Eq:Feature_Distance}\end{equation}
\label{Def:Feature_Distance}\end{definition}

\begin{definition}
    \textbf{(Diversity Order)}
    Similar to Definition \ref{Def:confidence_order}, we defined Diversity Order $\prec_D^{(t)}$, $\preceq_D^{(t)}$ based on Feature Distance $D\left(\mathbf{W}^{(t)}, \mathbf{x} \ \mid \mathcal{D}_{n}\right)$. Borrowing the same notations in Definition \ref{Def:confidence_order}, we have:
    \begin{equation}
    \begin{aligned}
    & \mathbf{x} \prec_{D}^{(t)} \mathbf{x}^{\prime} \text { if \  } D\left(\mathbf{W}^{(t)}, \mathbf{x} \ \mid \mathcal{D}_{n}\right) < D\left(\mathbf{W}^{(t)}, \mathbf{x}^{\prime} \ \mid \mathcal{D}_{n}\right), \\
    & \mathbf{x} \preceq_{D}^{(t)} \mathbf{x}^{\prime} \text { if \  } D\left(\mathbf{W}^{(t)}, \mathbf{x} \ \mid \mathcal{D}_{n}\right) \leq D\left(\mathbf{W}^{(t)}, \mathbf{x}^{\prime} \ \mid \mathcal{D}_{n}\right). \\
    &
    \end{aligned}
    \label{Eq:Diversity_Order}\end{equation}
    Along with Definition \ref{Def:confidence_order}, we also have set-level notations such that $ \mathbf{X} \prec_{D}^{(t)} (\preceq_{D}^{(t)}) \mathbf{X}^{\prime}$. Based on the triangle inequality for the $l_p$ norm and  (\ref{Eq:Feature_Distance}), we can easily draw the conclusion that the Diversity Order is also a full order. Furthermore, in the case that both $\mathbf{x} \prec_C^{(t)} (\preceq_C^{(t)}) \mathbf{x}^{\prime}$ and $\mathbf{x} \prec_{D}^{(t)} (\preceq_{D}^{(t)}) \mathbf{x}^{\prime}, \forall p \in \left[1, \infty\right)$ hold, we denote the order relationship using $\prec^{(t)} (\preceq^{(t)})$, such that $\mathbf{x} \prec^{(t)} (\preceq^{(t)}) \ \mathbf{x}^{\prime}$.
\label{Diversity_Order}\end{definition}

\section{Proofs of Main Results}\label{app:proofs of main results}
In this section, we denote $n$ as the number of training data in current labeled training set, which is $n_0$ at initial stage and $n_1$ after sampling (querying). Besides, we denote the proportion of easy-to-learn data in current labeled set as $\tau_1$, and utilize $\tau_2$ to represent the proportion of hard-to-learn data in current labeled set for notation simplicity. Notably, we can use the same techniques in \citet{cao2022benign, kou2023benign, meng2023benign, lu2023benign} to achieve some statistical outcomes that are not directly related to our main contribution, we exclude the proof details for those outcomes. Instead, our focus is on providing comprehensive proofs of our primary contribution.

\subsection{Preliminary Lemmas}\label{app:preliminary lemmas}
The following lemmas give finite-sample concentration results to characterize the statistical properties of the random elements involved in our problem, and hold both under the linearly separable data and XOR data (i.e., $\boldsymbol{\mu}_l \in \{ \boldsymbol{\mu}_l,  \mathbf{u}_l,  \mathbf{v}_l \}, \forall l \{1, 2\}$).
\begin{lemma}
Suppose that $\delta>0$ and $d=\Omega(\log (\dfrac{6n}{\delta}))$. Then with probability at least $1-\delta$,
\[
\begin{aligned}
& \dfrac{\sigma_p^2 d}{2} \leq\left\|\boldsymbol{\xi}_i\right\|_2^2 \leq 3 \dfrac{\sigma_p^2 d}{2}, \\
& \left|\left\langle\boldsymbol{\xi}_i, \boldsymbol{\xi}_{i^{\prime}}\right\rangle\right| \leq 2 \sigma_p^2 \cdot \sqrt{d \log \left(\dfrac{6n^2}{\delta}\right)}, \\
&
\left|\left\langle\boldsymbol{\xi}_i, \boldsymbol{\mu}_l\right\rangle\right| \leq\|\boldsymbol{\mu}_l\|_2 \sigma_p \cdot \sqrt{2 \log (\dfrac{12n}{\delta})}
\end{aligned}
\]
for all $i, i^{\prime} \in[n], l \in \{1, 2\}$.\label{Lem:E.1}
\end{lemma}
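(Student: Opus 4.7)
The plan is to establish each of the three bounds separately using standard Gaussian concentration tools and then combine them via a union bound, with each of the three events being shown to fail with probability at most $\delta/3$.

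First, for the norm bound $\|\boldsymbol{\xi}_i\|_2^2 \in [\sigma_p^2 d/2,\, 3\sigma_p^2 d/2]$, I would observe that $\|\boldsymbol{\xi}_i\|_2^2/\sigma_p^2$ is a chi-squared random variable with $d$ degrees of freedom. Applying a standard Bernstein-type (or Laurent--Massart) tail bound for chi-squared variables gives $\Pr[|\|\boldsymbol{\xi}_i\|_2^2 - \sigma_p^2 d| \geq t\sigma_p^2 d] \leq 2\exp(-cdt^2)$ for $t \in (0,1)$ and some absolute constant $c>0$. Choosing $t=1/2$, the requirement $d=\Omega(\log(6n/\delta))$ ensures the failure probability is at most $\delta/(3n)$ per index; a union bound over $i \in [n]$ then controls the first event with total failure probability $\delta/3$.

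Second, for $|\langle \boldsymbol{\xi}_i,\boldsymbol{\xi}_{i'}\rangle|$ with $i\neq i'$, I would condition on $\boldsymbol{\xi}_{i'}$: then $\langle \boldsymbol{\xi}_i,\boldsymbol{\xi}_{i'}\rangle \mid \boldsymbol{\xi}_{i'} \sim N(0,\sigma_p^2\|\boldsymbol{\xi}_{i'}\|_2^2)$. Using the Gaussian tail bound together with the already-obtained upper bound $\|\boldsymbol{\xi}_{i'}\|_2^2 \leq 3\sigma_p^2 d/2$ on the high-probability event from Step 1, I get $|\langle \boldsymbol{\xi}_i,\boldsymbol{\xi}_{i'}\rangle|\leq \sigma_p^2\sqrt{3d\log(12n^2/\delta)} \leq 2\sigma_p^2\sqrt{d\log(6n^2/\delta)}$ after absorbing constants. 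Union bounding over the $\binom{n}{2}$ pairs controls the second event with failure probability $\delta/3$. The case $i=i'$ is trivially covered by Step 1.

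Third, for $|\langle \boldsymbol{\xi}_i,\boldsymbol{\mu}_l\rangle|$, since $\boldsymbol{\mu}_l$ is deterministic, this inner product is exactly $N(0,\sigma_p^2\|\boldsymbol{\mu}_l\|_2^2)$. The standard Gaussian tail bound $\Pr[|Z|\geq t]\leq 2\exp(-t^2/2)$ with $t=\|\boldsymbol{\mu}_l\|_2\sigma_p\sqrt{2\log(12n/\delta)}$ yields failure probability at most $\delta/(6n)$ per pair $(i,l)$; a union bound over $i \in [n]$ and $l \in \{1,2\}$ gives a failure probability of $\delta/3$.

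The proof is essentially routine --- there is no real obstacle, only bookkeeping. The only mild subtlety is that the bound on $\langle \boldsymbol{\xi}_i,\boldsymbol{\xi}_{i'}\rangle$ depends on $\|\boldsymbol{\xi}_{i'}\|_2$, so the second step must be carried out on the good event produced by the first step (or equivalently, by first bounding $\|\boldsymbol{\xi}_{i'}\|_2^2$ and then applying the conditional Gaussian tail bound); this is handled cleanly by intersecting the three high-probability events and applying a union bound to obtain the overall probability $1-\delta$.
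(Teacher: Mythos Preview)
Your proposal is correct and follows the standard approach. The paper itself does not give a self-contained proof of this lemma; it simply cites Lemma~B.2 in Cao et al.~(2022), Lemma~B.4 in Kou et al.~(2023), Lemma~B.3 in Meng et al.~(2023), and Lemma~A.3 in Lu et al.~(2023), all of which prove the result exactly along the lines you describe: chi-squared concentration for $\|\boldsymbol{\xi}_i\|_2^2$, a conditional Gaussian tail bound for $\langle \boldsymbol{\xi}_i,\boldsymbol{\xi}_{i'}\rangle$ leveraging the norm bound on $\boldsymbol{\xi}_{i'}$, a direct Gaussian tail bound for $\langle \boldsymbol{\xi}_i,\boldsymbol{\mu}_l\rangle$, and a union bound over indices.
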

\textit{Proof of Lemma \ref{Lem:E.1}.} The proof can be found in Lemma B.2 in \citet{cao2022benign}, Lemma B.4 in \citet{kou2023benign}, Lemma B.3 in \citet{meng2023benign} or Lemma A.3 in \citet{lu2023benign}.\par

\begin{lemma}
Suppose that $\delta>0, d=\Omega(\log (\dfrac{mn}{\delta})),$ and $m=\Omega(\log (\dfrac{1}{\delta}))$. Then with probability at least $1-\delta$,
$$
\begin{aligned}
& \dfrac{\sigma_0^2 d}{2} \leq \|\mathbf{w}_{j, r}^{(0)} \|_2^2 \leq 3 \dfrac{\sigma_0^2 d}{2}, \\
& \left|\left\langle\mathbf{w}_{j, r}^{(0)}, \boldsymbol{\mu}_l \right\rangle\right| \leq \sqrt{2 \log (\dfrac{16m}{\delta}}) \cdot \sigma_0\|\boldsymbol{\mu}_l\|_2, \\
& \left|\left\langle\mathbf{w}_{j, r}^{(0)}, \boldsymbol{\xi}_i\right\rangle\right| \leq 2 \sqrt{\log \dfrac{16mn}{\delta}} \cdot \sigma_0 \sigma_p \sqrt{d}
\end{aligned}
$$
for all $r \in[m], j \in\{ \pm 1\}, l \in \{1, 2\}$ and $i \in[n]$. Moreover,
$$
\begin{aligned}
& \dfrac{\sigma_0\|\boldsymbol{\mu}_l\|_2}{2} \leq \max _{r \in[m]} j \cdot\left\langle\mathbf{w}_{j, r}^{(0)}, \boldsymbol{\mu}_l\right\rangle \leq \sqrt{2 \log (\dfrac{16m}{\delta}}) \cdot \sigma_0\|\boldsymbol{\mu}_l\|_2, \\
& \dfrac{\sigma_0 \sigma_p \sqrt{d}}{4} \leq \max _{r \in[m]} j \cdot\left\langle\mathbf{w}_{j, r}^{(0)}, \boldsymbol{\xi}_i\right\rangle \leq 2 \sqrt{\log \dfrac{16mn}{\delta}} \cdot \sigma_0 \sigma_p \sqrt{d}
\end{aligned}
$$
for all $j \in\{ \pm 1\}, l \in \{1, 2\}$ and $i \in[n]$.
\label{Lem:E.2}
\end{lemma}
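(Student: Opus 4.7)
\textbf{Proof Plan for Lemma \ref{Lem:E.2}.} The statement is a collection of standard Gaussian concentration results for the random initialization $\mathbf{w}_{j,r}^{(0)} \sim N(0, \sigma_0^2 \mathbf{I}_d)$, together with noise-dependent products conditioned on the noise patches from Lemma \ref{Lem:E.1}. The plan is to handle each of the five assertions separately using Gaussian / chi-square tail bounds, and then close the argument with union bounds over $(j,r,i,l)$. Throughout we will invoke the probability-$1-\delta$ event of Lemma \ref{Lem:E.1}, which allows us to treat each $\boldsymbol{\xi}_i$ as a fixed vector of norm $\Theta(\sigma_p \sqrt{d})$.

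\textbf{Step 1 (norm concentration).} For each $(j,r)$, the quantity $\|\mathbf{w}_{j,r}^{(0)}\|_2^2 / \sigma_0^2$ is a $\chi_d^2$ random variable. Applying the Laurent--Massart inequality
\[
\mathbb{P}\!\Bigl(\bigl|\|\mathbf{w}_{j,r}^{(0)}\|_2^2 - \sigma_0^2 d\bigr| \geq \tfrac12 \sigma_0^2 d\Bigr) \;\leq\; 2 \exp(-c d),
\]
and taking $d = \Omega(\log(m n/\delta))$ followed by a union bound over the $2m$ choices of $(j,r)$ yields the first inequality with failure probability at most $\delta/4$.

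\textbf{Step 2 (inner products with features).} Conditioned on $\boldsymbol{\mu}_l$, the scalar $\langle \mathbf{w}_{j,r}^{(0)}, \boldsymbol{\mu}_l\rangle$ is centered Gaussian with variance $\sigma_0^2 \|\boldsymbol{\mu}_l\|_2^2$. The Gaussian tail bound $\mathbb{P}(|Z| \geq t) \leq 2 e^{-t^2/2}$ together with a union bound over the $2m \cdot 2 \cdot 2 = 8m$ triples $(j,r,l)$ (with the implicit sign for $j$) gives the stated upper bound, while the $\log(16m/\delta)$ factor absorbs the union bound constant.

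\textbf{Step 3 (inner products with noise).} We work on the event of Lemma \ref{Lem:E.1}, on which $\|\boldsymbol{\xi}_i\|_2 \leq \sqrt{3\sigma_p^2 d/2}$. Since $\mathbf{w}_{j,r}^{(0)}$ is independent of $\boldsymbol{\xi}_i$, we condition on $\boldsymbol{\xi}_i$; then $\langle \mathbf{w}_{j,r}^{(0)}, \boldsymbol{\xi}_i\rangle$ is Gaussian with variance $\sigma_0^2\|\boldsymbol{\xi}_i\|_2^2 \leq \tfrac32 \sigma_0^2 \sigma_p^2 d$. Apply the Gaussian tail bound with the threshold $2\sqrt{\log(16mn/\delta)} \cdot \sigma_0 \sigma_p \sqrt{d}$ and union bound over the $2mn$ triples $(j,r,i)$.

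\textbf{Step 4 (maximum bounds).} The upper bounds on the maxima are immediate from Steps 2--3 via union bound. The lower bounds are the more delicate part of the lemma and the main technical hurdle. For the feature max, observe that $j \cdot \langle \mathbf{w}_{j,r}^{(0)}, \boldsymbol{\mu}_l\rangle$ is a mean-zero Gaussian with standard deviation $\sigma_0 \|\boldsymbol{\mu}_l\|_2$, so each individual term exceeds $\sigma_0 \|\boldsymbol{\mu}_l\|_2/2$ with probability at least some absolute constant $c_0 > 0$ (by Gaussian anti-concentration near the mean). Independence across $r \in [m]$ then gives
\[
\mathbb{P}\!\Bigl(\max_{r\in[m]} j\cdot\langle\mathbf{w}_{j,r}^{(0)}, \boldsymbol{\mu}_l\rangle < \sigma_0\|\boldsymbol{\mu}_l\|_2/2\Bigr) \;\leq\; (1-c_0)^m \;\leq\; e^{-c_0 m},
\]
which is at most $\delta/(\text{const}\cdot m)$ under the condition $m = \Omega(\log(1/\delta))$. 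A union bound over $(j,l)$ seals the feature lower bound. The noise max lower bound is analogous after the same conditioning as in Step 3: on the Lemma \ref{Lem:E.1} event we also have $\|\boldsymbol{\xi}_i\|_2 \geq \sigma_p\sqrt{d/2}$, so each $j\cdot\langle \mathbf{w}_{j,r}^{(0)}, \boldsymbol{\xi}_i\rangle$ exceeds $\sigma_0 \sigma_p \sqrt{d}/4$ with constant probability; independence across $r$ and the union bound over $(j,i)$ complete the argument.

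\textbf{Main obstacle.} The subtle point is the anti-concentration lower bound for the maxima: the upper-tail bounds of Steps 2--3 are not tight enough to control the maximum from below, so we cannot merely invert them. Instead we must rely on the constant-probability event that an individual Gaussian lies in a ``typical'' interval around its standard deviation, and then exploit independence across the $m$ neurons. This is precisely why the condition $m = \Omega(\log(1/\delta))$ enters the hypothesis. Finally, allocating failure probability $\delta/5$ to each of the five assertions and taking a union bound produces total failure probability at most $\delta$, as claimed.
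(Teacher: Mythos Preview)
Your proposal is correct and follows the standard approach that the paper simply defers to via citation (Lemma B.3 in \citet{cao2022benign}, Lemma B.5 in \citet{kou2023benign}, etc.): chi-square concentration for the norms, Gaussian tail bounds for the inner-product upper bounds after conditioning on the Lemma~\ref{Lem:E.1} event, and the anti-concentration plus independence-across-$r$ argument for the maxima lower bounds. One minor bookkeeping point: in Step~4 the union bound for the noise-max lower bound runs over all $i\in[n]$ as well as $j$, so the failure probability is $2n\cdot e^{-c_0 m}$ and you actually need $m=\Omega(\log(n/\delta))$ rather than $m=\Omega(\log(1/\delta))$; this stronger condition is in fact what the paper assumes in Condition~\ref{Con4.1}, so the lemma's stated hypothesis is a slight understatement rather than a flaw in your argument.
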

\textit{Proof of Lemma \ref{Lem:E.2}.} The proof can be found in Lemma B.3 in \citet{cao2022benign}, Lemma B.5 in \citet{kou2023benign}, Lemma B.4 in \citet{meng2023benign} or Lemma A.4 and Lemma C.1 in \citet{lu2023benign}. \par

Next, we utilize the property of binomial tails to examine the proportion of hard-to-learn data within the subsets generated from the data distribution $\mathcal{D}$ (i.e., the initial labeled set $\mathcal{D}_{n_0} \mathrel{\mathop:}= \{ \mathbf{x}^{(i)} \}_{i=1}^{n_0} \subseteq \mathcal{D}$, the sampling pool $\mathcal{P} \subseteq \mathcal{D}$, and the final labeled set $\mathcal{D}_{n_1}^{(random)} \mathrel{\mathop:}= \{ {\mathbf{x}^{(random)}}^{(i)} \}_{i=1}^{n_1}\subseteq \mathcal{D}$ obtained through Random Sampling).
\begin{lemma}
Suppose that $\delta>0$, $n_0, \tilde{n},|P|=\Omega\left(\dfrac{1-p}{p} \log \left(\dfrac{1}{\delta}\right)\right)$, then for $n \in\left\{n_0,|P|, n_1\right\}$. Denote $n_{p} \leq n$ as the number of hard-to-learn data among $n$, then with probability at least $1 - \delta$. We have
\begin{equation}
\frac{1}{2} p \cdot n \leqslant n_p \leqslant \frac{3}{2} p \cdot n
\end{equation}
\label{lem:rho n}\end{lemma}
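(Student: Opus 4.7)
}
The plan is to recognize that in each of the three cases the count $n_p$ of hard-to-learn samples is (or is equivalent to) a sum of independent $\operatorname{Bernoulli}(p)$ random variables and then apply a standard multiplicative Chernoff bound, closing the argument by a union bound. Concretely, since Definition \ref{Def3.1} generates each data point $(\mathbf{x},y)$ independently from $\mathcal{D}$ and a data point is hard-to-learn exactly when its feature patch equals $y\boldsymbol{\mu}_2$, which happens independently with probability $p$, the variable $n_p$ counting the weak-feature samples among the $n_0$ points of $\mathcal{D}_{n_0}$ or among the $|\mathcal{P}|$ points of $\mathcal{P}$ is exactly $\operatorname{Binomial}(n,p)$. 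For the Random-Sampling output $\mathcal{D}_{n_1}^{(\text{random})}=\mathcal{D}_{n_0}\cup\mathcal{D}_{n^*}$, I would invoke the ``treat as independent binomial'' reduction flagged in the proof-sketch section: the initial set and the sampling pool are generated independently from the same product measure $\mathcal{D}^{\otimes\cdot}$, and a uniformly random subset of $\mathcal{P}$ of fixed size $n^*$ can be coupled with an i.i.d.\ resample from $\mathcal{D}$, so $n_p$ for $n_1$ is stochastically equivalent to $\operatorname{Binomial}(n_1,p)$.

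Given this, I would apply the multiplicative Chernoff inequality with deviation parameter $1/2$, namely
\begin{equation*}
\mathbb{P}\bigl(n_p \geq \tfrac{3}{2}np\bigr) \leq \exp\!\bigl(-np/12\bigr),\qquad \mathbb{P}\bigl(n_p \leq \tfrac{1}{2}np\bigr) \leq \exp\!\bigl(-np/8\bigr),
\end{equation*}
so that a two-sided bound gives $\mathbb{P}(\tfrac{1}{2}np\leq n_p\leq \tfrac{3}{2}np)\geq 1-2\exp(-np/12)$. The hypothesis $n=\Omega\bigl(\tfrac{1-p}{p}\log(1/\delta)\bigr)$ together with $p<1/2$ yields $np=\Omega(\log(1/\delta))$, so by choosing the hidden constant large enough we can force $2\exp(-np/12)\leq \delta/3$. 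A final union bound over the three choices $n\in\{n_0,|\mathcal{P}|,n_1\}$ (absorbing the factor $3$ into the hidden constant in $\Omega(\cdot)$) gives the claimed $1-\delta$ probability simultaneously for all three.

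The only subtle step is the reduction to i.i.d.\ Bernoullis for $\mathcal{D}_{n_1}^{(\text{random})}$, since sampling $n^*$ points from the finite pool $\mathcal{P}$ is formally without replacement and therefore hypergeometric rather than binomial. I expect this to be the main technical point, and I would handle it exactly as the authors' proof sketch suggests: since $\mathcal{P}$ itself is an i.i.d.\ draw from $\mathcal{D}$ and the sampling of $\mathcal{D}_{n^*}$ from $\mathcal{P}$ is independent of the labels of points in $\mathcal{P}$, the distribution of $\mathcal{D}_{n_1}^{(\text{random})}$ is identical to $n_1$ i.i.d.\ draws from $\mathcal{D}$ (one can realise this via the exchangeability of $\mathcal{P}$ and a coupling that identifies the first $n^*$ coordinates of a permuted $\mathcal{P}$ with $\mathcal{D}_{n^*}$). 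Once this exchangeability argument is in place, the Chernoff and union bound steps are entirely routine.
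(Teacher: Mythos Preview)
Your proposal is correct and follows essentially the same route as the paper: both treat $n_p$ as $\operatorname{Binomial}(n,p)$, apply a standard binomial tail bound (the paper uses the KL-divergence form from Wainwright's Exercise~2.9 rather than the multiplicative Chernoff statement, but these are equivalent up to constants), and handle the $\mathcal{D}_{n_1}^{(\text{random})}$ case by reducing to an i.i.d.\ draw from $\mathcal{D}$ via the same exchangeability observation you describe. The paper states the result per choice of $n$ rather than taking your explicit union bound over the three cases, but otherwise the arguments coincide.
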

\textit{proof of Lemma \ref{lem:rho n}.} We can see $n_{p}$ as a binomial random variable with probability $p$ and number of experiments $n$. By Exercise 2.9.(a) in \citet{wainwright2019high}, we have
$$
P\left(\dfrac{p n}{2}  \leq n_{p} \leq \dfrac{3p n}{2} \right) \geqslant 1-2 e^{-n D\left(\frac{p}{2} \| p\right)}
$$
where the quantity $D(\delta \| \alpha)$ for $\forall \delta, \alpha \in\left(0, \frac{1}{2}\right]$ is defined as
$$
D(\delta \| \alpha):=\delta \log \left(\frac{\delta}{\alpha}\right)+(1-\delta) \log \left(\frac{1-\delta}{1-\alpha}\right) .
$$

Since $\dfrac{p}{2}<p$. By Exercise $2.9.(b)$ in \citet{wainwright2019high}, we can obtain $P\left(\dfrac{p n}{2}  \leq n_{p} \leq \dfrac{3p n}{2}\right) \geq 1-\delta$ directly by Hoeffding Inequality. 
\begin{remark}
It is important to note that the generation of $\mathcal{D}_{n_0}$ and $\mathcal{P}$ through sampling from $\mathcal{D}$ is independent. However, the generation of $\mathcal{D}_{n_1}^{(random)}$ is based on $\mathcal{D}_{n_0}$ and $\mathcal{P}$. In our analysis, instead of considering martingale with the perspective of conditional probability, we consider the overall process of the labeled set obtained by Random Sampling, where $\mathcal{D}_{n_1}^{(random)}$ is directly sampled from $\mathcal{D}$.
\end{remark}

\subsection{Coefficient Ratio and Scale Analysis}\label{app:Coefficient Ratio and Scale analysis}
In this section, we provide lemmas that characterize the behavior of coefficients under gradient descent. Subsequently, we establish the scale of the coefficients in the training dynamics. It's worth noting that in this section we assume the results in Appendix \ref{app:preliminary lemmas} all hold with high probability. \par
\begin{definition} 
(Equivalent techniques to Definition 4.1 in \citet{cao2022benign}, Definition 5.1 in \citet{kou2023benign}) Denote $\mathbf{w}_{j, r}^{(t)}$ for $j \in\{ \pm 1\}, r \in[m]$ as the convolution neurons/filters at the $t^{th}$ timestep of gradient descent, then there exist unique coefficients $\gamma_{j, r, l}^{(t)}$ and $\rho_{j, r, i}^{(t)}$ such that
\[
\mathbf{w}_{j, r}^{(t)}=\mathbf{w}_{j, r}^{(0)}+j \cdot  \sum_{l=1}^2 \gamma_{j, r, l}^{(t)} \cdot \dfrac{\boldsymbol{\mu}_l}{\|\boldsymbol{\mu}_l\|_2^{2}} +\sum_{i=1}^{n} \rho_{j, r, i}^{(t)} \cdot \dfrac{\boldsymbol{\xi}_i}{\|\boldsymbol{\xi}_i\|_2^{2}}
\]
Further denote $\bar{\rho}_{j, r, i}^{(t)}$ as $\rho_{j, r, i}^{(t)} \mathbb{1}\left(\rho_{j, r, i}^{(t)} \geq 0\right)$, $ \underline{\rho}_{j, r, i}^{(t)}$ as $\rho_{j, r, i}^{(t)} \mathbb{1}\left(\rho_{j, r, i}^{(t)} \leq 0\right)$. Then:
\begin{equation}
\mathbf{w}_{j, r}^{(t)}=\mathbf{w}_{j, r}^{(0)}+j \sum_{l=1}^2 \cdot \gamma_{j, r, l}^{(t)} \cdot \dfrac{\boldsymbol{\mu}_l}{\|\boldsymbol{\mu}_l\|_2^{2}} 
+\sum_{i=1}^{n} \bar{\rho}_{j, r, i}^{(t)} \cdot \dfrac{\boldsymbol{\xi}_i}{\|\boldsymbol{\xi}_i\|_2^{2}} +\sum_{i=1}^{n} \underline{\rho}_{j, r, i}^{(t)} \cdot \dfrac{\boldsymbol{\xi}_i}{\|\boldsymbol{\xi}_i\|_2^{2}}.
\label{eq:s-n-d}\end{equation}
\label{app:Def:signol-noise-decomposition}\end{definition}
We denote $U_l=\left\{i \in[n] : \mathbf{x}^{(i)} = [y_i \cdot \boldsymbol{\mu}_l, \boldsymbol{\xi}_i] \right\}$, for $l \in \{1, 2\}$. The following lemma presents the update rule of coefficients.
\begin{lemma}
The coefficients $\gamma_{j, r, l}^{(t)}, \bar{\rho}_{j, r, i}^{(t)}, \underline{\rho}_{j, r, i}^{(t)}$ defined in Definition \ref{app:Def:signol-noise-decomposition} satisfy the following iterative equations:
$$
\begin{aligned}
& \gamma_{j, r, l}^{(0)}, \bar{\rho}_{j, r, i}^{(0)}, \underline{\rho}_{j, r, i}^{(0)}=0, \\
& \gamma_{j, r, l}^{(t+1)}=\gamma_{j, r, l}^{(t)}-\frac{\eta}{n m} \cdot\sum_{i \in U_l} {\ell_i^{\prime}}^{(t)} \sigma^{\prime}\left(\left\langle\mathbf{w}_{j, r}^{(t)}, y_i \cdot \boldsymbol{\mu}_l\right\rangle\right) \cdot\|\boldsymbol{\mu}_l\|_2^2, \\
& \bar{\rho}_{j, r, i}^{(t+1)}=\bar{\rho}_{j, r, i}^{(t)}-\frac{\eta}{n m} \cdot {\ell_i^{\prime}}^{(t)} \cdot \sigma^{\prime}\left(\left\langle\mathbf{w}_{j, r}^{(t)}, \boldsymbol{\xi}_i\right\rangle\right) \cdot\left\|\boldsymbol{\xi}_i\right\|_2^2 \cdot \mathbb{1}\left(y_i=j\right), \\
& \underline{\rho}_{j, r, i}^{(t+1)}=\underline{\rho}_{j, r, i}^{(t)}+\frac{\eta}{n m} \cdot {\ell_i^{\prime}}^{(t)} \cdot \sigma^{\prime}\left(\left\langle\mathbf{w}_{j, r}^{(t)}, \boldsymbol{\xi}_i\right\rangle\right) \cdot\left\|\boldsymbol{\xi}_i\right\|_2^2 \cdot \mathbb{1}\left(y_i=-j\right),
\end{aligned}
$$
for all $r \in[m], j \in\{ \pm 1\}, l \in \{1, 2\}$ and $i \in[n]$.
\label{lem:update of coef}\end{lemma}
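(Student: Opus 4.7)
The plan is to prove Lemma \ref{lem:update of coef} by straightforward induction on $t$, leveraging the uniqueness of the signal–noise decomposition and directly matching coefficients after applying the gradient update in (\ref{Eq:w update}).

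First I would establish that the decomposition in Definition \ref{app:Def:signol-noise-decomposition} admits unique coefficients. Under Condition \ref{Con4.1} and the concentration estimates in Lemma \ref{Lem:E.1}, we have $\boldsymbol{\mu}_1 \perp \boldsymbol{\mu}_2$ by assumption, and with high probability $\|\boldsymbol{\xi}_i\|_2^2 = \Theta(\sigma_p^2 d)$ while $|\langle \boldsymbol{\xi}_i, \boldsymbol{\xi}_{i'}\rangle|$ and $|\langle \boldsymbol{\xi}_i, \boldsymbol{\mu}_l\rangle|$ are of strictly smaller order. A standard Gram-matrix argument then shows that $\{\boldsymbol{\mu}_1, \boldsymbol{\mu}_2, \boldsymbol{\xi}_1, \ldots, \boldsymbol{\xi}_n\}$ is linearly independent, so the coefficients $\gamma_{j,r,l}^{(t)}$ and $\rho_{j,r,i}^{(t)} := \bar\rho_{j,r,i}^{(t)} + \underline\rho_{j,r,i}^{(t)}$ in (\ref{eq:s-n-d}) are well-defined. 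The base case $t=0$ is immediate, since $\mathbf{w}_{j,r}^{(0)}$ itself supplies the initialization term and all other coefficients vanish.

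For the inductive step, substitute the expression for $\mathbf{w}_{j,r}^{(t)}$ from the inductive hypothesis into the gradient update (\ref{Eq:w update}) and read off the change in each direction. Using $y_i \boldsymbol{\mu}_l = (\pm 1)\boldsymbol{\mu}_l$, the signal contribution lies purely along $j\boldsymbol{\mu}_l$; matching its coefficient and multiplying by $\|\boldsymbol{\mu}_l\|_2^2$ yields the stated recursion for $\gamma_{j,r,l}^{(t+1)}$. For the noise term, the increment along $\boldsymbol{\xi}_i/\|\boldsymbol{\xi}_i\|_2^2$ equals $-\frac{\eta}{nm} \ell_i'^{(t)} \sigma'(\langle \mathbf{w}_{j,r}^{(t)}, \boldsymbol{\xi}_i\rangle) \cdot j y_i \|\boldsymbol{\xi}_i\|_2^2$. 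Since $\ell'(z) = -1/(1+e^z) < 0$ and $\sigma'(\cdot) \geq 0$, the sign of this increment is fully determined by $j y_i$: non-negative when $y_i = j$ and non-positive when $y_i = -j$. This dichotomy feeds the positive-part increment exclusively into $\bar\rho_{j,r,i}$ (with indicator $\mathbb{1}(y_i=j)$) and the negative-part increment exclusively into $\underline\rho_{j,r,i}$ (with indicator $\mathbb{1}(y_i=-j)$), giving the two remaining recursions.

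The main obstacle — though it is more a matter of careful bookkeeping than genuine difficulty — is verifying that the sign split is preserved across iterations, i.e., that $\bar\rho_{j,r,i}^{(t)} \geq 0$ and $\underline\rho_{j,r,i}^{(t)} \leq 0$ for all $t$, so that the decomposition $\rho_{j,r,i}^{(t)} = \bar\rho_{j,r,i}^{(t)} + \underline\rho_{j,r,i}^{(t)}$ continues to agree with the $\mathbb{1}(\rho \geq 0)$/$\mathbb{1}(\rho \leq 0)$ splitting in Definition \ref{app:Def:signol-noise-decomposition}. Because each index $i$ has a fixed label $y_i$ that places it in exactly one of the two indicator classes, only one of $\bar\rho_{j,r,i}$ or $\underline\rho_{j,r,i}$ is ever updated, and the sign of every such update is monotone by the observation above; this closes the induction and completes the proof.
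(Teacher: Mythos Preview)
Your proposal is correct and follows essentially the same route as the paper: establish linear independence of $\{\boldsymbol{\mu}_1,\boldsymbol{\mu}_2,\boldsymbol{\xi}_1,\ldots,\boldsymbol{\xi}_n\}$ to guarantee uniqueness of the decomposition, then match coefficients after applying the gradient update (\ref{Eq:w update}), using $\ell_i'^{(t)}<0$ together with the fixed label $y_i$ to route each noise increment into exactly one of $\bar\rho_{j,r,i}$ or $\underline\rho_{j,r,i}$. The only cosmetic difference is that the paper writes the cumulative-sum form of the coefficients first and reads off the recursion, whereas you phrase it as an explicit induction; the content is identical.
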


\begin{remark}
    This lemma serves as a cornerstone in our analysis of dynamics. Originally, the study of neural network dynamics under gradient descent required us to track the variations in weights. However, this Lemma enables us to view these dynamics from a new perspective, focusing on two distinct elements: feature learning (represented by $\gamma_{j, r, l}^{(t+1)}$) and noise memorization (represented by $\rho_{j, r, i}^{(t+1)}$). We can easily observe that the $\gamma_{j, r, l}^{(t)}$ is strictly increasing since ${\ell_i^{\prime}}^{(t)}$ is strictly negative. 
\end{remark}
\textit{Proof of Lemma \ref{lem:update of coef}.} Applying the gradient descent rule in (\ref{Eq:w update}), we get
$$
\begin{aligned}
& \mathbf{w}_{j, r}^{(t+1)}=\mathbf{w}_{j, r}^{(0)}-\frac{\eta}{n m} \sum_{s=0}^t \sum_{i=1}^n {\ell_i^{\prime}}^{(s)} \cdot \sigma^{\prime}\left(\left\langle\mathbf{w}_{j, r}^{(s)}, \boldsymbol{\xi}_i\right\rangle\right) \cdot j y_i \boldsymbol{\xi}_i \\
& -\frac{\eta}{n m} \sum_{s=0}^t \sum_{i=1}^n {\ell_i^{\prime}}^{(s)} \cdot \sigma^{\prime}\left(\left\langle\mathbf{w}_{j, r}^{(s)}, y_i \boldsymbol{\mu}_l\right\rangle\right) \cdot j \boldsymbol{\mu}_l . \\
&
\end{aligned}
$$

Based on the definition of $\gamma_{j, r, l}^{(t)}$ and $\rho_{j, r, i}^{(t)}$, we consider $\gamma_{j, r, l}^{(0)}, \bar{\rho}_{j, r, i}^{(0)}, \underline{\rho}_{j, r, i}^{(0)}=0$ and 
$$
\mathbf{w}_{j, r}^{(t)}=\mathbf{w}_{j, r}^{(0)}+j \cdot \sum_{l=1}^2 \gamma_{j, r, l}^{(t)} \cdot\|\boldsymbol{\mu}_l\|_2^{-2} \cdot \boldsymbol{\mu}_l+ \sum_{i=1}^n \rho_{j, r, i}^{(t)} \cdot\left\|\boldsymbol{\xi}_i\right\|_2^{-2} \cdot \boldsymbol{\xi}_i .
$$

Note that $\boldsymbol{\mu}_1$, $\boldsymbol{\mu}_2$ and $\boldsymbol{\xi}_i$ are linearly independent with probability 1, thus we have the following unique representation
$$
\begin{aligned}
& \gamma_{j, r, l}^{(t)}=-\frac{\eta}{n m} \sum_{s=0}^t \sum_{i \in U_l} {\ell_i^{\prime}}^{(s)} \cdot \sigma^{\prime}\left(\left\langle\mathbf{w}_{j, r}^{(s)}, y_i \boldsymbol{\mu}_l\right\rangle\right) \cdot\|\boldsymbol{\mu}_l\|_2^2, \\
& \rho_{j, r, i}^{(t)}=-\frac{\eta}{n m} \sum_{s=0}^t {\ell_i^{\prime}}^{(s)} \cdot \sigma^{\prime}\left(\left\langle\mathbf{w}_{j, r}^{(s)}, \boldsymbol{\xi}_i\right\rangle\right) \cdot\left\|\boldsymbol{\xi}_i\right\|_2^2 \cdot j y_i .
\end{aligned}
$$

Recall $U_l=\left\{i \in[n] : \mathbf{x}^{(i)} = [y_i \cdot \boldsymbol{\mu}_l, \boldsymbol{\xi}_i] \right\}$, we have
\begin{equation}
\gamma_{j, r, l}^{(t)}=-\frac{\eta}{n m} \sum_{s=0}^t \sum_{i \in U_l} {\ell_i^{\prime}}^{(s)} \cdot \sigma^{\prime}\left(\left\langle\mathbf{w}_{j, r}^{(s)}, y_i \boldsymbol{\mu}_l\right\rangle\right) \cdot\|\boldsymbol{\mu}_l\|_2^2.
\label{eq:update of gamma}\end{equation}

Now with the notation $\bar{\rho}_{j, r, i}^{(t)}\mathrel{\mathop:}=\rho_{j, r, i}^{(t)} \mathbb{1}\left(\rho_{j, r, i}^{(t)} \geq 0\right), \underline{\rho}_{j, r, i}^{(t)}\mathrel{\mathop:}=\rho_{j, r, i}^{(t)} \mathbb{1}\left(\rho_{j, r, i}^{(t)} \leq 0\right)$ and the fact ${\ell_i^{\prime}}^{(s)}<0$, we get
\begin{equation}
\bar{\rho}_{j, r, i}^{(t)}=-\frac{\eta}{n m} \sum_{s=0}^t {\ell_i^{\prime}}^{(s)} \cdot \sigma^{\prime}\left(\left\langle\mathbf{w}_{j, r}^{(s)}, \boldsymbol{\xi}_i\right\rangle\right) \cdot\left\|\boldsymbol{\xi}_i\right\|_2^2 \cdot \mathbb{1}\left(y_i=j\right),
\label{eq:update of rho+}\end{equation}
\begin{equation}
\underline{\rho}_{j, r, i}^{(t)}=\frac{\eta}{n m} \sum_{s=0}^t {\ell_i^{\prime}}^{(s)} \cdot \sigma^{\prime}\left(\left\langle\mathbf{w}_{j, r}^{(s)}, \boldsymbol{\xi}_i\right\rangle\right) \cdot\left\|\boldsymbol{\xi}_i\right\|_2^2 \cdot \mathbb{1}\left(y_i=-j\right).
\label{eq:update of rho-}\end{equation}
The proof is completed.
\begin{remark}
    The proof strategy employed in this study follows the study of feature learning analysis techniques in \citet{cao2022benign, kou2023benign, meng2023benign}. However, our decomposition considers two task-specific features with different proportion. This disparity would finally lead to distinct learning efficiency among samples, as well as different generalization ability.
\end{remark}
Next, we're dedicated to explore range scale evolution of the coefficients in the signal-noise decomposition. Let $T^*=$ $\eta^{-1}$ poly $\left(\varepsilon^{-1}, d, n, m\right)$ be the maximum admissible iteration. Denote
\begin{equation}
\begin{aligned}
& \alpha\mathrel{\mathop:}=4 \log \left(T^*\right), \\
& \beta\mathrel{\mathop:}=2 \max _{l, i, j, r}\left\{\left|\left\langle\mathbf{w}_{j, r}^{(0)}, \boldsymbol{\mu}_l\right\rangle\right|,\left|\left\langle\mathbf{w}_{j, r}^{(0)}, \boldsymbol{\xi}_i\right\rangle\right|\right\}, \\
& \operatorname{SNR}_l\mathrel{\mathop:}=\dfrac{\|\boldsymbol{\mu}_l\|_2}{ \sigma_p \sqrt{d}} .
\end{aligned}
\label{eq:alpha,belta,snr}
\end{equation}

By Lemma \ref{Lem:E.2}, $\beta$ can be bounded by $4 \sigma_0 \cdot \max \left\{\sqrt{\log \dfrac{16mn}{\delta}} \cdot \sigma_p \sqrt{d}, \sqrt{\log (\dfrac{16m}{\delta}}) \cdot\|\boldsymbol{\mu}_l\|_2\right\}$. Under Condition \ref{Con4.1}, it is straightforward to verify the following inequality with a large constant $C$:
\begin{equation}
\max_l
\left\{\beta, \operatorname{SNR}_l \sqrt{\frac{32 \log (\frac{12n}{\delta})}{d}} n \alpha, 5 \sqrt{\frac{\log \left(\frac{6n^2}{\delta}\right)}{d}} n \alpha\right\} \leq \frac{1}{12} .
\label{eq:E.5}
\end{equation}
We then assert the following proposition hold for the entire training period. This proposition serves to show the evolution scale of the coefficients.

\begin{proposition}
Under Condition \ref{Con4.1}, for $0 \leq t \leq T^*$,  there exists a positive constant $C^{\prime}$ such that
\begin{equation}
\begin{aligned}
& 0 \leq \gamma_{j, r, l}^{(t)} \leq C^{\prime} \cdot \tau_l n \cdot \operatorname{SNR}_l^2 \cdot \alpha \\
& 0 \leq \bar{\rho}_{j, r, i}^{(t)} \leq \alpha, \\
& 0 \geq \underline{\rho}_{j, r, i}^{(t)} \geq-\beta-10 \sqrt{\frac{\log \left(\frac{6n^2}{\delta}\right)}{d}} n \alpha \geq-\alpha,
\end{aligned}
\label{eqapp:scale in prop3.8}\end{equation}
for all $ j \in\{ \pm 1\}, r \in[m], l \in \{1, 2\}$ and $i \in[n]$.

\label{prop:E.8}\end{proposition}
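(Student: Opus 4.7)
The plan is to prove Proposition~\ref{prop:E.8} by induction on the iteration count $t \in [0, T^*]$, following the bootstrap strategy that is standard in the feature-learning line of work (Cao et al., 2022; Kou et al., 2023; Meng et al., 2023). The base case $t=0$ is immediate since Lemma~\ref{lem:update of coef} initializes $\gamma_{j,r,l}^{(0)}, \bar{\rho}_{j,r,i}^{(0)}, \underline{\rho}_{j,r,i}^{(0)}$ to zero, and the claimed inequalities are satisfied trivially. For the inductive step, I would assume \eqref{eqapp:scale in prop3.8} holds up to iteration $t$ and then establish it at $t+1$ by carefully tracking the one-step update given by Lemma~\ref{lem:update of coef}, combined with the signal-noise decomposition \eqref{eq:s-n-d}.

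The crucial preliminary is to translate the coefficient bounds into inner-product bounds. Using \eqref{eq:s-n-d}, I would write
\[
\langle \mathbf{w}_{j,r}^{(t)}, \boldsymbol{\mu}_l\rangle
= \langle \mathbf{w}_{j,r}^{(0)}, \boldsymbol{\mu}_l\rangle
+ j\,\gamma_{j,r,l}^{(t)}
+ \sum_{i=1}^n \rho_{j,r,i}^{(t)} \frac{\langle \boldsymbol{\xi}_i, \boldsymbol{\mu}_l\rangle}{\|\boldsymbol{\xi}_i\|_2^2},
\]
\[
\langle \mathbf{w}_{j,r}^{(t)}, \boldsymbol{\xi}_i\rangle
= \langle \mathbf{w}_{j,r}^{(0)}, \boldsymbol{\xi}_i\rangle
+ j\sum_{l} \gamma_{j,r,l}^{(t)} \frac{\langle \boldsymbol{\mu}_l, \boldsymbol{\xi}_i\rangle}{\|\boldsymbol{\mu}_l\|_2^2}
+ \rho_{j,r,i}^{(t)} + \sum_{i'\ne i}\rho_{j,r,i'}^{(t)}\frac{\langle \boldsymbol{\xi}_{i'},\boldsymbol{\xi}_i\rangle}{\|\boldsymbol{\xi}_{i'}\|_2^2}.
\]
The off-diagonal cross terms are controlled via Lemma~\ref{Lem:E.1}, the initialization terms via Lemma~\ref{Lem:E.2} and the definition of $\beta$, and the remaining contributions by the inductive hypothesis on $\gamma$ and $\rho$. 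Condition~\ref{Con4.1} together with \eqref{eq:E.5} then ensures that all cross-term perturbations are at most a small constant fraction of $\alpha$, so the ``leading'' coefficient on $\boldsymbol{\mu}_l$ (resp.\ $\boldsymbol{\xi}_i$) is $\gamma_{j,r,l}^{(t)}$ (resp.\ $\rho_{j,r,i}^{(t)}$) up to a controlled $O(\beta + \mathrm{SNR}_l n\alpha\sqrt{\log/d})$ deviation.

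The non-negativity of $\gamma_{j,r,l}^{(t)}$ and $\bar{\rho}_{j,r,i}^{(t)}$ and the non-positivity of $\underline{\rho}_{j,r,i}^{(t)}$ follow directly from the update rules since ${\ell_i'}^{(t)}<0$ and $\sigma'(\cdot)\ge 0$. The upper bound on $\gamma_{j,r,l}^{(t)}$ is obtained by summing the per-step increment $\tfrac{\eta}{nm}\sum_{i\in U_l}|{\ell_i'}^{(s)}|\,\sigma'(\cdot)\|\boldsymbol{\mu}_l\|_2^2$ over $s\le T^*$, using $|{\ell'}|\le 1$, $|U_l|=\tau_l n$, and the polynomial length of the admissible training window; the factor $\mathrm{SNR}_l^2\alpha$ emerges after normalizing by $\sigma_p^2 d$ through the ratio to the noise memorization scale. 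The lower bound on $\underline{\rho}_{j,r,i}^{(t)}$ is easy: once $\langle\mathbf{w}_{j,r}^{(s)},\boldsymbol{\xi}_i\rangle\le 0$ the ReLU derivative vanishes and no further negative update occurs, so the magnitude is dominated by the initialization contribution $\beta$ plus cross-term leakage $10\sqrt{\log(6n^2/\delta)/d}\,n\alpha$.

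The main obstacle will be the sharp upper bound $\bar{\rho}_{j,r,i}^{(t)}\le \alpha$, since a naive linear-in-$t$ estimate would be far too loose. The standard remedy is a bootstrap/potential argument: split the induction into two regimes according to whether $\bar{\rho}_{j,r,i}^{(t)} \le \alpha/2$ or $\bar{\rho}_{j,r,i}^{(t)} > \alpha/2$. In the first regime the update is safely absorbed because the per-step increment is at most $\tfrac{\eta}{nm}\|\boldsymbol{\xi}_i\|_2^2 = \widetilde{O}(\eta\sigma_p^2 d / (nm))$, which is $\ll \alpha/2$ over $T^*$ steps under Condition~\ref{Con4.1}. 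In the second regime, the inner-product translation above yields $\langle\mathbf{w}_{j,r}^{(t)},\boldsymbol{\xi}_i\rangle \ge \bar{\rho}_{j,r,i}^{(t)}/2 \ge \alpha/4 = \log(T^*)$, which in turn forces $y_i\cdot f(\mathbf{W}^{(t)},\mathbf{x}^{(i)}) \ge \log(T^*)$ via the CNN decomposition~\eqref{equ:cnndefinition}, whence $|{\ell_i'}^{(t)}| \le 1/T^*$. Summing this over at most $T^*$ iterations gives an additional contribution of at most $\widetilde{O}(1)$, keeping $\bar{\rho}_{j,r,i}^{(t+1)}$ below $\alpha$. Executing this bootstrap argument cleanly while simultaneously carrying the $\gamma$ and $\underline{\rho}$ bounds is the delicate part; everything else reduces to plugging in the high-probability concentration estimates from Lemmas~\ref{Lem:E.1}--\ref{Lem:E.2} and the parameter ranges in Condition~\ref{Con4.1}.
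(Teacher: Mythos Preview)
Your proposal is correct and follows essentially the same inductive/bootstrap approach that the paper itself defers to (Proposition~C.2 and~C.8 in \citet{kou2023benign} or \citet{meng2023benign}); the paper gives no independent argument beyond citing those references and noting that ``obtaining the result through inductive techniques is readily feasible,'' which is precisely what you have sketched. One small phrasing issue: in the first regime of your $\bar{\rho}$ argument you write that the increment ``is $\ll \alpha/2$ over $T^*$ steps,'' but what you actually need (and what Condition~\ref{Con4.1} on $\eta$ guarantees) is that a \emph{single} step increment is $\ll \alpha/2$, so that $\bar{\rho}_{j,r,i}^{(t)} \le \alpha/2$ implies $\bar{\rho}_{j,r,i}^{(t+1)} \le \alpha$; the self-bounding second regime then handles all remaining iterations.
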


\begin{remark}
    Our results resemble those in the study of feature learning of CNN \cite{cao2022benign, kou2023benign, meng2023benign, lu2023benign}. However, the scale of our learning progress coefficient $\gamma_{j, r, l}^{(t)}$ depends on its corresponding feature proportion and strength in the labeled data distribution, which will significantly impact the learning process of specific type of data.
\end{remark}
\textit{Proof of Proposition \ref{prop:E.8}.} See Proposition C.2. and Proposition C.8. in \citet{kou2023benign} or Proposition C.2 and Proposition C.8 in \citet{meng2023benign} for a proof. Regardless of the variations in data settings, obtaining the result through inductive techniques is readily feasible. \par
Based on Proposition \ref{prop:E.8}, we can analyze the convergence of the training dynamics via identifying the degree of feature learning and noise memorization in the following section.
\subsection{Feature Learning and Noise Memorization Analysis}\label{app:feature learning and Noise Memorization Analysis}
In this section, we adopt a two-stage analysis to evaluate the evolution of the coefficients. In the first stage, the loss function's derivative remains nearly constant due to the small weight initialization. However, in the subsequent stage, the derivative of the loss function becomes non-constant, requiring a careful analysis to address this change. We will see that the scale differences in the first stage remain the same. Worth noting that the results in this section are based on the previous results in Appendix \ref{app:Coefficient Ratio and Scale analysis} holding with high probability.
\subsubsection{First Stage: Feature Learning versus Noise Memorization}
\begin{lemma}
There exist
$$
T_1=C_3 \eta^{-1} n m \sigma_p^{-2} d^{-1}, T_2=C_4 \eta^{-1} n m \sigma_p^{-2} d^{-1}
$$
where $C_3=\Theta(1)$ is a large constant and $C_4=\Theta(1)$ is a small constant, such that
\begin{itemize}
    \item $\max _{j, r} \gamma_{j, r, l}^{(t)}=O( \tau_l n \cdot \operatorname{SNR}_l^2)$ , for all $0 \leq t \leq T_1, l\in \{1, 2\}$.
    \item $\min _{j, r} \gamma_{j, r, l}^{(t)}=\Omega( \tau_l n \cdot \operatorname{SNR}_l^2)$, for all $t \geq T_2, l\in \{1, 2\}$.
    \item $\bar{\rho}_{j, r^*, i}^{\left(T_1\right)} \geq 2$, for any $r^* \in S_i^{(0)}=\left\{r \in[m]:\left\langle\mathbf{w}_{y_i, r}^{(0)}, \boldsymbol{\xi}_i\right\rangle>0\right\}, j \in\{ \pm 1\}$ and $i \in[n]$ with $y_i=j$.
    \item $\max _{j, r, i}\left|\underline{\rho}_{j, r, i}^{(t)}\right|=\max \left\{O\left(\sqrt{\log (\dfrac{mn}{\delta})} \cdot \sigma_0 \sigma_p \sqrt{d}\right), O\left(n \sqrt{\log (\dfrac{n}{\delta})} \log \left(T^*\right) / \sqrt{d}\right)\right\}$, for all $0 \leq t \leq T_1$.
    \item $\max _{j, r} \bar{\rho}_{j, r, i}^{\left(T_1\right)}=O(1)$, for all $i \in[n]$.
\end{itemize}
\label{lem:E.10}\end{lemma}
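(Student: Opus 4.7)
The plan is an inductive bootstrap on the window $[0,T_1]$ that exploits a small-signal regime in which $-\ell_i'^{(t)}\in[\Theta(1),1/2]$, so the updates in Lemma~\ref{lem:update of coef} behave as nearly linear in $t$. Concretely, I maintain throughout the interval: (i) the scale bounds of Proposition~\ref{prop:E.8}, (ii) $|f(\mathbf{W}^{(t)},\mathbf{x}^{(i)})|=O(1)$ for every $i$, and (iii) preservation of the initial activation pattern, i.e.\ $\sigma'(\langle \mathbf{w}_{j,r^{*}}^{(t)},\boldsymbol{\xi}_i\rangle)=1$ whenever $r^{*}\in S_i^{(0)}$. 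Under these three hypotheses each per-step increment admits a closed-form lower/upper bound, and the conclusions follow by summing over $t$ and then re-verifying (i)--(iii) with the new bounds.

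For the noise-memorization claim, plugging $\sigma'=1$ and $\|\boldsymbol{\xi}_i\|_2^2=\Theta(\sigma_p^2 d)$ (Lemma~\ref{Lem:E.1}) into the $\bar{\rho}$-recursion yields a per-step increment of $\Theta(\eta\sigma_p^2 d/(nm))$ for $r^{*}\in S_i^{(0)}$ with $y_i=j$. Summing over $t\in[0,T_1]$ gives $\bar{\rho}_{j,r^{*},i}^{(T_1)}=\Theta(C_3)\ge 2$ for $C_3$ large enough, and the companion $\max_{j,r}\bar{\rho}_{j,r,i}^{(T_1)}=O(1)$ follows because no gate opens beyond $S_i^{(0)}$ within phase~I. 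For $\underline{\rho}$, the relevant inner product $\langle\mathbf{w}_{j,r}^{(t)},\boldsymbol{\xi}_i\rangle$ on $y_i=-j$ samples decomposes into the initialization $\langle\mathbf{w}_{j,r}^{(0)},\boldsymbol{\xi}_i\rangle$ plus a cross-sample leakage $\sum_{i'\ne i}\bar{\rho}_{j,r,i'}^{(t)}\langle\boldsymbol{\xi}_{i'},\boldsymbol{\xi}_i\rangle/\|\boldsymbol{\xi}_{i'}\|_2^2$; invoking $|\langle\boldsymbol{\xi}_{i'},\boldsymbol{\xi}_i\rangle|=O(\sigma_p^2\sqrt{d\log(n^2/\delta)})$ from Lemma~\ref{Lem:E.1} controls the leakage at $O(n\sqrt{\log(n/\delta)}\log(T^{*})/\sqrt{d})$ and the initialization term at $O(\sigma_0\sigma_p\sqrt{d\log(mn/\delta)})$, combining to the stated bound.

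For the feature-learning claims, summing the $\gamma$-update over the $\tau_l n$ elements of $U_l$ produces a per-step increment $\Theta(\eta\tau_l\|\boldsymbol{\mu}_l\|_2^2/m)$ whenever $r$ is activated on a constant fraction of $i\in U_l$; a symmetric Gaussian initialization guarantees this at $t=0$ for a constant fraction of neurons, and because each $\gamma_{j,r,l}^{(t)}$ is non-decreasing (the update has sign $-\ell_i'^{(t)}\ge 0$), activation can only spread. Integrating up to $T_1$ gives $\max_{j,r}\gamma_{j,r,l}^{(T_1)}=\Theta(T_1\cdot\eta\tau_l\|\boldsymbol{\mu}_l\|_2^2/m)=\Theta(\tau_l n\cdot\mathrm{SNR}_l^2)$; the matching $\min$ lower bound for $t\ge T_2$ follows by showing every neuron has, by time $T_2$, accumulated at least one activating sample in $U_l$ (which occurs after only $\Theta(C_4)$ increments once the small-initialization shift $\sigma_0\|\boldsymbol{\mu}_l\|_2$ is overcome by the deterministic positive update), after which the same per-step rate applies and monotonicity locks in the bound.

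The main obstacle is the self-consistent verification of $|f(\mathbf{W}^{(t)},\mathbf{x}^{(i)})|=O(1)$ throughout phase~I. Writing $f=m^{-1}\sum_{j,r}j[\sigma(\langle\mathbf{w}_{j,r},y\boldsymbol{\mu}\rangle)+\sigma(\langle\mathbf{w}_{j,r},\boldsymbol{\xi}\rangle)]$, the noise branch is driven by $\bar{\rho}_{j,r,i}^{(t)}=\Theta(1)$ on exactly the initially-activated neurons, so the signed sum over $j\in\{\pm 1\}$ must exhibit an approximate $m^{-1/2}$ cancellation. Establishing this rigorously requires a concentration argument over the symmetric Gaussian initialization combined with the deterministic monotone structure of the $\bar{\rho}$-updates, and is where we invoke the width condition $m=\Omega(\log(n_0/\delta))$ from Condition~\ref{Con4.1}; this is the technical heart of the proof, after which the per-step calculations above close the loop.
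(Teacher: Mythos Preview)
Your plan matches the standard phase-I argument that the paper defers to (it cites Lemma~D.1 of \citet{kou2023benign} and Propositions~D.2--D.4 of \citet{meng2023benign} in lieu of a self-contained proof), so at the level of strategy you are on the right track: bound $-\ell_i'^{(t)}\in[\Theta(1),1/2]$ throughout $[0,T_1]$ and read off linear growth rates for $\bar\rho$ and $\gamma$ from Lemma~\ref{lem:update of coef}.

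There is one concrete misconception in your final paragraph that you should fix. The verification of $|y_i f(\mathbf W^{(t)},\mathbf x^{(i)})|=O(1)$ does \emph{not} rely on any $m^{-1/2}$ cancellation between the $j=+1$ and $j=-1$ branches. By the signal--noise decomposition, $\langle\mathbf w_{y_i,r}^{(t)},\boldsymbol\xi_i\rangle=\bar\rho_{y_i,r,i}^{(t)}+O(\beta)+O\bigl(n\sqrt{\log(n/\delta)/d}\,\alpha\bigr)$, and since each $\bar\rho_{y_i,r,i}^{(t)}=O(1)$ in phase~I, the average $\tfrac{1}{m}\sum_r\sigma(\langle\mathbf w_{y_i,r}^{(t)},\boldsymbol\xi_i\rangle)=O(1)$ termwise; similarly the $j=-y_i$ branch is governed by $\underline\rho_{-y_i,r,i}$ and cross-noise leakage, both of which you already bound as $o(1)$. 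The feature contributions $\gamma_{j,r,l}^{(t)}=O(\tau_l n\cdot\mathrm{SNR}_l^2)$ are small by \eqref{eq:E.5}. So $|f|=O(1)$ follows by direct summation, not concentration; the width condition $m=\Omega(\log(n_0/\delta))$ is used elsewhere (to guarantee $S_i^{(0)}\neq\emptyset$ for every $i$), not here. Once you drop the spurious cancellation step, your bootstrap closes exactly as in the references.
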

\textit{Proof of Lemma \ref{lem:E.10}.} See Lemma D.1. in \citet{kou2023benign} or Lemma D.1, Proposition D.2-D.4 in \citet{meng2023benign} for a proof.
\subsubsection{Second Stage: Convergence of Training Error}
At the end of the first stage, we have the following feature-to-noise decomposition:
$$
\mathbf{w}_{j, r}^{\left(T_1\right)}=\mathbf{w}_{j, r}^{(0)}+j \cdot \sum_{l=1}^{2} \gamma_{j, r, l}^{\left(T_1\right)} \cdot \frac{\boldsymbol{\mu}_l}{\|\boldsymbol{\mu}_l\|_2^2}+\sum_{i=1}^n \bar{\rho}_{j, r, i}^{\left(T_1\right)} \cdot \frac{\boldsymbol{\xi}_i}{\left\|\boldsymbol{\xi}_i\right\|_2^2}+\sum_{i=1}^n \underline{\rho}_{j, r, i}^{\left(T_1\right)} \cdot \frac{\boldsymbol{\xi}_i}{\left\|\boldsymbol{\xi}_i\right\|_2^2}
$$
for $j \in[ \pm 1]$ and $r \in[m]$. Applying the results we obtain in the first stage, we have the following property holds at the beginning of this stage:
\begin{itemize}
    \item $\gamma_{j, r, l}^{\left(T_1\right)}=\tau_l n \cdot \operatorname{SNR}_l^2)$ for any $j \in\{ \pm 1\}, r \in[m]$.
    \item $\bar{\rho}_{j, r^*, i}^{\left(T_1\right)} \geq 2$ for any $r^* \in S_i^{(0)}=\left\{r \in[m]:\left\langle\mathbf{w}_{y_i, r}^{(0)}, \boldsymbol{\xi}_i\right\rangle>0\right\}, j \in\{ \pm 1\}$ and $i \in[n]$ with $y_i=j$.
    \item $\max _{j, r, i}\left|\underline{\rho}_{j, r, i}^{\left(T_1\right)}\right|=\max \left\{O\left(\sqrt{\log (\dfrac{mn}{\delta})} \cdot \sigma_0 \sigma_p \sqrt{d}\right), O\left(n \sqrt{\log (\dfrac{n}{\delta})} \log \left(T^*\right) / \sqrt{d}\right)\right\}$.
\end{itemize}
Following the technique in \citet{cao2022benign}, now we choose $\mathbf{W}^*$ as follows
$$
\mathbf{w}_{j, r}^*=\mathbf{w}_{j, r}^{(0)}+5 \log (\dfrac{2}{\varepsilon}) \left[\sum_{i=1}^n \mathbb{1}\left(j=y_i\right) \cdot \frac{\boldsymbol{\xi}_i}{\left\|\boldsymbol{\xi}_i\right\|_2^2}\right] .
$$

\begin{lemma} 
Under Condition \ref{Con4.1}, we have 
$$\max _{j, r, i}\left|\underline{\rho}_{j, r, i}^{(t)}\right|=\max \left\{O\left(\sqrt{\log (\dfrac{mn}{\delta}}) \cdot \sigma_0 \sigma_p \sqrt{d}\right), O\left(n \sqrt{\log (\dfrac{n}{\delta})} \log \left(T^*\right) / \sqrt{d}\right)\right\},$$ 
for all $T_1 \leq t \leq T^*$. Besides,
$$
\frac{1}{t-T_1+1} \sum_{s=T_1}^t L_S\left(\mathbf{W}^{(s)}\right) \leq \frac{\left\|\mathbf{W}^{\left(T_1\right)}-\mathbf{W}^*\right\|_F^2}{\eta\left(t-T_1+1\right)}+\varepsilon
$$
for all $T_1 \leq t \leq T^*$. Therefore, we can find an iterate with training loss smaller than $2 \varepsilon$ within $T=T_1+\left|\left\|\mathbf{W}^{\left(T_1\right)}-\mathbf{W}^*\right\|_F^2 /(\eta \varepsilon)\right|=T_1+\widetilde{O}\left(\eta^{-1} \varepsilon^{-1} m n d^{-1} \sigma_p^{-2}\right)$ iterations.
\label{lem:E.15}\end{lemma}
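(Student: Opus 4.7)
The plan decomposes into three stages matching the three claims of the lemma. First, I would extend the control on the negative noise coefficient $\underline{\rho}_{j,r,i}^{(t)}$ from $t=T_1$ to all of $[T_1,T^*]$ by a direct induction. Starting from the bound already recorded at $t=T_1$ in Lemma \ref{lem:E.10}, the update rule from Lemma \ref{lem:update of coef} gives that each per-step increment is at most $\eta(nm)^{-1}|\ell_i'^{(t)}|\sigma'(\langle \mathbf{w}_{j,r}^{(t)},\boldsymbol{\xi}_i\rangle)\|\boldsymbol{\xi}_i\|_2^2$. Substituting the signal-noise decomposition of $\mathbf{w}_{j,r}^{(t)}$ and applying Lemmas \ref{Lem:E.1} and \ref{Lem:E.2} to the inner products that appear, the diagonal $\langle \boldsymbol{\xi}_i,\boldsymbol{\xi}_i\rangle$ contribution is absorbed into the $\bar\rho$ direction by the indicator $\mathbb{1}(y_i=-j)$, leaving only off-diagonal terms $\langle \boldsymbol{\xi}_i,\boldsymbol{\xi}_{i'}\rangle$ and initialization terms $\langle \mathbf{w}_{j,r}^{(0)},\boldsymbol{\xi}_i\rangle$ at leading order. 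Summing these across at most $T^*$ iterations together with $|\ell_i'|\leq 1$ closes the induction and yields the two stated rates, with the $\widetilde{O}(n/\sqrt{d})$ piece coming from the $n$ cross-noise correlations and the $\widetilde{O}(\sigma_0\sigma_p\sqrt{d})$ piece from the initialization.

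For the averaged-loss estimate, I would use the standard two-line descent argument: combine the three-point expansion of $\|\mathbf{W}^{(t+1)}-\mathbf{W}^*\|_F^2$ with a convex-surrogate bound on the per-sample loss. Since $\ell$ is convex and monotone decreasing, the goal is to prove
\[
L_S(\mathbf{W}^{(t)}) - L_S(\mathbf{W}^*) \leq \langle \nabla_{\mathbf{W}} L_S(\mathbf{W}^{(t)}),\, \mathbf{W}^{(t)} - \mathbf{W}^*\rangle + O(\varepsilon),
\]
so that the identity $\|\mathbf{W}^{(t+1)}-\mathbf{W}^*\|_F^2 = \|\mathbf{W}^{(t)}-\mathbf{W}^*\|_F^2 - 2\eta\langle \nabla L_S,\mathbf{W}^{(t)}-\mathbf{W}^*\rangle + \eta^2\|\nabla L_S\|_F^2$ rearranges into a loss-to-potential-drop bound. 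Telescoping from $T_1$ to $t$ then produces the claimed averaged inequality, with the $\eta\|\nabla L_S\|_F^2$ slack absorbed into $\varepsilon$ using the learning-rate cap in Condition \ref{Con4.1}.

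For the iteration count I would bound $\|\mathbf{W}^{(T_1)}-\mathbf{W}^*\|_F^2$ directly from the signal-noise decomposition: by construction $\mathbf{W}^*$ modifies $\mathbf{W}^{(0)}$ only by adding $5\log(2/\varepsilon)\boldsymbol{\xi}_i/\|\boldsymbol{\xi}_i\|_2^2$ in the $y_i=j$ directions, while $\mathbf{W}^{(T_1)}$ modifies $\mathbf{W}^{(0)}$ by the $O(1)$-scale feature and noise coefficients guaranteed by Lemma \ref{lem:E.10}. The difference thus decomposes onto $O(n)$ near-orthogonal noise directions each of squared length $\Theta((\sigma_p^2 d)^{-1})$ with coefficients of size $O(\log(1/\varepsilon))$; summing across the $2m$ rows yields $\|\mathbf{W}^{(T_1)}-\mathbf{W}^*\|_F^2 = \widetilde{O}(mn\sigma_p^{-2}d^{-1})$, and dividing by $\eta\varepsilon$ recovers the stated $T-T_1 = \widetilde{O}(\eta^{-1}\varepsilon^{-1}mnd^{-1}\sigma_p^{-2})$. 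Since the average loss over the window is at most $2\varepsilon$, at least one iterate must achieve training loss below $2\varepsilon$.

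The chief obstacle is justifying the convex-surrogate inequality despite the non-convexity of $f$ in $\mathbf{W}$. The standard workaround, which I would adopt, exploits the one-homogeneity of ReLU: for each sample one lower-bounds $y_i f(\mathbf{W}^*,\mathbf{x}^{(i)})$ by an affine surrogate in $\mathbf{W}^*$ whose slope matches $\nabla_{\mathbf{W}} f(\mathbf{W}^{(t)},\mathbf{x}^{(i)})$, using $\sigma(\langle \mathbf{w}^*,z\rangle) \geq \sigma'(\langle \mathbf{w}^{(t)},z\rangle)\langle \mathbf{w}^*,z\rangle$ applied pointwise at $z\in\{y_i\boldsymbol{\mu},\boldsymbol{\xi}_i\}$. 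Verifying that $\mathbf{W}^*$ simultaneously pushes every $y_i f(\mathbf{W}^*,\mathbf{x}^{(i)})$ above $\log(2/\varepsilon)$---so that $L_S(\mathbf{W}^*)\leq \varepsilon$---while controlling the activation-flip slack on the neurons with $\sigma'=0$, is the delicate step, and it leans on the first-stage guarantee from Lemma \ref{lem:E.10} that $\bar\rho_{j,r^*,i}^{(T_1)}\geq 2$ for $r^*\in S_i^{(0)}$.
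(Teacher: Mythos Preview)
Your architecture for parts two and three matches the approach in the references the paper defers to (Lemma~D.5 of \citet{cao2022benign}, Lemma~D.6 of \citet{kou2023benign}): a potential/three-point expansion combined with an approximate-convexity inequality driven by the ReLU one-homogeneity, and a direct Frobenius estimate on $\|\mathbf{W}^{(T_1)}-\mathbf{W}^*\|_F^2$ using the near-orthogonality of the $\boldsymbol{\xi}_i$'s. Your identification of the activation-mismatch slack as the delicate point, and of $\bar\rho_{j,r^*,i}^{(T_1)}\geq 2$ as the ingredient that guarantees at least one firing neuron per sample, is exactly right.

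Part one, however, has a genuine gap. First, the sentence ``the diagonal $\langle\boldsymbol{\xi}_i,\boldsymbol{\xi}_i\rangle$ contribution is absorbed into the $\bar\rho$ direction by the indicator $\mathbb{1}(y_i=-j)$'' is backwards: under $y_i=-j$ the diagonal term in the expansion of $\langle\mathbf{w}_{j,r}^{(t)},\boldsymbol{\xi}_i\rangle$ is precisely $\underline{\rho}_{j,r,i}^{(t)}$ itself, not $\bar\rho$. Second, ``summing these across at most $T^*$ iterations with $|\ell_i'|\leq 1$'' cannot close the induction: the raw per-step increment is $\Theta(\eta\sigma_p^2 d/(nm))$, and over $T^*=\eta^{-1}\mathrm{poly}(\cdot)$ steps this blows up polynomially rather than giving $\widetilde{O}(n/\sqrt{d})$. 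The correct mechanism is \emph{self-limiting}: whenever the update fires one has $\langle\mathbf{w}_{j,r}^{(t)},\boldsymbol{\xi}_i\rangle>0$, and since the diagonal contribution to this inner product is $\underline{\rho}_{j,r,i}^{(t)}\leq 0$, one reads off
\[
\underline{\rho}_{j,r,i}^{(t)} \;>\; -\,\langle\mathbf{w}_{j,r}^{(0)},\boldsymbol{\xi}_i\rangle \;-\; \sum_{i'\neq i}\rho_{j,r,i'}^{(t)}\,\frac{\langle\boldsymbol{\xi}_{i'},\boldsymbol{\xi}_i\rangle}{\|\boldsymbol{\xi}_{i'}\|_2^2} \;-\; \sum_l j\,\gamma_{j,r,l}^{(t)}\,\frac{\langle\boldsymbol{\mu}_l,\boldsymbol{\xi}_i\rangle}{\|\boldsymbol{\mu}_l\|_2^2},
\]
and bounding the right side via Lemmas~\ref{Lem:E.1}--\ref{Lem:E.2} and Proposition~\ref{prop:E.8} (plus one additional step controlled by the $\eta$ condition in Condition~\ref{Con4.1}) gives exactly the two rates you identified. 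So your intuition about \emph{where} the two terms come from is correct; it is the mechanism---ReLU shutting off once $\underline{\rho}$ outweighs the off-diagonal and initialization terms, rather than a time sum---that needs to be fixed.
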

\textit{Proof of Lemma \ref{lem:E.15}.} See Lemma D.5 in \citet{cao2022benign} or Lemma D.6. in \citet{kou2023benign} for a proof.\par
Worth noting that since the $n$ could be $n_0$ or $n_1$ and the $\tau_l$ could be any real number denoting the proportion of specific types of data in the labeled set, we have successfully concluded the proof of training loss convergence for all three querying algorithms. The following lemma characterized the feature-to-noise ratio during the whole duration.
\begin{lemma} Under Condition \ref{Con4.1}, we have
$$
 \sum_{i=1}^n \bar{\rho}_{j, r, i}^{(t)}/ \gamma_{j^{\prime}, r^{\prime}, l}^{(t)} =\Theta\left(\tau_l^{-1}\cdot \operatorname{SNR}_l^{-2} \right)
$$
for all $j, j^{\prime} \in\{ \pm 1\}, r, r^{\prime} \in[m], l \in \{ 1, 2\}$ and $0 \leq t \leq T^*$.
\label{lem:E.16}\end{lemma}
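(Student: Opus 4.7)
The plan is to bound the numerator and denominator separately using Proposition \ref{prop:E.8} and Lemma \ref{lem:E.10}, and then match the resulting scales. Proposition \ref{prop:E.8} already provides the uniform upper bounds $\gamma_{j',r',l}^{(t)} \leq C' \tau_l n \operatorname{SNR}_l^2 \cdot \alpha$ and $\bar{\rho}_{j,r,i}^{(t)} \leq \alpha$ on the entire training window $[0, T^*]$, so $\sum_{i=1}^n \bar{\rho}_{j,r,i}^{(t)} \leq n\alpha$. Since $\alpha = 4\log T^*$ is polylogarithmic, these are the correct upper bounds up to the constants absorbed into $\Theta(\cdot)$.

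For the matching lower bounds, I would first handle the main regime $t \in [T_2, T^*]$. Lemma \ref{lem:E.10} directly gives $\gamma_{j',r',l}^{(t)} \geq \Omega(\tau_l n \operatorname{SNR}_l^2)$. For the sum $\sum_i \bar{\rho}_{j,r,i}^{(t)}$, I would combine two facts from Lemma \ref{lem:E.10}: (i) at time $T_1$, $\bar{\rho}_{j,r,i}^{(T_1)} \geq 2$ for every $i$ with $y_i = j$ and $r \in S_i^{(0)} = \{r : \langle \mathbf{w}_{j,r}^{(0)}, \boldsymbol{\xi}_i\rangle > 0\}$; and (ii) the update rule in Lemma \ref{lem:update of coef} together with $\ell_i'^{(t)} \leq 0$ implies that $\bar{\rho}_{j,r,i}^{(t)}$ is non-decreasing in $t$, so the lower bound of $2$ persists to every $t \geq T_1$.

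The crucial auxiliary step is verifying that, for each fixed $r \in [m]$, the index set $\{i \in [n] : y_i = j,\ r \in S_i^{(0)}\}$ has cardinality $\Omega(n)$ with high probability, which requires a concentration argument. By independence of the initialization $\mathbf{w}_{j,r}^{(0)}$ from $(y_i, \boldsymbol{\xi}_i)$, the event $\{\langle \mathbf{w}_{j,r}^{(0)}, \boldsymbol{\xi}_i\rangle > 0\}$ conditional on $\mathbf{w}_{j,r}^{(0)}$ has probability $1/2$, and the labels are Rademacher, so a Hoeffding bound plus a union bound over $r \in [m]$ (affordable because $m = \Omega(\log(n_0/\delta))$ in Condition \ref{Con4.1}) yields $|\{i : y_i = j,\ r \in S_i^{(0)}\}| = \Omega(n)$ simultaneously for all $j,r$. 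Hence $\sum_i \bar{\rho}_{j,r,i}^{(t)} \geq \Omega(n)$ for all $t \geq T_1$, and the ratio is $\Theta(n)/\Theta(\tau_l n \operatorname{SNR}_l^2) = \Theta(\tau_l^{-1}\operatorname{SNR}_l^{-2})$ with the $\alpha$ factor absorbed as is customary in this line of work.

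The remaining early phase $0 \leq t < T_1$ is handled by reading off the scaling directly from the coupled updates in Lemma \ref{lem:update of coef}: both $\gamma_{j',r',l}^{(t+1)} - \gamma_{j',r',l}^{(t)}$ and $\sum_{i : y_i = j}(\bar{\rho}_{j,r,i}^{(t+1)} - \bar{\rho}_{j,r,i}^{(t)})$ are driven by the common factor $-\tfrac{\eta}{nm}\ell_i'^{(t)}\sigma'(\cdot)$, scaled by $\|\boldsymbol{\mu}_l\|_2^2$ (summed over $|U_l| = \Theta(\tau_l n)$ indices) versus $\|\boldsymbol{\xi}_i\|_2^2 = \Theta(\sigma_p^2 d)$ (summed over $\Theta(n)$ indices) respectively, giving a ratio of increments of order $\sigma_p^2 d / (\tau_l \|\boldsymbol{\mu}_l\|_2^2) = \tau_l^{-1}\operatorname{SNR}_l^{-2}$ throughout. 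I expect the main obstacle to be the uniformity-in-$r$ concentration step, since $\bar{\rho}_{j,r,i}^{(T_1)} \geq 2$ is only asserted for $r \in S_i^{(0)}$; once that uniform $\Omega(n)$ lower bound is established via the union bound above, the rest is routine arithmetic.
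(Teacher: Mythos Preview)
The paper itself does not give a self-contained argument here; it merely points to Lemma~D.7 in \citet{kou2023benign} and Proposition~C.8 in \citet{meng2023benign}. Your proposal therefore has to be compared against what those references actually do, and that is where a gap shows up.

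Your handling of the main regime $[T_1,T^*]$ combines separate upper/lower bounds: $\sum_i \bar\rho_{j,r,i}^{(t)}\in[\Omega(n),\,n\alpha]$ and $\gamma_{j',r',l}^{(t)}\in[\Omega(\tau_l n\operatorname{SNR}_l^2),\,C'\tau_l n\operatorname{SNR}_l^2\alpha]$. Dividing these only pins the ratio between $\Omega(\tau_l^{-1}\operatorname{SNR}_l^{-2}/\alpha)$ and $O(\tau_l^{-1}\operatorname{SNR}_l^{-2}\cdot\alpha)$, which is $\widetilde\Theta$, not $\Theta$. The sentence ``with the $\alpha$ factor absorbed as is customary'' is precisely the step that does not hold: the paper distinguishes the tight first bullet of Lemma~\ref{app:lem:final coefficient} (which \emph{is} this lemma) from the $\widetilde\Theta$ second bullet, and the downstream test-error computation in Lemma~\ref{app:lemma for thm} relies on the ratio being genuinely $\Theta$ with no polylog slack.

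What the cited proofs actually use is the ingredient you are missing: an \emph{automatic balance} lemma showing that all per-sample loss derivatives $|\ell_i'^{(t)}|$ stay within constant multiplicative factors of one another for every $t\in[0,T^*]$. Once that is in hand, the ratio-of-increments argument you sketch for the early phase works for the \emph{entire} training window, not just while $\ell_i'\approx -1/2$. At every step the increment of $\sum_i\bar\rho_{j,r,i}$ and the increment of $\gamma_{j',r',l}$ are each $\Theta\bigl(\tfrac{\eta}{nm}\,|\ell'^{(t)}|\bigr)$ times, respectively, $\Theta(n)\cdot\Theta(\sigma_p^2 d)$ and $\Theta(\tau_l n)\cdot\|\boldsymbol\mu_l\|_2^2$ (after counting active ReLU gates, which is where your concentration step and Lemma~\ref{app:lemma_wt} enter). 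Their ratio is therefore $\Theta(\tau_l^{-1}\operatorname{SNR}_l^{-2})$ at every step, and the same holds for the cumulative sums with no accruing log factor. A minor side remark: your phrase ``common factor $-\tfrac{\eta}{nm}\ell_i'^{(t)}\sigma'(\cdot)$'' is not literally correct, since the $\ell_i'$ and the ReLU gates differ across the two index sets $U_l$ and $\{i:y_i=j\}$; the balanced-loss lemma is exactly what lets you treat them as comparable.
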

\textit{Proof of Lemma \ref{lem:E.16}.} See Lemma D.7. in \citet{kou2023benign} or Proposition C.8 in \citet{meng2023benign} for a proof.\par
Now we can summarize current results into the following lemma.

\begin{lemma} (Formal restatement of Lemma \ref{lem:final coefficient})
Under Condition \ref{Con4.1}, there exists $T_1=\Theta(\eta^{-1}nm\sigma_p^2d^{-1})$, for $t \in\left[T_1, T^*\right]$ we have the following hold:
\begin{itemize}
    \item $\gamma_{j, r, l}^{(t)}=\Theta\left( \dfrac{\tau_l\|\boldsymbol{\mu}_l \|_2^2}{d \sigma_p^2} \right) \sum_{i=1}^n \bar{\rho}_{j, r, i}^{(t)}$, for all $j \in\{ \pm 1\}, r \in[m]$ and $l \in \{1, 2\}$ (from Lemma \ref{lem:E.16}).
    \item $\sum_{i=1}^n \bar{\rho}_{j, r, i}^{(t)}=\Omega(n) = O (n \log (T^*)) = \widetilde{\Theta}(n)$,  for all $j \in\{ \pm 1\}, r \in[m]$ and $l \in \{1, 2\}$ (from Proposition \ref{prop:E.8} and Lemma \ref{lem:E.10}).
    \item $\max_{j,r,i}\lvert \underline{\rho}_{j, r, i}^{(t)} \rvert = \max \{ O(\sigma_0 \sigma_p \sqrt{d}\cdot\sqrt{ \log (\dfrac{mn}{\delta})}),O(\sqrt{\log(\dfrac{n}{\delta})}\log(T^*) \cdot n / \sqrt{d}) \}$, for all $j \in\{ \pm 1\}, r \in[m]$ and $l \in \{1, 2\}$ (from Lemma \ref{lem:E.15}).
\end{itemize}
\label{app:lem:final coefficient}\end{lemma}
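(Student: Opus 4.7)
The plan is to prove the two bullets of Lemma \ref{lem:final coefficient} as a direct consolidation of the two-stage analysis developed in Lemmas \ref{lem:E.10}, \ref{lem:E.15}, \ref{lem:E.16}, and Proposition \ref{prop:E.8}. The signal-noise decomposition in (\ref{eq:firstsnd}) lets us track feature learning via $\gamma_{j,r,l}^{(t)}$ and noise memorization via $\bar{\rho}_{j,r,i}^{(t)}$ separately, and the update rules in Lemma \ref{lem:update of coef} show that both grow monotonically while the ReLU activations remain active. The strategy is therefore: first nail down $\sum_i \bar{\rho}_{j,r,i}^{(t)} = \Omega(n)$ from the end of stage one, then propagate this into a matching lower bound on $\gamma_{j,r,l}^{(t)}$ using the coefficient ratio in Lemma \ref{lem:E.16}, and finally pair it with the upper bound from Proposition \ref{prop:E.8} to obtain the $\Theta$ scaling.

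First, for the lower bound on $\sum_{i=1}^n \bar{\rho}_{j,r,i}^{(t)}$, I would fix $j\in\{\pm 1\}$ and $r\in[m]$ and count the number of training indices $i$ for which the third bullet of Lemma \ref{lem:E.10} fires, i.e., $y_i=j$ and $r\in S_i^{(0)}=\{r:\langle\mathbf{w}_{y_i,r}^{(0)},\boldsymbol{\xi}_i\rangle>0\}$. By the Rademacher label and Gaussian-initialization symmetry of $\mathbf{w}_{j,r}^{(0)}$, each $i$ satisfies both conditions independently with probability $\tfrac{1}{4}$, so a standard Hoeffding bound (combined with the condition $n=\Omega(\log(m/\delta))$ in Condition \ref{Con4.1}) gives at least $\Theta(n)$ such indices uniformly over $j,r$ with high probability. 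For each of these indices, $\bar{\rho}_{j,r,i}^{(T_1)}\geq 2$, and since $\bar{\rho}_{j,r,i}^{(t)}$ is non-decreasing in $t$ by its update rule in Lemma \ref{lem:update of coef}, the $\Omega(n)$ lower bound persists for every $t\in[T_1,T^*]$. The matching $\widetilde{O}(n)$ upper bound on $\sum_i \bar{\rho}_{j,r,i}^{(t)}$ follows from the second bound in Proposition \ref{prop:E.8}, $\bar{\rho}_{j,r,i}^{(t)}\leq\alpha=4\log(T^*)$.

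Second, for the scale of $\gamma_{j,r,l}^{(t)}$, I would directly invoke Lemma \ref{lem:E.16}, which asserts $\sum_i \bar{\rho}_{j,r,i}^{(t)}/\gamma_{j',r',l}^{(t)}=\Theta(\tau_l^{-1}\operatorname{SNR}_l^{-2})$ for any $j,j',r,r'$, with $\operatorname{SNR}_l^2=\|\boldsymbol{\mu}_l\|_2^2/(\sigma_p^2 d)$. Rearranging and plugging in the $\Omega(n)$ lower bound just established gives $\gamma_{j,r,l}^{(t)}=\Omega(\tau_l n \cdot \sigma_p^{-2}d^{-1}\|\boldsymbol{\mu}_l\|_2^2)$; the matching upper bound $\gamma_{j,r,l}^{(t)}\leq C'\tau_l n\operatorname{SNR}_l^2\alpha$ comes directly from Proposition \ref{prop:E.8}. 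Absorbing the $\alpha=\Theta(\log T^*)$ factor into the $\Theta(\cdot)$ notation yields the claimed scaling.

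The main obstacle here is conceptual rather than computational: we need Lemma \ref{lem:E.16} — which is the key ``ratio invariant'' — to hold uniformly over $t\in[T_1,T^*]$, and this in turn relies on the automatic-balance mechanism that only works in the overparameterized small-SNR regime enforced by Condition \ref{Con4.1}. A subtle point is that the $\Omega(n)$ bound and the $\Theta$-ratio must be upgraded from the end of stage one to the entire stage two; this is why Lemma \ref{lem:E.15}'s control $\max_{j,r,i}|\underline{\rho}_{j,r,i}^{(t)}|=o(1)$ is invoked — it prevents the negative coefficients from canceling the positive growth of $\bar{\rho}$ in the summation, so the ratio identity transfers cleanly across the two stages. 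Everything else is bookkeeping.
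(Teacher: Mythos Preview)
Your proposal is correct and follows the same route as the paper: this lemma is explicitly a consolidation statement, and the paper's entire ``proof'' is the parenthetical citations already embedded in the statement (from Lemma \ref{lem:E.16}, Proposition \ref{prop:E.8} and Lemma \ref{lem:E.10}, and Lemma \ref{lem:E.15}). Your fleshing-out of the $\Omega(n)$ count via the $\Theta(n)$ indices with $y_i=j$ and $r\in S_i^{(0)}$ is exactly the mechanism behind Lemma \ref{lem:E.10}'s third bullet; one minor clarification is that Lemma \ref{lem:E.16} already asserts the ratio for all $0\le t\le T^*$, so no ``transfer'' argument is needed---Lemma \ref{lem:E.15} is cited simply to provide the $|\underline{\rho}|$ bound in the second stage directly (the third bullet), not to rescue the ratio.
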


\begin{lemma}
Under Condition \ref{Con4.1}, there exists $t=\widetilde{O}\left(\eta^{-1} \varepsilon^{-1} m n d^{-1} \sigma_p^{-2}\right)$,  we have:
\begin{equation}
\begin{aligned}
& \left\|\mathbf{w}_{j, r}^{(t)}\right\|_2 \leq \Theta\left(\sigma_p^{-1} d^{-\frac{1}{2}} n^{\frac{1}{2}}\right), \\
& \left\langle \mathbf{w}_{y, r}^{(t)}, y \boldsymbol{\mu}_l\right\rangle=\Theta\left( \gamma_{y, r, l}^{(t)} \right), \\
&\left\langle \mathbf{w}_{-y, r}^{(t)}, y \boldsymbol{\mu}_l\right\rangle=-\Theta\left(\gamma_{-y, r, l}^{(t)}\right)<0.
\end{aligned}
\label{Eq_wt}\end{equation}
for all $j \in\{ \pm 1\}, r \in[m]$ and $l \in \{1, 2\}$.
\label{app:lemma_wt}\end{lemma}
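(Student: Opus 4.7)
The plan is to prove each of the three conclusions by plugging the signal-noise decomposition \eqref{eq:firstsnd} into the quantity in question and then applying the concentration bounds from Lemmas~\ref{Lem:E.1}--\ref{Lem:E.2} together with the coefficient scales from Proposition~\ref{prop:E.8} and Lemma~\ref{app:lem:final coefficient}. I will focus on the window $t \in [T_2, T^*]$, where $T_2 = \Theta(\eta^{-1} n m \sigma_p^{-2} d^{-1})$ is the time after which Lemma~\ref{lem:E.10} guarantees $\min_{j,r}\gamma_{j,r,l}^{(t)} = \Omega(\tau_l n \cdot \mathrm{SNR}_l^2)$; the iteration stated in the lemma lies in this range.

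For the norm bound, I would expand $\|\mathbf{w}_{j,r}^{(t)}\|_2^2$ as a sum of squares plus cross terms using \eqref{eq:firstsnd}. The diagonal contributions are $\|\mathbf{w}_{j,r}^{(0)}\|_2^2 = \Theta(\sigma_0^2 d)$ (Lemma~\ref{Lem:E.2}), $\sum_l (\gamma_{j,r,l}^{(t)})^2/\|\boldsymbol{\mu}_l\|_2^2$, and $\sum_i (\rho_{j,r,i}^{(t)})^2/\|\boldsymbol{\xi}_i\|_2^2 = \widetilde{O}(n/(\sigma_p^2 d))$, where the last estimate uses $|\rho_{j,r,i}^{(t)}| \leq \alpha = \Theta(\log T^*)$ from Proposition~\ref{prop:E.8} together with $\|\boldsymbol{\xi}_i\|_2^2 \geq \sigma_p^2 d / 2$. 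The signal cross-terms vanish by $\boldsymbol{\mu}_1 \perp \boldsymbol{\mu}_2$, while $\langle\boldsymbol{\xi}_i,\boldsymbol{\xi}_{i'}\rangle$ and $\langle\boldsymbol{\mu}_l,\boldsymbol{\xi}_i\rangle$ cross-terms are controlled by Lemma~\ref{Lem:E.1}; combined with the coefficient bounds they are of smaller order than the noise-memorization diagonal block. Under Condition~\ref{Con4.1} the noise block dominates, yielding the claimed $\widetilde{O}(\sigma_p^{-1} d^{-1/2} n^{1/2})$ bound.

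For the inner products with features I would decompose $\langle \mathbf{w}_{j,r}^{(t)}, y\boldsymbol{\mu}_l\rangle$ into three pieces: the initial contribution $\langle \mathbf{w}_{j,r}^{(0)}, y\boldsymbol{\mu}_l\rangle = O(\sigma_0\|\boldsymbol{\mu}_l\|_2\sqrt{\log(m/\delta)})$ (Lemma~\ref{Lem:E.2}); the pure signal term $jy\,\gamma_{j,r,l}^{(t)}$ (the other signal term vanishes by orthogonality of features); and the noise-to-signal coupling $\sum_i \rho_{j,r,i}^{(t)}\langle\boldsymbol{\xi}_i,y\boldsymbol{\mu}_l\rangle/\|\boldsymbol{\xi}_i\|_2^2 = \widetilde{O}(n\|\boldsymbol{\mu}_l\|_2/(\sigma_p d^{1/2}))$, obtained by combining $|\rho_{j,r,i}^{(t)}| \leq \alpha$ with the $|\langle\boldsymbol{\xi}_i,\boldsymbol{\mu}_l\rangle|$ estimate in Lemma~\ref{Lem:E.1}. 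The remaining task is to verify that both error terms are of smaller order than the signal scale $\gamma_{j,r,l}^{(t)} = \widetilde{\Omega}(\tau_l n\|\boldsymbol{\mu}_l\|_2^2/(\sigma_p^2 d))$ from Lemma~\ref{app:lem:final coefficient}; this reduces to the restrictions $\sigma_0 = O(\|\boldsymbol{\mu}_l\|_2^{-1}(\log(m/\delta))^{-1/2})$ and $d = \Omega(\widetilde{n}\sigma_p^{-2}\|\boldsymbol{\mu}_l\|_2^2 \log T^*)$ built into Condition~\ref{Con4.1}. The sign statement $\langle \mathbf{w}_{-y,r}^{(t)}, y\boldsymbol{\mu}_l\rangle < 0$ then follows because the dominant contribution is $-\gamma_{-y,r,l}^{(t)} < 0$, with the lower bound on $\gamma_{-y,r,l}^{(t)}$ supplied by the second bullet of Lemma~\ref{lem:E.10}.

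I expect the main obstacle to be bookkeeping rather than conceptual: carefully tracking how the factors of $\alpha = \Theta(\log T^*)$, $\mathrm{SNR}_l$, and $\tau_l$ entering each error term compare against the signal scale so that every estimate is verifiably dominated under Condition~\ref{Con4.1}. The tightest of these estimates is the noise-to-signal cross term, whose suppression is precisely what the dimension requirement $d = \Omega(\widetilde{n}\sigma_p^{-2}\|\boldsymbol{\mu}_l\|_2^2 \log T^*)$ is designed to guarantee; once that case is handled, the remaining bounds follow with comparatively more slack.
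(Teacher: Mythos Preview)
Your strategy matches the paper's: expand via the signal-noise decomposition, control the initialisation and noise-to-signal cross contributions with Lemmas~\ref{Lem:E.1}--\ref{Lem:E.2}, and compare against $\gamma_{j,r,l}^{(t)}$ from Lemma~\ref{app:lem:final coefficient}. For the norm, the paper isolates $\bigl\|\sum_i\rho_{j,r,i}^{(t)}\boldsymbol{\xi}_i/\|\boldsymbol{\xi}_i\|_2^2\bigr\|_2^2$ first and then applies the triangle inequality, but that is equivalent to your direct square expansion.

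Two computational slips should be fixed. The noise-to-signal coupling is $\widetilde{O}\bigl(n\|\boldsymbol{\mu}_l\|_2/(\sigma_p d)\bigr)$, not $d^{1/2}$ in the denominator: each summand carries $|\langle\boldsymbol{\xi}_i,\boldsymbol{\mu}_l\rangle|/\|\boldsymbol{\xi}_i\|_2^2 = \widetilde{O}\bigl(\|\boldsymbol{\mu}_l\|_2/(\sigma_p d)\bigr)$ since $\|\boldsymbol{\xi}_i\|_2^2 = \Theta(\sigma_p^2 d)$ while $|\langle\boldsymbol{\xi}_i,\boldsymbol{\mu}_l\rangle|$ is only $\widetilde{O}(\|\boldsymbol{\mu}_l\|_2\sigma_p)$. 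With the correct power the cross-to-signal ratio collapses to $\widetilde{O}(\sigma_p/\|\boldsymbol{\mu}_l\|_2)$, which is controlled by the feature-norm assumption $\|\boldsymbol{\mu}_l\|_2^2 = \Omega(\sigma_p^2\log(n/\delta))$ of Definition~\ref{Def3.1} rather than the dimension condition you cite; the paper obtains this directly by substituting the ratio $\sum_i\bar{\rho}_{j,r,i}^{(t)}/\gamma_{j,r,l}^{(t)} = \Theta(\tau_l^{-1}\mathrm{SNR}_l^{-2})$ from Lemma~\ref{lem:E.16}. For the initialisation term the paper invokes the second bound $\sigma_0 = O(\sigma_p^{-1}d^{-1}\widetilde{n}^{1/2})$ in Condition~\ref{Con4.1}; the first bound you cite gives only $|\langle\mathbf{w}_{j,r}^{(0)},\boldsymbol{\mu}_l\rangle| = O(1)$, which is not dominated by $\gamma_{j,r,l}^{(t)}$ when that coefficient is itself $o(1)$ (as occurs for the weak feature).
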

\textit{Proof of Lemma \ref{app:lemma_wt}.} Recall the signal-noise decomposition of $\mathbf{w}_{j, r}^{(t)}$:
$$
\mathbf{w}_{j, r}^{\left(t\right)}=\mathbf{w}_{j, r}^{(0)}+j \cdot \sum_{l=1}^{2} \gamma_{j, r, l}^{\left(t\right)} \cdot \frac{\boldsymbol{\mu}_l}{\|\boldsymbol{\mu}_l\|_2^2}+\sum_{i=1}^n \bar{\rho}_{j, r, i}^{\left(t\right)} \cdot \frac{\boldsymbol{\xi}_i}{\left\|\boldsymbol{\xi}_i\right\|_2^2}+\sum_{i=1}^n \underline{\rho}_{j, r, i}^{\left(t\right)} \cdot \frac{\boldsymbol{\xi}_i}{\left\|\boldsymbol{\xi}_i\right\|_2^2}.
$$
For $l \in \{1, 2\}$, we can bound the inner product with $j=y$:
\begin{equation}
\begin{aligned}
\left\langle\mathbf{w}_{y, r}^{(t)}, y \boldsymbol{\mu}_l\right\rangle = & \left\langle\mathbf{w}_{y, r}^{(0)}, y \boldsymbol{\mu}_l\right\rangle+\gamma_{y, r, l}^{(t)}+\sum_{i=1}^n \bar{\rho}_{y, r, i}^{(t)} \cdot\left\|\boldsymbol{\xi}_i\right\|_2^{-2} \cdot\left\langle\boldsymbol{\xi}_i, y \boldsymbol{\mu}_l\right\rangle+\sum_{i=1}^n \underline{\rho}_{y, r, i}^{(t)} \cdot\left\|\boldsymbol{\xi}_i\right\|_2^{-2} \cdot\left\langle\boldsymbol{\xi}_i, y \boldsymbol{\mu}_l\right\rangle \\
\geq & \gamma_{y, r, l}^{(t)}-\sqrt{2 \log (\dfrac{16m}{\delta}}) \cdot \sigma_0\|\boldsymbol{\mu}_l\|_2 -\sqrt{2 \log (\dfrac{12n}{\delta})} \cdot \sigma_p\|\boldsymbol{\mu}_l\|_2 \cdot\left(\dfrac{\sigma_p^2 d}{2}\right)^{-1}\left[\sum_{i=1}^n \bar{\rho}_{y, r, i}^{(t)}+\sum_{i=1}^n \mid \underline{\rho}_{y, r, i}^{(t)}\right] \\
= & \gamma_{y, r, l}^{(t)}-\Theta\left(\sqrt{\log (\dfrac{m}{\delta})} \sigma_0\|\boldsymbol{\mu}_l\|_2\right)-\Theta\left(\sqrt{\log (\dfrac{n}{\delta})} \cdot\left(\sigma_p d\right)^{-1}\|\boldsymbol{\mu}_l\|_2\right) \cdot \Theta\left(\operatorname{SNR}_l^{-2}\right) \cdot \gamma_{y, r, l}^{(t)} \\
= & {\left[1-\Theta\left(\sqrt{\log (\dfrac{n}{\delta})} \cdot \sigma_p /\|\boldsymbol{\mu}_l\|_2\right)\right] \gamma_{y, r, l}^{(t)}-\Theta\left(\sqrt{\log (\dfrac{m}{\delta})}\left(\sigma_p d\right)^{-1} \sqrt{n}\|\boldsymbol{\mu}_l\|_2\right) } \\
= & \Theta\left(\gamma_{y, r, l}^{(t)}\right),
\end{aligned}
\end{equation}
where the inequality is justified by Lemma \ref{Lem:E.1} and Lemma \ref{Lem:E.2}. The second equality is obtained by substituting the coefficient scales in \ref{app:lem:final coefficient}. The third equality follows from the condition $\sigma_0 \leq C^{-1}\left(\sigma_p d\right)^{-1} \sqrt{n}$ in Condition \ref{Con4.1} and the feature-to-noise ratio $\operatorname{SNR}_l = \dfrac{\|\boldsymbol{\mu}_l\|_2}{\sigma_p \sqrt{d}}$. For the fourth equality, it should be noted that $\gamma_{j, r, l}^{(t)}=\Omega(\tau_l n \cdot \operatorname{SNR}_l^2)$, and also $\sqrt{\log (\dfrac{n}{\delta})} \cdot \dfrac{\sigma_p}{\|\boldsymbol{\mu}_l\|_2} \leq 1 / \sqrt{C}$ and $\sqrt{\log (\dfrac{m}{\delta})}\left(\sigma_p d\right)^{-1} \dfrac{\sqrt{n}\|\boldsymbol{\mu}_l\|_2 }{\tau_l n \cdot \operatorname{SNR}_l^2} =$ $\sqrt{\log (\dfrac{m}{\delta})} \dfrac{\sigma_p}{\tau_l \sqrt{n}\|\boldsymbol{\mu}_l\|_2} \leq \sqrt{\log (\dfrac{m}{\delta}) / n} \cdot 1 /(\sqrt{C \log (\dfrac{n}{\delta})}) \leq 1 /(C \sqrt{\log (\dfrac{n}{\delta})})$, which holds due to $\|\boldsymbol{\mu}_l\|_2^2 \geq C \cdot \sigma_p^2 \log (\dfrac{n}{\delta})$ and $n \geq C \log (\dfrac{m}{\delta})$ in Condition \ref{Con4.1}. Therefore, for a sufficiently large constant $C$, the equality holds. Moreover, we can deduce in a similar manner that
\begin{equation}
\begin{aligned}
\left\langle\mathbf{w}_{-y, r}^{(t)}, y \boldsymbol{\mu}_l\right\rangle= & \left\langle\mathbf{w}_{-y, r}^{(0)}, y \boldsymbol{\mu}_l\right\rangle-\gamma_{-y, r, l}^{(t)}+\sum_{i=1}^n \bar{\rho}_{-y, r, i}^{(t)} \cdot\left\|\boldsymbol{\xi}_i\right\|_2^{-2} \cdot\left\langle\boldsymbol{\xi}_i,-y \boldsymbol{\mu}_l\right\rangle+\sum_{i=1}^n \underline{\rho}_{-y, r, i}^{(t)} \cdot\left\|\boldsymbol{\xi}_i\right\|_2^{-2} \cdot\left\langle\boldsymbol{\xi}_i, y \boldsymbol{\mu}_l\right\rangle \\
\leq & -\gamma_{-y, r, l}^{(t)}+\sqrt{2 \log (\dfrac{16m}{\delta}}) \cdot \sigma_0\|\boldsymbol{\mu}_l\|_2 +\sqrt{2 \log (\dfrac{12n}{\delta})} \cdot \sigma_p\|\boldsymbol{\mu}_l\|_2 \cdot(\dfrac{\sigma_p^2 d}{2})^{-1}[\sum_{i=1}^n \bar{\rho}_{-y, r, i}^{(t)}+\sum_{i=1}^n\lvert\underline{\rho}_{-y, r, i}^{(t)}\rvert] \\
= & -\Theta\left(\gamma_{-y, r, l}^{(t)}\right)<0 .
\end{aligned}
\end{equation}
Next, we seek to upper bound $\| \mathbf{w}_{j, r}^{(t)} \|_2$. The techniques are similar to Proposition D.5 in \citet{meng2023benign}. We first tackle the noise term in the decomposition, namely:
\begin{equation}
\begin{aligned}
& \left\|\sum_{i=1}^n \rho_{j, r, i}^{(t)} \cdot \dfrac{\boldsymbol{\xi}_i}{\|\boldsymbol{\xi}_i\|_2^{2}} \right\|_2^2 \\
= & \sum_{i=1}^n \rho_{j, r, i}^{(t)} \cdot \|\boldsymbol{\xi}_i\|_2^{-2}+2 \sum_{1 \leq i_1<i_2 \leq n} \rho_{j, r, i_1}^{(t)} \rho_{j, r, i_2}^{(t)} \cdot \dfrac{\left\langle\boldsymbol{\xi}_{i_1}, \boldsymbol{\xi}_{i_2}\right\rangle}{\| \boldsymbol{\xi}_{i_1}\|_2^{2} \cdot \|\boldsymbol{\xi}_{i_2}\|_2^{2}} \\
\leq & 4 \sigma_p^{-2} d^{-1} \sum_{i=1}^n \rho_{j, r, i}^{(t)}{ }^2+2 \sum_{1 \leq i_1<i_2 \leq n} \rho_{j, r, i_1}^{(t)} \rho_{j, r, i_2}^{(t)} \cdot\left(16 \sigma_p^{-4} d^{-2}\right) \cdot\left(2 \sigma_p^2 \sqrt{d \log \left(\dfrac{6 n^2}{\delta}\right)}\right) \\
= & 4 \sigma_p^{-2} d^{-1} \sum_{i=1}^n \rho_{j, r, i}^{(t)}{ }^2+32 \sigma_p^{-2} d^{-3 / 2} \sqrt{\log \left(\dfrac{6 n^2}{\delta}\right)}\left[\left(\sum_{i=1}^n \rho_{j, r, i}^{(t)}\right)^2-\sum_{i=1}^n \rho_{j, r, i}^{(t)}{ }^2\right] \\
= & \Theta\left(\sigma_p^{-2} d^{-1}\right) \sum_{i=1}^n \rho_{j, r, i}^{(t)}+\widetilde{\Theta}\left(\sigma_p^{-2} d^{-3 / 2}\right)\left(\sum_{i=1}^n \rho_{j, r, i}^{(t)}\right)^2 \\
\leq & {\left[\Theta\left(\sigma_p^{-2} d^{-1} n^{-1}\right)+\widetilde{\Theta}\left(\sigma_p^{-2} d^{-3 / 2}\right)\right]\left(\sum_{i=1}^n \bar{\rho}_{j, r, i}^{(t)}+\sum_{i=1}^n \rho_{j, r, i}^{(t)}\right)^2 } \\
= & \Theta\left(\sigma_p^{-2} d^{-1} n^{-1}\right)\left(\sum_{i=1}^n \bar{\rho}_{j, r, i}^{(t)}\right)^2,
\end{aligned}
\label{eq:11}\end{equation}
where the first inequality is by Lemma \ref{Lem:E.1}; the second inequality is by the Cauchy Schwartz Inequality on $(\sum_{i=1}^n \rho_{j, r, i}^{(t)})^2$. We can then upper bound the $\| \mathbf{w}_{j, r}^{(t)} \|_2$ as:
\begin{equation}
\begin{aligned}
\|\mathbf{w}_{j, r}^{(t)}\|_2 & \leq\left\|\mathbf{w}_{j, r}^{(0)}\right\|_2+\sum_{l=1}^2\dfrac{\gamma_{j, r, l}^{(t)}}{\|\boldsymbol{\mu}_l\|_2}+\left\|\sum_{i=1}^n \rho_{j, r, i}^{(t)} \cdot \dfrac{\boldsymbol{\xi}_i}{\|\boldsymbol{\xi}_i\|_2^{2}} \right\|_2 \\
& \leq\left\|\mathbf{w}_{j, r}^{(0)}\right\|_2+\sum_{l=1}^2\dfrac{\gamma_{j, r, l}^{(t)}}{\|\boldsymbol{\mu}_l\|_2}+\Theta\left(\sigma_p^{-1} d^{-1 / 2} n^{-1 / 2}\right) \cdot \sum_{i=1}^n \bar{\rho}_{j, r, i}^{(t)} \\
& =\Theta\left(\sigma_p^{-1} d^{-1 / 2} n^{-1 / 2}\right) \cdot \sum_{i=1}^n \bar{\rho}_{j, r, i}^{(t)},
\end{aligned}
\label{eq:12}\end{equation}
where the first inequality is due to the triangle inequality, the second inequality is by (\ref{eq:11}), and the third equality is due to the following comparisons:
$$
\frac{\dfrac{\gamma_{j, r, l}^{(t)}}{\|\boldsymbol{\mu}_l\|_2}}{\Theta\left(\sigma_p^{-1} d^{-1 / 2} n^{-1 / 2}\right) \cdot \sum_{i=1}^n \bar{\rho}_{j, r, i}^{(t)}}=\Theta\left(\sigma_p d^{1 / 2} n^{1 / 2}\|\boldsymbol{\mu}_l\|_2^{-1} \operatorname{SNR}_l^2\right)=\Theta\left(\sigma_p^{-1} d^{-1 / 2} n^{1 / 2}\|\boldsymbol{\mu}_l\|_2\right)=O(1),
$$
which is by the coefficient scales in Lemma \ref{app:lem:final coefficient}, the coefficient order $\dfrac{\sum_{i=1}^n \bar{\rho}_{j, r, i}^{(t)}}{ \gamma_{j, r, l}^{(t)}}=\Theta\left(\operatorname{SNR}_l^{-2}\right)$, and the $d$ condition in Condition \ref{Con4.1}; and also we have:
$$
\frac{\left\|\mathbf{w}_{j, r}^{(0)}\right\|_2}{\Theta\left(\sigma_p^{-1} d^{-1 / 2} n^{-1 / 2}\right) \cdot \sum_{i=1}^n \bar{\rho}_{j, r, i}^{(t)}}=\frac{\Theta\left(\sigma_0 \sqrt{d}\right)}{\Theta\left(\sigma_p^{-1} d^{-1 / 2} n^{-1 / 2}\right) \cdot \sum_{i=1}^n \bar{\rho}_{j, r, i}^{(t)}}=O\left(\sigma_0 \sigma_p d n^{-1 / 2}\right)=O(1),
$$
which is by the coefficient scales in Lemma \ref{app:lem:final coefficient}, and the condition for $\sigma_0$ in Condition \ref{Con4.1}. Apply the coefficient order $\sum_{i=1}^n \bar{\rho}_{j, r, i}^{(t)}=\Omega(n)$ to (\ref{eq:12}), we directly have $\left\|\mathbf{w}_{j, r}^{(t)}\right\|_2 \leq \Theta\left(\sigma_p^{-1} d^{-\frac{1}{2}} n^{\frac{1}{2}}\right)$.
\subsection{Order-dependent Sampling (Querying) Analysis}\label{app:Order-dependent Sampling(Querying Analysis)}
Based on the scale of $\mathbf{w}_{j, r}^{(t)}$ and the inner product between it and features, we can now characterize the querying situation of two query criteria-based NAL methods. First, to address the issue of $\Theta (\lvert \mathcal{P} \rvert^2 )$ comparisons in $\mathcal{P}$, we employ a full-order-based technique. We introduce the concepts of Uncertainty Order and Diversity Order in Appendix \ref{App:Order def}. Subsequently, we delve into the order of the samples in $\mathcal{P}$ in the following proposition.
\begin{proposition}
     Under the same conditions of Proposition \ref{prop:sample stage: learning on sample}, there exist $t=\widetilde{O}\left(\eta^{-1} \varepsilon^{-1} m n d^{-1} \sigma_p^{-2}\right)$ that for $\forall \mathbf{x}, \mathbf{x}^\prime \in \mathcal{P} \subsetneq \mathcal{D}$ where $\mathbf{x}$ contains weak feature patch while $\mathbf{x}^\prime$ contains strong feature patch, with probability at least 1-$\delta^\prime$, we have $\mathbf{x}^\prime \preceq^{(t)} \mathbf{x}$. 
\label{prop_App:order_newdata}
\end{proposition}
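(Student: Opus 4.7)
The plan is to reduce the statement to the shared sufficient event identified in Lemma \ref{lemma of comparison} and then verify that this event holds with the claimed probability by combining the coefficient scales with Gaussian concentration on the fresh pool noise. Specifically, since $\mathbf{x}$ has feature patch $\boldsymbol{\mu}_2$ and $\mathbf{x}'$ has feature patch $\boldsymbol{\mu}_1$, both (\ref{eq:Comparision_Uncertainty}) and (\ref{eq:Comparision_Diversity}) collapse, via Lemma \ref{lem:final coefficient}, to the single inequality
\[
\Theta\bigl(\underset{j,r}{\mathbb{E}}\,\gamma_{j,r,1}^{(t)}\bigr)-\Theta\bigl(\underset{j,r}{\mathbb{E}}\,\gamma_{j,r,2}^{(t)}\bigr)\;>\;\max_{j,r,l}\,\bigl|\langle \mathbf{w}_{j,r}^{(t)}, \mathbf{z}_l\rangle\bigr|,
\]
so it suffices to establish this single gap inequality to get $\preceq_C^{(t)}$ and $\preceq_D^{(t)}$ simultaneously.

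First I would lower-bound the left-hand side. By Lemma \ref{app:lem:final coefficient}, at any $t\in[T_1,T^*]$ we have $\gamma_{j,r,l}^{(t)}=\Theta(\tau_l n_0 \sigma_p^{-2} d^{-1}\|\boldsymbol{\mu}_l\|_2^2)$, and Lemma \ref{lem:rho n} gives $\tau_1=1-\Theta(p)$ and $\tau_2=\Theta(p)$ with probability at least $1-\delta'/3$. Since $p<1/2$, the gap satisfies
\[
\underset{j,r}{\mathbb{E}}\,\gamma_{j,r,1}^{(t)}-\underset{j,r}{\mathbb{E}}\,\gamma_{j,r,2}^{(t)}=\Theta\!\left(n_0\sigma_p^{-2}d^{-1}\bigl[\tau_1\|\boldsymbol{\mu}_1\|_2^2-\tau_2\|\boldsymbol{\mu}_2\|_2^2\bigr]\right)=\Omega\!\left(n_0\sigma_p^{-2}d^{-1}\bigl[\|\boldsymbol{\mu}_1\|_2^2-\|\boldsymbol{\mu}_2\|_2^2\bigr]\right).
\]

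Next I would upper-bound the right-hand side. The noise patches $\mathbf{z}_l$ in $\mathbf{x},\mathbf{x}'\in\mathcal{P}$ are drawn independently of $\mathcal{D}_{n_0}$, hence independent of $\mathbf{w}_{j,r}^{(t)}$ (which is measurable with respect to training on $\mathcal{D}_{n_0}$). Conditioning on the weights, $\langle\mathbf{w}_{j,r}^{(t)},\mathbf{z}_l\rangle\sim N(0,\sigma_p^2\|\mathbf{w}_{j,r}^{(t)}\|_2^2)$. Plugging in the norm bound $\|\mathbf{w}_{j,r}^{(t)}\|_2\leq\Theta(\sigma_p^{-1}d^{-1/2}n_0^{1/2})$ from Lemma \ref{app:lemma_wt} and taking a Gaussian tail union bound over the $O(m)$ choices of $(j,r,l)$ yields
\[
\max_{j,r,l}\,|\langle\mathbf{w}_{j,r}^{(t)},\mathbf{z}_l\rangle|\le O\!\left(d^{-1/2}n_0^{1/2}\sqrt{\log(m/\delta')}\right)
\]
with probability at least $1-\delta'/3$. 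Combining the two bounds, the desired inequality reduces to $\|\boldsymbol{\mu}_1\|_2^2-\|\boldsymbol{\mu}_2\|_2^2=\Omega(\sigma_p^2\sqrt{d n_0^{-1}\log(m/\delta')})$, which is exactly the hypothesis carried over from Proposition \ref{prop:sample stage: learning on sample}.

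The main obstacle I anticipate is \emph{bookkeeping the independence and the union bounds cleanly}: we must ensure the Gaussian concentration for the pool noise truly treats $\mathbf{w}_{j,r}^{(t)}$ as data-independent (which uses the fact that $\mathcal{P}$ and $\mathcal{D}_{n_0}$ are drawn independently from $\mathcal{D}$), and we must aggregate several high-probability events—the coefficient scale event from Proposition \ref{prop:E.8}, the proportion event from Lemma \ref{lem:rho n}, the weight-norm bound from Lemma \ref{app:lemma_wt}, and the Gaussian tail bound on inner products with the noise patches of $\mathbf{x},\mathbf{x}'$—so that their intersection still has probability at least $1-\delta'$. Once these are lined up, the rest is an algebraic verification using the assumed separation between $\|\boldsymbol{\mu}_1\|_2^2$ and $\|\boldsymbol{\mu}_2\|_2^2$, yielding the joint order $\mathbf{x}'\preceq^{(t)}\mathbf{x}$ at some $t=\widetilde{O}(\eta^{-1}\varepsilon^{-1}mn d^{-1}\sigma_p^{-2})$.
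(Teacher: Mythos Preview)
Your proposal is correct and follows essentially the same route as the paper: reduce both orders to the shared sufficient event $\Omega_\gamma$, lower-bound the $\gamma$-gap via Lemma \ref{app:lem:final coefficient} and Lemma \ref{lem:rho n}, upper-bound $\max_{j,r,l}|\langle\mathbf{w}_{j,r}^{(t)},\mathbf{z}_l\rangle|$ by Gaussian concentration using the weight-norm bound of Lemma \ref{app:lemma_wt}, and union-bound over $(j,r,l)$ to land on the assumed separation $\|\boldsymbol{\mu}_1\|_2^2-\|\boldsymbol{\mu}_2\|_2^2=\Omega(\sigma_p^2(dn_0^{-1}\log(m/\delta'))^{1/2})$. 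The only cosmetic difference is that the paper re-derives the sufficient events for $\Omega_C$ and $\Omega_D$ inline (what becomes Lemma \ref{lemma of comparison}) rather than invoking the lemma as a black box, but the substance and the probability calculus are the same.
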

\textit{Proof of Proposition \ref{prop_App:order_newdata}.}
Firstly, suggest $ \mathbf{x}=[y \cdot  \boldsymbol{\mu}_2, \mathbf{z}_2], \mathbf{x}^{\prime}=[y^{\prime} \cdot \boldsymbol{\mu}_1,\mathbf{z}_1]$, where $\mathbf{z}_1, \mathbf{z}_2 \sim N(\mathbf{0}, \sigma_p^2 \cdot \mathbf{I})$:
\[
\begin{aligned}
&
f\!\left( \mathbf{W}^{(t)}, \mathbf{x} \right)\!=\sum_{j, r} \frac{j}{m}\left[\sigma\!\left( 
 \left\langle \mathbf{w}_{j, r}^{(t)}, y \boldsymbol{\mu}_2\right\rangle\right)\thinspace+ \sigma\!\left( \left\langle\mathbf{w}_{j, r}^{(t)}, \mathbf{z}_2 \right\rangle\right)\! \right],\\
&
f\!\left( \mathbf{W}^{(t)}, \mathbf{x}^{\prime} \right)\!= \sum_{j, r} \frac{j}{m}\left[\sigma\!\left( \left\langle \mathbf{w}_{j, r}^{(t)}, y^{\prime} \boldsymbol{\mu}_1 \right\rangle \right)\!+ \sigma\!\left( \left\langle\mathbf{w}_{j, r}^{(t)}, \mathbf{z}_1 \right\rangle\right)\! \right].
\end{aligned}
\]
By (\ref{Eq:order_absolute}) in Lemma \ref{Lemma:order_absolute} and (\ref{Eq:Diversity_Order}) in Definition\ref{Diversity_Order}, we have the following 
\begin{align*}
\mathbf{x}^\prime \preceq_C^{(t)} \mathbf{x} &\Leftrightarrow \underbrace{\left| f\left(\mathbf{W}^{(t)}, \mathbf{x} \right) \right|< \left| f\left(\mathbf{W}^{(t)}, \mathbf{x}^{\prime} \right) \right|}_{\Omega_C},\\
\mathbf{x}^{\prime} \preceq_{D}^{(t)} \mathbf{x} &\Leftrightarrow \underbrace{ D\left(\mathbf{W}^{(t)}, \mathbf{x}, p \ \mid \mathcal{D}_{n_0}\right) > D\left(\mathbf{W}^{(t)}, \mathbf{x}^\prime, p \ \mid \mathcal{D}_{n_0}\right)}_{\Omega_{D}},\\
\mathbf{x}^{\prime} \preceq^{(t)} \mathbf{x} &\Leftrightarrow \underbrace{ \{ \Omega_C \cap \Omega_{D},\forall p \in \left[1, \infty\right) \}}_{\Omega}
\end{align*}
Denote $\sum_{j} j \cdot \sigma\left(\left\langle\mathbf{w}_{j, r}^{(t)}, \mathbf{z}_1 \right\rangle \right)$, $\sum_{j} j \cdot \sigma\left(\left\langle\mathbf{w}_{j, r}^{(t)}, \mathbf{z}_2\right\rangle \right)$ as $g_r(\mathbf{z}_1)$, $g_r(\mathbf{z}_2)$ respectively, Notice that for $\mathbf{z} \sim N(\mathbf{0}, \sigma_p^2 \cdot \mathbf{I})$: 
\begin{equation}
\begin{aligned}
&\left\langle \mathbf{w}_{j, r}^{(t)}, \mathbf{z}\right\rangle \sim \mathcal{N}\left(0,\left\| \mathbf{w}_{j, r}^{(t)} \right\|_2^2 \sigma_p^2 \cdot \mathbf{I} \right), \\
& \sigma(\left\langle \mathbf{w}_{j, r}^{(t)}, \mathbf{z}\right\rangle) \sim \mathcal{N}^R\left(0,\left\| \mathbf{w}_{j, r}^{(t)} \right\|_2^2 \sigma_p^2 \cdot \mathbf{I} \right).
\end{aligned}
\label{Eq:<w·z>}\end{equation}
Then:
\begin{equation}
\begin{aligned}
P(\Omega_C) & =  P (\left| f\left(\mathbf{W}^{(t)}, \mathbf{x} \right) \right| < \left| f\left(\mathbf{W}^{(t)}, \mathbf{x}^{\prime} \right) \right|)\\
& \geq P(\sum_{l}(\sum_{r} \lvert g_r(\mathbf{z}_l) \rvert)  < \sum_{r}(\Theta(\gamma_{y^{\prime}, r, 1})-\Theta(\gamma_{y, r, 2})) )\\
& \geq P( m \cdot \max_{j, r, l}\{ \left| \left\langle\mathbf{w}_{j, r}^{(t)}, \mathbf{z}_l\right\rangle \right|\} < m (\Theta( \underset{r}{\mathbb{E}} (\gamma_{y^{\prime}, r, 1}))-\Theta(\underset{r}{\mathbb{E}}(\gamma_{y, r, 2}))) )\\
& = P (\underbrace{\max_{j, r, l}\{ \left| \left\langle\mathbf{w}_{j, r}^{(t)}, \mathbf{z}_l\right\rangle \right|\} < \Theta((\underset{r}{\mathbb{E}} (\gamma_{y^{\prime}, r, 1})-\underset{r}{\mathbb{E}}(\gamma_{y, r, 2}))}_{\Omega_{\gamma}}). \\
\end{aligned}
\label{Eq:OmegaC}\end{equation}
The second inequality is by triangle inequality and (\ref{Eq_wt}) in Lemma \ref{app:lemma_wt}; the third inequality is by Lemma \ref{app:lem:final coefficient}.\par
For $\Omega_{D}$, denoting $U_0^l= \{ \mathbf{x} \in \mathcal{D}_0 \mid \mathbf{x}_{\text {signal part }}=\boldsymbol{\mu}_l\}$ as the set of indices of $\mathcal{D}_0$ where the data's feature patch is $\boldsymbol{\mu}_l$, We then take a look at the $r^{th} $ row of the Feature Distance $\mathbf{Z}(\mathbf{x}, t)$, which we denote as $\mathbf{Z}_r(\mathbf{x}, t)$:
\begin{equation}
\begin{aligned}
\mathbf{Z}_r(\mathbf{x}, t)&=\sum_j\left(\sigma \left(\left\langle \mathbf{w}_{j, r}, y \cdot \boldsymbol{\mu}_2 \right\rangle \right) + \sigma\left(\left\langle \mathbf{w}_{j, r}, \mathbf{z}_{r}\right\rangle\right)\right) \\
& =\Theta\left(\gamma_{y, r, 2}\right)+ g_r(\mathbf{z}_2) 
\end{aligned}
\label{Eq:rth row of FD sample}\end{equation}
\begin{equation}
\begin{aligned}
 \sum_{i} \dfrac{\mathbf{Z}_r(  \mathbf{x}^{(i)}, t)}{n_0}&=\sum_{i,j} \frac{\sigma\left(\left\langle \mathbf{w}_{j, r}, y_i \cdot \boldsymbol{\mu}^{(i)} \right\rangle \right)+\sigma\left(\left\langle \mathbf{w}_{j, r}, \boldsymbol{\xi}_i \right\rangle \right)}{n_0} \\
& =\dfrac{\left[\sum_{l} \tau_l \cdot n_0 \cdot \underset{i_l \in U_0^l}{\mathbb{E}} \Theta (\gamma_{y_{i_l}, r, l}) +\sum_{i} \sum_{j} \Theta\left(\bar{\rho}_{j, r, i}\right)\right]}{n_0}
\end{aligned}
\label{Eq:rth row of FD labeledset}\end{equation}
Let (\ref{Eq:rth row of FD sample}) - (\ref{Eq:rth row of FD labeledset}), we have:
\begin{equation}
\mathbf{Z}_r(\mathbf{x}, t) - \sum_{i} \dfrac{\mathbf{Z}_r( \mathbf{x}^{(i)}, t)}{n_0}= \Theta(\gamma_{y, r, 2}) + g_r(\mathbf{z}_2) - \sum_{i} \dfrac{\mathbf{Z}_r( \mathbf{x}^{(i)}, t)}{n_0}
\end{equation}

Now we can estimate $D\left(\mathbf{W}^{(t)}, \mathbf{x}, p \ \mid \mathcal{D}_{n_0}\right)$:
\begin{equation}
\begin{aligned}
     D\left(\mathbf{W}^{(t)}, \mathbf{x}, p \ \mid \mathcal{D}_{n_0}\right) &= \| \mathbf{Z}(\mathbf{x}, t) - \sum_{i=1}^{n_0} \dfrac{\mathbf{Z}(  \mathbf{x}^{(i)}, t)}{n_0} \|_p \\
    & =\left( \sum_{r}  \lvert  \mathbf{Z}_r(\mathbf{x}, t) - \sum_{i} \dfrac{\mathbf{Z}_r( \mathbf{x}^{(i)}, t)}{n_0} \rvert^p \right)^{\frac{1}{p}}\\
    & = \left( \sum_{r}  \lvert  \Theta(\gamma_{y, r, 2}) + g_r(\mathbf{z}_2) - \sum_{i} \dfrac{\mathbf{Z}_r( \mathbf{x}^{(i)}, t)}{n_0}  \rvert^p \right)^{\frac{1}{p}}
\end{aligned}
\label{Eq:D_x}\end{equation}
Similarly, the $D\left(\mathbf{W}^{(t)}, \mathbf{x}^\prime, p \ \mid \mathcal{D}_{n_0}\right)$ could be written as:
\begin{equation}
     D\left(\mathbf{W}^{(t)}, \mathbf{x}^\prime, p \ \mid \mathcal{D}_{n_0}\right) = \left( \sum_{r}  \lvert  \Theta(\gamma_{y, r, 1})  + g_r(\mathbf{z}_1)- \sum_{i} \dfrac{\mathbf{Z}_r( \mathbf{x}^{(i)}, t)}{n_0}  \rvert^p \right)^{\frac{1}{p}}
\label{Eq:D_x'}\end{equation}
To compare $D\left(\mathbf{W}^{(t)}, \mathbf{x}, p \ \mid \mathcal{D}_{n_0}\right)$ and $D\left(\mathbf{W}^{(t)}, \mathbf{x}^\prime, p \ \mid \mathcal{D}_{n_0}\right)$, we first see that both expressions in the $r$-th filter owns 
$$ - \sum_{i} \dfrac{\mathbf{Z}_r( \mathbf{x}^{(i)}, t)}{n_0}  = - \sum_l \tau_l \cdot \Theta(\underset{i_l \in U_0^l}{\mathbb{E}}(\gamma_{y_{i_l}, r, l}) ) - n_{0}^{-1}\sum_{i} \sum_{j} \Theta\left(\bar{\rho}_{j, r, i}\right).
$$
By Condition \ref{Con4.1}, we see that $\sigma_{p}^{2} d /(n_{0}\|\boldsymbol{\mu}_{1}\|_{2}^{2}) = \Omega(\log(T^{*}))$. We see that as $T^{*}$ is the substantially large maximum admissible iterations, collaborating with (\ref{Eq_wt}), (\ref{Eq:rth row of FD labeledset}) and (\ref{Eq:<w·z>}), it holds that the order of $n_{0}^{-1} \sum_{i,j} \sigma\left(\left\langle \mathbf{w}_{j, r}, \boldsymbol{\xi}_i \right\rangle \right)=n_{0}^{-1}\sum_{i} \sum_{j} \Theta\left(\bar{\rho}_{j, r, i}\right)$ in $\sum_{i} \dfrac{\mathbf{Z}_r( \mathbf{x}^{(i)}, t)}{n_0}$ is indeed can dominate $n_{0}^{-1} \sum_{i,j}\sigma\left(\left\langle \mathbf{w}_{j, r}, y_i \cdot \boldsymbol{\mu}^{(i)} \right\rangle \right)=\sum_l \tau_l \cdot \Theta(\underset{i_l \in U_0^l}{\mathbb{E}}(\gamma_{y_{i_l}, r, l}) )$, $\Theta(\gamma_{y, r, 1}) $ and $ g_r(\mathbf{z}_1)$. As $\sum_{i} \dfrac{\mathbf{Z}_r( \mathbf{x}^{(i)}, t)}{n_0}$ is shared by both $D\left(\mathbf{W}^{(t)}, \mathbf{x}, p \ \mid \mathcal{D}_{n_0}\right)$ and $D\left(\mathbf{W}^{(t)}, \mathbf{x}^\prime, p \ \mid \mathcal{D}_{n_0}\right)$ in the $r$-th filter, a sufficient event for $D\left(\mathbf{W}^{(t)}, \mathbf{x}, p \ \mid \mathcal{D}_{n_0}\right) > D\left(\mathbf{W}^{(t)}, \mathbf{x}^\prime, p \ \mid \mathcal{D}_{n_0}\right)$ is that for $\forall r \in [m]$, we have 

\[
\lvert \sum_l \tau_l \cdot \Theta(\underset{i_l \in U_0^l}{\mathbb{E}}(\gamma_{y_{i_l}, r, l}) ) - \Theta(\gamma_{y, r, 2}) - g_r(\mathbf{z}_2) \rvert > \lvert \max\{ \sum_l \tau_l \cdot \Theta(\underset{i_l \in U_0^l}{\mathbb{E}}(\gamma_{y_{i_l}, r, l}) ) - \Theta(\gamma_{y, r, 1}) - g_r(\mathbf{z}_1),  0\}\rvert.
\]

Utilizing those results, we now could estimate the chance of event $\Omega_{D}$:
\begin{equation}
\begin{aligned}
P(\Omega_{D}) & = P (D\left(\mathbf{W}^{(t)}, \mathbf{x}, p \ \mid \mathcal{D}_{n_0}\right) > D\left(\mathbf{W}^{(t)}, \mathbf{x}^\prime, p \ \mid \mathcal{D}_{n_0}\right))\\
& \geq P(m^{\frac{1}{p}}\sum_{l}(\max_{r} \lvert g_r(\mathbf{z}_l) \rvert) < m^{\frac{1}{p}}(\lvert  \Theta(\underset{r}{\mathbb{E}}(\gamma_{y, r, 2}))-\sum_l \tau_l \cdot \Theta(\underset{i_l \in U_0^l, r}{\mathbb{E}}(\gamma_{y_{i_l}, r, l}) )  \rvert \\
& \phantom{\geq P(m^{\frac{1}{p}}\sum_{l}(\sum_{r} \lvert g_r(\mathbf{z}_l) \rvert) < m^{\frac{1}{p}}} - \lvert  \Theta(\underset{r}{\mathbb{E}}(\gamma_{y, r, 1}))-\sum_l \tau_l \cdot \Theta(\underset{i_l \in U_0^l, r}{\mathbb{E}}(\gamma_{y_{i_l}, r, l}) )  \rvert) \\
& \geq P ( m^{\frac{1}{p}} \max_{j, r, l}\{ \left| \left\langle\mathbf{w}_{j, r}^{(t)}, \mathbf{z}_l\right\rangle \right|\} < m^{\frac{1}{p}}\left((\tau_1 - \tau_2) \Theta(\underset{j,r}{\mathbb{E}}(\gamma_{j, r, 1})) - (\tau_1 - \tau_2) \Theta(\underset{j,r}{\mathbb{E}}(\gamma_{j, r, 2}))\right) \\
& = P (m^{\frac{1}{p}} \max_{j, r, l}\{ \left| \left\langle\mathbf{w}_{j, r}^{(t)}, \mathbf{z}_l\right\rangle \right|\} < m^{\frac{1}{p}}\Theta(\dfrac{\tau_1 (\tau_1 - \tau_2)\| \boldsymbol{\mu}_1\|_2^2- \tau_2 (\tau_1 - \tau_2)\| \boldsymbol{\mu}_2\|_2^2}{\sigma_p^2 d/n_0}))\\
& = P (m^{\frac{1}{p}} \max_{j, r, l}\{ \left| \left\langle\mathbf{w}_{j, r}^{(t)}, \mathbf{z}_l\right\rangle \right|\} < m^{\frac{1}{p}}\Theta(\underset{r}{\mathbb{E}} (\gamma_{y^{\prime}, r, 1})-\underset{r}{\mathbb{E}}(\gamma_{y, r, 2})))\\
& = P (\underbrace{\max_{j, r, l}\{ \left| \left\langle\mathbf{w}_{j, r}^{(t)}, \mathbf{z}_l\right\rangle \right|\} < \Theta((\underset{r}{\mathbb{E}} (\gamma_{y^{\prime}, r, 1})-\underset{r}{\mathbb{E}}(\gamma_{y, r, 2}))}_{\Omega_{\gamma}}),
\end{aligned}
\label{Eq:OmegaD}\end{equation}
where the first inequality is by Lemma \ref{app:lem:final coefficient}, triangle inequality, (\ref{Eq_wt}), (\ref{Eq:D_x}) and (\ref{Eq:D_x'}); The forth equality is by (\ref{Eq:<w·z>}). Easy to see that if $p=\infty$, the third equality would be zero, thus our condition $p < \infty$ avoid this case. Now we take a look at the event $\Omega_{\gamma}$:
\begin{equation}
\begin{aligned}
P (\Omega_{\gamma}) & = P (\max_{j, r, l}\{ \left| \left\langle\mathbf{w}_{j, r}^{(t)}, \mathbf{z}_l\right\rangle \right|\} < \Theta((\underset{r}{\mathbb{E}} (\gamma_{y^{\prime}, r, 1})-\underset{r}{\mathbb{E}}(\gamma_{y, r, 2})))\\
& = P (\max_{j, r, l}\{ \left| \left\langle\mathbf{w}_{j, r}^{(t)}, \mathbf{z}_l\right\rangle \right|\} < \Theta\left(\dfrac{\left[\tau_1 \left\|\boldsymbol{\mu}_1\right\|_2^2-\tau_2\left\|\boldsymbol{\mu}_2\right\|_2^2\right]}{\sigma_p^2 d / n_0 }\right) )\\
& \geq P (\bigcup_{j, r, l} \underbrace{\{ \left| \left\langle\mathbf{w}_{j, r}^{(t)}, \mathbf{z}_l\right\rangle - 0 \right| < \Theta\left(\dfrac{\left[\tau_1 \left\|\boldsymbol{\mu}_1\right\|_2^2-\tau_2\left\|\boldsymbol{\mu}_2\right\|_2^2\right]}{\sigma_p^2 d / n_0 }\right) \} }_{\hat{\Omega}_{j, r, l}})\\
& = \sum_{j, r, l} P(\hat{\Omega}_{j, r, l}),
\end{aligned}
\label{eq:Omega}\end{equation}
where the second equality is by the first inference statement of Lemma \ref{app:lem:final coefficient}; the third inequality is by the equivalence property of the union by events; the last equality is by the Union Rule. Then, by Gaussian tail bound, we have:
\[
P(\hat{\Omega}_{j, r, l}) \geq 1- 2 \exp \left\{-\Theta\left(\dfrac{\left[ \tau_1 \left\|\boldsymbol{\mu}_1\right\|_2^2- \tau_2 \left\|\boldsymbol{\mu}_2\right\|_2^2\right]^2}{\sigma_p^{6} d^2 / n_0^2 \left\|w_{j, r}^{(t)}\right\|_2^2}\right)\right\}
\]
Finally, with conditions on $\| \boldsymbol{\mu}_1\|_2^2 - \|\boldsymbol{\mu}_2 \|_2^2$ in Proposition \ref{prop:sample stage: learning on sample},  Lemma \ref{lem:rho n}, (\ref{Eq_wt}) in Lemma \ref{app:lemma_wt} and union bound, we have the conclusion for event $\Omega$:
\begin{equation}
\begin{aligned}
\Rightarrow P(\Omega) \geq P (\Omega_{\gamma})  & \geqslant 1-8 m \exp \left\{-\Theta\left(\frac{\left[ \tau_1 \left\|\boldsymbol{\mu}_1\right\|_2^2- \tau_2 \left\|\boldsymbol{\mu}_2\right\|_2^2\right]^2}{\sigma_p^{4} d / n_0 }\right)\right\}\\
& \geqslant 1-\delta^{\prime},
\end{aligned}
\label{eq:sufficient OMEGA}\end{equation}
for $\forall p \in \left[1, \infty\right)$. \par
\begin{remark}
We can observe that the Uncertainty Order and Diversity Order of samples rely heavily on the model's learning progresss upon them. By Lemma \ref{app:lem:final coefficient}, the learning progresss of samples depend heavily on the feature strength $\| \boldsymbol{\mu}_l \|_2$ and data proportion $\tau_l$. That is to say, in our case, the \hyperlink{msg:perplexing}{\textbf{perplexing samples}} are the samples containing weak feature $\boldsymbol{\mu}_2$. In the next section, we would show that the number of those \hyperlink{msg:perplexing}{\textbf{perplexing samples}} in the labeled set after querying would determine the algorithm's generalization ability.
\end{remark}
From the above proving process, we can deduce some important findings, which can be summarized in the following lemmas. \par
The following lemma shows that Uncertainty Sampling and Diversity Sampling correspond to different comparisons on the model's learning progress over samples in $\mathcal{P}$.
\begin{lemma} 
 (Restatement of Lemma \ref{lemma of comparison}) Under the same conditions in Proposition \ref{prop:sample stage: learning on sample}, with the same notations in Proposition \ref{prop_App:order_newdata}, there exists certain constants $c_1, c_2, c_3, c_4, c_5, c_6 > 0$, such that
 \begin{itemize}
     \item $\mathbf{x} \preceq_{C}^{(t)} \mathbf{x}^{\prime}$ has a sufficient event that
\begin{equation}
 \{ c_1 \underset{r}{\mathbb{E}} (\gamma_{y^{\prime}, r, 1}) - c_2 \underset{r}{\mathbb{E}}(\gamma_{y, r, 2}) > \max_{j, r, l}\{ \left| \left\langle\mathbf{w}_{j, r}^{(t)}, \mathbf{z}_l\right\rangle \right|\}\},
\label{eqapp:Comparision_Uncertainty}\end{equation}
  among which the left side of the inequality corresponds to the comparison of learning progress of samples with different type of feature patch.
    \item $\mathbf{x} \preceq_{D}^{(t)} \mathbf{x}^{\prime}, \forall p \in [1, \infty)$ has a sufficient event that
 \begin{equation}
 \{ \lvert  c_3 \underset{r}{\mathbb{E}}(\gamma_{y, r, 2})-c_4 \sum_l \tau_l \cdot \underset{i_l \in U_0^l, r}{\mathbb{E}}(\gamma_{y_{i_l}, r, l}) \rvert - \lvert  c_5 \underset{r}{\mathbb{E}}(\gamma_{y^{\prime}, r, 1})-c_6 \sum_l \tau_l \cdot \underset{i_l \in U_0^l, r}{\mathbb{E}}(\gamma_{y_{i_l}, r, l}) \rvert > \max_{j, r, l}\{ \left| \left\langle\mathbf{w}_{j, r}^{(t)}, \mathbf{z}_l\right\rangle \right|\} \},
\label{eqapp:Comparision_Diversity}\end{equation}
  among which the left side of the inequality corresponds to the comparison of the disparity between learning toward samples and labeled training set.
 \end{itemize}
\label{app:lemma of comparison}\end{lemma}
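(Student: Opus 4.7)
The plan is to extract sufficient events for $\mathbf{x}\preceq_C^{(t)}\mathbf{x}'$ and $\mathbf{x}\preceq_D^{(t)}\mathbf{x}'$ by isolating, at the level of each filter $r$, the ``feature-learning'' contribution (which is $\Theta(\gamma_{y,r,l_{\mathbf{x}}}^{(t)})$ by Lemma \ref{app:lemma_wt}) from the ``noise-patch'' contribution $g_r(\mathbf{z}_l):=\sum_j j\cdot\sigma(\langle\mathbf{w}_{j,r}^{(t)},\mathbf{z}_l\rangle)$, and then absorbing the latter into $\max_{j,r,l}|\langle\mathbf{w}_{j,r}^{(t)},\mathbf{z}_l\rangle|$ via the triangle inequality. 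Throughout, the key substitutions $\langle\mathbf{w}_{y,r}^{(t)},y\boldsymbol{\mu}_l\rangle=\Theta(\gamma_{y,r,l}^{(t)})$ and $\langle\mathbf{w}_{-y,r}^{(t)},y\boldsymbol{\mu}_l\rangle=-\Theta(\gamma_{-y,r,l}^{(t)})$ from Lemma \ref{app:lemma_wt} allow replacing model weights by the learning-progress coefficients.

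For the Uncertainty side, by Lemma \ref{Lemma:order_absolute}, $\mathbf{x}\preceq_C^{(t)}\mathbf{x}'$ is equivalent to $|f(\mathbf{W}^{(t)},\mathbf{x})|<|f(\mathbf{W}^{(t)},\mathbf{x}')|$. Expanding $f$ via (\ref{equ:cnndefinition}) decomposes each $f$ into a feature part of magnitude $\sum_r\Theta(\gamma_{y,r,l_{\mathbf{x}}}^{(t)})/m$ plus a noise part $\sum_r g_r(\mathbf{z}_{l_{\mathbf{x}}})/m$. Applying the triangle inequality twice and bounding each $|g_r(\mathbf{z}_l)|$ by $2\max_{j,r,l}|\langle\mathbf{w}_{j,r}^{(t)},\mathbf{z}_l\rangle|$ gives a sufficient condition of the form $m\cdot\max_{j,r,l}|\langle\mathbf{w}_{j,r}^{(t)},\mathbf{z}_l\rangle|<m\big(\Theta(\mathbb{E}_r\gamma_{y',r,l_{\mathbf{x}'}}^{(t)})-\Theta(\mathbb{E}_r\gamma_{y,r,l_{\mathbf{x}}}^{(t)})\big)$; cancelling $m$ yields (\ref{eqapp:Comparision_Uncertainty}) with some absolute constants $c_1,c_2>0$.

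For the Diversity side, I would write the $r$-th coordinate of $\mathbf{Z}(\mathbf{x},t)-\mathbb{E}_{\mathbf{x}^{(i)}\in\mathcal{D}_{n_0}}\mathbf{Z}(\mathbf{x}^{(i)},t)$ as $\Theta(\gamma_{y,r,l_{\mathbf{x}}}^{(t)})+g_r(\mathbf{z}_{l_{\mathbf{x}}})-\sum_l\tau_l\cdot\Theta(\mathbb{E}_{i_l\in U_0^l}\gamma_{y_{i_l},r,l}^{(t)})-n_0^{-1}\sum_{i,j}\Theta(\bar\rho_{j,r,i}^{(t)})$, as done in the proof of Proposition \ref{prop_App:order_newdata}. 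The $\bar\rho$ piece is shared by both $\mathbf{x}$ and $\mathbf{x}'$ at every coordinate, so row-wise I only need the feature-learning disparity $|\Theta(\mathbb{E}_r\gamma_{y,r,l_{\mathbf{x}}}^{(t)})-\sum_l\tau_l\Theta(\mathbb{E}_{i_l,r}\gamma_{y_{i_l},r,l}^{(t)})|$ to exceed the analogous quantity for $\mathbf{x}'$ by more than the maximum noise contribution. Raising the $r$-wise inequality to the $p$-th power and summing, the factor $m^{1/p}$ appears on both sides and cancels, delivering (\ref{eqapp:Comparision_Diversity}) with some absolute constants $c_3,\dots,c_6>0$, uniformly for all $p\in[1,\infty)$.

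The main obstacle is ensuring the sufficient condition on the Diversity side is truly $p$-invariant: one cannot compare $l_p$ norms directly without losing uniformity. The resolution is to demand coordinatewise (i.e.\ per-$r$) domination of the feature-learning disparities modulo the noise contribution; this implies $\|\cdot\|_p$ domination for every $p\geq 1$ simultaneously, reducing the problem back to scalar inequalities to which the Lemma \ref{app:lemma_wt} substitution applies cleanly. A minor secondary issue is that for finite $p$, the factors $m^{1/p}$ are nonzero and can be cancelled, whereas $p=\infty$ would degenerate; this is precisely why the claim restricts to $p\in[1,\infty)$.
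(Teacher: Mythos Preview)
Your proposal is correct and follows essentially the same route as the paper: the paper's proof of this lemma simply cites the derivations (\ref{Eq:OmegaC}) for the uncertainty side and (\ref{Eq:D_x}), (\ref{Eq:D_x'}), (\ref{Eq:OmegaD}) for the diversity side, all of which are carried out inside the proof of Proposition \ref{prop_App:order_newdata} exactly as you describe---decomposing $f$ and $\mathbf{Z}_r$ into a feature-learning part $\Theta(\gamma_{j,r,l}^{(t)})$ via Lemma \ref{app:lemma_wt} plus a noise part $g_r(\mathbf{z}_l)$, bounding the latter by $\max_{j,r,l}|\langle\mathbf{w}_{j,r}^{(t)},\mathbf{z}_l\rangle|$, noting the shared $\bar\rho$-dominated term in the diversity case, and cancelling the common $m$ (resp.\ $m^{1/p}$) factor to obtain $p$-invariant sufficient events.
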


\textit{Proof of Lemma \ref{app:lemma of comparison}.} The first bullet point can be easily derived from (\ref{Eq:OmegaC}), while the second bullet point is readily apparent from (\ref{Eq:D_x}), (\ref{Eq:D_x'}), and (\ref{Eq:OmegaD}). \par
During the proving process of Proposition \ref{prop_App:order_newdata}, it is observed that for any $p \in [1, \infty)$, there exists a shared sufficient event for (\ref{eqapp:Comparision_Uncertainty}) and (\ref{eqapp:Comparision_Diversity}). This implies that it is also a shared sufficient event for the events $\Omega_C$ and $\Omega_{D}$, denoted as $\Omega_{\gamma}$:
\[
\Omega_{\gamma} \mathrel{\mathop:}= \{ \max_{j, r, l}\{ \left| \left\langle\mathbf{w}_{j, r}^{(t)}, \mathbf{z}_l\right\rangle \right|\} < \Theta((\underset{r}{\mathbb{E}} (\gamma_{y^{\prime}, r, 1})-\underset{r}{\mathbb{E}}(\gamma_{y, r, 2})) \}.
\]
By the first inference statement of Lemma \ref{app:lem:final coefficient}, we have 
\begin{equation}
\Omega_{\gamma} = \{ \max_{j, r, l}\{ \left| \left\langle\mathbf{w}_{j, r}^{(t)}, \mathbf{z}_l\right\rangle \right|\} < \Theta((\underset{j, r}{\mathbb{E}} (\gamma_{j, r, 1})-\underset{j, r}{\mathbb{E}}(\gamma_{j, r, 2})) \}.
\label{eq:e.24}\end{equation}
Therefore, we can conclude that the significant difference in the model's learning of the feature $\boldsymbol{\mu}_1$ and $\boldsymbol{\mu}_2$ is what causes the sufficient event for both event $\Omega_C$ and $\Omega_{D}$. By (\ref{eq:sufficient OMEGA}), we have:
\begin{equation}
    P (\Omega_{\gamma}) \geq 1-8 m \exp \left\{-\Theta\left( \underset{j, r}{\mathbb{E}}(\gamma_{j, r, 1}) - \underset{j, r}{\mathbb{E}}(\gamma_{j, r, 2})\right)\right\}.
\end{equation}
Based on Lemma \ref{app:lem:final coefficient}, we see that the $\underset{j, r}{\mathbb{E}}(\gamma_{j, r, 1})$ is significant larger than $\underset{j, r}{\mathbb{E}}(\gamma_{j, r, 2})$ under our conditions, which causes the sufficient event $\Omega_{\gamma}$. \par

Based on the above results, we can have a look on the overall order situation of the sampling pool $\mathcal{P}$.
\begin{lemma} (Restatement of Lemma \ref{Lem:order_pool})
    Under Condition \ref{Con4.1}, when the results of Proposition \ref{Prop4.3} and Proposition \ref{prop_App:order_newdata} hold at the initial stage and querying stage at a certain $t \leq T^*$, denoting $\mathbf{X}_{\mathcal{P}}^1 \subsetneqq \mathcal{P}$ as the collection of all the data points with strong feature $\boldsymbol{\mu}_1$ in $\mathcal{P}$, and  $\mathbf{X}_{\mathcal{P}}^2 \subsetneqq \mathcal{P}$ as the collection of data points with weak feature $\boldsymbol{\mu}_2$, we have the conclusion that with probability more than 1-$\Theta(\delta^\prime)$, $\mathbf{X}_{\mathcal{P}}^1 \prec^{(t)} \mathbf{X}_{\mathcal{P}}^2$ holds.
\label{app:Lem:order_pool_app}\end{lemma}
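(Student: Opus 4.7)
The plan is to bootstrap the pairwise ordering result of Proposition~\ref{prop_App:order_newdata} to a set-level statement via a single union bound over the $|\mathcal{P}|$ noise patches in the sampling pool, rather than the naive $\Theta(|\mathcal{P}|^2)$ union over unordered pairs. The crucial structural observation, visible from~(\ref{Eq:OmegaC}), (\ref{Eq:OmegaD}) and~(\ref{eq:e.24}) inside the proof of Proposition~\ref{prop_App:order_newdata}, is that the common sufficient event $\Omega_\gamma$ for both $\mathbf{x}' \preceq_C^{(t)} \mathbf{x}$ and $\mathbf{x}' \preceq_D^{(t)} \mathbf{x}$ depends on the candidate pair only through their two noise patches, while the threshold $\Theta(\mathbb{E}_{j,r}\gamma_{j,r,1} - \mathbb{E}_{j,r}\gamma_{j,r,2})$ is a deterministic quantity once the training trajectory at time $t$ is fixed. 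Hence the event factorises cleanly into per-noise-patch Gaussian tail conditions that do not depend on which specific pair is being compared.

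Concretely, I first define, for every $\mathbf{x}^{(k)} \in \mathcal{P}$ with noise patch $\boldsymbol{\xi}^{(k)}$, the single-sample good event
\[
\mathcal{E}_k := \bigcap_{j,r}\Bigl\{ |\langle \mathbf{w}_{j,r}^{(t)}, \boldsymbol{\xi}^{(k)}\rangle| < \Theta\bigl(\mathbb{E}_{j,r}\gamma_{j,r,1} - \mathbb{E}_{j,r}\gamma_{j,r,2}\bigr)\Bigr\},
\]
and the pool-wide good event $\mathcal{E} := \bigcap_{k=1}^{|\mathcal{P}|} \mathcal{E}_k$. The Gaussian tail argument that produced~(\ref{eq:sufficient OMEGA}), together with the norm bound $\|\mathbf{w}_{j,r}^{(t)}\|_2 \leq \Theta(\sigma_p^{-1} d^{-1/2} n^{1/2})$ from Lemma~\ref{app:lemma_wt} and the coefficient scales of Lemma~\ref{app:lem:final coefficient}, yields $\mathbb{P}(\mathcal{E}_k^c) \leq 8m\exp\{-\Theta([\tau_1\|\boldsymbol{\mu}_1\|_2^2 - \tau_2\|\boldsymbol{\mu}_2\|_2^2]^2/(\sigma_p^4 d / n_0))\}$. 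A union bound across the $|\mathcal{P}|$ patches then gives $\mathbb{P}(\mathcal{E}) \geq 1 - 8m|\mathcal{P}|\exp\{-\Theta(\cdot)\}$; the extra $|\mathcal{P}|$ factor is absorbed into the exponent by replacing $\log(m/\delta')$ with $\log(m|\mathcal{P}|/\delta')$ in the feature-gap hypothesis of Proposition~\ref{prop:sample stage: learning on sample}, which costs only a logarithmic factor and is still satisfied under Condition~\ref{Con4.1} because $|\mathcal{P}|$ is merely polynomial in the problem parameters.

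Finally, on the event $\mathcal{E}$ I invoke the proof of Proposition~\ref{prop_App:order_newdata} pair-by-pair: for any $\mathbf{x} \in \mathbf{X}_{\mathcal{P}}^2$ and $\mathbf{x}' \in \mathbf{X}_{\mathcal{P}}^1$, the noise patches of both samples already satisfy the inequality defining their respective $\mathcal{E}_k$'s, which as shown in~(\ref{Eq:OmegaC})--(\ref{eq:Omega}) is sufficient to ensure $\Omega_C$ and $\Omega_D$ simultaneously and uniformly in $p \in [1,\infty)$. Consequently $\mathbf{x}' \prec^{(t)} \mathbf{x}$ holds for every such pair, giving $\mathbf{X}_{\mathcal{P}}^1 \prec^{(t)} \mathbf{X}_{\mathcal{P}}^2$ as required. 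I expect the main obstacle of a direct proof to be precisely the combinatorial blowup of pairwise comparisons; the factorisation outlined above dissolves it because the comparison threshold in $\Omega_\gamma$ is shared across all pairs, reducing an a priori $|\mathcal{P}|^2$-many tail events to only $|\mathcal{P}|$ of them.
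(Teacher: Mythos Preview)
Your approach is correct and is actually cleaner than the paper's. The paper applies Proposition~\ref{prop_App:order_newdata} to each cross pair, then writes ``it's natural to see comparing every pair in $\mathbf{X}_{\mathcal{P}}^1$ and $\mathbf{X}_{\mathcal{P}}^2$ as independent random events'' and multiplies the per-pair success probabilities to obtain $(1-\delta')^{|\mathbf{X}_{\mathcal{P}}^1|\cdot|\mathbf{X}_{\mathcal{P}}^2|} = (1-\delta')^{p(1-p)|\mathcal{P}|^2} = 1-\Theta(\delta')$. This independence claim is not literally true (distinct pairs share noise patches), and even reading it as a union bound it incurs an $|\mathcal{P}|^2$ penalty. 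Your argument exploits the structural fact, visible in~(\ref{eq:Omega}) and~(\ref{eq:e.24}), that the threshold in $\Omega_\gamma$ is fixed by the training trajectory and the only per-pair randomness enters through the two noise patches; this lets you replace the $\Theta(|\mathcal{P}|^2)$ pairwise events by $|\mathcal{P}|$ per-sample Gaussian tail events $\mathcal{E}_k$ and take a single union bound. The resulting $8m|\mathcal{P}|$ prefactor is strictly better than the paper's implicit $|\mathcal{P}|^2$ cost, and your argument does not rely on any spurious independence. Both routes ultimately absorb the polynomial prefactor into the feature-gap condition of Proposition~\ref{prop:sample stage: learning on sample}, so the final statement is the same; your proof simply gets there more carefully.
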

\textit{proof of Lemma \ref{app:Lem:order_pool_app}.} By Proposition \ref{prop_App:order_newdata}, $\forall \mathbf{x}^\prime \in \mathbf{X}_{\mathcal{P}}^1$, and $\forall \mathbf{x} \in \mathbf{X}_{\mathcal{P}}^2$, $\mathbf{x}^\prime \prec^{(t)} \mathbf{x}$ with at least probability $\delta^\prime$. It's natural to see comparing every pairs in $\mathbf{X}_{\mathcal{P}}^1$ and $\mathbf{X}_{\mathcal{P}}^2$ as independent random events. Then given a certain $\mathbf{x}^\prime \in \mathbf{X}_{\mathcal{P}}^1$, the chance that $\forall \mathbf{x} \in \mathbf{X}_{\mathcal{P}}^2$ satisfies $\mathbf{x}^\prime \prec^{(t)} \mathbf{x}$ is $\Theta((1-\delta^\prime)^{| \mathbf{X}_{\mathcal{P}}^2 |})$, therefore, for $\forall \mathbf{x}^\prime \in \mathbf{X}_{\mathcal{P}}^1$, the chance is  $\Theta((1-\delta^\prime)^{|\mathbf{X}_{\mathcal{P}}^2|\cdot|\mathbf{X}_{\mathcal{P}}^1|})=\Theta((1-\delta^\prime)^{p(1-p)\lvert \mathcal{P} \rvert^2})=1-\Theta(\delta^\prime)$ as $\delta^\prime \ll 1$.\par
Based on Lemma \ref{app:Lem:order_pool_app} and (\ref{eq:e.24}), we directly have the following lemma demonstrate that both NAL algorithms would all prioritize those poor learning samples.
\begin{lemma}
    (Restatement of Proposition \ref{prop:sample stage: learning on sample}) Under the same conditions in Proposition \ref{Prop4.3}, the Uncertainty Order and Diversity Order of the samples $[(y\cdot \boldsymbol{\mu}_l)^T, \mathbf{\xi}^T]^T$  in sampling pool $\mathcal{P}$ follows the order of $\displaystyle \underset{j,r}{\mathbb{E}} \gamma_{j,r,l}^{(t)}$.
\label{app:lem:sample stage: learning on sample}\end{lemma}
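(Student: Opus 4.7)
The plan is to deduce Lemma \ref{app:lem:sample stage: learning on sample} essentially as a corollary of Lemma \ref{app:Lem:order_pool_app} together with the shared sufficient event (\ref{eq:e.24}), by noting that in our setting $\mathbb{E}_{j,r}\gamma_{j,r,l}^{(t)}$ takes only two values, one per feature index $l\in\{1,2\}$, so an ordering by $\mathbb{E}_{j,r}\gamma_{j,r,l}^{(t)}$ is the same as an ordering by feature class $\mathbf{X}_{\mathcal{P}}^1$ versus $\mathbf{X}_{\mathcal{P}}^2$.

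First I would observe that, by Definition \ref{Def3.1}, every $\mathbf{x}\in\mathcal{P}$ carries a feature patch $y\cdot\boldsymbol{\mu}_l$ with $l\in\{1,2\}$, so $\mathcal{P}=\mathbf{X}_{\mathcal{P}}^1\cup\mathbf{X}_{\mathcal{P}}^2$ as in Lemma \ref{app:Lem:order_pool_app}. Next I would use the scale identity $\gamma_{j,r,l}^{(t)}=\Theta(\tau_l n\sigma_p^{-2}d^{-1}\|\boldsymbol{\mu}_l\|_2^2)$ from Lemma \ref{app:lem:final coefficient}, combined with Lemma \ref{lem:rho n} (giving $\tau_1=\Theta(1-p)$ and $\tau_2=\Theta(p)$) and the feature-norm gap $\|\boldsymbol{\mu}_1\|_2^2-\|\boldsymbol{\mu}_2\|_2^2=\Omega(\sigma_p^2(d/n_0)^{1/2}(\log(m/\delta'))^{1/2})$ assumed in Proposition \ref{prop:sample stage: learning on sample}, to conclude $\mathbb{E}_{j,r}\gamma_{j,r,1}^{(t)}>\mathbb{E}_{j,r}\gamma_{j,r,2}^{(t)}$ with a quantitative gap sufficient to make the shared event $\Omega_\gamma$ in (\ref{eq:e.24}) hold with high probability as estimated in (\ref{eq:sufficient OMEGA}).

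Second, I would invoke Lemma \ref{app:Lem:order_pool_app}, which says that on this high-probability event $\mathbf{X}_{\mathcal{P}}^1 \prec^{(t)} \mathbf{X}_{\mathcal{P}}^2$ under both $\prec_C^{(t)}$ and $\prec_D^{(t)}$ (the latter uniformly in $p\in[1,\infty)$). Because the two classes $\mathbf{X}_{\mathcal{P}}^1,\mathbf{X}_{\mathcal{P}}^2$ are in bijection with the two values $\mathbb{E}_{j,r}\gamma_{j,r,1}^{(t)}$ and $\mathbb{E}_{j,r}\gamma_{j,r,2}^{(t)}$, and the direction of the inequality between these values matches the direction of the order between the classes, the sample ordering under either criterion on $\mathcal{P}$ agrees with the ordering induced on $\mathcal{P}$ by $\mathbb{E}_{j,r}\gamma_{j,r,l_{\mathbf{x}}}^{(t)}$, which is exactly the claim of Lemma \ref{app:lem:sample stage: learning on sample}.

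The main technical subtlety, already absorbed into the proofs of Proposition \ref{prop_App:order_newdata} and Lemma \ref{app:Lem:order_pool_app}, is that a naive pairwise union bound over the $\Theta(|\mathcal{P}|^2)$ comparisons would be too costly; the crucial point is that the sufficient event $\Omega_\gamma$ in (\ref{eq:e.24}) couples all such comparisons through the single quantity $\max_{j,r,l}|\langle\mathbf{w}_{j,r}^{(t)},\mathbf{z}_l\rangle|$, so one Gaussian tail bound plus a union over $8m$ filter indices suffices to control every pair simultaneously at failure probability $\Theta(\delta')$. Once this coupling is in place, no further calculation is needed and the lemma follows immediately.
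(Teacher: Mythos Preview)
Your proposal is correct and matches the paper's approach: the paper states the lemma follows directly from Lemma \ref{app:Lem:order_pool_app} and (\ref{eq:e.24}), and you carry out exactly this reduction, observing that since $\mathbb{E}_{j,r}\gamma_{j,r,l}^{(t)}$ takes only two values (one per feature class), the set-level order $\mathbf{X}_{\mathcal{P}}^1\prec^{(t)}\mathbf{X}_{\mathcal{P}}^2$ is equivalent to ordering by $\mathbb{E}_{j,r}\gamma_{j,r,l}^{(t)}$. One minor note: your remark that the $\Theta(|\mathcal{P}|^2)$ comparisons are coupled through a single $\max_{j,r,l}|\langle\mathbf{w}_{j,r}^{(t)},\mathbf{z}_l\rangle|$ is a cleaner perspective than the paper's own proof of Lemma \ref{app:Lem:order_pool_app}, which instead treats pairwise comparisons as independent and takes the product $(1-\delta')^{\Theta(|\mathcal{P}|^2)}$; both viewpoints yield the $1-\Theta(\delta')$ guarantee, but yours avoids the questionable independence assumption.
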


\subsection{Label Complexity-based Test Error Analysis}\label{App:Label Complexity-based Test Error Analysis}
In this section, we suggest the results in the previous sections all hold with high probability. With the results of the final scale of the coefficients as well as the order situation of the data in sampling pool $\mathcal{P}$, we can now take a look on the test error upper and lower bound under distinct conditions before and after querying.

\begin{lemma} (Partial restatement of Lemma \ref{main:lem for thm})
    Under Condition \ref{Con4.1}, for a test set $\mathcal{D^*} \subseteq \mathcal{D^*}$ with occurrence probability $p^*$ of the $\boldsymbol{\mu}_2$-equipped data, then $\exists\ t=\widetilde{O}\left(\eta^{-1} \varepsilon^{-1} m n_0 d^{-1} \sigma_p^{-2}\right)$, we have the following two situations before and after querying (i.e., $\forall s \in \{0, 1\}$):
\begin{itemize}
    \item If $\forall l \in \{1,2\}, n_{s, l} \geq \dfrac{C_1 \sigma_p^4 d}{\|\boldsymbol{\mu}_l\|_2^4}$ holds, we have the test error:
    \begin{equation}
    L_{\mathcal{D}^*}^{0-1}\left(\mathbf{W}^{(t)}\right) \leq (1-p^*) \cdot \exp \left( \dfrac{-n_{s, 1}\|\boldsymbol{\mu}_1\|_2^4}{C_3 \sigma_p^4 d} \right) + p^* \cdot \exp \left( \dfrac{-n_{s, 2}\|\boldsymbol{\mu}_2 \|_2^4}{C_4 \sigma_p^4 d} \right).
    \label{eq:E.21}\end{equation}
    \item If $\exists l^{\prime} \in \{1, 2\} n_{s, l^{\prime}} \leq \dfrac{C_2 \sigma_p^4 d}{\|\boldsymbol{\mu}_{l^{\prime}}\|_2^4 }$ holds, where $C_1$ is from Condition \ref{Con4.1}, we have the test error 
    \begin{equation}
    L_{\mathcal{D}^*}^{0-1}\left(\mathbf{W}^{(t)}\right)  \geq 0.12 \cdot p^*_{l^{\prime}}. 
    \label{eq:E.22}\end{equation}
\end{itemize}
Here $p^*_{l^\prime}$ denotes the occurrence probability of feature $\boldsymbol{\mu}_{l^\prime}$, $C_1$, $C_2$, $C_3$ and $C_4$ are some positive constants.
\label{app:lemma for thm}\end{lemma}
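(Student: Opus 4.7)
The plan is to classify a random test point $(\mathbf{x}, y) \sim \mathcal{D}^*$ by decomposing $y f(\mathbf{W}^{(t)}, \mathbf{x})$ into a deterministic feature signal and a Gaussian noise contribution, and then invoke either Gaussian concentration or anti-concentration depending on the regime. Concretely, conditioning on the feature patch being $y \cdot \boldsymbol{\mu}_l$, I would write
\begin{equation*}
y f(\mathbf{W}^{(t)}, \mathbf{x}) = \underbrace{\frac{1}{m}\sum_{j,r} jy\, \sigma(\langle \mathbf{w}_{j,r}^{(t)}, y\boldsymbol{\mu}_l\rangle)}_{=:S_l} \; + \; \underbrace{\frac{1}{m}\sum_{j,r} jy\, \sigma(\langle \mathbf{w}_{j,r}^{(t)}, \boldsymbol{\xi}\rangle)}_{=:N},
\end{equation*}
and invoke Lemma \ref{app:lemma_wt} which gives $\langle \mathbf{w}_{y,r}^{(t)}, y\boldsymbol{\mu}_l\rangle = \Theta(\gamma_{y,r,l}^{(t)})>0$ while $\langle \mathbf{w}_{-y,r}^{(t)}, y\boldsymbol{\mu}_l\rangle < 0$. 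By the ReLU, only the $j=y$ terms survive in $S_l$, so $S_l = \Theta(m^{-1}\sum_r \gamma_{y,r,l}^{(t)}) = \Theta\bigl(n_{s,l}\|\boldsymbol{\mu}_l\|_2^2/(\sigma_p^2 d)\bigr)$ by Lemma \ref{app:lem:final coefficient} together with $n_{s,l} = \Theta(\tau_l n)$ from Lemma \ref{lem:rho n}.

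For the upper bound, I would bound the misclassification probability $P(S_l + N \le 0)$ using sub-Gaussian concentration for $N$, conditional on $\mathbf{W}^{(t)}$. Since $\boldsymbol{\xi}$ is independent of the training data, each $\langle \mathbf{w}_{j,r}^{(t)}, \boldsymbol{\xi}\rangle \sim \mathcal{N}(0, \sigma_p^2 \|\mathbf{w}_{j,r}^{(t)}\|_2^2)$, and the norm bound $\|\mathbf{w}_{j,r}^{(t)}\|_2 \le \Theta(\sigma_p^{-1} d^{-1/2} n^{1/2})$ from Lemma \ref{app:lemma_wt} gives marginal variance $O(n/d)$. Viewing $N$ as a $1$-Lipschitz function of $\boldsymbol{\xi}/\sigma_p$ (via ReLU Lipschitzness) with Lipschitz constant $O(m^{-1}\sum_{j,r}\|\mathbf{w}_{j,r}^{(t)}\|_2) = O(\sigma_p^{-1}\sqrt{n/d})$, Gaussian Lipschitz concentration yields
\begin{equation*}
P\bigl(y f(\mathbf{W}^{(t)}, \mathbf{x}) \le 0 \,\big|\, \text{feature } \boldsymbol{\mu}_l\bigr) \le \exp\!\bigl(-\Theta(S_l^2 d / n)\bigr) \le \exp\!\left(-\Theta\!\left(\tfrac{n_{s,l}\|\boldsymbol{\mu}_l\|_2^4}{\sigma_p^4 d}\right)\right),
\end{equation*}
where the last step uses $n_{s,l}^2/n = \Omega(n_{s,l})$ because $n_{s,l} = \Theta(\tau_l n)$. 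Averaging over the test marginal weights $(1-p^*, p^*)$ across feature types yields (\ref{eq:E.21}).

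For the lower bound, if $n_{s,l'} \le C_2 \sigma_p^4 d / \|\boldsymbol{\mu}_{l'}\|_2^4$ for some $l' \in\{1,2\}$, the same scale calculation gives $S_{l'} \le O(\sqrt{n/d})$, so the signal is at most a constant multiple of the noise standard deviation. I would then isolate a single neuron's projection $\langle \mathbf{w}_{y,r^\star}^{(t)}, \boldsymbol{\xi}\rangle$ of variance $\Omega(n/d)$ (guaranteed by the $\Omega(\cdot)$ side of the norm bound in Lemma \ref{app:lemma_wt}) to serve as an anti-concentration lever, and use the classical Gaussian lower-tail estimate $P(Z \le -c) \ge 0.12$ for standard normal $Z$ and a small absolute constant $c$, forcing the conditional misclassification probability above $0.12$; multiplying by $p^*_{l'}$ produces (\ref{eq:E.22}). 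The main technical obstacle is handling the correlation structure of $N$: the $2m$ ReLU summands share the same $\boldsymbol{\xi}$, so independence-based tail bounds lose factors of $m$; treating $N$ as one Lipschitz function of $\boldsymbol{\xi}$ and applying Gaussian Lipschitz concentration (rather than a union bound over neurons) is the clean route, and compatibility of this sub-Gaussian proxy with the anti-concentration step requires carefully pairing the upper bound on $N$'s variance with a matching lower bound on the variance of an exposed sub-component.
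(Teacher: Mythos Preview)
Your upper bound argument is essentially the paper's: both decompose $yf(\mathbf{W}^{(t)},\mathbf{x})$ into the feature contribution $\sum_r\sigma(\langle\mathbf{w}_{y,r}^{(t)},y\boldsymbol{\mu}_l\rangle)=\Theta(\sum_r\gamma_{y,r,l}^{(t)})$ and a noise term, invoke the norm bound $\|\mathbf{w}_{j,r}^{(t)}\|_2\le\Theta(\sigma_p^{-1}d^{-1/2}n^{1/2})$ from Lemma~\ref{app:lemma_wt}, and apply Gaussian Lipschitz concentration (the paper cites Vershynin's Theorem~5.2.2). The only minor difference is that the paper drops the $j=y$ noise terms outright and works with $g(\boldsymbol{\xi})=\sum_r\sigma(\langle\mathbf{w}_{-y,r}^{(t)},\boldsymbol{\xi}\rangle)$, explicitly centering by $\mathbb{E}g(\boldsymbol{\xi})=(\sigma_p/\sqrt{2\pi})\sum_r\|\mathbf{w}_{-y,r}^{(t)}\|_2$, which is where the hypothesis $n_{s,l}\ge C_1\sigma_p^4 d/\|\boldsymbol{\mu}_l\|_2^4$ is used to ensure the centered signal stays positive.

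Your lower bound plan, however, has a real gap. First, Lemma~\ref{app:lemma_wt} only proves $\|\mathbf{w}_{j,r}^{(t)}\|_2\le\Theta(\sigma_p^{-1}d^{-1/2}n^{1/2})$; there is no matching $\Omega(\cdot)$ side there, so the variance lower bound you need for a single neuron is not available from that lemma. Second, and more seriously, even granting a per-neuron norm lower bound, ``isolating'' one projection $\langle\mathbf{w}_{y,r^\star}^{(t)},\boldsymbol{\xi}\rangle$ does not drive the full signed ReLU sum $N$ negative: if that projection is very negative its ReLU simply vanishes, and the $j=-y$ contributions depend on \emph{different} (correlated) projections. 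So a one-dimensional Gaussian lower-tail bound on a single coordinate does not translate into $P(N\le -S_{l'})\ge 0.12$ without additional structure.

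The paper does something quite different for the lower bound. It first uses the label symmetry to reduce to showing that
\[
P\!\left(\Big|\sum_r\sigma(\langle\mathbf{w}_{1,r}^{(t)},\boldsymbol{\xi}\rangle)-\sum_r\sigma(\langle\mathbf{w}_{-1,r}^{(t)},\boldsymbol{\xi}\rangle)\Big|\ge C_6\max_j\sum_r\gamma_{j,r,l'}^{(t)}\right)\ge 0.24,
\]
and then invokes a pigeonhole lemma (Lemma~\ref{app:lem5.8:in kou}, borrowed from Kou et al.): under the hypothesis $n_{s,l'}\le C_2\sigma_p^4 d/\|\boldsymbol{\mu}_{l'}\|_2^4$ one can construct a \emph{small} shift vector $\mathbf{v}$ with $\|\mathbf{v}\|_2\le 0.02\,\sigma_p$ such that, for every $\boldsymbol{\xi}$, at least one of the four points $\pm\boldsymbol{\xi}$, $\pm\boldsymbol{\xi}+\mathbf{v}$ lies in the ``bad'' set $\Omega_{\boldsymbol{\xi}}$. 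Since the four sets cover $\mathbb{R}^d$, one has measure $\ge 1/4$ for the unshifted measure up to a TV correction, and Lemma~\ref{app:leme.2:in devroye} bounds $\mathrm{TV}(\mathcal{N}(0,\sigma_p^2\mathbf{I}),\mathcal{N}(\mathbf{v},\sigma_p^2\mathbf{I}))\le\|\mathbf{v}\|_2/(2\sigma_p)\le 0.01$, yielding $P(\Omega_{\boldsymbol{\xi}})\ge 0.24$. This pigeonhole-plus-TV argument is the missing ingredient; your proposal should replace the single-neuron anti-concentration sketch with it.
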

\textit{Proof of Lemma \ref{app:lemma for thm}.} Recall the test error definition and consider the proportion of different type of data in the testing set $\mathcal{D}^*$, we have:
\begin{equation}
\begin{aligned}
L_{\mathcal{D}^*}^{0-1}(\mathbf{W}) &=\mathbb{P}_{(\mathbf{x}, y) \sim \mathcal{D}^*}[y \cdot f(\mathbf{W}, \mathbf{x}) < 0]\\
&=(1-p^*) \cdot \mathbb{P}_{(\mathbf{x}, y) \sim \mathcal{D}_{\boldsymbol{\mu}_1}^*}[y \cdot f(\mathbf{W}, \mathbf{x}) < 0] +p^* \cdot \mathbb{P}_{(\mathbf{x}, y) \sim \mathcal{D}_{\boldsymbol{\mu}_2}^*}[y \cdot f(\mathbf{W}, \mathbf{x}) < 0],
\end{aligned}
\label{eq:E.23}\end{equation}
where $\mathcal{D}_{\boldsymbol{\mu}_1}^*$  and $\mathcal{D}_{\boldsymbol{\mu}_2}^*$ denotes the collection of data points in $\mathcal{D}$ containing feature $\boldsymbol{\mu}_1$ and $\boldsymbol{\mu}_2$, respectively.\par
First, we seek to prove the first bullet point. We utilize the techniques similar to the proofs of Theorem 1 in \citet{chatterji2021finite}, Lemma 3 in \citet{frei2022benign}, Theorem E.1 in \citet{kou2023benign} and Theorem 3.2 in \citet{meng2023benign}. Denote the feature patch in $\mathbf{x}$ as $\boldsymbol{\mu}_{l_x}$ ($l_x \in \{1, 2\}$), we first take a look at the product 
\begin{equation}
\begin{aligned}
 y \cdot f\left(\mathbf{W}^{(t)}, \mathbf{x}\right)&=\frac{1}{m} \sum_{j, r} y j\left[\sigma\left(\left\langle\mathbf{w}_{j, r}^{(t)}, y \boldsymbol{\mu}_{l_x}\right\rangle\right)+\sigma\left(\left\langle\mathbf{w}_{j, r}^{(t)}, \boldsymbol{\xi}\right\rangle\right)\right] \\
& =\frac{1}{m} \sum_r\left[\sigma\left(\left\langle\mathbf{w}_{y, r}^{(t)}, y \boldsymbol{\mu}_{l_x}\right\rangle\right)+\sigma\left(\left\langle\mathbf{w}_{y, r}^{(t)}, \boldsymbol{\xi}\right\rangle\right)\right]-\frac{1}{m} \sum_r\left[\sigma\left(\left\langle\mathbf{w}_{-y, r}^{(t)}, y \boldsymbol{\mu}_{l_x}\right\rangle\right)+\sigma\left(\left\langle\mathbf{w}_{-y, r}^{(t)}, \boldsymbol{\xi}\right\rangle\right)\right]\\
& \leq \frac{1}{m} \left[ \sum_r \sigma\left(\left\langle\mathbf{w}_{y, r}^{(t)}, y \boldsymbol{\mu}_{l_x}\right\rangle\right) - \sum_r \sigma\left(\left\langle\mathbf{w}_{-y, r}^{(t)}, \boldsymbol{\xi}\right\rangle\right) \right].
\end{aligned}
\label{eq:E.24}\end{equation}
Denote $g(\boldsymbol{\xi})$ as $\sum_r \sigma\left(\left\langle\mathbf{w}_{-y, r}^{(t)}, \boldsymbol{\xi}\right\rangle\right)$. Since $\left\langle\mathbf{w}_{-y, r}^{(t)}, \boldsymbol{\xi}\right\rangle \sim \mathcal{N}\left(0,\left\|\mathbf{w}_{-y, r}^{(t)}\right\|_2^2 \sigma_p^2\right)$, we can get
\begin{equation}
\mathbb{E} g(\boldsymbol{\xi})=\sum_{r=1}^m \mathbb{E} \sigma\left(\left\langle\mathbf{w}_{-y, r}^{(t)}, \boldsymbol{\xi}\right\rangle\right)=\sum_{r=1}^m \frac{\left\|\mathbf{w}_{-y, r}^{(t)}\right\|_2 \sigma_p}{\sqrt{2 \pi}}=\frac{\sigma_p}{\sqrt{2 \pi}} \sum_{r=1}^m\left\|\mathbf{w}_{-y, r}^{(t)}\right\|_2 . 
\label{eq:E.25}\end{equation}
Then we can obtain the following test error upper bound on $\mathcal{D}_{\boldsymbol{\mu}_{l_x}}^*$ by adding $\mathbb{E} g(\boldsymbol{\xi})$ and $\dfrac{\sigma_p}{\sqrt{2 \pi}} \sum_{r=1}^m\left\|\mathbf{w}_{-y, r}^{(t)}\right\|_2$ at two sides of the inequality:
\begin{equation}
\begin{aligned}
\mathbb{P}_{(\mathbf{x}, y) \sim \mathcal{D}_{\boldsymbol{\mu}_{l_x}}^*}\left(y f\left(\boldsymbol{W}^{(t)}, \mathbf{x}\right) \leq 0\right) & \leq \mathbb{P}_{(\mathbf{x}, y) \sim \mathcal{D}}\left(\sum_r \sigma\left(\left\langle\mathbf{w}_{-y, r}^{(t)}, \boldsymbol{\xi}\right\rangle\right) \geq \sum_r \sigma\left(\left\langle\mathbf{w}_{y, r}^{(t)}, y \boldsymbol{\mu}_{l_x}\right\rangle\right)\right) \\
& =\mathbb{P}_{(\mathbf{x}, y) \sim \mathcal{D}}\left(g(\boldsymbol{\xi})-\mathbb{E} g(\boldsymbol{\xi}) \geq \sum_r \sigma\left(\left\langle\mathbf{w}_{y, r}^{(t)}, y \boldsymbol{\mu}_{l_x}\right\rangle\right)-\frac{\sigma_p}{\sqrt{2 \pi}} \sum_{r=1}^m\left\|\mathbf{w}_{-y, r}^{(t)}\right\|_2\right).
\end{aligned}
\label{eq:E.26}\end{equation}
By the results in Lemma \ref{app:lem:final coefficient} and Lemma \ref{app:lemma_wt}, we take a look at the comparison of the two terms at the right side of the inequality:
\begin{equation}
\frac{\sum_r \sigma\left(\left\langle\mathbf{w}_{y, r}^{(t)}, y \boldsymbol{\mu}_{l_x}\right\rangle\right)}{\sigma_p \sum_{r=1}^m\left\|\mathbf{w}_{-y, r}^{(t)}\right\|_2} \geq \frac{\Theta\left(\sum_r \gamma_{y, r, l_x}^{(t)}\right)}{\Theta\left(d^{-1 / 2} n_{s}^{-1 / 2}\right) \cdot \sum_{r, i} \bar{\rho}_{-y, r, i}^{(t)}}=\Theta\left(\tau_{l_x}d^{1 / 2} n_{s}^{1 / 2} \operatorname{SNR}_{l_x}^2\right)=\Theta\left(\tau_{l_x}n_{s}^{1 / 2}\|\boldsymbol{\mu}_{l_x}\|_2^2 / (\sigma_p^2 d^{1 / 2} )\right),
\label{eq:E.27}\end{equation}
where $\tau_{l_x}$ denotes the proportion of feature $\boldsymbol{\mu}_{l_x}$ in current training data set (before or after querying). Worth noting that we have assumption in the first bullet that $\forall l \in \{1,2\}, n_{s, l} \geq \dfrac{C_1 \sigma_p^4 d}{\|\boldsymbol{\mu}_l\|_2^4}$, which means $n_{1,l_x}\|\boldsymbol{\mu}_1\|_2^4 \geq 2C_1 \sigma_p^4 d, \forall l_x \in \{1, 2\}$. Since $C_1$ is a sufficiently large constant, it directly follows that
$$
\sum_r \sigma\left(\left\langle\mathbf{w}_{y, r}^{(t)}, y \boldsymbol{\mu}_{l_x}\right\rangle\right)-\frac{\sigma_p}{\sqrt{2 \pi}} \sum_{r=1}^m\left\|\mathbf{w}_{-y, r}^{(t)}\right\|_2>0 .
$$
By Theorem 5.2.2 in \citet{vershynin2018high}, we know that for any $x \geq 0$, the following holds
\begin{equation}
P(g(\boldsymbol{\xi})-\mathbb{E} g(\boldsymbol{\xi}) \geq x) \leq \exp \left(-\frac{c x^2}{\sigma_p^2\|g\|_{\text {Lip }}^2}\right),
\label{eq:E.28}\end{equation}
where $c$ is a constant. To calculate the Lipschitz norm, we have
$$
\begin{aligned}
\left|g(\boldsymbol{\xi})-g\left(\boldsymbol{\xi}^{\prime}\right)\right| & =\left|\sum_{r=1}^m \sigma\left(\left\langle\mathbf{w}_{-y, r}^{(t)}, \boldsymbol{\xi}\right\rangle\right)-\sum_{r=1}^m \sigma\left(\left\langle\mathbf{w}_{-y, r}^{(t)}, \boldsymbol{\xi}^{\prime}\right\rangle\right)\right| \\
& \leq \sum_{r=1}^m\left|\sigma\left(\left\langle\mathbf{w}_{-y, r}^{(t)}, \boldsymbol{\xi}\right\rangle\right)-\sigma\left(\left\langle\mathbf{w}_{-y, r}^{(t)}, \boldsymbol{\xi}^{\prime}\right\rangle\right)\right| \\
& \leq \sum_{r=1}^m\left|\left\langle\mathbf{w}_{-y, r}^{(t)}, \boldsymbol{\xi}-\boldsymbol{\xi}^{\prime}\right\rangle\right| \\
&\leq \sum_{r=1}^m\left\|\mathbf{w}_{-y, r}^{(t)}\right\|_2 \cdot\left\|\boldsymbol{\xi}-\boldsymbol{\xi}^{\prime}\right\|_2,
\end{aligned}
$$
where the first inequality is by triangle inequality; the second inequality is by the property of ReLU; the last inequality is by Cauchy Schwartz Inequality. Therefore, we have
\begin{equation}
\|g\|_{\text {Lip }} \leq \sum_{r=1}^m\left\|\mathbf{w}_{-y, r}^{(t)}\right\|_2.
\label{eq:E.29}\end{equation}
Utilize (\ref{eq:E.28}) and (\ref{eq:E.29}) in (\ref{eq:E.26}), we have
\begin{equation}
\begin{aligned}
\mathbb{P}_{(\mathbf{x}, y) \sim \mathcal{D}_{\boldsymbol{\mu}_{l_x}}^*}\left(y f\left(\boldsymbol{W}^{(t)}, \mathbf{x}\right) \leq 0\right)
& \leq \exp \left[-\frac{c\left(\sum_r \sigma\left(\left\langle\mathbf{w}_{y, r}^{(t)}, y \boldsymbol{\mu}_{l_x}\right\rangle\right)-\left(\dfrac{\sigma_p}{\sqrt{2 \pi}}\right) \sum_{r=1}^m\left\|\mathbf{w}_{-y, r}^{(t)}\right\|_2\right)^2}{\sigma_p^2\left(\sum_{r=1}^m\left\|\mathbf{w}_{-y, r}^{(t)}\right\|_2\right)^2}\right] \\
&= \exp \left[-c\left(\frac{\sum_r \sigma\left(\left\langle\mathbf{w}_{y, r}^{(t)}, y \boldsymbol{\mu}_{l_x}\right\rangle\right)}{\sigma_p \sum_{r=1}^m \| \mathbf{w}_{-y, r}^{(t)}\|_2} - \dfrac{1}{\sqrt{2 \pi}}\right)^2 \right]\\
& \leq \exp (c / 2 \pi) \exp \left(-0.5 c\left(\frac{\sum_r \sigma\left(\left\langle\mathbf{w}_{y, r}^{(t)}, y \boldsymbol{\mu}_{l_x}\right\rangle\right)}{\sigma_p \sum_{r=1}^m\left\|\mathbf{w}_{-y, r}^{(t)}\right\|_2}\right)^2\right),
\end{aligned}
\label{eq:E.30}\end{equation}
where the third inequality is by $(s-t)^2 \geq s^2 / 2-t^2, \forall s, t \geq 0$.
And then by (\ref{eq:E.27}) and (\ref{eq:E.30}), we can have
\begin{equation}
\begin{aligned}
\mathbb{P}_{(\mathbf{x}, y) \sim \mathcal{D}_{\boldsymbol{\mu}_{l_x}}^*}\left(y f\left(\boldsymbol{W}^{(t)}, \mathbf{x}\right) \leq 0\right) & \leq \exp (c / 2 \pi) \exp \left(-0.5 c\left(\frac{\sum_r \sigma\left(\left\langle\mathbf{w}_{y, r}^{(t)}, y \boldsymbol{\mu}_{l_x}\right\rangle\right)}{\sigma_p \sum_{r=1}^m\left\|\mathbf{w}_{-y, r}^{(t)}\right\|_2}\right)^2\right) \\
& =\exp \left(\frac{c}{2 \pi}-\frac{\tau_{l_x}n_{s, l_x}\|\boldsymbol{\mu}_{l_x}\|_2^4}{C \sigma_p^4 d}\right) \\
& =\exp \left(\frac{c}{2 \pi}-\frac{n_{s, l_x}\|\boldsymbol{\mu}_{l_x}\|_2^4}{C_{l_x} \sigma_p^4 d}\right) \\
& \leq \exp \left(-\frac{n_{s, l_x}\|\boldsymbol{\mu}_{l_x}\|_2^4}{2 C_{l_x} \sigma_p^4 d}\right)
\end{aligned}
\label{eq:E.31}\end{equation}
where $C_{l_x}=C/\tau_{lx}=O(1)$; the last inequality holds if we choose $C_1 \geq c C_{l_x} / \pi$, for any $l_x \in \{1, 2\}$. If we choose $C_3$ as $2 C_{l_1}$ and $C_4$ as $2 C_{l_2}$, by (\ref{eq:E.23}) and (\ref{eq:E.31}) we have
\[
L_{\mathcal{D}^*}^{0-1}\left(\mathbf{W}^{(t)}\right) \leq (1-p^*) \cdot \exp \left( \dfrac{-n_{s, 1}\|\boldsymbol{\mu}_1\|_2^4}{C_3 \sigma_p^4 d} \right) + p^* \cdot \exp \left( \dfrac{-n_{s, 2}\|\boldsymbol{\mu}_2 \|_2^4}{C_4 \sigma_p^4 d} \right).
\]
Next, we serve to prove the second bullet point. We utilize the pigeonhole principle technique in \citet{kou2023benign, meng2023benign}, which is based on the following two lemmas.
\begin{lemma}
For $t \in\left[T_1, T^*\right]$, denote $g(\boldsymbol{\xi})=\sum_{j, r} \sigma\left(\left\langle\mathbf{w}_{j, r}^{(t)}, \boldsymbol{\xi}\right\rangle\right)$. There exists a fixed vector $\mathbf{v}_l$ with $\|\mathbf{v}_l\|_2 \leq 0.02 \sigma_p$ and constant $C_6$ such that
$$
\sum_{j^{\prime} \in\{ \pm 1\}}\left[g\left(j^{\prime} \boldsymbol{\xi}+\mathbf{v}_l\right)-g\left(j^{\prime} \boldsymbol{\xi}\right)\right] \geq 4 C_6 \max _{j, l}\left\{\sum_r \gamma_{j, r, l}^{(t)}\right\},
$$
for all $\boldsymbol{\xi} \in \mathbb{R}^d$.
\label{app:lem5.8:in kou}\end{lemma}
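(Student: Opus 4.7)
The plan is to instantiate the pigeonhole-style argument from \citet{kou2023benign} (Lemma~5.8) in our setting. Let $(j^*, l^*) \in \arg\max_{j,l} \sum_r \gamma_{j,r,l}^{(t)}$, and take $\mathbf{v}_l$ to be a fixed vector of norm at most $0.02\sigma_p$, engineered so that the inner products $\langle \mathbf{w}_{j,r}^{(t)}, \mathbf{v}_l\rangle$ aggregate over $(j,r)$ to a quantity comparable to $\max_{j,l}\sum_r \gamma_{j,r,l}^{(t)}$. Constructing $\mathbf{v}_l$ is the delicate step; the derivation of the claimed bound, given $\mathbf{v}_l$, proceeds analytically as follows.

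The analytic backbone is a pointwise ReLU identity. Writing $a_{j,r} \mathrel{\mathop:}= \langle \mathbf{w}_{j,r}^{(t)},\boldsymbol{\xi}\rangle$ and $b_{j,r} \mathrel{\mathop:}= \langle \mathbf{w}_{j,r}^{(t)},\mathbf{v}_l\rangle$, a direct case analysis using $\sigma(u)+\sigma(-u) = |u|$ yields
\begin{equation*}
\sigma(a_{j,r}+b_{j,r}) + \sigma(-a_{j,r}+b_{j,r}) - |a_{j,r}| \;=\; b_{j,r} \;+\; \bigl(|b_{j,r}|-|a_{j,r}|\bigr)_+ .
\end{equation*}
Summing over $(j,r)$, the LHS of the target inequality becomes $\sum_{j,r} b_{j,r} + \sum_{j,r}\bigl(|b_{j,r}|-|a_{j,r}|\bigr)_+$; the second term is $\boldsymbol{\xi}$-dependent but always non-negative, so the task reduces to choosing $\mathbf{v}_l$ so that $\sum_{j,r} b_{j,r} \geq 4 C_6 \max_{j,l}\sum_r \gamma_{j,r,l}^{(t)}$ uniformly in $\boldsymbol{\xi}$.

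To secure this, we expand $b_{j,r}$ via the signal-noise decomposition (Definition~\ref{app:Def:signol-noise-decomposition}). The naive choice $\mathbf{v}_l \propto \boldsymbol{\mu}_{l^*}$ fails because the multiplicative $j$-factor in the signal term induces a sign flip between $j=+1$ and $j=-1$ neurons that cancels the sum. Following the Kou et al.\ construction, we combine a signal direction with a sign pattern inherited from the training noise indices---for instance $\mathbf{v}_l \propto \sum_i \boldsymbol{\xi}_i$ suitably rescaled---so that $\langle \mathbf{w}_{j,r}^{(t)},\mathbf{v}_l\rangle$ picks up $\bar\rho_{j,r,i}^{(t)}$-type contributions that are non-negative uniformly in $(j,r)$. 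Combined with the coefficient-ratio identity $\gamma_{j,r,l}^{(t)} = \Theta\bigl(\tau_l\|\boldsymbol{\mu}_l\|_2^2/(d\sigma_p^2)\bigr)\sum_i \bar\rho_{j,r,i}^{(t)}$ from Lemma~\ref{app:lem:final coefficient} and the inner-product estimates in Lemmas~\ref{Lem:E.1}--\ref{Lem:E.2}, this converts the lower bound into a constant multiple of $\sum_r \gamma_{j^*,r,l^*}^{(t)}$ and pins down $C_6$ in terms of the problem parameters.

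The main obstacle is the engineering of $\mathbf{v}_l$: it must simultaneously (i) have norm at most $0.02\sigma_p$, requiring tight control of $\|\sum_i \boldsymbol{\xi}_i\|_2$ via Lemma~\ref{Lem:E.1}; (ii) make $\sum_{j,r}\langle \mathbf{w}_{j,r}^{(t)},\mathbf{v}_l\rangle$ of the claimed order, exploiting the sign asymmetry $\bar\rho \geq 0$ vs.\ $\underline\rho \leq 0$ from Lemma~\ref{lem:update of coef} to avoid $j\in\{\pm 1\}$ cancellation; and (iii) remain a single fixed vector independent of the variable $\boldsymbol{\xi}$ appearing in the statement (the $\boldsymbol{\xi}_i$'s in its construction are the fixed training noise patches). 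Verifying (ii) at the per-neuron level, using the scale bounds in Proposition~\ref{prop:E.8} and Lemma~\ref{app:lem:final coefficient} while keeping the contributions from the initialization and cross-noise terms strictly subdominant, is the core technical burden of the proof.
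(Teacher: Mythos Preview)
Your proposal is correct and in fact supplies considerably more detail than the paper itself, which simply cites Lemma~5.8 of \citet{kou2023benign} and Theorem~3.2 of \citet{meng2023benign} and remarks that the harmful-overfitting condition $n_{s,l'} \le C_2\sigma_p^4 d/\|\boldsymbol{\mu}_{l'}\|_2^4$ is what caps $\|\mathbf{v}_l\|_2$. Your ReLU identity $\sigma(a+b)+\sigma(-a+b)-|a|=b+(|b|-|a|)_+$ and the ensuing reduction to lower-bounding $\sum_{j,r}\langle\mathbf{w}_{j,r}^{(t)},\mathbf{v}_l\rangle$ are exactly the analytic backbone of the cited argument, and your diagnosis that a pure signal direction fails due to the $j$-sign cancellation while a noise-aligned direction succeeds via the $\bar\rho \gg |\underline\rho|$ asymmetry is the right mechanism.

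Two minor points worth tightening. First, the paper's hint singles out the harmful-overfitting bound on $n_{s,l'}$ as the ingredient that makes $\|\mathbf{v}_l\|_2\le 0.02\sigma_p$ compatible with the required lower bound; in your write-up this enters implicitly when you compare $\Omega(m\sqrt{n/d})$ against $\max_l \Theta(m\tau_l n\|\boldsymbol{\mu}_l\|_2^2/(\sigma_p^2 d))$, and you should make that dependence explicit (note also that in the paper's application only the index $l'$ matters on the right-hand side, despite the $\max_{j,l}$ in the lemma statement). Second, the precise construction in \citet{kou2023benign} takes $\mathbf{v}$ proportional to a sum of the trained weight vectors rather than the raw $\sum_i\boldsymbol{\xi}_i$ you suggest; the two are essentially equivalent here since the weights are noise-dominated, but using the weights directly spares you the cross-term bookkeeping $\sum_{i\ne k}\langle\boldsymbol{\xi}_i,\boldsymbol{\xi}_k\rangle$ under Lemma~\ref{Lem:E.1}.
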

\textit{Proof of Lemma \ref{app:lem5.8:in kou}.} See Lemma 5.8 in \citet{kou2023benign} or Theorem 3.2 in \citet{meng2023benign} for a proof, where we utilize a large enough $C_2$ in the condition given in the second bullet point ($ n_{s, {l^{\prime}}} \leq \dfrac{C_2 \sigma_p^4 d}{\|\boldsymbol{\mu}_{l^{\prime}}\|_2^4 }$) to control the norm of $\mathbf{v}_l$.
\begin{lemma}
(Proposition 2.1 in \citet{devroye2018total}). The TV distance between $\mathcal{N}\left(0, \sigma_p^2 \mathbf{I}_d\right)$ and $\mathcal{N}\left(\mathbf{v}_l, \sigma_p^2 \mathbf{I}_d\right)$ is smaller than $\|\mathbf{v}_l\|_2 / 2 \sigma_p$.
\label{app:leme.2:in devroye}\end{lemma}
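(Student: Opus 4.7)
The plan is to prove this classical Gaussian TV-bound in two short steps: reduce to a KL-divergence computation via Pinsker's inequality, and then exploit the shared-covariance structure to evaluate the KL in closed form. Concretely, let $P = \mathcal{N}(0, \sigma_p^2 \mathbf{I}_d)$ and $Q = \mathcal{N}(\mathbf{v}_l, \sigma_p^2 \mathbf{I}_d)$. Pinsker's inequality states that $d_{TV}(P, Q) \leq \sqrt{\tfrac{1}{2} D_{KL}(P \| Q)}$, so it suffices to compute (or upper bound) $D_{KL}(P \| Q)$.

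For the second step, I would invoke the standard closed-form expression: when two Gaussians $\mathcal{N}(\mu_1, \Sigma)$ and $\mathcal{N}(\mu_2, \Sigma)$ share the same covariance, one has $D_{KL} = \tfrac{1}{2}(\mu_1 - \mu_2)^\top \Sigma^{-1}(\mu_1 - \mu_2)$. This is derived by writing out the log-density ratio, observing that the quadratic forms in $x$ cancel (since the covariances agree), and taking the expectation under $P$. Plugging in $\mu_1 = 0$, $\mu_2 = \mathbf{v}_l$, and $\Sigma = \sigma_p^2 \mathbf{I}_d$ yields $D_{KL}(P \| Q) = \|\mathbf{v}_l\|_2^2/(2\sigma_p^2)$. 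Combined with Pinsker's bound, this gives $d_{TV}(P,Q) \leq \sqrt{\|\mathbf{v}_l\|_2^2/(4\sigma_p^2)} = \|\mathbf{v}_l\|_2/(2\sigma_p)$, which is exactly the claim.

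An alternative (and in fact sharper) route is direct: by Scheffe's identity the TV-distance equals $P(A) - Q(A)$ where $A = \{x : \|x\|^2 < \|x - \mathbf{v}_l\|^2\} = \{x : \langle x, \mathbf{v}_l\rangle < \tfrac{1}{2}\|\mathbf{v}_l\|^2\}$ is a half-space. Under $P$, $\langle X, \mathbf{v}_l\rangle \sim \mathcal{N}(0, \sigma_p^2\|\mathbf{v}_l\|^2)$, while under $Q$, $\langle Y, \mathbf{v}_l\rangle \sim \mathcal{N}(\|\mathbf{v}_l\|^2, \sigma_p^2\|\mathbf{v}_l\|^2)$. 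Thus $d_{TV} = \Phi(\|\mathbf{v}_l\|/(2\sigma_p)) - \Phi(-\|\mathbf{v}_l\|/(2\sigma_p))$, which by the mean-value theorem is bounded above by $\|\mathbf{v}_l\|/\sigma_p \cdot \phi(0) = \|\mathbf{v}_l\|/(\sigma_p\sqrt{2\pi}) \leq \|\mathbf{v}_l\|/(2\sigma_p)$.

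Honestly, there is no real obstacle here: the statement is a textbook inequality (Proposition 2.1 of Devroye et al.\ \citeyearpar{devroye2018total}), and the main care needed is just keeping the constant in Pinsker's inequality correct (the $\tfrac{1}{2}$ factor) so that the final bound comes out as $\|\mathbf{v}_l\|_2/(2\sigma_p)$ and not something looser. I would therefore present the Pinsker + KL-for-equal-covariance-Gaussians argument, as it is the shortest and most self-contained derivation and aligns with the constant appearing in the lemma statement.
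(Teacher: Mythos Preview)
Your proposal is correct; both the Pinsker-plus-KL route and the Scheffe/half-space route yield the stated bound, and the constants are tracked correctly. The paper itself does not give a proof at all but simply refers the reader to Proposition~2.1 of \citet{devroye2018total}, so your self-contained derivation already goes beyond what the paper provides.
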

\textit{Proof of Lemma \ref{app:leme.2:in devroye}.} See Proposition 2.1 in \citet{devroye2018total} for a proof. \par
Now we take a look at $L_{\mathcal{D}^*}^{0-1}\left(\mathbf{W}^{(t)}\right)$, by (\ref{eq:E.23}) we have:
\begin{equation}
\begin{aligned}
L_{\mathcal{D}^*}^{0-1}\left(\mathbf{W}^{(t)}\right) &=\tau^*_1 \cdot \mathbb{P}_{(\mathbf{x}, y) \sim \mathcal{D}_{\boldsymbol{\mu}_1}^*}\left[y \cdot f(\mathbf{W}, \mathbf{x}) < 0\right] + \tau^*_2) \cdot \mathbb{P}_{(\mathbf{x}, y) \sim \mathcal{D}_{\boldsymbol{\mu}_{2}}} \left[y \cdot f(\mathbf{W}, \mathbf{x}) < 0 \right] \\
& \geq \tau^*_{l^\prime} \cdot \mathbb{P}_{(\mathbf{x}, y) \sim \mathcal{D}_{\boldsymbol{\mu}_{l^{\prime}}}^*}[y \cdot f(\mathbf{W}, \mathbf{x}) < 0] \\
& = \tau^*_{l^\prime} \cdot \mathbb{P}_{(\mathbf{x}, y) \sim \mathcal{D}_{\boldsymbol{\mu}_{l^{\prime}}}^*}\Big(\sum_r \sigma\left(\left\langle\mathbf{w}_{-y, r}^{(t)}, \boldsymbol{\xi}\right\rangle\right)-\sum_r \sigma\left(\left\langle\mathbf{w}_{y, r}^{(t)}, \boldsymbol{\xi}\right\rangle\right)\\
&\phantom{\tau^*_{l^\prime} \cdot \mathbb{P}_{(\mathbf{x}, y) \sim \mathcal{D}_{\boldsymbol{\mu}_{l^{\prime}}}^*}\Big(}\geq \sum_r \sigma\left(\left\langle\mathbf{w}_{y, r}^{(t)}, y \boldsymbol{\mu}_{l^{\prime}}\right\rangle\right)-\sum_r \sigma\left(\left\langle\mathbf{w}_{-y, r}^{(t)}, y \boldsymbol{\mu}_{l^{\prime}}\right\rangle\right)\Big) \\
& \geq 0.5 \tau^*_{l^\prime} \cdot \mathbb{P}_{(\mathbf{x}, y) \sim \mathcal{D}_{\boldsymbol{\mu}_{l^{\prime}}}^*}\left(\left|\sum_r \sigma\left(\left\langle\mathbf{w}_{1, r}^{(t)}, \boldsymbol{\xi}\right\rangle\right)-\sum_r \sigma\left(\left\langle\mathbf{w}_{-1, r}^{(t)}, \boldsymbol{\xi}\right\rangle\right)\right| \geq C_6 \max \left\{\sum_r \gamma_{1, r, {l^{\prime}}}^{(t)}, \sum_r \gamma_{-1, r, {l^{\prime}}}^{(t)}\right\}\right) \\
& = 0.5 \tau^*_{l^\prime} \cdot P(\Omega_{\boldsymbol{\xi}}),
\end{aligned}
\label{eq:E.32}\end{equation}
where $\Omega_{\boldsymbol{\xi}}:=\left\{\boldsymbol{\xi}|| g(\boldsymbol{\xi}) \mid \geq C_6 \max \left\{\sum_r \gamma_{1, r, {l^{\prime}}}^{(t)}, \sum_r \gamma_{-1, r, {l^{\prime}}}^{(t)}\right\}\right\}$. The last inequality holds since we can always have a corresponding $y$ to make a wrong prediction if given $\boldsymbol{\xi}$, the $\left|\sum_r \sigma\left(\left\langle\mathbf{w}_{1, r}^{(t)}, \boldsymbol{\xi}\right\rangle\right)-\sum_r \sigma\left(\left\langle\mathbf{w}_{-1, r}^{(t)}, \boldsymbol{\xi}\right\rangle\right)\right|$ is large enough.

Next, we seek a lower bound of $P(\Omega_{\boldsymbol{\xi}})$. By Lemma \ref{app:lem5.8:in kou}, we have that $\sum_j[g(j \boldsymbol{\xi}+\mathbf{v}_l)-g(j \boldsymbol{\xi})] \geq$ $4 C_6 \max _{j, l}\left\{\sum_r \gamma_{j, r, l}^{(t)}\right\}$. Then by pigeon's hole principle, there must exist one of the $\boldsymbol{\xi}, \boldsymbol{\xi}+\mathbf{v}_l$, $-\boldsymbol{\xi},-\boldsymbol{\xi}+\mathbf{v}_l$ belongs $\Omega_{\boldsymbol{\xi}}$. So we have proved that $\Omega_{\boldsymbol{\xi}} \cup-\Omega_{\boldsymbol{\xi}} \cup \Omega_{\boldsymbol{\xi}}-\{\mathbf{v}_l\} \cup-\Omega_{\boldsymbol{\xi}}-\{\mathbf{v}_l\}=\mathbb{R}^d$. Therefore at least one of $P(\Omega_{\boldsymbol{\xi}}), P(-\Omega_{\boldsymbol{\xi}}), P(\Omega_{\boldsymbol{\xi}}-\{\mathbf{v}_l\}), P(\Omega_{\boldsymbol{\xi}}-\{\mathbf{v}_l\}), P(-\Omega_{\boldsymbol{\xi}}-\{\mathbf{v}_l\})$ is greater than 0.25. By  the definition of TV distance, we have:
$$
\begin{aligned}
|P(\Omega_{\boldsymbol{\xi}})-P(\Omega_{\boldsymbol{\xi}}-\mathbf{v}_l)| & =\left|\mathbb{P}_{\boldsymbol{\xi} \sim \mathcal{N}\left(0, \sigma_p^2 \mathbf{I}_d\right)}(\boldsymbol{\xi} \in \Omega_{\boldsymbol{\xi}})-\mathbb{P}_{\boldsymbol{\xi} \sim \mathcal{N}\left(\mathbf{v}_l, \sigma_p^2 \mathbf{I}_d\right)}(\boldsymbol{\xi} \in \Omega_{\boldsymbol{\xi}})\right| \\
& \leq \operatorname{TV}\left(\mathcal{N}\left(0, \sigma_p^2 \mathbf{I}_d\right), \mathcal{N}\left(\mathbf{v}_l, \sigma_p^2 \mathbf{I}_d\right)\right) \\
& \leq \frac{\|\mathbf{v}_l\|_2}{2 \sigma_p} \\
& \leq 0.02.
\end{aligned}
$$
Also, notice that $P(-\Omega_{\boldsymbol{\xi}})=P(\Omega_{\boldsymbol{\xi}})$, we have $4P(\Omega_{\boldsymbol{\xi}}) \geq 1 - 2 \cdot 0.02$. Thus $ L_{\mathcal{D}^*}^{0-1}\left(\mathbf{W}^{(t)}\right) \geq 0.5 \tau^*_{l^\prime} \cdot 0.24 = 0.12 \cdot \tau^*_{l^\prime}$. The proofs complete.
\par
Based on Lemma \ref{app:lemma for thm}, our focus is to verify whether the NAL algorithms satisfy the condition stated in the first bullet point. On the other hand, it is highly likely that Random Sampling fulfills the condition stated in the second bullet point. The following proposition validates this intuition.
\begin{proposition}
    When Lemma \ref{app:Lem:order_pool_app} holds, and the sampling size of algorithm satisfies $ \dfrac{C_1 \sigma_p^4 d}{\|\boldsymbol{\mu}_2 \|_2^4} - \dfrac{p n_0}{2} \leq n^*=\Theta(\widetilde{n} - n_0) \leq \widetilde{n} - n_0$, we have the following:
    \begin{itemize}
        \item The number of data with strong feature patch $n_{s,1}$ satisfies $ n_{s, 1} \geq \dfrac{C_1 \sigma_p^4 d}{\|\boldsymbol{\mu}_1\|_2^4}, \forall s \in \{0, 1\}$.
        \item The number of data with weak feature patch $n_{s,2}$ before querying and after \textbf{Random Sampling} satisfies $n_{s, 2} \leq \dfrac{C_2 \sigma_p^4 d}{\|\boldsymbol{\mu}_2\|_2^4 }, \forall s \in \{0, 1\}$.
       \item  The total number of data with weak feature patch $n_{1,2}$ after \textbf{Uncertainty Sampling} and \textbf{Diversity Sampling} satisfies $ n_{1, 2} \geq \dfrac{C_1 \sigma_p^4 d}{\|\boldsymbol{\mu}_2\|_2^4} $ .
    \end{itemize}
For the sake of coherence, here $C_1$ and $C_2$ are some constants shared with Theorem \ref{theory main} and Lemma \ref{main:lem for thm}.
\label{app:Prop:num_n12}\end{proposition}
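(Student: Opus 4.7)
The plan is to prove each of the three bullets by combining the binomial concentration of feature proportions in Lemma \ref{lem:rho n} with the order result in Lemma \ref{app:Lem:order_pool_app} and the size conditions on $n^*$, $\widetilde{n}$, $|\mathcal{P}|$, and $\|\boldsymbol{\mu}_l\|_2$ stipulated in Definition \ref{Def3.1} and Condition \ref{Con4.1}. The quantities $n_{s,1}$ and $n_{s,2}$ are simply counts drawn either by the generating mechanism for $\mathcal{D}_{n_0}$ (shared across all three algorithms) or by the querying rule, so the proof reduces to a direct bookkeeping argument.

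For the first bullet, I apply Lemma \ref{lem:rho n} to $\mathcal{D}_{n_0}$: with high probability $n_{0,2}\le \tfrac{3}{2}p n_0$, hence $n_{0,1}\ge (1-\tfrac{3}{2}p)n_0 \ge n_0/4$ since $p<0.5$. The condition $\|\boldsymbol{\mu}_1\|_2^4=\Omega(\sigma_p^4 d/n_0)$ in Definition \ref{Def3.1} gives $C_1\sigma_p^4 d/\|\boldsymbol{\mu}_1\|_2^4 = O(n_0)$, and with the implicit constants made slightly larger in Condition \ref{Con4.1} we obtain $n_{0,1}\ge C_1\sigma_p^4 d/\|\boldsymbol{\mu}_1\|_2^4$. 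For $s=1$, since Random Sampling only increases all counts and the two NAL algorithms can only leave $n_{1,1}\ge n_{0,1}$ (queries may add strong-feature points or none at all), the same bound propagates. For the second bullet, at $s=0$ Lemma \ref{lem:rho n} yields $n_{0,2}\le \tfrac{3}{2}p n_0$; together with $\|\boldsymbol{\mu}_2\|_2^4=O(\sigma_p^4 d/n_0)$ this gives $n_{0,2}\le C_2\sigma_p^4 d/\|\boldsymbol{\mu}_2\|_2^4$ for some constant $C_2$. For Random Sampling at $s=1$, I treat $\mathcal{D}_{n_1}^{(\text{random})}$ as drawn directly from $\mathcal{D}$ (as justified in the remark after Lemma \ref{lem:rho n}) and reapply the binomial tail, yielding $n_{1,2}\le \tfrac{3}{2}p n_1 \le \tfrac{3}{2}p\widetilde{n}$; the condition $\widetilde{n}=O(p^{-1}\sigma_p^4 d/\|\boldsymbol{\mu}_2\|_2^4)$ from Condition \ref{Con4.1} then gives exactly $n_{1,2}\le C_2\sigma_p^4 d/\|\boldsymbol{\mu}_2\|_2^4$.

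For the third bullet, Lemma \ref{app:Lem:order_pool_app} implies that both Uncertainty and Diversity Sampling exhaust $\mathbf{X}_{\mathcal{P}}^2$ (weak-feature pool samples) before touching any strong-feature sample. Thus the number of weak-feature points added to the labeled set is $\min(n^*,|\mathbf{X}_{\mathcal{P}}^2|)$. In the case $n^*\le |\mathbf{X}_{\mathcal{P}}^2|$, the lower-bound hypothesis on $n^*$ together with $n_{0,2}\ge pn_0/2$ from Lemma \ref{lem:rho n} gives
\[
n_{1,2}\;\ge\; n_{0,2}+n^*\;\ge\;\tfrac{pn_0}{2}+\Bigl(\tfrac{C_1\sigma_p^4 d}{\|\boldsymbol{\mu}_2\|_2^4}-\tfrac{pn_0}{2}\Bigr)\;=\;\tfrac{C_1\sigma_p^4 d}{\|\boldsymbol{\mu}_2\|_2^4}.
\]
In the complementary case, Lemma \ref{lem:rho n} applied to $\mathcal{P}$ gives $|\mathbf{X}_{\mathcal{P}}^2|\ge p|\mathcal{P}|/2$, and the pool-size condition $|\mathcal{P}|=\Omega(p^{-1}\sigma_p^4 d/\|\boldsymbol{\mu}_2\|_2^4)$ ensures $p|\mathcal{P}|/2 \ge C_1\sigma_p^4 d/\|\boldsymbol{\mu}_2\|_2^4$, so $n_{1,2}\ge n_{0,2}+|\mathbf{X}_{\mathcal{P}}^2|$ already clears the threshold.

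The only subtlety to guard against is that all three applications of Lemma \ref{lem:rho n} (to $\mathcal{D}_{n_0}$, to the pool $\mathcal{P}$, and, for Random Sampling, to the effective draw of $\mathcal{D}_{n_1}^{(\text{random})}$) succeed simultaneously; a final union bound absorbs the failure probability into the stated $\Theta(\delta+\delta')$ term in Theorem \ref{theory main}. The rest is direct arithmetic matching of the constants $C_1, C_2$ against those appearing in Definition \ref{Def3.1}, Condition \ref{Con4.1}, and Lemma \ref{main:lem for thm}.
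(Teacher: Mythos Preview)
Your proposal is correct and follows essentially the same approach as the paper: invoke Lemma \ref{lem:rho n} for the binomial concentration of feature counts, use the feature-norm conditions in Definition \ref{Def3.1} and the size conditions in Condition \ref{Con4.1} to convert those counts into the required thresholds, and appeal to Lemma \ref{app:Lem:order_pool_app} for the NAL querying behavior. Your treatment is in fact slightly more careful than the paper's, since you explicitly handle the complementary case $n^*>|\mathbf{X}_{\mathcal{P}}^2|$ (where the weak-feature pool is exhausted) via the pool-size assumption, and you make the union-bound step over the three applications of Lemma \ref{lem:rho n} explicit; the paper leaves both of these implicit.
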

\textit{Proof of Proposition \ref{app:Prop:num_n12}.} 
By conditions in Definition \ref{Def3.1}, we have $ (1-\dfrac{3}{2}p)n_{0}\geq \dfrac{C_1 \sigma_p^4 d}{\|\boldsymbol{\mu}_1\|_2^4}$ for a large constant $C_1$. Then by plugging the results of $n_{p}$ for $n_0$ in Lemma \ref{lem:rho n}, as well as the definition of $n_{s, l}$, we have
\[
n_{1, 1} \geq n_{0, 1} \geq (1-\dfrac{3}{2}p)n_{0} \geq \dfrac{C_1 \sigma_p^4 d}{\|\boldsymbol{\mu}_1\|_2^4}.
\]
For the second bullet, by Lemma \ref{lem:rho n}, Lemma \ref{app:Lem:order_pool_app} and conditions $n^* \geq \dfrac{C_1 \sigma_p^4 d}{\|\boldsymbol{\mu}_2 \|_2^4} - \dfrac{p n_0}{2} $, we have:
\[
n_{1,2} \geq \dfrac{p n_0}{2} + n^* \geq \dfrac{C_1 \sigma_p^4 d}{\|\boldsymbol{\mu}_2\|_2^4} 
\]
Besides, by Lemma \ref{lem:rho n} and the condition $\widetilde{n} \leq \dfrac{2C_2 \sigma_p^4 d}{3p \|\boldsymbol{\mu}_2 \|_2^4} $, the third bullet holds straightforwardly.\par

By the result of Lemma \ref{app:lemma for thm} and Proposition \ref{app:Prop:num_n12}, the results of Proposition \ref{Prop4.3} and Theorem \ref{theory main} holds directly.

\begin{lemma} (Restatement of Corollary \ref{coro:label complexity})
    Under the same conditions as stated in Theorem \ref{theory main}, with a probability of at least $1-\Theta(\delta + \delta^{\prime})$, we observe distinct label complexities for traditional 2-layer ReLU CNN and NAL algorithms in achieving Bayes-optimal generalization ability:
\begin{itemize}
\item For a fully trained neural model, the label complexity $n_{CNN}$ requires $\Omega(p^{-1} \sigma_p^2 d \|\boldsymbol{\mu}_2\|_2^{-4})$.
\item For two NAL algorithms, the maximum label complexity $\widetilde{n}$ only requires $\Omega(\sigma_p^2 d \|\boldsymbol{\mu}_2\|_2^{-4})$.
\end{itemize}
\label{app:lemma:label complexity}\end{lemma}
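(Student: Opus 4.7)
The plan is to read off the label complexity bounds directly from the test-error threshold already established in Lemma \ref{app:lemma for thm} (equivalently Lemma \ref{main:lem for thm}), then plug in the appropriate high-probability count of $\boldsymbol{\mu}_2$-equipped samples for each algorithm. Lemma \ref{app:lemma for thm} says that Bayes-optimal generalization is achieved exactly when $n_{s,l}\geq C_1\sigma_p^4 d/\|\boldsymbol{\mu}_l\|_2^4$ holds simultaneously for both $l\in\{1,2\}$. Because strong features are common, the $l=1$ condition is trivially satisfied by any reasonable labeled set; the binding constraint for every algorithm is the one on $n_{s,2}$, so the label complexity is determined by how efficiently each method accumulates $\boldsymbol{\mu}_2$-equipped points.

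For the fully-trained model (strategy-free passive learning), the training set is drawn i.i.d.\ from $\mathcal{D}$, so Lemma \ref{lem:rho n} yields $n_{2}\in[\tfrac{1}{2}p\,n_{CNN},\,\tfrac{3}{2}p\,n_{CNN}]$ with probability $1-\delta$. Forcing the upper endpoint to meet the Bayes-optimal threshold from Lemma \ref{app:lemma for thm} gives
\[
\tfrac{3}{2}p\,n_{CNN}\;\geq\;C_1\sigma_p^4 d/\|\boldsymbol{\mu}_2\|_2^4
\;\Longrightarrow\;n_{CNN}=\Omega\!\bigl(p^{-1}\sigma_p^4 d/\|\boldsymbol{\mu}_2\|_2^4\bigr),
\]
which is exactly the first bullet (modulo a harmless $\sigma_p^2$-vs-$\sigma_p^4$ discrepancy between the statement and the underlying bound). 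For NAL, we instead invoke Proposition \ref{app:Prop:num_n12}, which itself rests on Lemma \ref{app:Lem:order_pool_app}: with probability at least $1-\Theta(\delta')$ both Uncertainty Sampling and Diversity Sampling place the entire $\boldsymbol{\mu}_2$-block of $\mathcal{P}$ strictly above the $\boldsymbol{\mu}_1$-block in their query order, so every one of the $n^*$ queried points is $\boldsymbol{\mu}_2$-equipped, giving $n_{1,2}\geq \tfrac{p n_0}{2}+n^*$. Setting this quantity equal to $C_1\sigma_p^4 d/\|\boldsymbol{\mu}_2\|_2^4$ yields $n^* = \Omega(\sigma_p^4 d/\|\boldsymbol{\mu}_2\|_2^4)$, hence $\widetilde{n}=n_0+n^*=\Omega(\sigma_p^4 d/\|\boldsymbol{\mu}_2\|_2^4)$ with no $p^{-1}$ factor, as claimed in the second bullet.

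Finally, a union bound over the binomial concentration event (failing with probability $\delta$) and the pool-ordering event (failing with probability $\Theta(\delta')$) produces the advertised confidence $1-\Theta(\delta+\delta')$. The pool-size assumption $|\mathcal{P}|=\Omega(p^{-1}\sigma_p^4 d\|\boldsymbol{\mu}_2\|_2^{-4})$ from the querying setting is needed here to guarantee that $\mathcal{P}$ actually contains enough $\boldsymbol{\mu}_2$-equipped candidates for NAL to pick up; without it, NAL would be label-budget-bound by the pool rather than by the Bayes-optimality threshold.

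No subtle new arguments are required; the only genuine obstacle is bookkeeping. One must be careful that the two probabilistic events (the $\delta$-event giving tight proportion control on $\mathcal{D}_{n_0}$, $\mathcal{P}$, and the passive-learning set via Lemma \ref{lem:rho n}, and the $\delta'$-event giving the order-domination in Lemma \ref{app:Lem:order_pool_app}) are both on the same probability space as the training dynamics in Proposition \ref{prop:E.8} and Lemma \ref{app:lemma for thm}; otherwise, the chained bounds could degrade. Provided these are intersected cleanly, the ratio $p^{-1}$ in label complexity between passive learning and NAL emerges immediately from comparing the denominators $\tfrac{3}{2}p$ versus $1$ in the two threshold inequalities, which is the conceptual take-away of the corollary.
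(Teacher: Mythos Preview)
Your proposal is correct and follows essentially the same approach as the paper: read off the threshold $C_1\sigma_p^4 d/\|\boldsymbol{\mu}_2\|_2^4$ from Lemma \ref{app:lemma for thm}, then for passive learning divide by the $\Theta(p)$ fraction of $\boldsymbol{\mu}_2$-samples (via Lemma \ref{lem:rho n}), while for NAL invoke the pool-ordering results (Lemma \ref{app:Lem:order_pool_app} / Proposition \ref{app:Prop:num_n12}) to argue that every queried point is $\boldsymbol{\mu}_2$-equipped. Your write-up is in fact more careful than the paper's own proof about the union-bound bookkeeping, the pool-size prerequisite, and the $\sigma_p^2$-vs-$\sigma_p^4$ inconsistency between the corollary statement and the underlying lemma.
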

\textit{Proof of Lemma \ref{app:lemma:label complexity}.} According to Lemma \ref{app:lemma for thm}, to adequately learn the signal $\boldsymbol{\mu}_l$ for any $l \in \{1, 2\}$, one needs at least $\hat{C}1 \sigma_p^4d \| \boldsymbol{\mu}_l \|_2^{-4}$. Since the occurrence probability of $\boldsymbol{\mu}_2$ is low ($p$), Random Sampling without any strategy requires a label complexity of at least $\Omega(p^{-1} \sigma_p^2 d \|\boldsymbol{\mu}_2\|_2^{-4})$ to capture sufficient instances of $\boldsymbol{\mu}_2$ from the training distribution. On the other hand, by leveraging the insights from Lemma \ref{app:Lem:order_pool_app} and Lemma \ref{app:lem:sample stage: learning on sample}, both Uncertainty Sampling and Diversity Sampling can effectively query yet-to-be-learned \hyperlink{msg:perplexing}{\textbf{perplexing samples}}, which are typically samples associated with $\boldsymbol{\mu}_2$ by Lemma \ref{app:lem:final coefficient}. Hence, both querying algorithms only require a label complexity of $\Omega(\sigma_p^2 d \|\boldsymbol{\mu}_2\|_2^{-4})$.

\section{Proofs of Main Results: XOR data version}\label{App:proofs of XOR}
In this section, we first introduce some notations. We denote $n$ as the number of training data in the current labeled training set, which is initially $n_0$ and becomes $n_1$ after sampling (querying). We define $\mathbf{u}_l = \mathbf{a}_l + \mathbf{b}_l$ and $\mathbf{v}_l = \mathbf{a}_l - \mathbf{b}_l$. The proportion of easy-to-learn data $\boldsymbol{\mu}_1 = \pm (\mathbf{a}_1 \pm \mathbf{b}_1)$ in the current labeled set is denoted as $\tau_1$, while $\tau_2$ represents the proportion of hard-to-learn data $\boldsymbol{\mu}_2 = \pm (\mathbf{a}_2 \pm \mathbf{b}_2)$. In a manner similar to the proofs provided in Appendix \ref{app:proofs of main results}, in this section we utilize the techniques employed in \citet{kou2023benign, meng2023benign} to obtain results that are not directly related to our main contribution. For the sake of brevity, we omit most of the proof details of those outcomes, as our setting aligns with the one considered in \cite{meng2023benign}, despite the fact that we examine multiple task-oriented features. Instead, our focus is on providing comprehensive proofs of our primary contributions. \par

First, we claim that all preliminary Lemmas in Appendix \ref{app:preliminary lemmas} hold with high probability. It is evident from Definition \ref{equ:cnndefinition} that $F_{+1}\left(\mathbf{W}_{+1}, \mathbf{x}\right)$ always contributes to the prediction of class $+1$, while $F_{-1}\left(\mathbf{W}_{-1}, \mathbf{x}\right)$ always contributes to the prediction of class $-1$. Therefore, the jobs of $\mathbf{w}_{+1, r}$ and $\mathbf{w}_{-1, r}$ are learning $\pm \mathbf{u}$ and $\pm \mathbf{v}$ respectively. Then, similar to (\ref{app:Def:signol-noise-decomposition}), we take a look at the coefficient updates with \textit{signal-noise decomposition} techniques, specified as the following.
\begin{equation}
\mathbf{w}_{j, r}^{(t)}=\mathbf{w}_{j, r}^{(0)}+ \sum_{l=1}^2 \gamma_{j, r, \mathbf{u}_l}^{(t)} \cdot \dfrac{j \cdot \mathbf{u}_l}{\|\mathbf{u}_l\|_2^{2}} -   \sum_{l=1}^2 \gamma_{j, r, \mathbf{v}_l}^{(t)} \cdot \dfrac{j \cdot\mathbf{v}_l}{\|\mathbf{v}_l\|_2^{2}} +\sum_{i=1}^{n} \bar{\rho}_{j, r, i}^{(t)} \cdot \dfrac{\boldsymbol{\xi}_i}{\|\boldsymbol{\xi}_i\|_2^{2}} +\sum_{i=1}^{n} \underline{\rho}_{j, r, i}^{(t)} \cdot \dfrac{\boldsymbol{\xi}_i}{\|\boldsymbol{\xi}_i\|_2^{2}},
\end{equation}
where we denote $\bar{\rho}_{j, r, i}^{(t)}$ as $\rho_{j, r, i}^{(t)} \mathbb{1}\left(\rho_{j, r, i}^{(t)} \geq 0\right)$, $ \underline{\rho}_{j, r, i}^{(t)}$ as $\rho_{j, r, i}^{(t)} \mathbb{1}\left(\rho_{j, r, i}^{(t)} \leq 0\right)$. Here $\gamma_{j, r, \mathbf{u}_l}^{(t)}$ are mainly contributed by $F_{+1}\left(\mathbf{W}_{+1}, \mathbf{x}\right)$, and $\gamma_{\pm 1, r, \mathbf{u}_l}^{(t)} \approx \left\langle\mathbf{w}_{j, r}^{(t)}, \pm \mathbf{u} \right\rangle$. Similarly $\gamma_{j, r, \mathbf{v}_l}^{(t)}$ are mainly contributed by $F_{-1}\left(\mathbf{W}_{-1}, \mathbf{x}\right)$, and $\gamma_{\pm 1, r, \mathbf{v}_l}^{(t)} \approx \left\langle\mathbf{w}_{j, r}^{(t)}, \pm \mathbf{v} \right\rangle$. Worth noting that $j  \in \{ \pm 1\}$ here denote the signal of $\mathbf{u}_l$ and $\mathbf{v}_l$, but not the signal of $F_{j^\prime}\left(\mathbf{W}_{j^\prime}, \mathbf{x}\right), j^\prime \in \{ \pm 1\}$.\par 
Specifically, the update rule can be written as:
\begin{equation}
\begin{aligned}
\mathbf{w}_{j, r}^{(t+1)}=\mathbf{w}_{j, r}^{(t)} & -\frac{\eta j}{n m} \sum_{i \in S_{+\mathbf{u}_l,+1} \cup S_{-\mathbf{u}_l,-1}} \ell_i^{(t)} \mathbb{1}\left\{\left\langle\mathbf{w}_{j, r}^{(t)}, \boldsymbol{\mu}_i\right\rangle>0\right\} \mathbf{u}_l+\frac{\eta j}{n m} \sum_{i \in S_{-\mathbf{u}_l,+1} \cup S_{+\mathbf{u}_l,-1}} {\ell_i^{\prime}}^{(t)} \mathbb{1}\left\{\left\langle\mathbf{w}_{j, r}^{(t)}, \boldsymbol{\mu}_i\right\rangle>0\right\} \mathbf{u}_l \\
& +\frac{\eta j}{n m} \sum_{i \in S_{+\mathbf{v}_l,-1} \cup S_{-\mathbf{v}_l,+1}} {\ell_i^{\prime}}^{(t)} \mathbb{1}\left\{\left\langle\mathbf{w}_{j, r}^{(t)}, \boldsymbol{\mu}_i\right\rangle>0\right\} \mathbf{v}_l-\frac{\eta j}{n m} \sum_{i \in S_{-\mathbf{v}_l,-1} \cup S_{+\mathbf{v}_l,+1}} \ell_i^{(t)} \mathbb{1}\left\{\left\langle\mathbf{w}_{j, r}^{(t)}, \boldsymbol{\mu}_i\right\rangle>0\right\} \mathbf{v}_l \\
& -\frac{\eta}{n m} \sum_{i=1}^n {\ell_i^{\prime}}^{(t)}\left(j y_i\right) \mathbb{1}\left\{\left\langle\mathbf{w}_{j, r}^{(t)}, \boldsymbol{\xi}_i\right\rangle>0\right\} \boldsymbol{\xi}_i,
\end{aligned} 
\end{equation}
where $S_{\boldsymbol{\mu},j}=\{i\in [n], \boldsymbol{\mu}_i = \boldsymbol{\mu}, y_i =j \}$. Here $\boldsymbol{\mu} \in \{ \pm \mathbf{u}_1, \pm \mathbf{u}_2, \pm \mathbf{v}_1, \pm \mathbf{v}_2 \}, j \in \{ \pm 1 \}$, and we let $\boldsymbol{\mu}_i$ represents the feature in $\mathbf{x}_i$ and $\boldsymbol{\xi}_i$ represents the noise in $\mathbf{x}_i$. \par
The following lemma shows that a specific discrete process can be bounded by its continuous counterpart, which would be useful in bounding the coefficient $\sum_{i=1}^{n} \bar{\rho}_{j, r, i}^{(t)}$ and the derivative of loss function. 
\begin{lemma} (Lemma C.1 in \citet{meng2023benign})
Suppose that a sequence $a_t, t \geq 0$ follows the iterative formula
$$
a_{t+1}=a_t+\frac{c}{1+b e^{a_t}},
$$
for some $1 \geq c \geq 0$ and $b \geq 0$. Then it holds that
$$
x_t \leq a_t \leq \frac{c}{1+b e^{a_0}}+x_t
$$
for all $t \geq 0$. Here, $x_t$ is the unique solution of
$$
x_t+b e^{x_t}=c t+a_0+b e^{a_0} .
$$
\end{lemma}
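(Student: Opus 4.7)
The plan is to analyze both bounds through the strictly increasing convex function $G(u)=u+be^u$: the continuous benchmark satisfies $G(x_t)=G(a_0)+ct$, while the discrete recursion can be rewritten as $(a_{t+1}-a_t)(1+be^{a_t})=c$. The two bounds need different ingredients, so I would prove the lower bound first and then exploit it in the upper bound.

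For the lower bound $a_t\geq x_t$, I would show by induction that $G(a_t)\geq G(a_0)+ct=G(x_t)$ and then invert $G$. The one-step estimate follows from convexity: $G(a_{t+1})-G(a_t)=\int_{a_t}^{a_{t+1}}(1+be^u)\,du\geq(a_{t+1}-a_t)(1+be^{a_t})=c$, where $a_{t+1}\geq a_t$ since the increment $c/(1+be^{a_t})$ is nonnegative.

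The upper bound is trickier: a direct induction on $a_t\leq x_t+c/(1+be^{a_0})$ does not close, because at the inductive step the standard estimate $x_{t+1}-x_t\leq c/(1+be^{x_t})$ goes the wrong way for comparison with $c/(1+be^{a_t})$. The remedy is to prove the stronger, shifted statement
\[
a_{t+1}\leq x_t+\frac{c}{1+be^{a_0}}\qquad\text{for every }t\geq 0.
\]
The base case $t=0$ holds with equality. For the inductive step, the recursion and the IH give
\[
a_{t+2}=a_{t+1}+\frac{c}{1+be^{a_{t+1}}}\leq x_t+\frac{c}{1+be^{a_0}}+\frac{c}{1+be^{a_{t+1}}}.
\]
The already-established lower bound $a_{t+1}\geq x_{t+1}$ yields $c/(1+be^{a_{t+1}})\leq c/(1+be^{x_{t+1}})$, while $\dot x(s)=c/(1+be^{x(s)})$ together with monotonicity of $x(\cdot)$ gives $x_{t+1}-x_t=\int_t^{t+1}c/(1+be^{x(s)})\,ds\geq c/(1+be^{x_{t+1}})$. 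Combining these and substituting closes the induction at $a_{t+2}\leq x_{t+1}+c/(1+be^{a_0})$. The stated bound $a_t\leq x_t+c/(1+be^{a_0})$ then follows from the shifted claim together with $x_{t-1}\leq x_t$, the $t=0$ case being trivial.

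The main obstacle is recognizing the need for the shifted induction: the naive inductive hypothesis cannot absorb the gap between $c/(1+be^{a_t})$ and $x_{t+1}-x_t$, whereas the shifted version allows the already-proven lower bound to supply exactly the comparison required in the inductive step. Everything else is routine monotonicity bookkeeping and a single application of the mean-value-style identity for $G$.
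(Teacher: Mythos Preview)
The paper does not supply its own proof of this lemma; it merely cites Lemma~C.1 in \citet{meng2023benign}. Your argument is correct and self-contained: the lower bound via convexity of $G(u)=u+be^{u}$ is the natural approach, and your shifted induction $a_{t+1}\leq x_t+c/(1+be^{a_0})$ is exactly the device needed to make the upper bound close, since it lets the already-proved lower bound $a_{t+1}\geq x_{t+1}$ supply the comparison $c/(1+be^{a_{t+1}})\leq c/(1+be^{x_{t+1}})\leq x_{t+1}-x_t$. One incidental remark: the hypothesis $c\leq 1$ is never invoked anywhere in your proof, so the bound in fact holds for all $c\geq 0$.
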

\subsection{Coefficient Ratio and Scale Analysis: XOR data version}\label{appXOR:Coefficient Ratio and Scale analysis}
Similar to the processes in Appendix \ref{app:proofs of main results}, we assume the results in the previous section hold with high probability. Meanwhile, let $T^*=$ $\eta^{-1}$ poly $\left(\varepsilon^{-1}, d, n, m\right)$ be the maximum admissible iteration. We adopt similar notations as those in (\ref{eq:alpha,belta,snr}):
\begin{equation}
\begin{aligned}
& \alpha\mathrel{\mathop:}=4 \log \left(T^*\right), \\
& \beta\mathrel{\mathop:}=2 \max _{l, i, j, r}\left\{\left|\left\langle\mathbf{w}_{j, r}^{(0)}, \boldsymbol{\mu}_l\right\rangle\right|,\left|\left\langle\mathbf{w}_{j, r}^{(0)}, \boldsymbol{\xi}_i\right\rangle\right|\right\}, \\
& \operatorname{SNR}_l\mathrel{\mathop:}=\dfrac{\|\boldsymbol{\mu}_l\|_2}{ \sigma_p \sqrt{d}}, \\
& \kappa=56 \sqrt{\frac{\log \left(6 n^2 / \delta\right)}{d}} n \log \left(T^*\right)+10 \sqrt{\log (16 m n / \delta)} \cdot \sigma_0 \sigma_p \sqrt{d}+ \sum_{l=1}^264 \tau_l n \cdot \operatorname{SNR}_l^2 \log \left(T^*\right) .
\end{aligned}
\label{eqXOR:alpha,belta,snr}
\end{equation}
Then, similar to our results in Proposition \ref{prop:E.8}, we here also have the coefficient scale as below.

\begin{proposition} If Condition \ref{condition_XOR} holds, then for any $0 \leq t \leq T^*, j \in\{ \pm 1\}, r \in[m]$ and $i \in[n]$, it holds that
$$
\begin{aligned}
& 0 \leq \lvert \left\langle\mathbf{w}_{+1, r}^{(t)}, \mathbf{u}_l\right\rangle \rvert, \lvert \left\langle\mathbf{w}_{-1, r}^{(t)}, \mathbf{v}_l\right\rangle \rvert = \Theta(\gamma_{j, r, \mathbf{u}_l}^{(t)}), \Theta(\gamma_{j, r, \mathbf{v}_l}^{(t)}) \leq 32 \tau_l n \cdot \operatorname{SNR}_l^2 \alpha, \\
& 0 \leq \bar{\rho}_{j, r, i}^{(t)} \leq 4 \alpha, \quad 0 \geq \underline{\rho}_{j, r, i}^{(t)} \geq-\beta-32 \sqrt{\frac{\log \left(6 n^2 / \delta\right)}{d}} n \alpha,\\
& -\frac{\kappa}{2}+\frac{1}{m} \sum_{r=1}^m \bar{\rho}_{y_i, r, i}^{(t)}  \leq y_i f\left(\mathbf{W}^{(t)}, \mathbf{x}_i\right) \leq \frac{\kappa}{2}+\frac{1}{m} \sum_{r=1}^m \bar{\rho}_{y_i, r, i}^{(t)} .
\end{aligned}
$$

Moreover, define $\bar{c}=\dfrac{2 \eta \sigma_p^2 d}{n m}, \underline{c}=\dfrac{\eta \sigma_p^2 d}{3 n m}, \bar{b}=e^{-\kappa}$ and $\underline{b}=e^\kappa$, and let $\bar{x}_t, \underline{x}_t$ be the unique solution of
$$
\begin{aligned}
& \bar{x}_t+\bar{b} e^{\bar{x}_t}=\bar{c} t+\bar{b}, \\
& \underline{x}_t+\underline{b} e^{\underline{x}_t}=\underline{c} t+\underline{b},
\end{aligned}
$$
it holds that
\begin{equation}
\underline{x}_t \leq \frac{1}{m} \sum_{r=1}^m \bar{\rho}_{y_i, r, i}^{(t)} \leq \bar{x}_t+\bar{c} /(1+\bar{b}), \quad  \log \left(\frac{\eta \sigma_p^2 d}{8 n m} t+2/3\right) \leq \bar{x}_t, \underline{x}_t \leq \log \left(\frac{2 \eta \sigma_p^2 d}{n m} t+1\right)
\label{eqXOR:scale of barunderline_x}\end{equation}
for all $r \in[m]$ and $i \in[n]$.
\label{prop:F.2}\end{proposition}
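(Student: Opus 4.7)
The plan is to establish these bounds by induction on the iteration index $t$, following the two-stage inductive framework developed in \citet{kou2023benign, meng2023benign}. At the base case $t=0$, all coefficients $\gamma_{j,r,\mathbf{u}_l}^{(0)}, \gamma_{j,r,\mathbf{v}_l}^{(0)}, \bar{\rho}_{j,r,i}^{(0)}, \underline{\rho}_{j,r,i}^{(0)}$ vanish so every inequality is trivially satisfied. For the inductive step, assuming the bounds hold up to time $t$, I would derive their continuation at $t+1$ by expanding the one-step updates with the coefficient update rules for XOR data, combined with the preliminary concentration bounds (Lemmas \ref{Lem:E.1}, \ref{Lem:E.2}) and Condition \ref{condition_XOR}.

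For the first bullet, I would exploit the monotonicity of $\gamma_{j,r,\mathbf{u}_l}^{(t)}$ and $\gamma_{j,r,\mathbf{v}_l}^{(t)}$ (since $\ell_i'^{(t)} < 0$) and show that whenever either coefficient approaches the claimed ceiling $32\tau_l n\,\operatorname{SNR}_l^2 \alpha$, the derivative-weighted contribution in its update shrinks to order $1/T^*$, precluding further growth within $T^*$ iterations. The near-equality $|\langle\mathbf{w}_{+1,r}^{(t)},\mathbf{u}_l\rangle| = \Theta(\gamma_{j,r,\mathbf{u}_l}^{(t)})$ follows by bounding the noise-patch cross-terms $\langle\boldsymbol{\xi}_i,\mathbf{u}_l\rangle$ and the initialization term $\langle\mathbf{w}_{j,r}^{(0)},\mathbf{u}_l\rangle$ against the main signal term, using the large-$d$ condition to ensure they are of lower order than $\gamma_{j,r,\mathbf{u}_l}^{(t)}$. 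The bounds on $\bar\rho$ and $\underline\rho$ proceed along the same lines, with the negativity of $\underline\rho$ controlled by the fact that noise memorization aligned with the wrong label is only seeded by the initialization term $\beta$ and by $O(\sqrt{\log(6n^2/\delta)/d}\,n\alpha)$-sized noise-noise inner products.

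The third bullet (sandwich on $y_i f(\mathbf{W}^{(t)},\mathbf{x}_i)$) follows by substituting the signal-noise decomposition into $f$ and collecting terms: the noise-patch contribution concentrates around $\tfrac{1}{m}\sum_r \bar{\rho}_{y_i,r,i}^{(t)}$, while all remaining cross-terms (signal-noise, initialization-noise, and cross-signal interactions) are bounded by $\kappa/2$ via the definition of $\kappa$ in \eqref{eqXOR:alpha,belta,snr} together with the coefficient bounds just proven. For the ``Moreover'' clause, I would apply the scalar recursion lemma (Lemma C.1 of \citet{meng2023benign}) to the sample-averaged quantity $\tfrac{1}{m}\sum_r \bar\rho_{y_i,r,i}^{(t)}$, using the sandwich on $y_i f$ to bound the loss derivative $\ell_i'^{(t)} = -1/(1+\exp(y_i f))$ both from above and below by $-1/(1+e^{\pm\kappa}\cdot e^{\bar x_t \text{ or } \underline x_t})$, yielding the implicit equations with constants $\bar b, \underline b, \bar c, \underline c$. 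The logarithmic bounds on $\bar x_t, \underline x_t$ then follow by analytically inverting $x_t + b e^{x_t} = ct + b$ using $e^{x_t}$ dominates $x_t$ at large $t$.

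The main technical obstacle will be handling XOR-specific coupling: unlike the linearly separable case, the feature directions $\mathbf{u}_l$ and $\mathbf{v}_l$ are not orthogonal (only $\cos\theta<1/2$), and the noise covariance explicitly subtracts the projections $\mathbf{a}_l\mathbf{a}_l^\top/\|\mathbf{a}_l\|^2$ and $\mathbf{b}_l\mathbf{b}_l^\top/\|\mathbf{b}_l\|^2$. Consequently the updates to $\gamma_{j,r,\mathbf{u}_l}^{(t)}$ and $\gamma_{j,r,\mathbf{v}_l}^{(t)}$ do not fully decouple, so the inductive step must carefully track cross-terms of the form $\langle\mathbf{u}_l,\mathbf{v}_l\rangle$ and use the angle condition $\cos\theta<1/2$ to guarantee the signal-learning contributions for $\mathbf{u}_l$ and $\mathbf{v}_l$ do not destructively cancel. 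The partition of the training sample indices into the eight classes $S_{\pm\mathbf{u}_l,\pm 1}, S_{\pm\mathbf{v}_l,\pm 1}$ and the induced sign structure of the update rule must be exploited to argue that at each step the correct neurons (those already positively correlated with the right feature direction) dominate the update, ensuring the monotone growth pattern that makes the induction close.
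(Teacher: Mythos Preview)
Your proposal is correct and takes essentially the same approach as the paper: the paper itself does not give a self-contained proof but defers to Proposition~C.2, Proposition~C.8 and Lemma~C.9 of \citet{meng2023benign}, which establish exactly the inductive framework you outline (coefficient-scale induction, the $\kappa$-sandwich on $y_i f$, and the scalar recursion comparison). Your identification of the XOR-specific coupling via the angle condition $\cos\theta<1/2$ and the eight-way index partition $S_{\pm\mathbf{u}_l,\pm 1}, S_{\pm\mathbf{v}_l,\pm 1}$ matches the key technical ingredients of \citet{meng2023benign} that the paper invokes.
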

\textit{Proof of Proposition \ref{prop:F.2}.} Please refer to Proposition C.2, Proposition C.8 and Lemma C.9 in \citet{meng2023benign} for a proof. Regardless of the variations in data settings, it is feasible to obtain the result through inductive techniques \cite{cao2022benign, frei2022benign, kou2023benign, lu2023benign}. \par
Building upon Proposition \ref{prop:F.2}, we can further analyze the convergence of the training dynamics by examining the extent of feature learning and noise memorization in the subsequent section.

\subsection{Feature Learning and Noise Memorization Analysis: XOR data version}\label{appXOR:feature learning and Noise Memorization Analysis}
Similar to Lemma \ref{lem:E.16} and Lemma \ref{app:lemma_wt} for linearly separable data, we can also determine the scale of coefficients and inner products as follows.

\begin{proposition} Under Condition \ref{condition_XOR}, the following points hold ($n>n_0$) for $\forall l \in \{1, 2\}$:
\begin{enumerate}
    \item For any $r \in[m]$, $\left\langle\mathbf{w}_{+1, r}^{(t)}, \mathbf{u}_l\right\rangle$ $(\text{or }\left\langle\mathbf{w}_{-1, r}^{(t)}, \mathbf{v}_l\right\rangle)$ increases if $\left\langle\mathbf{w}_{+1, r}^{(0)}, \mathbf{u}_l\right\rangle>0 (\text{ or }\left\langle\mathbf{w}_{-1, r}^{(t)}, \mathbf{v}_l\right\rangle < 0)$, $\left\langle\mathbf{w}_{+1, r}^{(t)}, \mathbf{u}_l\right\rangle$ $(\text{ or }\left\langle\mathbf{w}_{-1, r}^{(t)}, \mathbf{v}_l\right\rangle)$ decreases if $\left\langle\mathbf{w}_{+1, r}^{(0)}, \mathbf{u}_l\right\rangle<0$ $(\text{ or }\left\langle\mathbf{w}_{-1, r}^{(t)}, \mathbf{v}_l\right\rangle)>0$. Moreover, it holds that
    \begin{equation}
    \begin{aligned}
    &\gamma_{j, r, \mathbf{u}_l}^{(t)},\gamma_{j, r, \mathbf{v}_l}^{(t)} = \Theta( \dfrac{\tau_l n\|\boldsymbol{\mu}_l\|_2^2}{\sigma_p^2 d} \cdot \log \left(\dfrac{\eta \sigma_p^2 d t}{n m}\right) ),  \lvert \left\langle\mathbf{w}_{+1, r}^{(t)}, \mathbf{u}_l\right\rangle \rvert, \lvert \left\langle\mathbf{w}_{-1, r}^{(t)}= \Theta( \dfrac{\tau_l n\|\boldsymbol{\mu}_l\|_2^2}{\sigma_p^2 d} \cdot \log \left(\dfrac{\eta \sigma_p^2 d t}{n m}\right) ), \mathbf{v}_l\right\rangle \rvert,\\
    & \lvert \left\langle\mathbf{w}_{-1, r}^{(t)}, \mathbf{u}_l\right\rangle \rvert \leq \lvert \left\langle\mathbf{w}_{-1, r}^{(0)}, \mathbf{u}_l\right\rangle \rvert + \eta \| \boldsymbol{\mu}_l \|_2^2/m, \lvert \left\langle\mathbf{w}_{+1, r}^{(t)}, \mathbf{v}_l\right\rangle \rvert \leq \lvert \left\langle\mathbf{w}_{+1, r}^{(0)}, \mathbf{v}_l\right\rangle \rvert + \eta \| \boldsymbol{\mu}_l \|_2^2/m.
    \end{aligned}
    \label{eq:xor_ub-of-mu-same}\end{equation}
     \item Let $\underline{x}_t$ defined in Proposition \ref{prop:F.2}, we have
    \begin{equation}
    \Omega(n) \leq \frac{n}{5} \cdot\left(\bar{x}_{t-1}-\bar{x}_1\right) \leq \sum_{i=1}^n \bar{\rho}_{j, r, i}^{(t)} \leq 3 n \underline{x}_t \leq 3n \cdot \log \left(\frac{2 \eta \sigma_p^2 d}{n m} t+1\right) = \Theta (n \cdot \log \left(\dfrac{\eta \sigma_p^2 d t}{n m}\right) ),
    \label{eqXOR:f.6}\end{equation}
    for all $t \in\left[T^*\right]$ and $r \in[m]$. Moreover, we have: 
    $$
     \sum_{i=1}^n \bar{\rho}_{j, r, i}^{(t)}/ \gamma_{\boldsymbol{\mu}_l, j^{\prime}, r^{\prime}, l}^{(t)} = \Theta\left(\tau_l^{-1}\cdot \operatorname{SNR}_l^{-2} \right)  = \sum_{i=1}^n \bar{\rho}_{j, r, i}^{(t)}/ \lvert 
    \left\langle\mathbf{w}_{\pm 1, r^{\prime}}^{(t)}, \boldsymbol{\mu}_l \right\rangle \rvert,
    $$
    for all $j, j^{\prime} \in\{ \pm 1\}, r, r^{\prime} \in[m]$. 
    \item For $t=\Omega\left(n m /\left(\eta \sigma_p^2 d\right)\right)$, the bound for $\left\|\mathbf{w}_{j, r}^{(t)}\right\|_2$ is given by:
    \begin{equation}
    \left\|\mathbf{w}_{j, r}^{(t)}\right\|_2 =  \Theta\left(\sigma_p^{-1} d^{-1 / 2} n^{1 / 2} \cdot \log \left(\dfrac{\eta \sigma_p^2 d t}{n m}\right) \right).
    \end{equation}
\end{enumerate}
\label{prop:F.3}\end{proposition}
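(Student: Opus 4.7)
The plan is to extend the linearly-separable analysis (Lemma \ref{lem:E.16} and Lemma \ref{app:lemma_wt}) to the XOR setting by exploiting the four-way signal-noise decomposition of $\mathbf{w}_{j,r}^{(t)}$ along $\{\pm\mathbf{u}_l,\pm\mathbf{v}_l\}$ together with the two-sided envelopes on $\tfrac{1}{m}\sum_r\bar\rho_{y_i,r,i}^{(t)}$ already supplied by Proposition \ref{prop:F.2}. The upper bounds on $\gamma_{j,r,\mathbf{u}_l}^{(t)},\gamma_{j,r,\mathbf{v}_l}^{(t)}$ come for free from the first bullet of Proposition \ref{prop:F.2}; the real content is extracting matching lower bounds and translating between the $\gamma$'s, the $\rho$'s, and the $\langle\mathbf{w}_{j,r}^{(t)},\cdot\rangle$ inner products.

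For part~(1), I would first establish monotonicity of $\langle\mathbf{w}_{+1,r}^{(t)},\mathbf{u}_l\rangle$ (and symmetrically $\langle\mathbf{w}_{-1,r}^{(t)},\mathbf{v}_l\rangle$) by induction on $t$. The update of this inner product is a sum over the four activation sets $S_{\pm\mathbf{u}_l,\pm 1}$; under the geometric hypothesis $\cos\theta<1/2$ in Condition \ref{condition_XOR} and the initialization bounds in Lemma \ref{Lem:E.2}, the sign pattern of $\sigma'(\langle\mathbf{w}_{j,r}^{(t)},\boldsymbol{\mu}_i\rangle)$ at each $i\in S_{\pm\mathbf{u}_l,\pm 1}$ is preserved throughout $[0,T^*]$, so all four contributions push in one consistent direction. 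The matching lower bound $\Omega\bigl(\tau_l n\|\boldsymbol{\mu}_l\|_2^2/(\sigma_p^2 d)\cdot\log(\eta\sigma_p^2 dt/(nm))\bigr)$ is then obtained by the same two-stage template as Lemma \ref{lem:E.10}: in the first stage the loss derivative is $\Theta(1)$, producing linear growth up to $T_1=\Theta(nm\sigma_p^{-2}d^{-1}\eta^{-1})$, after which $\gamma_{j,r,\mathbf{u}_l}^{(t)}$ inherits the logarithmic growth of $\underline{x}_t$ in (\ref{eqXOR:scale of barunderline_x}). For the cross pairings $|\langle\mathbf{w}_{-1,r}^{(t)},\mathbf{u}_l\rangle|$ and $|\langle\mathbf{w}_{+1,r}^{(t)},\mathbf{v}_l\rangle|$, the four activation-set contributions cancel in pairs modulo an imbalance of size at most $\eta\|\boldsymbol{\mu}_l\|_2^2/m$ per step, and the induction is closed by noting the cancellation persists as long as the sign pattern is preserved.

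For part~(2), I would combine the lower sandwich $\log(\eta\sigma_p^2 dt/(8nm)+2/3)\leq\underline{x}_t$ with the upper sandwich $\bar{x}_t\leq\log(2\eta\sigma_p^2 dt/(nm)+1)$ from Proposition \ref{prop:F.2}, and sum the per-sample inequalities $\underline{x}_t\leq\tfrac{1}{m}\sum_r\bar\rho_{y_i,r,i}^{(t)}\leq\bar{x}_t+\bar{c}/(1+\bar{b})$ across $i\in[n]$. A pigeonhole on the filters $r\in S_i^{(0)}=\{r:\langle\mathbf{w}_{y_i,r}^{(0)},\boldsymbol{\xi}_i\rangle>0\}$, whose size is $\Theta(m)$ with high probability by symmetry of the Gaussian initialization, converts the per-$i$ averaged bound into a per-$r$ summed bound, producing $\sum_i\bar\rho_{j,r,i}^{(t)}=\Theta(n\log(\eta\sigma_p^2 dt/(nm)))$. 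The ratio identity with $\gamma_{\boldsymbol{\mu}_l,j',r',l}^{(t)}$ and $|\langle\mathbf{w}_{\pm 1,r'}^{(t)},\boldsymbol{\mu}_l\rangle|$ then follows by dividing. For part~(3), triangle inequality on the decomposition plus the bound (\ref{eq:11}) on the noise term (which transplants verbatim since the $\boldsymbol{\xi}_i$ inner-product lemmas are unchanged) gives $\|\mathbf{w}_{j,r}^{(t)}\|_2=\Theta(\sigma_p^{-1}d^{-1/2}n^{-1/2}\sum_i\bar\rho_{j,r,i}^{(t)})$, which combined with part~(2) yields the claimed scale; the signal and initialization terms are dominated thanks to Condition \ref{condition_XOR}'s requirements on $d$ and $\sigma_0$.

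The main obstacle is the sign-preservation step in part~(1). In the linearly separable case only one feature direction per class needs to be tracked, so the ReLU activation pattern on the feature patch is settled at initialization. In the XOR geometry, the filter $\mathbf{w}_{+1,r}^{(t)}$ interacts simultaneously with $\mathbf{u}_1,\mathbf{v}_1,\mathbf{u}_2,\mathbf{v}_2$, each of which can have either sign in the label-feature pairing, and a single sign flip in $\sigma'(\langle\mathbf{w}_{+1,r}^{(t)},\pm\mathbf{u}_l\pm\mathbf{v}_l\rangle)$ would destroy both the monotonicity claim and the cancellation underlying the cross-pairing bound. The induction that prevents such flips hinges on the quantitative angle condition $\cos\theta<1/2$ together with the fact that the dominant growing component of $\mathbf{w}_{+1,r}^{(t)}$ is along $\pm\mathbf{u}_l$ rather than $\pm\mathbf{v}_l$; making this precise requires carefully tracking the relative scales of $\gamma_{j,r,\mathbf{u}_l}^{(t)}$ and $\gamma_{j,r,\mathbf{v}_l}^{(t)}$ within the induction hypothesis, in the style of Proposition C.7 in \citet{meng2023benign}.
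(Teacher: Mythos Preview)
Your proposal is correct and follows essentially the same approach as the paper, which defers the detailed proof to Propositions 4.2 and D.3--5 in \citet{meng2023benign} while noting that the techniques parallel those of Lemma \ref{lem:E.16} and Lemma \ref{app:lemma_wt}. In fact your outline is more explicit than the paper's own treatment, and you have correctly identified both the role of the angle condition $\cos\theta<1/2$ in maintaining the activation sign pattern and the relevant result (Proposition C.7 in \citet{meng2023benign}) that handles the main induction obstacle.
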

\textit{Proof of Proposition \ref{prop:F.2}.} The basic techniques are the same as Lemma \ref{lem:E.16} and Lemma \ref{app:lemma_wt} despite variation in data settings. Please refer to Proposition 4.2, Proposition D.3-5 in \citet{meng2023benign} for a comprehensive proof.

\subsection{Order-dependent Sampling (Querying) Analysis: XOR data version}\label{appXOR:Order-dependent Sampling(Querying Analysis)}
Based on the scale of $\mathbf{w}_{j, r}^{(t)}$ and the inner product between it and features, we can now characterize the querying situation of the two NAL methods based on the query criteria. Similar to the order-dependent analysis techniques utilized in Appendix \ref{app:Order-dependent Sampling(Querying Analysis)}, we employ a full-order-based technique to tackle the problem of $\Theta (\lvert \mathcal{P} \rvert^2)$ comparisons in $\mathcal{P}$. The concepts of Uncertainty Order and Diversity Order are introduced in Appendix \ref{App:Order def}. We then proceed to examine the order of the samples in $\mathcal{P}$ in the following proposition.
\begin{proposition}
     Under the same conditions of Proposition \ref{app:propXOR:sample stage:learning on sample}, there exist $t=\widetilde{O}\left(\eta^{-1} \varepsilon^{-1} m n d^{-1} \sigma_p^{-2}\right)$ that for $\forall \mathbf{x}, \mathbf{x}^\prime \in \mathcal{P} \subsetneq \mathcal{D}$ where $\mathbf{x}$ contains hard-to-learn feature patch while $\mathbf{x}^\prime$ contains easy-to-learn feature patch, with probability at least 1-$\delta^\prime$, we have $\mathbf{x}^\prime \preceq^{(t)} \mathbf{x}$. 
\label{prop_App_XOR:order_newdata}
\end{proposition}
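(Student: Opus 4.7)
The plan is to mirror the structure of the proof of Proposition \ref{prop_App:order_newdata} (the linearly separable case), replacing the one-dimensional feature $\boldsymbol{\mu}_l$ in each class by the XOR pair $\{\mathbf{u}_l,\mathbf{v}_l\}$ and using the XOR-adapted coefficient scales from Proposition \ref{prop:F.3}. First, I would write $\mathbf{x}=[y\cdot\boldsymbol{\mu}_2,\mathbf{z}_2]$ and $\mathbf{x}'=[y'\cdot\boldsymbol{\mu}_1,\mathbf{z}_1]$ where $\boldsymbol{\mu}_l\in\{\pm(\mathbf{a}_l\pm\mathbf{b}_l)\}$, and $\mathbf{z}_1,\mathbf{z}_2$ are drawn from the XOR noise distribution. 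Expanding $f(\mathbf{W}^{(t)},\cdot)$ and using Proposition \ref{prop:F.3} to replace $\sigma(\langle\mathbf{w}_{j,r}^{(t)},y\boldsymbol{\mu}_l\rangle)$ by $\Theta(\gamma_{y,r,\mathbf{u}_l}^{(t)})$ or $\Theta(\gamma_{y,r,\mathbf{v}_l}^{(t)})$ (the cross-sign terms $\langle\mathbf{w}_{-y,r}^{(t)},y\boldsymbol{\mu}_l\rangle$ are negligible by the same proposition), the signal part of $|f(\mathbf{W}^{(t)},\mathbf{x}')|-|f(\mathbf{W}^{(t)},\mathbf{x})|$ reduces to a learning-progress gap of order $\Theta(\underset{j,r}{\mathbb{E}}\gamma_{j,r,\mathbf{u}_1\text{ or }\mathbf{v}_1}^{(t)}-\underset{j,r}{\mathbb{E}}\gamma_{j,r,\mathbf{u}_2\text{ or }\mathbf{v}_2}^{(t)})$.

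Next, denoting $g_r(\mathbf{z})=\sum_j j\cdot\sigma(\langle\mathbf{w}_{j,r}^{(t)},\mathbf{z}\rangle)$, the event $\Omega_C=\{|f(\mathbf{W}^{(t)},\mathbf{x})|<|f(\mathbf{W}^{(t)},\mathbf{x}')|\}$ is implied by the ``shared event''
\begin{equation*}
\Omega_\gamma\mathrel{\mathop:}=\Bigl\{\max_{j,r,l}|\langle\mathbf{w}_{j,r}^{(t)},\mathbf{z}_l\rangle|<\Theta\bigl(\underset{j,r}{\mathbb{E}}\gamma_{j,r,1}^{(t)}-\underset{j,r}{\mathbb{E}}\gamma_{j,r,2}^{(t)}\bigr)\Bigr\},
\end{equation*}
where I abbreviate $\gamma_{j,r,l}^{(t)}$ for whichever of $\gamma_{j,r,\mathbf{u}_l},\gamma_{j,r,\mathbf{v}_l}$ is activated in the relevant sample. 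For the diversity side, I would compute the $r$-th row $\mathbf{Z}_r(\mathbf{x},t)$ and the per-row mean over $\mathcal{D}_{n_0}$, exactly as in (\ref{Eq:rth row of FD sample})--(\ref{Eq:rth row of FD labeledset}), and note that under Condition \ref{condition_XOR} the noise-memorization term $n_0^{-1}\sum_{i,j}\Theta(\bar{\rho}_{j,r,i})$ dominates all signal terms (since $\sigma_p^2 d/(n_0\|\boldsymbol{\mu}_1\|^2)=\Omega(\log T^*)$ by the dimension and $\sigma_0$ conditions). Consequently the pair of $l_p$ distances collapses, after cancellation of the shared dominant term, to a comparison on the same learning-progress gap scaled by $m^{1/p}(\tau_1-\tau_2)$, giving $\Omega_\gamma\subset\Omega_D$ for every $p\in[1,\infty)$.

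It remains to lower-bound $\mathbb{P}(\Omega_\gamma)$. Using Proposition \ref{prop:F.3}, the gap is $\Theta((1-p)\|\boldsymbol{\mu}_1\|_2^2 - p\|\boldsymbol{\mu}_2\|_2^2)/(\sigma_p^2 d/n_0)$ up to log factors, and each $\langle\mathbf{w}_{j,r}^{(t)},\mathbf{z}_l\rangle$ is a centered Gaussian with variance at most $\|\mathbf{w}_{j,r}^{(t)}\|_2^2\sigma_p^2=\widetilde{\Theta}(\sigma_p\cdot n/d)$ (the modified covariance only \emph{decreases} the variance, since it projects away the $\mathbf{a}_l,\mathbf{b}_l$ directions). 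A union bound over $8m$ Gaussian tails and the hypothesis $(1-p)\|\boldsymbol{\mu}_1\|_2^2-p\|\boldsymbol{\mu}_2\|_2^2=\Omega(\sigma_p^2\sqrt{d/n_0}\sqrt{\log(m/\delta')})$ then yields $\mathbb{P}(\Omega_\gamma)\geq 1-\delta'$, which combined with $\Omega_\gamma\subset\Omega_C\cap\Omega_D$ gives $\mathbf{x}'\preceq^{(t)}\mathbf{x}$ w.p.\ at least $1-\delta'$.

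The main obstacle I anticipate is the bookkeeping of the four-fold XOR feature realizations: each sample's activation $\sigma(\langle\mathbf{w}_{j,r}^{(t)},y\boldsymbol{\mu}_l\rangle)$ selects either a $\mathbf{u}_l$- or $\mathbf{v}_l$-coefficient depending on $y$ and the randomly chosen sign; verifying that, across these branches, the \emph{effective} learning progress is uniformly $\Theta(\tau_l n\|\boldsymbol{\mu}_l\|_2^2/(\sigma_p^2 d)\cdot\log(\eta\sigma_p^2 dt/(nm)))$ (so that the shared gap $\Omega_\gamma$ really is well-defined regardless of which branch $\mathbf{x}$ and $\mathbf{x}'$ fall into) requires careful use of item 1 of Proposition \ref{prop:F.3}, including the bound $\cos\theta<1/2$ from Condition \ref{condition_XOR} to ensure cross-interference between $\mathbf{u}_l$- and $\mathbf{v}_l$-directions does not erode the gap. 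The remaining algebra is a transcription of the linearly separable proof.
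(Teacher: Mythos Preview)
Your proposal is correct and follows essentially the same approach as the paper: both transcribe the linearly-separable proof of Proposition~\ref{prop_App:order_newdata} into the XOR setting by invoking Proposition~\ref{prop:F.3} for the coefficient scales, reducing both $\Omega_C$ and $\Omega_D$ to the shared sufficient event $\Omega_\gamma$, and then closing with an $8m$-term Gaussian-tail union bound under the gap condition $(1-p)\|\boldsymbol{\mu}_1\|_2^2-p\|\boldsymbol{\mu}_2\|_2^2=\Omega(\sigma_p^2\sqrt{d/n_0}\sqrt{\log(m/\delta')})$. Your remark that the XOR noise covariance only \emph{reduces} the variance of $\langle\mathbf{w}_{j,r}^{(t)},\mathbf{z}_l\rangle$ is a nice explicit point that the paper leaves implicit.
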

\textit{Proof of Proposition \ref{prop_App_XOR:order_newdata}.}
Firstly, suggest $ \mathbf{x}=[y \cdot  \boldsymbol{\mu}_2, \mathbf{z}_2], \mathbf{x}^{\prime}=[y^{\prime} \cdot \boldsymbol{\mu}_1,\mathbf{z}_1]$, where $\boldsymbol{\mu}_1 \in \{ \mathbf{u}_1, \mathbf{v}_1\}, \boldsymbol{\mu}_2 \in \{ \mathbf{u}_2, \mathbf{v}_2\}, y, y^\prime \in [\pm 1], \mathbf{z}_1, \mathbf{z}_2 \sim N(\mathbf{0}, \sigma_p^2 \cdot \mathbf{I})$:
\[
\begin{aligned}
&
f\!\left( \mathbf{W}^{(t)}, \mathbf{x} \right)\!=\sum_{j, r} \frac{j}{m}\left[\sigma\!\left( 
 \left\langle \mathbf{w}_{j, r}^{(t)}, y \boldsymbol{\mu}_2\right\rangle\right)\thinspace+ \sigma\!\left( \left\langle\mathbf{w}_{j, r}^{(t)}, \mathbf{z}_2 \right\rangle\right)\! \right],\\
&
f\!\left( \mathbf{W}^{(t)}, \mathbf{x}^{\prime} \right)\!= \sum_{j, r} \frac{j}{m}\left[\sigma\!\left( \left\langle \mathbf{w}_{j, r}^{(t)}, y^{\prime} \boldsymbol{\mu}_1 \right\rangle \right)\!+ \sigma\!\left( \left\langle\mathbf{w}_{j, r}^{(t)}, \mathbf{z}_1 \right\rangle\right)\! \right].
\end{aligned}
\]
By (\ref{Eq:order_absolute}) in Lemma \ref{Lemma:order_absolute} and (\ref{Eq:Diversity_Order}) in Definition \ref{Diversity_Order}, we have the following 
\begin{align*}
\mathbf{x}^\prime \preceq_C^{(t)} \mathbf{x} &\Leftrightarrow \underbrace{\left| f\left(\mathbf{W}^{(t)}, \mathbf{x} \right) \right|< \left| f\left(\mathbf{W}^{(t)}, \mathbf{x}^{\prime} \right) \right|}_{\Omega_C},\\
\mathbf{x}^{\prime} \preceq_{D}^{(t)} \mathbf{x} &\Leftrightarrow \underbrace{ D\left(\mathbf{W}^{(t)}, \mathbf{x}, p \ \mid \mathcal{D}_{n_0}\right) > D\left(\mathbf{W}^{(t)}, \mathbf{x}^\prime, p \ \mid \mathcal{D}_{n_0}\right)}_{\Omega_{D}},\\
\mathbf{x}^{\prime} \preceq^{(t)} \mathbf{x} &\Leftrightarrow \underbrace{ \{ \Omega_C \cap \Omega_{D},\forall p \in \left[1, \infty\right) \}}_{\Omega}
\end{align*}
Denote $\sum_{j} j \cdot \sigma\left(\left\langle\mathbf{w}_{j, r}^{(t)}, \mathbf{z}_1 \right\rangle \right)$, $\sum_{j} j \cdot \sigma\left(\left\langle\mathbf{w}_{j, r}^{(t)}, \mathbf{z}_2\right\rangle \right)$ as $g_r(\mathbf{z}_1)$, $g_r(\mathbf{z}_2)$ respectively, Notice that for $\mathbf{z} \sim N(\mathbf{0}, \sigma_p^2 \cdot \mathbf{I})$: 
\begin{equation}
\begin{aligned}
&\left\langle \mathbf{w}_{j, r}^{(t)}, \mathbf{z}\right\rangle \sim \mathcal{N}\left(0,\left\| \mathbf{w}_{j, r}^{(t)} \right\|_2^2 \sigma_p^2 \cdot \mathbf{I} \right), \\
& \sigma(\left\langle \mathbf{w}_{j, r}^{(t)}, \mathbf{z}\right\rangle) \sim \mathcal{N}^R\left(0,\left\| \mathbf{w}_{j, r}^{(t)} \right\|_2^2 \sigma_p^2 \cdot \mathbf{I} \right).
\end{aligned}
\label{EqXOR:<w·z>}\end{equation}
Then:
\begin{equation}
\begin{aligned}
P(\Omega_C) & =  P (\left| f\left(\mathbf{W}^{(t)}, \mathbf{x} \right) \right| < \left| f\left(\mathbf{W}^{(t)}, \mathbf{x}^{\prime} \right) \right|)\\
& \geq P(\sum_{l}(\sum_{r} \lvert g_r(\mathbf{z}_l) \rvert)  < \sum_{r}(\Theta(\gamma_{y^{\prime}, r, \boldsymbol{\mu}_1})-\Theta(\gamma_{y, r, \boldsymbol{\mu}_2})) )\\
& \geq P( m \cdot \max_{j, r, l}\{ \left| \left\langle\mathbf{w}_{j, r}^{(t)}, \mathbf{z}_l\right\rangle \right|\} < m (\Theta( \underset{r}{\mathbb{E}} (\gamma_{y^{\prime}, r, \boldsymbol{\mu}_1}))-\Theta(\underset{r}{\mathbb{E}}(\gamma_{y, r, \boldsymbol{\mu}_2}))) )\\
& = P (\underbrace{\max_{j, r, l}\{ \left| \left\langle\mathbf{w}_{j, r}^{(t)}, \mathbf{z}_l\right\rangle \right|\} < \Theta((\underset{r}{\mathbb{E}} (\gamma_{y^{\prime}, r, \boldsymbol{\mu}_1})-\underset{r}{\mathbb{E}}(\gamma_{y, r, \boldsymbol{\mu}_2}))}_{\Omega_{\gamma}}). \\
\end{aligned}
\label{EqXOR:OmegaC}\end{equation}
The second inequality is by triangle inequality and (\ref{eq:xor_ub-of-mu-same}) in Proposition \ref{prop:F.3}; the third inequality is by (\ref{EqXOR:<w·z>}).\par
For $\Omega_{D}$, denoting $U_0^l= \{ \mathbf{x} \in \mathcal{D}_0 \mid \mathbf{x}_{\text {signal part }}=\boldsymbol{\mu}_l\}$ as the set of indices of $\mathcal{D}_0$ where the data's feature patch is $\boldsymbol{\mu}_l$, We then take a look at the $r^{th} $ row of the Feature Distance $\mathbf{Z}(\mathbf{x}, t)$, which we denote as $\mathbf{Z}_r(\mathbf{x}, t)$:
\begin{equation}
\begin{aligned}
\mathbf{Z}_r(\mathbf{x}, t)&=\sum_j\left(\sigma \left(\left\langle \mathbf{w}_{j, r}, y \cdot \boldsymbol{\mu}_2 \right\rangle \right) + \sigma\left(\left\langle \mathbf{w}_{j, r}, \mathbf{z}_{r}\right\rangle\right)\right) \\
& =\Theta\left(\gamma_{y, r, \boldsymbol{\mu}_2}\right)+ g_r(\mathbf{z}_2) 
\end{aligned}
\label{EqXOR:rth row of FD sample}\end{equation}
\begin{equation}
\begin{aligned}
 \sum_{i} \dfrac{\mathbf{Z}_r(  \mathbf{x}^{(i)}, t)}{n_0}&=\sum_{i,j} \frac{\sigma\left(\left\langle \mathbf{w}_{j, r}, y_i \cdot \boldsymbol{\mu}^{(i)} \right\rangle \right)+\sigma\left(\left\langle \mathbf{w}_{j, r}, \boldsymbol{\xi}_i \right\rangle \right)}{n_0} \\
& =\dfrac{\left[\sum_{l} \tau_l \cdot n_0 \cdot \underset{i_l \in U_0^l}{\mathbb{E}} \Theta (\gamma_{y_{i_l}, r, \boldsymbol{\mu}_l}) +\sum_{i} \sum_{j} \Theta\left(\bar{\rho}_{j, r, i}\right)\right]}{n_0}
\end{aligned}
\label{EqXOR:rth row of FD labeledset}\end{equation}
Let (\ref{EqXOR:rth row of FD sample}) - (\ref{EqXOR:rth row of FD labeledset}), we have:
\begin{equation}
\mathbf{Z}_r(\mathbf{x}, t) - \sum_{i} \dfrac{\mathbf{Z}_r( \mathbf{x}^{(i)}, t)}{n_0}= \Theta(\gamma_{y, r, \boldsymbol{\mu}_2}) + g_r(\mathbf{z}_2) - \sum_{i} \dfrac{\mathbf{Z}_r( \mathbf{x}^{(i)}, t)}{n_0}
\end{equation}
Now we can estimate $D\left(\mathbf{W}^{(t)}, \mathbf{x}, p \ \mid \mathcal{D}_{n_0}\right)$:
\begin{equation}
\begin{aligned}
     D\left(\mathbf{W}^{(t)}, \mathbf{x}, p \ \mid \mathcal{D}_{n_0}\right) &= \| \mathbf{Z}(\mathbf{x}, t) - \sum_{i=1}^{n_0} \dfrac{\mathbf{Z}(  \mathbf{x}^{(i)}, t)}{n_0} \|_p \\
    & =\left( \sum_{r}  \lvert  \mathbf{Z}_r(\mathbf{x}, t) - \sum_{i} \dfrac{\mathbf{Z}_r( \mathbf{x}^{(i)}, t)}{n_0} \rvert^p \right)^{\frac{1}{p}}\\
    & = \left( \sum_{r}  \lvert  \Theta(\gamma_{y, r, \boldsymbol{\mu}_2})  + g_r(\mathbf{z}_2) - \sum_{i} \dfrac{\mathbf{Z}_r( \mathbf{x}^{(i)}, t)}{n_0} \rvert^p \right)^{\frac{1}{p}}
\end{aligned}
\label{EqXOR:D_x}\end{equation}
Similarly, the $D\left(\mathbf{W}^{(t)}, \mathbf{x}^\prime, p \ \mid \mathcal{D}_{n_0}\right)$ could be written as:
\begin{equation}
     D\left(\mathbf{W}^{(t)}, \mathbf{x}^\prime, p \ \mid \mathcal{D}_{n_0}\right) = \left( \sum_{r}  \lvert  \Theta(\gamma_{y, r, \boldsymbol{\mu}_1}) + g_r(\mathbf{z}_1)- \sum_{i} \dfrac{\mathbf{Z}_r( \mathbf{x}^{(i)}, t)}{n_0}  \rvert^p \right)^{\frac{1}{p}}
\label{EqXOR:D_x'}\end{equation}

To compare $D\left(\mathbf{W}^{(t)}, \mathbf{x}, p \ \mid \mathcal{D}_{n_0}\right)$ and $D\left(\mathbf{W}^{(t)}, \mathbf{x}^\prime, p \ \mid \mathcal{D}_{n_0}\right)$, we first see that both expressions in the $r$-th filter owns 
$$ - \sum_{i} \dfrac{\mathbf{Z}_r( \mathbf{x}^{(i)}, t)}{n_0}=- \sum_l \tau_l \cdot \Theta(\underset{i_l \in U_0^l}{\mathbb{E}}(\gamma_{y_{i_l}, r, \boldsymbol{\mu}_l}) ) - n_{0}^{-1}\sum_{i} \sum_{j} \Theta\left(\bar{\rho}_{j, r, i}\right).
$$
By Condition \ref{condition_XOR}, it holds that $\sigma_{p}^{2} d /(n_{0}\|\boldsymbol{\mu}_{1}\|_{2}^{2}) = \Omega(\log(T^{*}))$. We see that as $T^{*}$ is the substantially large maximum admissible iterations, collaborating with (\ref{eq:xor_ub-of-mu-same}), (\ref{EqXOR:rth row of FD labeledset}) and (\ref{EqXOR:<w·z>}), it holds that the order of $n_{0}^{-1} \sum_{i,j} \sigma\left(\left\langle \mathbf{w}_{j, r}, \boldsymbol{\xi}_i \right\rangle \right)=n_{0}^{-1}\sum_{i} \sum_{j} \Theta\left(\bar{\rho}_{j, r, i}\right)$ in $\sum_{i} \dfrac{\mathbf{Z}_r( \mathbf{x}^{(i)}, t)}{n_0}$ is indeed can dominate $n_{0}^{-1} \sum_{i,j}\sigma\left(\left\langle \mathbf{w}_{j, r}, y_i \cdot \boldsymbol{\mu}^{(i)} \right\rangle \right)=\sum_l \tau_l \cdot \Theta(\underset{i_l \in U_0^l}{\mathbb{E}}(\gamma_{y_{i_l}, r, \boldsymbol{\mu}_l}) )$, $\Theta(\gamma_{y, r, \boldsymbol{\mu}_1}) $ and $ g_r(\mathbf{z}_1)$. As $\sum_{i} \dfrac{\mathbf{Z}_r( \mathbf{x}^{(i)}, t)}{n_0}$ is shared by both $D\left(\mathbf{W}^{(t)}, \mathbf{x}, p \ \mid \mathcal{D}_{n_0}\right)$ and $D\left(\mathbf{W}^{(t)}, \mathbf{x}^\prime, p \ \mid \mathcal{D}_{n_0}\right)$ in the $r$-th filter, a sufficient event for $D\left(\mathbf{W}^{(t)}, \mathbf{x}, p \ \mid \mathcal{D}_{n_0}\right) > D\left(\mathbf{W}^{(t)}, \mathbf{x}^\prime, p \ \mid \mathcal{D}_{n_0}\right)$ is that for $\forall r \in [m]$, it holds that 

\[
\lvert \sum_l \tau_l \cdot \Theta(\underset{i_l \in U_0^l}{\mathbb{E}}(\gamma_{y_{i_l}, r, \boldsymbol{\mu}_l}) ) - \Theta(\gamma_{y, r, \boldsymbol{\mu}_2}) - g_r(\mathbf{z}_2) \rvert > \lvert \max\{ \sum_l \tau_l \cdot \Theta(\underset{i_l \in U_0^l}{\mathbb{E}}(\gamma_{y_{i_l}, r, \boldsymbol{\mu}_l}) ) - \Theta(\gamma_{y, r, 1}) - g_r(\mathbf{z}_1),0 \} \rvert.
\]

Utilizing those results, we now could estimate the chance of event $\Omega_{D}$:
\begin{equation}
\begin{aligned}
P(\Omega_{D}) & = P (D\left(\mathbf{W}^{(t)}, \mathbf{x}, p \ \mid \mathcal{D}_{n_0}\right) > D\left(\mathbf{W}^{(t)}, \mathbf{x}^\prime, p \ \mid \mathcal{D}_{n_0}\right))\\
& \geq P(m^{\frac{1}{p}}\sum_{l}(\max_{r} \lvert g_r(\mathbf{z}_l) \rvert) < m^{\frac{1}{p}}(\lvert  \Theta(\underset{r}{\mathbb{E}}(\gamma_{y, r, \boldsymbol{\mu}_2}))-\sum_l \tau_l \cdot \Theta(\underset{i_l \in U_0^l, r}{\mathbb{E}}(\gamma_{y_{i_l}, r, \boldsymbol{\mu}_l}) )  \rvert \\
& \phantom{\geq P(m^{\frac{1}{p}}\sum_{l}(\sum_{r} \lvert g_r(\mathbf{z}_l) \rvert) < m^{\frac{1}{p}}} - \lvert  \Theta(\underset{r}{\mathbb{E}}(\gamma_{y, r, \boldsymbol{\mu}_1}))-\sum_l \tau_l \cdot \Theta(\underset{i_l \in U_0^l, r}{\mathbb{E}}(\gamma_{y_{i_l}, r, \boldsymbol{\mu}_l}) )  \rvert) \\
& \geq P ( m^{\frac{1}{p}} \max_{j, r, l}\{ \left| \left\langle\mathbf{w}_{j, r}^{(t)}, \mathbf{z}_l\right\rangle \right|\} < m^{\frac{1}{p}}\left((\tau_1 - \tau_2) \Theta(\underset{j,r}{\mathbb{E}}(\gamma_{j, r, \boldsymbol{\mu}_1})) - (\tau_1 - \tau_2) \Theta(\underset{j,r}{\mathbb{E}}(\gamma_{j, r, \boldsymbol{\mu}_2}))\right) \\
& = P (m^{\frac{1}{p}} \max_{j, r, l}\{ \left| \left\langle\mathbf{w}_{j, r}^{(t)}, \mathbf{z}_l\right\rangle \right|\} < m^{\frac{1}{p}}\Theta(\dfrac{\tau_1 (\tau_1 - \tau_2)\| \boldsymbol{\mu}_1\|_2^2- \tau_2 (\tau_1 - \tau_2)\| \boldsymbol{\mu}_2\|_2^2}{\sigma_p^2 d/n_0})\cdot \log \left(\dfrac{\eta \sigma_p^2 d t}{n m}\right))\\
& = P (m^{\frac{1}{p}} \max_{j, r, l}\{ \left| \left\langle\mathbf{w}_{j, r}^{(t)}, \mathbf{z}_l\right\rangle \right|\} < m^{\frac{1}{p}}\Theta(\underset{r}{\mathbb{E}} (\gamma_{y^{\prime}, r, \boldsymbol{\mu}_1})-\underset{r}{\mathbb{E}}(\gamma_{y, r, \boldsymbol{\mu}_2})))\\
& = P (\underbrace{\max_{j, r, l}\{ \left| \left\langle\mathbf{w}_{j, r}^{(t)}, \mathbf{z}_l\right\rangle \right|\} < \Theta((\underset{r}{\mathbb{E}} (\gamma_{y^{\prime}, r, \boldsymbol{\mu}_1})-\underset{r}{\mathbb{E}}(\gamma_{y, r, \boldsymbol{\mu}_2}))}_{\Omega_{\gamma}}),
\end{aligned}
\label{EqXOR:OmegaD}\end{equation}
where the first inequality is by triangle inequality, (\ref{EqXOR:D_x}) and (\ref{EqXOR:D_x'}); The forth equality is by (\ref{EqXOR:<w·z>}). Easy to see that if $p=\infty$, the third equality would be zero, thus our condition $p < \infty$ avoid this case. Now we take a look at the event $\Omega_{\gamma}$:
\begin{equation}
\begin{aligned}
P (\Omega_{\gamma}) & = P (\max_{j, r, l}\{ \left| \left\langle\mathbf{w}_{j, r}^{(t)}, \mathbf{z}_l\right\rangle \right|\} < \Theta((\underset{r}{\mathbb{E}} (\gamma_{y^{\prime}, r, \boldsymbol{\mu}_1})-\underset{r}{\mathbb{E}}(\gamma_{y, r, \boldsymbol{\mu}_2})))\\
& = P (\max_{j, r, l}\{ \left| \left\langle\mathbf{w}_{j, r}^{(t)}, \mathbf{z}_l\right\rangle \right|\} < \Theta\left(\dfrac{\left[\tau_1 \left\|\boldsymbol{\mu}_1\right\|_2^2-\tau_2\left\|\boldsymbol{\mu}_2\right\|_2^2\right]}{\sigma_p^2 d / n_0 } \cdot \log \left(\dfrac{\eta \sigma_p^2 d t}{n m}\right)\right) )\\
& \geq P (\bigcup_{j, r, l} \underbrace{\{ \left| \left\langle\mathbf{w}_{j, r}^{(t)}, \mathbf{z}_l\right\rangle - 0 \right| < \Theta\left(\dfrac{\left[\tau_1 \left\|\boldsymbol{\mu}_1\right\|_2^2-\tau_2\left\|\boldsymbol{\mu}_2\right\|_2^2\right]}{\sigma_p^2 d / n_0 }\cdot \log \left(\dfrac{\eta \sigma_p^2 d t}{n m}\right) \right) \} }_{\hat{\Omega}_{j, r, l}})\\
& = \sum_{j, r, l} P(\hat{\Omega}_{j, r, l}),
\end{aligned}
\label{eqXOR:Omega}\end{equation}
where the second equality is by the first inference statement of Lemma \ref{app:lem:final coefficient}; the third inequality is by the equivalence property of the union by events; the last equality is by the Union Rule. Then, by Gaussian tail bound, we have:
\[
P(\hat{\Omega}_{j, r, l}) \geq 1- 2 \exp \left\{-\Theta\left(\dfrac{\left[ \tau_1 \left\|\boldsymbol{\mu}_1\right\|_2^2- \tau_2 \left\|\boldsymbol{\mu}_2\right\|_2^2\right]^2}{\sigma_p^{6} d^2 / n_0^2 \left\|w_{j, r}^{(t)}\right\|_2^2} \cdot \log \left(\dfrac{\eta \sigma_p^2 d t}{n m}\right)\right)\right\}
\]
Finally, with conditions in Proposition \ref{app:propXOR:sample stage:learning on sample}, Lemma \ref{lem:rho n}, Proposition \ref{prop:F.3} and union bound, we have the conclusion for event $\Omega$:
\begin{equation}
\begin{aligned}
\Rightarrow P(\Omega) \geq P (\Omega_{\gamma})  & \geqslant 1-8 m \exp \left\{-\Theta\left(\frac{\left[ \tau_1 \left\|\boldsymbol{\mu}_1\right\|_2^2- \tau_2 \left\|\boldsymbol{\mu}_2\right\|_2^2\right]^2}{\sigma_p^{4} d / n_0 }\right)\right\}\\
& \geqslant 1-\delta^{\prime},
\end{aligned}
\label{eqXOR:sufficient OMEGA}\end{equation}
for $\forall p \in \left[1, \infty\right)$. \par
\begin{remark}
The proof process is nearly identical to that of the linearly separable case (i.e., the proof of Proposition \ref{prop_App:order_newdata}). The only differences lie in the scale of $\|w_{j, r}^{(t)}\|_2$ and $\gamma_{\pm 1, r, \boldsymbol{\mu}}$, but the conditions required are the same. 
\end{remark}
Similar to Lemma \ref{app:lemma of comparison} in Appendix \ref{app:Order-dependent Sampling(Querying Analysis)}, we have the following lemma.
\begin{lemma} 
 Under the same conditions in Proposition \ref{prop:sample stage: learning on sample}, with the same notations in Proposition \ref{prop_App_XOR:order_newdata}, there exists certain constants $ c_1, c_2, c_3, c_4, c_5, c_6 >0 $, such that
 \begin{itemize}
     \item $\mathbf{x} \preceq_{C}^{(t)} \mathbf{x}^{\prime}$ has a sufficient event that
\begin{equation}
 \{ c_1 \underset{r}{\mathbb{E}} (\gamma_{y^{\prime}, r, \boldsymbol{\mu}_1}) - c_2 \underset{r}{\mathbb{E}}(\gamma_{y, r, \boldsymbol{\mu}_2}) > \max_{j, r, l}\{ \left| \left\langle\mathbf{w}_{j, r}^{(t)}, \mathbf{z}_l\right\rangle \right|\}\},
\label{eqXOR:Comparision_Uncertainty}\end{equation}
  among which the left side of the inequality corresponds to the comparison of learning progress of samples with different type of feature patch.
    \item $\mathbf{x} \preceq_{D}^{(t)} \mathbf{x}^{\prime}, \forall p \in [1, \infty)$ has a sufficient event that
 \begin{equation}
 \{ \lvert  c_3 \underset{r}{\mathbb{E}}(\gamma_{y, r, \boldsymbol{\mu}_2})-c_4 \sum_l \tau_l \cdot \underset{i_l \in U_0^l, r}{\mathbb{E}}(\gamma_{y_{i_l}, r, \boldsymbol{\mu}_l}) \rvert - \lvert  c_5 \underset{r}{\mathbb{E}}(\gamma_{y^{\prime}, r, \boldsymbol{\mu}_1})-c_6 \sum_l \tau_l \cdot \underset{i_l \in U_0^l, r}{\mathbb{E}}(\gamma_{y_{i_l}, r, \boldsymbol{\mu}_l}) \rvert > \max_{j, r, l}\{ \left| \left\langle\mathbf{w}_{j, r}^{(t)}, \mathbf{z}_l\right\rangle \right|\} \},
\label{eqXOR:Comparision_Diversity}\end{equation}
  among which the left side of the inequality corresponds to the comparison of the disparity between learning toward samples and labeled training set.
 \end{itemize}
\label{appXOR:lemma of comparison}\end{lemma}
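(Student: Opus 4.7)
The plan is to derive both bullet points directly from the sufficient-event inequalities that appear inside the probability lower bound of Proposition \ref{prop_App_XOR:order_newdata}, essentially reading off the conditions that force $\Omega_C$ and $\Omega_D$ respectively. Since the lemma only claims existence of sufficient events (not probability bounds), no concentration estimates are needed here; the work reduces to careful bookkeeping of the signal-noise decomposition.

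For the first bullet (Uncertainty Order), I would start from the signal-noise decomposition of $f(\mathbf{W}^{(t)},\mathbf{x})$ and $f(\mathbf{W}^{(t)},\mathbf{x}')$ on the feature/noise patches, apply Proposition \ref{prop:F.3} to identify $\langle \mathbf{w}_{j,r}^{(t)}, y\boldsymbol{\mu}_l\rangle = \Theta(\gamma_{y,r,\boldsymbol{\mu}_l})$ for the dominant sign and show that the opposite-sign inner products contribute at a strictly smaller order. The residual parts of $f$ are then just the random terms $g_r(\mathbf{z}_l) = \sum_j j\cdot\sigma(\langle \mathbf{w}_{j,r}^{(t)},\mathbf{z}_l\rangle)$. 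Using $|a|\le |b|$ whenever $|b-a|\ge 2\max(|a|,|b|-|a|)$ type manipulations (triangle inequality plus the fact that $|g_r(\mathbf{z}_l)|\le \max_{j,r,l}|\langle \mathbf{w}_{j,r}^{(t)},\mathbf{z}_l\rangle|$), the event $|f(\mathbf{W}^{(t)},\mathbf{x})|<|f(\mathbf{W}^{(t)},\mathbf{x}')|$ is implied by (\ref{eqXOR:Comparision_Uncertainty}) with explicit constants $c_1,c_2$ extracted from the $\Theta(\cdot)$.

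For the second bullet (Diversity Order), I would compute $\mathbf{Z}_r(\mathbf{x},t)$ and $n_0^{-1}\sum_i \mathbf{Z}_r(\mathbf{x}^{(i)},t)$ as in (\ref{EqXOR:rth row of FD sample})–(\ref{EqXOR:rth row of FD labeledset}), so that each coordinate of $\mathbf{Z}(\mathbf{x},t) - \mathbb{E}_{i}\mathbf{Z}(\mathbf{x}^{(i)},t)$ splits into a ``signal'' piece $\Theta(\gamma_{y,r,\boldsymbol{\mu}_2}) - \sum_l \tau_l\cdot\Theta(\mathbb{E}_{i_l}\gamma_{y_{i_l},r,\boldsymbol{\mu}_l})$ plus the Gaussian piece $g_r(\mathbf{z}_2)$ (and analogously for $\mathbf{x}'$). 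Writing the $l_p$ norm as $(\sum_r|\cdot|^p)^{1/p}$ and dominating each coordinate from above and below by the corresponding signal magnitude plus or minus $\max_{j,r,l}|\langle \mathbf{w}_{j,r}^{(t)},\mathbf{z}_l\rangle|$, the whole $l_p$ norm is then sandwiched as $m^{1/p}\big(|\text{signal}|\pm \max_{j,r,l}|\langle \mathbf{w}_{j,r}^{(t)},\mathbf{z}_l\rangle|\big)$. The crucial observation is that the common factor $m^{1/p}$ appears on both sides of the resulting comparison $D(\mathbf{W}^{(t)},\mathbf{x}\mid\mathcal{D}_{n_0})>D(\mathbf{W}^{(t)},\mathbf{x}'\mid\mathcal{D}_{n_0})$ and cancels, yielding (\ref{eqXOR:Comparision_Diversity}) uniformly in $p\in[1,\infty)$; the constants $c_3,\ldots,c_6$ are absorbed from the $\Theta(\cdot)$'s.

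The main obstacle I anticipate is making the coordinate-wise sandwich argument in the $l_p$ norm rigorous while preserving the uniformity in $p$. One must carefully distinguish the cases where the signed signal piece in coordinate $r$ is positive, negative, or has magnitude smaller than the Gaussian perturbation, and verify that in every case the absolute-value inequality $\big||A+B|-|A|\big|\le |B|$ applied coordinatewise gives a bound whose $p$-th power sums cleanly so that the $m^{1/p}$ factor truly appears as a multiplicative prefactor on both sides. Once this is handled, the rest reduces to the same triangle-inequality bookkeeping already used in the uncertainty case, and the constants $c_1,\ldots,c_6$ can be read off directly from the $\Theta(\cdot)$ bounds in Proposition \ref{prop:F.3}.
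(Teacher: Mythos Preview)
Your proposal is correct and matches the paper's approach: the paper's proof of this lemma is literally one sentence pointing back to the intermediate inequalities (\ref{EqXOR:OmegaC}) for the first bullet and (\ref{EqXOR:D_x}), (\ref{EqXOR:D_x'}), (\ref{EqXOR:OmegaD}) for the second, all of which were already derived inside the proof of Proposition \ref{prop_App_XOR:order_newdata}. Your more detailed breakdown (signal-noise decomposition via Proposition \ref{prop:F.3}, triangle inequality on the noise terms, and the $m^{1/p}$ cancellation for the $l_p$ norm) is exactly the content of those referenced derivations, so the ``obstacle'' you flag about the coordinate-wise sandwich is handled in the paper the same way you suggest---by noting the dominant $n_0^{-1}\sum_{i,j}\Theta(\bar\rho_{j,r,i})$ term is shared across both distances so that each coordinate's absolute value reduces to a signal gap plus a bounded Gaussian perturbation, and the $m^{1/p}$ prefactor cancels on both sides.
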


\textit{Proof of Lemma \ref{appXOR:lemma of comparison}.} The first bullet point can be easily derived from (\ref{EqXOR:OmegaC}), while the second bullet point is readily apparent from (\ref{EqXOR:D_x}), (\ref{EqXOR:D_x'}), and (\ref{EqXOR:OmegaD}). \par

Similar to the discussions in Appendix \ref{app:Order-dependent Sampling(Querying Analysis)}, it is observed that for any $p \in [1, \infty)$, there exists a shared sufficient event for (\ref{eqXOR:Comparision_Uncertainty}) and (\ref{eqXOR:Comparision_Diversity}). This implies that it is also a shared sufficient event for the events $\Omega_C$ and $\Omega_{D}$, denoted as $\Omega_{\gamma}$:
\[
\Omega_{\gamma} \mathrel{\mathop:}= \{ \max_{j, r, l}\{ \left| \left\langle\mathbf{w}_{j, r}^{(t)}, \mathbf{z}_l\right\rangle \right|\} < \Theta((\underset{r}{\mathbb{E}} (\gamma_{y^{\prime}, r, \boldsymbol{\mu}_1})-\underset{r}{\mathbb{E}}(\gamma_{y, r, \boldsymbol{\mu}_2})) \}.
\]
By the first inference statement of Proposition \ref{prop:F.3}, we have 
\begin{equation}
\Omega_{\gamma} = \{ \max_{j, r, \boldsymbol{\mu}_l}\{ \left| \left\langle\mathbf{w}_{j, r}^{(t)}, \mathbf{z}_l\right\rangle \right|\} < \Theta((\underset{j, r}{\mathbb{E}} (\gamma_{j, r, \boldsymbol{\mu}_1})-\underset{j, r}{\mathbb{E}}(\gamma_{j, r, \boldsymbol{\mu}_2})) \}.
\label{eqXOR:F.20}\end{equation}
Therefore, we can conclude that the significant difference in the model's learning of the feature $\boldsymbol{\mu}_1$ and $\boldsymbol{\mu}_2$ is what causes the sufficient event for both event $\Omega_C$ and $\Omega_{D}$. By (\ref{eqXOR:sufficient OMEGA}), we have:
\begin{equation}
    P (\Omega_{\gamma}) \geq 1-8 m \exp \left\{-\Theta\left( \underset{j, r}{\mathbb{E}}(\gamma_{j, r, \boldsymbol{\mu}_1}) - \underset{j, r}{\mathbb{E}}(\gamma_{j, r, \boldsymbol{\mu}_2})\right)\right\}.
\end{equation}
Based on Proposition \ref{prop:F.3}, we see that the $\underset{j, r}{\mathbb{E}}(\gamma_{j, r, \boldsymbol{\mu}_1})$ is significant larger than $\underset{j, r}{\mathbb{E}}(\gamma_{j, r, \boldsymbol{\mu}_2})$ under our conditions, which causes the sufficient event $\Omega_{\gamma}$. \par

Similar to Lemma \ref{Lem:order_pool} for linearly separable XOR data, we also have conclusions regarding the order of pool for XOR data. 
\begin{lemma} 
    Under Condition \ref{condition_XOR}, when the results of Proposition \ref{Prop4.3} and Proposition \ref{prop_App_XOR:order_newdata} hold at the initial stage and querying stage at a certain $t \leq T^*$, denoting $\mathbf{X}_{\mathcal{P}}^1 \subsetneqq \mathcal{P}$ as the collection of all the data points with strong feature $\boldsymbol{\mu}_1$ in $\mathcal{P}$, and  $\mathbf{X}_{\mathcal{P}}^2 \subsetneqq \mathcal{P}$ as the collection of data points with weak feature $\boldsymbol{\mu}_2$, we have the conclusion that with probability more than 1-$\Theta(\delta^\prime)$, $\mathbf{X}_{\mathcal{P}}^1 \prec^{(t)} \mathbf{X}_{\mathcal{P}}^2$ holds.
\label{appXOR:Lem:order_pool_app}\end{lemma}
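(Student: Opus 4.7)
The plan is to mirror the proof of Lemma \ref{app:Lem:order_pool_app} from the linearly separable case, lifting it to the XOR setting by substituting in the XOR analogues of all the coefficient scales and order comparisons that have already been established earlier in this appendix.

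First, I would invoke Proposition \ref{prop_App_XOR:order_newdata} to handle the pairwise statement: for any fixed pair $(\mathbf{x}', \mathbf{x})$ with $\mathbf{x}' \in \mathbf{X}_{\mathcal{P}}^1$ (easy-to-learn feature $\boldsymbol{\mu}_1$) and $\mathbf{x} \in \mathbf{X}_{\mathcal{P}}^2$ (hard-to-learn feature $\boldsymbol{\mu}_2$), we have $\mathbf{x}' \prec^{(t)} \mathbf{x}$ with probability at least $1 - \delta'$. All the heavy lifting on the signal-noise decomposition, the scale of $\gamma_{j, r, \mathbf{u}_l}^{(t)}$, $\gamma_{j, r, \mathbf{v}_l}^{(t)}$, the bound on $\|\mathbf{w}_{j,r}^{(t)}\|_2$, and the Gaussian concentration of $\langle \mathbf{w}_{j,r}^{(t)}, \mathbf{z}_l \rangle$ in the XOR setting is already encapsulated there and in Proposition \ref{prop:F.3}.

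Second, I would control the cardinalities of $\mathbf{X}_{\mathcal{P}}^1$ and $\mathbf{X}_{\mathcal{P}}^2$ via Lemma \ref{lem:rho n}: with the pool-size condition $|\mathcal{P}| = \Omega(p^{-1}\log(1/\delta))$ supplied by Proposition \ref{app:propXOR:sample stage:learning on sample}, we obtain $|\mathbf{X}_{\mathcal{P}}^2| = \Theta(p|\mathcal{P}|)$ and $|\mathbf{X}_{\mathcal{P}}^1| = \Theta((1-p)|\mathcal{P}|)$ with high probability. Then, treating the pairwise comparisons as in the linear case, the probability that $\mathbf{x}' \prec^{(t)} \mathbf{x}$ holds uniformly over every $(\mathbf{x}', \mathbf{x}) \in \mathbf{X}_{\mathcal{P}}^1 \times \mathbf{X}_{\mathcal{P}}^2$ is at least $(1-\delta')^{|\mathbf{X}_{\mathcal{P}}^1| \cdot |\mathbf{X}_{\mathcal{P}}^2|} = (1-\delta')^{\Theta(p(1-p)|\mathcal{P}|^2)}$, and absorbing the combinatorial factor into $\delta'$ yields the $1 - \Theta(\delta')$ bound. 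This mirrors exactly the argument in Lemma \ref{app:Lem:order_pool_app}.

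The main obstacle is that pairwise events are not literally independent because they all share the trained weights $\mathbf{W}^{(t)}$ and noise-free randomness is reused. The cleanest resolution — and the reason the inflated exponent is in fact benign — is to observe that the sufficient event $\Omega_\gamma$ isolated in \eqref{eqXOR:F.20},
\[
\Omega_\gamma = \Bigl\{ \max_{j,r,l}\bigl|\langle \mathbf{w}_{j,r}^{(t)}, \mathbf{z}_l\rangle\bigr| < \Theta\bigl(\underset{j,r}{\mathbb{E}}\gamma_{j,r,\boldsymbol{\mu}_1} - \underset{j,r}{\mathbb{E}}\gamma_{j,r,\boldsymbol{\mu}_2}\bigr) \Bigr\},
\]
is pair-independent: once $\Omega_\gamma$ holds, the combined inequalities $\Omega_C \cap \Omega_D$ hold simultaneously for every $(\mathbf{x}', \mathbf{x}) \in \mathbf{X}_{\mathcal{P}}^1 \times \mathbf{X}_{\mathcal{P}}^2$, $\forall p \in [1, \infty)$. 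Thus the apparent union over $|\mathbf{X}_{\mathcal{P}}^1| \cdot |\mathbf{X}_{\mathcal{P}}^2|$ pairs collapses to a single high-probability event whose failure probability is bounded by $8m \exp\{-\Theta(\underset{j,r}{\mathbb{E}}\gamma_{j,r,\boldsymbol{\mu}_1} - \underset{j,r}{\mathbb{E}}\gamma_{j,r,\boldsymbol{\mu}_2})\} \leq \delta'$ under the feature-norm gap assumed in Proposition \ref{app:propXOR:sample stage:learning on sample}. Combining this with the sizing bound from Lemma \ref{lem:rho n} via a final union bound gives the claimed $1 - \Theta(\delta')$ probability, completing the proof.
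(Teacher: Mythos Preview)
Your proposal is correct and matches the paper's approach, which simply refers back to the proof of Lemma~\ref{app:Lem:order_pool_app} (the linear case): invoke the pairwise ordering from Proposition~\ref{prop_App_XOR:order_newdata} and then aggregate over all $|\mathbf{X}_{\mathcal{P}}^1|\cdot|\mathbf{X}_{\mathcal{P}}^2|$ pairs. Your third paragraph is in fact more careful than the paper, which just asserts the pairwise events can be treated as independent; one small imprecision to fix is that $\Omega_\gamma$ as written in \eqref{eqXOR:F.20} still depends on the noise vectors $\mathbf{z}_1,\mathbf{z}_2$ of the particular pair, so it is not literally pair-independent---to make your collapse argument go through, take the maximum over \emph{all} pool-sample noise vectors and union-bound, which costs an extra factor of $|\mathcal{P}|$ that is harmlessly absorbed into $\Theta(\delta')$.
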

\textit{proof of Lemma \ref{appXOR:Lem:order_pool_app}.} See Lemma \ref{app:Lem:order_pool_app} for a proof.\par
Similar to Lemma \ref{app:lem:sample stage: learning on sample}, we directly have the following lemma demonstrate that both NAL algorithms would all prioritize those \hyperlink{msg:perplexing}{\textbf{perplexing samples}}.
\begin{lemma}
    (Formal Restatement of Proposition \ref{app:propXOR:sample stage:learning on sample}) Under the same conditions in Proposition \ref{app:propXOR:sample stage:learning on sample}, the Uncertainty Order and Diversity Order of the samples $[(y\cdot \boldsymbol{\mu}_l)^T, \mathbf{\xi}^T]^T$  in sampling pool $\mathcal{P}$ follows the order of $\displaystyle \underset{j,k,l}{\mathbb{E}} \gamma_{j,k,\boldsymbol{\mu}_l}^{(t)}$.
\label{appXOR:lem:sample stage: learning on sample}\end{lemma}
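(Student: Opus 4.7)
The plan is to derive this lemma as a direct corollary of the machinery already established for the XOR data in Appendix~\ref{appXOR:Order-dependent Sampling(Querying Analysis)}. The conceptual content of the statement is: \emph{both} the Confidence Score ranking and the $\ell_p$ Feature Distance ranking on $\mathcal{P}$ line up with the per-sample learning-progress statistic $\mathbb{E}_{j,k}\gamma_{j,k,\boldsymbol{\mu}_l}^{(t)}$. Since any element of $\mathcal{P}$ is of the form $[(y\cdot\boldsymbol{\mu}_l)^\top,\boldsymbol{\xi}^\top]^\top$ with $\boldsymbol{\mu}_l\in\{\pm\mathbf{u}_l,\pm\mathbf{v}_l\}$ for $l\in\{1,2\}$, it suffices to show a two-way pairwise ordering: within a group sharing the same $l$, orderings agree with $\gamma_{\cdot,\cdot,\boldsymbol{\mu}_l}^{(t)}$; across groups with different $l$, the $\boldsymbol{\mu}_1$-samples are uniformly dominated by the $\boldsymbol{\mu}_2$-samples.

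First I would invoke Proposition~\ref{prop_App_XOR:order_newdata}, which already proves the cross-group pairwise statement with probability at least $1-\delta'$: for any $\mathbf{x}$ carrying $\boldsymbol{\mu}_2$ and $\mathbf{x}'$ carrying $\boldsymbol{\mu}_1$, we have $\mathbf{x}'\preceq^{(t)}\mathbf{x}$, i.e.\ both $\preceq_C^{(t)}$ and $\preceq_D^{(t)}$ hold simultaneously (for all $p\in[1,\infty)$). Promoting this to a set-level statement $\mathbf{X}_{\mathcal{P}}^1 \prec^{(t)} \mathbf{X}_{\mathcal{P}}^2$ is done exactly as in Lemma~\ref{appXOR:Lem:order_pool_app}, namely by treating the $|\mathbf{X}_{\mathcal{P}}^1|\cdot|\mathbf{X}_{\mathcal{P}}^2|=\Theta(p(1-p)|\mathcal{P}|^2)$ pairwise comparisons as (essentially) independent events whose joint failure probability is $\Theta((1-\delta')^{p(1-p)|\mathcal{P}|^2}) = 1-\Theta(\delta')$ once $\delta'$ is chosen appropriately small.

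Second, for the \emph{within-group} ordering, I would revisit the sufficient events (\ref{eqXOR:Comparision_Uncertainty}) and (\ref{eqXOR:Comparision_Diversity}) from Lemma~\ref{appXOR:lemma of comparison}, which were derived without using the assumption that the compared samples carry different feature patches. Running the same bound — triangle inequality on $|f(\mathbf{W}^{(t)},\cdot)|$ and on the $\ell_p$ distance, then replacing the feature-axis contributions by $\Theta(\gamma_{\cdot,\cdot,\boldsymbol{\mu}_l}^{(t)})$ via (\ref{eq:xor_ub-of-mu-same}) — reduces both orders to the shared event $\{\max_{j,r,l}|\langle\mathbf{w}_{j,r}^{(t)},\mathbf{z}_l\rangle|<\Theta(|\mathbb{E}_r\gamma_{y',r,\boldsymbol{\mu}}^{(t)}-\mathbb{E}_r\gamma_{y,r,\boldsymbol{\mu}'}^{(t)}|)\}$, which is precisely (\ref{eqXOR:F.20}). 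Then Gaussian tail bounds on $\langle\mathbf{w}_{j,r}^{(t)},\mathbf{z}_l\rangle$ (using the norm control $\|\mathbf{w}_{j,r}^{(t)}\|_2$ from Proposition~\ref{prop:F.3}) together with a union bound over $j,r,l$ give that this event holds with probability $1-O(m\exp(-\Theta(\cdot)))$ whenever the learning-progress gap $|\mathbb{E}_r\gamma_{y,r,\boldsymbol{\mu}}^{(t)}-\mathbb{E}_r\gamma_{y',r,\boldsymbol{\mu}'}^{(t)}|$ is non-negligible. Thus the Uncertainty and Diversity orders are both \emph{monotone} in $\mathbb{E}_{j,k,l}\gamma_{j,k,\boldsymbol{\mu}_l}^{(t)}$.

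The main obstacle I anticipate is verifying the within-group monotonicity rigorously in the XOR setting, because for $\boldsymbol{\mu}_l\in\{\pm\mathbf{u}_l,\pm\mathbf{v}_l\}$ the active ReLU units differ between $(+1)$-filters and $(-1)$-filters, so the cancellations in $f(\mathbf{W}^{(t)},\mathbf{x})$ and in each row $\mathbf{Z}_r(\mathbf{x},t)$ are subtler than in the linearly separable case; one must be careful that the cross-terms $\langle\mathbf{w}_{-1,r}^{(t)},\mathbf{u}_l\rangle$ and $\langle\mathbf{w}_{+1,r}^{(t)},\mathbf{v}_l\rangle$ remain of lower order (as guaranteed by the second line of (\ref{eq:xor_ub-of-mu-same})) so that they cannot flip the order determined by $\gamma_{\cdot,\cdot,\boldsymbol{\mu}_l}^{(t)}$. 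Once this magnitude separation is confirmed — which follows from Condition~\ref{condition_XOR} through $\cos\theta<1/2$ and the scale estimates of Proposition~\ref{prop:F.3} — the combination of the cross-group dominance and the within-group monotonicity yields the claim via a final union bound, absorbing all failure probabilities into $\Theta(\delta+\delta')$.
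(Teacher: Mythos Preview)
Your proposal is correct and follows the paper's approach: the paper treats this lemma as an immediate consequence of Lemma~\ref{appXOR:Lem:order_pool_app} together with the shared sufficient event (\ref{eqXOR:F.20}), which is precisely your first two steps. Your additional within-group monotonicity analysis (step three and the discussion of cross-terms $\langle\mathbf{w}_{-1,r}^{(t)},\mathbf{u}_l\rangle$, $\langle\mathbf{w}_{+1,r}^{(t)},\mathbf{v}_l\rangle$) goes beyond what the paper actually proves---the paper only establishes the cross-group ordering $\mathbf{X}_{\mathcal{P}}^1\prec^{(t)}\mathbf{X}_{\mathcal{P}}^2$ and does not separately argue within-group monotonicity, since only the cross-group statement is needed downstream for Proposition~\ref{appXOR:Prop:num_n12} and Theorem~\ref{theory XOR}.
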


\subsection{Label Complexity-based Test Error Analysis: XOR data version}\label{AppXOR:Label Complexity-based Test Error Analysis}
The underlying philosophy in this section is the same as that in Appendix \ref{App:Label Complexity-based Test Error Analysis} for the theory regarding linearly separable data. We propose that the results obtained in the previous section hold with high probability. By considering the scale of the coefficients, inner products, and the order of the data in the sampling pool $\mathcal{P}$, we can now examine the upper and lower bounds of the test error under different conditions before and after querying.

\begin{lemma} 
    Under Condition \ref{condition_XOR}, for a test set $\mathcal{D^*} \subseteq \mathcal{D^*}$ with occurrence probability $p^*$ of the $\boldsymbol{\mu}_2$-equipped data, then $\exists\ t=\widetilde{O}\left(\eta^{-1} \varepsilon^{-1} m \widetilde{n} d^{-1} \sigma_p^{-2}\right)$, we have the following two situations before and after querying (i.e., $\forall s \in \{0, 1\}$):
\begin{itemize}
    \item For $t=\Omega\left(\widetilde{n} m /\left(\eta \sigma_p^2 d \varepsilon \right)\right)$, the training loss converges $L_S\left(\mathbf{W}^{(t)}\right) \leq \varepsilon$.
    \item If $\forall l \in \{1,2\}, n_{s, l} \geq \dfrac{\hat{C}_1 \sigma_p^4 d}{\|\boldsymbol{\mu}_l\|_2^4}$ holds, we have the test error:
    \begin{equation}
    L_{\mathcal{D}^*}^{0-1}\left(\mathbf{W}^{(t)}\right) \leq (1-p^*) \cdot \exp \left( \dfrac{-n_{s, 1}\|\boldsymbol{\mu}_1\|_2^4}{\hat{C}_3 \sigma_p^4 d} \right) + p^* \cdot \exp \left( \dfrac{-n_{s, 2}\|\boldsymbol{\mu}_2 \|_2^4}{\hat{C}_4 \sigma_p^4 d} \right).
    \label{eqXOR:E.21}\end{equation}
    \item If $\exists l^{\prime} \in \{1, 2\} n_{s, l^{\prime}} \leq \dfrac{\hat{C}_2 \sigma_p^4 d}{\|\boldsymbol{\mu}_{l^{\prime}}\|_2^4 }$ holds, where $\hat{C}_1$ is from Condition \ref{Con4.1}, we have the test error 
    \begin{equation}
    L_{\mathcal{D}^*}^{0-1}\left(\mathbf{W}^{(t)}\right)  \geq 0.12 \cdot \tau^*_{l^{\prime}}. 
    \label{eqXOR:E.22}\end{equation}
\end{itemize}
Here $\tau^*_{l^\prime}$ denotes the occurrence probability of feature $\boldsymbol{\mu}_{l^\prime}$, $\hat{C}_1$, $\hat{C}_2$, $\hat{C}_3$ and $\hat{C}_4$ are some positive constants.
\label{appXOR:lemma for thm}\end{lemma}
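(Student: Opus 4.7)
\textit{Proof plan for Lemma \ref{appXOR:lemma for thm}.} The plan is to mirror the three-part structure used in the linearly separable proof of Lemma \ref{app:lemma for thm}, but adapted to the XOR signal-noise decomposition developed in Appendix \ref{appXOR:Coefficient Ratio and Scale analysis}. First, to obtain training loss convergence, I would invoke the continuous-process control in Proposition \ref{prop:F.2}, in particular the envelope $\underline{x}_t \leq m^{-1}\sum_{r} \bar{\rho}_{y_i,r,i}^{(t)} \leq \bar{x}_t + \bar{c}/(1+\bar{b})$ together with the sandwich $y_i f(\mathbf{W}^{(t)},\mathbf{x}_i) \geq -\kappa/2 + m^{-1}\sum_r \bar{\rho}_{y_i,r,i}^{(t)}$. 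Since $\bar{x}_t = \Omega(\log(\eta\sigma_p^2 d t/(nm)))$, plugging this lower bound into $\ell(z)=\log(1+e^{-z})$ yields $L_S(\mathbf{W}^{(t)}) \leq \varepsilon$ for $t = \Omega(\widetilde n m/(\eta\sigma_p^2 d \varepsilon))$, exactly as in Lemma \ref{lem:E.15} but with the XOR-scale coefficients in place.

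For the upper bound in the second bullet, I would decompose $L_{\mathcal{D}^*}^{0-1}(\mathbf{W}^{(t)}) = (1-p^*)\mathbb{P}_{\mathcal{D}^*_{\boldsymbol{\mu}_1}} + p^*\mathbb{P}_{\mathcal{D}^*_{\boldsymbol{\mu}_2}}$ of misclassification probabilities, and bound each conditional probability separately. Fix $\boldsymbol{\mu}_l \in \{\pm \mathbf{u}_l, \pm\mathbf{v}_l\}$ appearing in $\mathbf{x}$; using Proposition \ref{prop:F.3}, the ``correct'' neurons produce $\sum_r \sigma(\langle \mathbf{w}_{y,r}^{(t)}, y\boldsymbol{\mu}_l\rangle) = \Theta(m\gamma_{y,r,\boldsymbol{\mu}_l}^{(t)})$ while the cross terms $\langle \mathbf{w}_{-y,r}^{(t)}, y\boldsymbol{\mu}_l\rangle$ are negligible by the bound $|\langle\mathbf{w}_{-j,r}^{(t)},\mathbf{u}_l\rangle| \leq |\langle\mathbf{w}_{-j,r}^{(0)},\mathbf{u}_l\rangle| + \eta\|\boldsymbol{\mu}_l\|^2/m$ in (\ref{eq:xor_ub-of-mu-same}). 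Following the reasoning that led to (\ref{eq:E.26})--(\ref{eq:E.31}), i.e., defining $g(\boldsymbol{\xi})=\sum_r \sigma(\langle\mathbf{w}_{-y,r}^{(t)},\boldsymbol{\xi}\rangle)$, computing $\mathbb{E} g$, applying Gaussian concentration with Lipschitz constant $\|g\|_{\mathrm{Lip}}\leq \sum_r\|\mathbf{w}_{-y,r}^{(t)}\|_2$, and using the signal-to-noise ratio $\sum_r\sigma(\langle\mathbf{w}_{y,r}^{(t)},y\boldsymbol{\mu}_l\rangle)/(\sigma_p\sum_r\|\mathbf{w}_{-y,r}^{(t)}\|_2) = \Theta(\tau_l n_s^{1/2}\|\boldsymbol{\mu}_l\|_2^2/(\sigma_p^2 d^{1/2}))$, I would arrive at $\mathbb{P}_{\mathcal{D}^*_{\boldsymbol{\mu}_l}}(\,\cdot\,) \leq \exp(-n_{s,l}\|\boldsymbol{\mu}_l\|_2^4/(\hat C_{l+2}\sigma_p^4 d))$ once $\hat C_1$ is taken large enough to dominate the additive $c/(2\pi)$ term.

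For the lower bound, I would port the pigeonhole argument of Lemma \ref{app:lemma for thm} (Lemma \ref{app:lem5.8:in kou} plus the TV-distance bound in Lemma \ref{app:leme.2:in devroye}). The key adaptation is that under the XOR data the ``wrong prediction'' event depends on the sign pattern of $(\mathbf{a}_l,\mathbf{b}_l)$ through both $\mathbf{u}_l$ and $\mathbf{v}_l$; however, because $\cos\theta<1/2$ in Condition \ref{condition_XOR}, the neurons handling $\mathbf{u}_l$ and $\mathbf{v}_l$ are effectively decoupled, so there still exists a shift vector $\mathbf{v}_l$ of norm at most $0.02\sigma_p$ for which the noise-map gap exceeds $4C_6 \max_{j,\boldsymbol{\mu}}\{\sum_r \gamma_{j,r,\boldsymbol{\mu}_l}^{(t)}\}$. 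Choosing $\hat C_2$ small enough that $\tau_{l^\prime}n_{s,l^\prime}\|\boldsymbol{\mu}_{l^\prime}\|_2^4/(\sigma_p^4 d) \leq 2\pi c_*$ keeps this gap controllable, and the same covering identity $\Omega_{\boldsymbol{\xi}}\cup -\Omega_{\boldsymbol{\xi}}\cup (\Omega_{\boldsymbol{\xi}}-\mathbf{v}_l)\cup(-\Omega_{\boldsymbol{\xi}}-\mathbf{v}_l)=\mathbb{R}^d$ combined with $|P(\Omega_{\boldsymbol{\xi}})-P(\Omega_{\boldsymbol{\xi}}-\mathbf{v}_l)|\leq 0.02$ yields $P(\Omega_{\boldsymbol{\xi}})\geq 0.24$, hence $L_{\mathcal{D}^*}^{0-1}\geq 0.12\cdot \tau^*_{l^\prime}$.

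The main obstacle I anticipate is the XOR-specific Lemma \ref{app:lem5.8:in kou} analogue: in the linearly separable case one only needs a shift that makes the odd-symmetric noise functional large relative to a single $\gamma_{y,r,l}^{(t)}$ scale, whereas for XOR one must choose $\mathbf{v}_l$ simultaneously controlling the four-way coupling between $(\pm \mathbf{u}_l, \pm \mathbf{v}_l)$-learning progresses while keeping $\|\mathbf{v}_l\|_2 \leq 0.02\sigma_p$. This is feasible because Proposition \ref{prop:F.3} shows that $\gamma_{j,r,\mathbf{u}_l}^{(t)}$ and $\gamma_{j,r,\mathbf{v}_l}^{(t)}$ both grow to the same $\Theta(\tau_l n \mathrm{SNR}_l^2 \log(\eta\sigma_p^2 dt/(nm)))$ order, but the construction must be done carefully for each feature category separately, and then the worst category drives the constant-level lower bound. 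Once this lemma is in place, Theorem \ref{theory XOR} follows from Lemma \ref{appXOR:lemma for thm} by exactly the case analysis used in Proposition \ref{app:Prop:num_n12}.
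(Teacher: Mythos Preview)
Your proposal is correct and follows essentially the same three-part structure as the paper's proof: invoking the sandwich bound from Proposition~\ref{prop:F.2} for training convergence, the Gaussian Lipschitz-concentration argument for the upper bound, and the pigeonhole/TV-distance argument for the lower bound. The one place you over-worry is the XOR analogue of Lemma~\ref{app:lem5.8:in kou}: the paper simply restates it as Lemma~\ref{appXOR:lem5.8:in kou} and defers the construction to Theorem~3.2 of \citet{meng2023benign}, which already treats the XOR setting directly, so no new four-way shift construction is needed.
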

\textit{Proof of Lemma \ref{appXOR:lemma for thm}.} The proof flow follows Theorem 3.2 in \citet{meng2023benign} despite that we consider two features. For the training convergence, by Proposition \ref{prop:F.2} we have 
$$
\begin{aligned}
y_i f\left(\mathbf{W}^{(t)}, \mathbf{x}_i\right) & \geq-\frac{\kappa}{2}+\frac{1}{m} \sum_{r=1}^m \bar{\rho}_{y_i, r, i}^{(t)} \\
& \geq-\frac{\kappa}{2}+\underline{x}_t \\
& \geq-\kappa+\log \left(\Theta(\frac{\eta \sigma_p^2 d}{ n_{s} m}) t+\frac{2}{3}\right).
\end{aligned}
$$
Recall $\kappa$ is defined in (\ref{eqXOR:alpha,belta,snr}). Here, the first inequality is by the conclusion in Proposition \ref{prop:F.2} and the second inequality is by (\ref{eqXOR:scale of barunderline_x}) Proposition \ref{prop:F.2}, and last inequality are by (\ref{eqXOR:scale of barunderline_x}). Then we have
$$
L\left(\mathbf{W}^{(t)}\right) \leq \log \left(1+\exp \{\kappa\} /\left(\Theta(\frac{\eta \sigma_p^2 d}{n_{s} m}) t+\frac{2}{3}\right)\right) \leq \frac{e^\kappa}{\Theta(\frac{\eta \sigma_p^2 d}{ n_{s} m}) t+\frac{2}{3}} \leq \frac{e^\kappa}{2 / \varepsilon+\frac{2}{3}} \leq \varepsilon 
$$
The last inequality is by $\log (1+x) \leq x$, $t \geq \Omega\left(\frac{\widetilde{n} m}{\eta \sigma_p^2 d \varepsilon}\right)$ and $exp\{ \kappa \} \leq 1.5$. \par
For evaluating test error, same as techniques in Lemma \ref{app:lemma for thm}, we have
\begin{equation}
\begin{aligned}
L_{\mathcal{D}^*}^{0-1}(\mathbf{W}) &=\mathbb{P}_{(\mathbf{x}, y) \sim \mathcal{D}^*}[y \cdot f(\mathbf{W}, \mathbf{x}) < 0]\\
&=(1-p^*) \cdot \mathbb{P}_{(\mathbf{x}, y) \sim \mathcal{D}_{\boldsymbol{\mu}_1}^*}[y \cdot f(\mathbf{W}, \mathbf{x}) < 0] +p^* \cdot \mathbb{P}_{(\mathbf{x}, y) \sim \mathcal{D}_{\boldsymbol{\mu}_2}^*}[y \cdot f(\mathbf{W}, \mathbf{x}) < 0],
\end{aligned}
\label{eqXOR:f.22}\end{equation}
where $\mathcal{D}_{\boldsymbol{\mu}_1}^*$  and $\mathcal{D}_{\boldsymbol{\mu}_2}^*$ denotes the collection of data points in $\mathcal{D}$ containing feature $\boldsymbol{\mu}_1$ and $\boldsymbol{\mu}_2$, respectively. Notably, $\mathbb{P}_{(\mathbf{x}, y) \sim \mathcal{D}_{\boldsymbol{\mu}_l}^*}[y \cdot f(\mathbf{W}, \mathbf{x}) < 0]$ is equal to
$$
\sum_{\boldsymbol{\mu} \in \{ \pm \mathbf{u}_l, \pm \mathbf{v}_l \}} P\left(y f\left(\mathbf{W}^{(t)}, \mathbf{x}\right)>0 \mid \mathbf{x}_{\text {signal part }}=\boldsymbol{\mu}\right) \cdot \frac{1}{4},
$$
then without loss of generality, we can only investigate
$$
P\left(1 \cdot f\left(\mathbf{W}^{(t)}, \mathbf{x}\right)>0 \mid \mathbf{x}_{\text {signal part }}=\mathbf{u}_l \right), \forall l \in \{1, 2\}
$$
and the proofs for other cases (i.e., $\boldsymbol{\mu} \in \{-\mathbf{u}_1, -\mathbf{u}_2, \pm \mathbf{v}_1, \pm \mathbf{v}_2\}$) are the same. Denote the feature patch in $\mathbf{x}$ as $\mathbf{u}_{l_x}$ ($l_x \in \{1, 2\}$), when $\mathbf{x}=\left(\mathbf{u}_{l_x}^{\top}, \boldsymbol{\xi}^{\top}\right)^{\top}$, the true label $y=+1$. Considering this case, we have
$$
\begin{aligned}
1 \cdot f\left(\mathbf{W}^{(t)}, \mathbf{x}\right) &=\frac{1}{m} \sum_{r=1}^m F_{+1, r}\left(\mathbf{W}^{(t)}, \mathbf{u}_{l_x}\right)+F_{+1, r}\left(\mathbf{W}^{(t)}, \boldsymbol{\xi}\right)-\frac{1}{m} \sum_{r=1}^m\left(F_{-1, r}\left(\mathbf{W}^{(t)}, \mathbf{u}_{l_x}\right)+F_{-1, r}\left(\mathbf{W}^{(t)}, \boldsymbol{\xi}\right)\right) \\
& \leq \frac{1}{m} \left[ \sum_r \sigma\left(\left\langle\mathbf{w}_{+1, r}^{(t)},  \mathbf{u}_{l_x} \right\rangle\right) - \sum_r \sigma\left(\left\langle\mathbf{w}_{-1, r}^{(t)}, \boldsymbol{\xi}\right\rangle\right) \right].
\end{aligned}
$$
Then we can adopt the exact same techniques in Lemma \ref{app:lemma for thm}. Recall $g(\boldsymbol{\xi})$ is denoted as $\sum_r \sigma\left(\left\langle\mathbf{w}_{-y, r}^{(t)}, \boldsymbol{\xi}\right\rangle\right)$, also (\ref{eq:E.25}):
\begin{equation}
\mathbb{E} g(\boldsymbol{\xi})=\sum_{r=1}^m \mathbb{E} \sigma\left(\left\langle\mathbf{w}_{-y, r}^{(t)}, \boldsymbol{\xi}\right\rangle\right)=\sum_{r=1}^m \frac{\left\|\mathbf{w}_{-y, r}^{(t)}\right\|_2 \sigma_p}{\sqrt{2 \pi}}=\frac{\sigma_p}{\sqrt{2 \pi}} \sum_{r=1}^m\left\|\mathbf{w}_{-y, r}^{(t)}\right\|_2 . 
\label{eq:F.25}\end{equation}
Then we can obtain the following test error upper bound on $\mathcal{D}_{\mathbf{u}_{l_x}}^*$ by adding $\mathbb{E} g(\boldsymbol{\xi})$ and $\dfrac{\sigma_p}{\sqrt{2 \pi}} \sum_{r=1}^m\left\|\mathbf{w}_{-y, r}^{(t)}\right\|_2$ at two sides of the inequality:
\begin{equation}
\begin{aligned}
\mathbb{P}_{(\mathbf{x}, +1) \sim \mathcal{D}_{\mathbf{u}_{l_x}}^*}\left(1 \cdot  f\left(\boldsymbol{W}^{(t)}, \mathbf{x}\right) \leq 0\right) & \leq P \left(\sum_r \sigma\left(\left\langle\mathbf{w}_{-1, r}^{(t)}, \boldsymbol{\xi}\right\rangle\right) \geq \sum_r \sigma\left(\left\langle\mathbf{w}_{1, r}^{(t)},  \mathbf{u}_{l_x}\right\rangle\right)\right) \\
& =P\left(g(\boldsymbol{\xi})-\mathbb{E} g(\boldsymbol{\xi}) \geq \sum_r \sigma\left(\left\langle\mathbf{w}_{1, r}^{(t)},  \mathbf{u}_{l_x}\right\rangle\right)-\frac{\sigma_p}{\sqrt{2 \pi}} \sum_{r=1}^m\left\|\mathbf{w}_{-1, r}^{(t)}\right\|_2\right).
\end{aligned}
\label{eq:F.26}\end{equation}
By the results in Proposition \ref{prop:F.3}, we take a look at the comparison of the two terms at the right side of the inequality:
\begin{equation}
\frac{\sum_r \sigma\left(\left\langle\mathbf{w}_{y, r}^{(t)}, y \mathbf{u}_{l_x}\right\rangle\right)}{\sigma_p \sum_{r=1}^m\left\|\mathbf{w}_{-1, r}^{(t)}\right\|_2} \geq \frac{\Theta\left(\sum_r \gamma_{1, r, \mathbf{u}_{l_x}}^{(t)}\right)}{\Theta\left(d^{-1 / 2} n_{s}^{-1 / 2}\right) \cdot \sum_{r, i} \bar{\rho}_{-1, r, i}^{(t)}}=\Theta\left(\tau_{l_x}d^{1 / 2} n_{s}^{1 / 2} \operatorname{SNR}_{l_x}^2\right)=\Theta\left(\tau_{l_x}n_{s}^{1 / 2}\|\mathbf{u}_{l_x}\|_2^2 / (\sigma_p^2 d^{1 / 2})\right),
\label{eq:F.27}\end{equation}
where $\tau_{l_x}$ denotes the proportion of feature $\mathbf{u}_{l_x}$ in current training data set (before or after querying). Worth noting that we have assumption in the first bullet that $\forall l \in \{1,2\}, n_{s, l} \geq \dfrac{\hat{C}_1 \sigma_p^4 d}{\|\mathbf{u}_l\|_2^4}$, which means $n_{1,l_x}\|\mathbf{u}_1\|_2^4 \geq 2\hat{C}_1 \sigma_p^4 d, \forall l_x \in \{1, 2\}$. Since $\hat{C}_1$ is a sufficiently large constant, it directly follows that
$$
\sum_r \sigma\left(\left\langle\mathbf{w}_{1, r}^{(t)},  \mathbf{u}_{l_x}\right\rangle\right)-\frac{\sigma_p}{\sqrt{2 \pi}} \sum_{r=1}^m\left\|\mathbf{w}_{-1, r}^{(t)}\right\|_2>0 .
$$
Same as (\ref{eq:F.28}), we adopt the techniques of Theorem 5.2.2 in \citet{vershynin2018high}:
\begin{equation}
P(g(\boldsymbol{\xi})-\mathbb{E} g(\boldsymbol{\xi}) \geq x) \leq \exp \left(-\frac{c x^2}{\sigma_p^2\|g\|_{\text {Lip }}^2}\right),
\label{eq:F.28}\end{equation}
where $c$ is a constant. To calculate the Lipschitz norm, we have
$$
\begin{aligned}
\left|g(\boldsymbol{\xi})-g\left(\boldsymbol{\xi}^{\prime}\right)\right| & =\left|\sum_{r=1}^m \sigma\left(\left\langle\mathbf{w}_{-1, r}^{(t)}, \boldsymbol{\xi}\right\rangle\right)-\sum_{r=1}^m \sigma\left(\left\langle\mathbf{w}_{-y, r}^{(t)}, \boldsymbol{\xi}^{\prime}\right\rangle\right)\right| \\
& \leq \sum_{r=1}^m\left|\sigma\left(\left\langle\mathbf{w}_{-1, r}^{(t)}, \boldsymbol{\xi}\right\rangle\right)-\sigma\left(\left\langle\mathbf{w}_{-1, r}^{(t)}, \boldsymbol{\xi}^{\prime}\right\rangle\right)\right| \\
& \leq \sum_{r=1}^m\left|\left\langle\mathbf{w}_{-1, r}^{(t)}, \boldsymbol{\xi}-\boldsymbol{\xi}^{\prime}\right\rangle\right| \\
&\leq \sum_{r=1}^m\left\|\mathbf{w}_{-1, r}^{(t)}\right\|_2 \cdot\left\|\boldsymbol{\xi}-\boldsymbol{\xi}^{\prime}\right\|_2,
\end{aligned}
$$
where the first inequality is by triangle inequality; the second inequality is by the property of ReLU; the last inequality is by Cauchy Schwartz Inequality. Therefore, we have
\begin{equation}
\|g\|_{\text {Lip }} \leq \sum_{r=1}^m\left\|\mathbf{w}_{-1, r}^{(t)}\right\|_2.
\label{eq:F.29}\end{equation}
Utilize (\ref{eq:F.28}) and (\ref{eq:F.29}) in (\ref{eq:F.26}), we have
\begin{equation}
\begin{aligned}
\mathbb{P}_{(\mathbf{x}, +1) \sim \mathcal{D}_{\mathbf{u}_{l_x}}^*}\left(1 \cdot  f\left(\boldsymbol{W}^{(t)}, \mathbf{x}\right) \leq 0\right)
& \leq \exp \left[-\frac{c\left(\sum_r \sigma\left(\left\langle\mathbf{w}_{1, r}^{(t)},  \mathbf{u}_{l_x}\right\rangle\right)-\left(\dfrac{\sigma_p}{\sqrt{2 \pi}}\right) \sum_{r=1}^m\left\|\mathbf{w}_{-1, r}^{(t)}\right\|_2\right)^2}{\sigma_p^2\left(\sum_{r=1}^m\left\|\mathbf{w}_{-1, r}^{(t)}\right\|_2\right)^2}\right] \\
&= \exp \left[-c\left(\frac{\sum_r \sigma\left(\left\langle\mathbf{w}_{1, r}^{(t)},  \mathbf{u}_{l_x}\right\rangle\right)}{\sigma_p \sum_{r=1}^m \| \mathbf{w}_{-1, r}^{(t)}\|_2} - \dfrac{1}{\sqrt{2 \pi}}\right)^2 \right]\\
& \leq \exp (c / 2 \pi) \exp \left(-0.5 c\left(\frac{\sum_r \sigma\left(\left\langle\mathbf{w}_{1, r}^{(t)},  \mathbf{u}_{l_x}\right\rangle\right)}{\sigma_p \sum_{r=1}^m\left\|\mathbf{w}_{-1, r}^{(t)}\right\|_2}\right)^2\right),
\end{aligned}
\label{eq:F.30}\end{equation}
where the third inequality is by $(s-t)^2 \geq s^2 / 2-t^2, \forall s, t \geq 0$.
And then by (\ref{eq:F.27}) and (\ref{eq:F.30}), we can have
\begin{equation}
\begin{aligned}
\mathbb{P}_{(\mathbf{x}, +1) \sim \mathcal{D}_{\mathbf{u}_{l_x}}^*}\left(1 \cdot  f\left(\boldsymbol{W}^{(t)}, \mathbf{x}\right) \leq 0\right) & \leq \exp (c / 2 \pi) \exp \left(-0.5 c\left(\frac{\sum_r \sigma\left(\left\langle\mathbf{w}_{1, r}^{(t)},  \mathbf{u}_{l_x}\right\rangle\right)}{\sigma_p \sum_{r=1}^m\left\|\mathbf{w}_{-1, r}^{(t)}\right\|_2}\right)^2\right) \\
& =\exp \left(\frac{c}{2 \pi}-\frac{\tau_{l_x}n_{s, l_x}\|\mathbf{u}_{l_x}\|_2^4}{\hat{C} \sigma_p^4 d}\right) \\
& =\exp \left(\frac{c}{2 \pi}-\frac{n_{s, l_x}\|\mathbf{u}_{l_x}\|_2^4}{\hat{C}_{l_x} \sigma_p^4 d}\right) \\
& \leq \exp \left(-\frac{n_{s, l_x}\|\mathbf{u}_{l_x}\|_2^4}{2 \hat{C}_{l_x} \sigma_p^4 d}\right)
\end{aligned}
\label{eq:F.31}\end{equation}
where $\hat{C}_{l_x}=\hat{C}/\tau_{lx}=O(1)$; the last inequality holds if we choose $\hat{C}_1 \geq c \hat{C}_{l_x} / \pi$, for any $l_x \in \{1, 2\}$. If we choose $\hat{C}_3$ as $2 \hat{C}_{l_1}$ and $\hat{C}_4$ as $2 \hat{C}_{l_2}$, by (\ref{eqXOR:f.22}) and (\ref{eq:F.31}) we have
\[
L_{\mathcal{D}^*}^{0-1}\left(\mathbf{W}^{(t)}\right) \leq (1-p^*) \cdot \exp \left( \dfrac{-n_{s, 1}\|\mathbf{u}_1\|_2^4}{\hat{C}_3 \sigma_p^4 d} \right) + p^* \cdot \exp \left( \dfrac{-n_{s, 2}\|\mathbf{u}_2 \|_2^4}{\hat{C}_4 \sigma_p^4 d} \right).
\]
Next, we serve to prove the test error upper bound. Same as the proof in Lemma \ref{app:lemma for thm}, we utilize the pigeonhole principle technique in \citet{kou2023benign, meng2023benign}, which is based on the following two lemmas.
\begin{lemma}
For $t \in\left[T_1, T^*\right]$, denote $g(\boldsymbol{\xi})=\sum_{j, r} \sigma\left(\left\langle\mathbf{w}_{j, r}^{(t)}, \boldsymbol{\xi}\right\rangle\right)$. There exists a fixed vector $\mathbf{v}_l$ with $\|\mathbf{v}_l\|_2 \leq 0.01 \sigma_p$ and constant $\hat{C}_6$ such that
$$
\sum_{j^{\prime} \in\{ \pm 1\}}\left[g\left(j^{\prime} \boldsymbol{\xi}+\mathbf{v}_l\right)-g\left(j^{\prime} \boldsymbol{\xi}\right)\right] \geq 4 \hat{C}_6 \max _{j, l}\left\{\sum_r \gamma_{j, r, \boldsymbol{\mu}_l}^{(t)} \right\},
$$
for all $\boldsymbol{\xi} \in \mathbb{R}^d$.
\label{appXOR:lem5.8:in kou}\end{lemma}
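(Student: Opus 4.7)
The plan is to mirror the construction of Lemma \ref{app:lem5.8:in kou} for the linearly separable case, adapting it to the XOR geometry. I would pick $\mathbf{v}_l$ aligned with a favorably-signed direction in the $2$-dimensional span $\{\mathbf{a}_{l^{*}}, \mathbf{b}_{l^{*}}\}$ where $(j^{*}, l^{*})$ attains the maximum on the right-hand side, then apply a pointwise ReLU identity to translate the perturbation $\mathbf{v}_l$ into inner products $\langle \mathbf{w}_{j,r}^{(t)}, \mathbf{v}_l \rangle$, and finally aggregate these contributions via the coefficient scales from Proposition \ref{prop:F.3}.

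Concretely, a direct case analysis (splitting on $|a| \leq b$ vs.\ $|a| > b$) shows that for any $a \in \mathbb{R}$ and $b \geq 0$,
\[
\sigma(a+b) + \sigma(-a+b) - \sigma(a) - \sigma(-a) \;\geq\; b.
\]
Applying this with $a = \langle \mathbf{w}_{j,r}^{(t)}, \boldsymbol{\xi}\rangle$ and $b = [\langle \mathbf{w}_{j,r}^{(t)}, \mathbf{v}_l\rangle]_+$ lower-bounds the per-neuron contribution to $\sum_{j' \in \{\pm 1\}}[g(j'\boldsymbol{\xi}+\mathbf{v}_l)-g(j'\boldsymbol{\xi})]$ by $[\langle \mathbf{w}_{j,r}^{(t)}, \mathbf{v}_l\rangle]_+$, uniformly in $\boldsymbol{\xi}$. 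Setting $\mathbf{v}_l = c \cdot j^{*} \cdot \boldsymbol{\mu}^{*}$ for a unit vector $\boldsymbol{\mu}^{*} \in \mathrm{span}\{\mathbf{a}_{l^{*}}, \mathbf{b}_{l^{*}}\}$ and a small constant $c > 0$, the first bullet of Proposition \ref{prop:F.3} guarantees that a $\Theta(1)$ fraction of neurons (those whose Gaussian initialization has the favorable sign along $\boldsymbol{\mu}^{*}$) satisfy $j^{*}\langle \mathbf{w}_{j^{*}, r}^{(t)}, \boldsymbol{\mu}^{*}\rangle = \Theta(\gamma_{j^{*}, r, \boldsymbol{\mu}_{l^{*}}}^{(t)}) > 0$, while the noise cross-terms from the signal-noise decomposition are dominated by the feature coefficients under the dimension/sample-size conditions of Condition \ref{condition_XOR}. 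Summing only these positive contributions yields $\sum_{j,r}[\langle \mathbf{w}_{j,r}^{(t)}, \mathbf{v}_l\rangle]_+ \geq \Theta(c)\sum_r \gamma_{j^{*}, r, \boldsymbol{\mu}_{l^{*}}}^{(t)} = \Theta(c)\max_{j,l}\sum_r \gamma_{j,r,\boldsymbol{\mu}_l}^{(t)}$, and I would calibrate $c$ so that $\Theta(c) = 4\hat{C}_6$.

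The norm constraint $\|\mathbf{v}_l\|_2 = c\|\boldsymbol{\mu}^{*}\|_2 \leq 0.01\sigma_p$ is then enforced by the hypothesis $n_{s,l'} \leq \hat{C}_2 \sigma_p^4 d / \|\boldsymbol{\mu}_{l'}\|_2^4$ under which this lemma is invoked inside Lemma \ref{appXOR:lemma for thm}, which bounds $\|\boldsymbol{\mu}_{l^{*}}\|_2$ tightly enough (relative to $\sigma_p$) that the chosen $c$ keeps $\mathbf{v}_l$ inside the $0.01\sigma_p$ ball. The main obstacle lies in the sign bookkeeping of Step 2: unlike the linear case, XOR data presents feature patches as either $\mathbf{a}_l + \mathbf{b}_l$ or $\mathbf{a}_l - \mathbf{b}_l$, so neurons partition into sub-groups that have learned different $\pm\mathbf{a}_l, \pm\mathbf{b}_l$ combinations, and a single $\mathbf{v}_l$ must generate a positive aggregate contribution across this heterogeneity after the positive-part truncation. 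To handle this I would invoke Condition \ref{condition_XOR}(4) ($\cos\theta < 1/2$) to guarantee that $\mathbf{a}_{l^{*}}$ and $\mathbf{b}_{l^{*}}$ are sufficiently angularly separated, enabling a choice of $\boldsymbol{\mu}^{*}$ whose inner product with a constant fraction of neurons in each sub-group is positive and of order $\gamma_{j^{*}, r, \boldsymbol{\mu}_{l^{*}}}^{(t)}$; tightening this combinatorial accounting to recover the explicit constant $4\hat{C}_6$ without inflating $\|\mathbf{v}_l\|_2$ past $0.01\sigma_p$ is the most delicate step, essentially extending Lemma 5.8 of \citet{kou2023benign} and Lemma C.3 of \citet{meng2023benign} from the single-feature to the multi-feature XOR setting.
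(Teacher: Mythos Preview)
Your ReLU identity $\sigma(a+b)+\sigma(-a+b)-\sigma(a)-\sigma(-a)\geq b$ for $b\geq 0$ is correct, but your application of it is not. The per-neuron contribution to $\sum_{j'}[g(j'\boldsymbol{\xi}+\mathbf{v}_l)-g(j'\boldsymbol{\xi})]$ is $\sigma(a_{j,r}+c_{j,r})+\sigma(-a_{j,r}+c_{j,r})-\sigma(a_{j,r})-\sigma(-a_{j,r})$ with $c_{j,r}=\langle\mathbf{w}_{j,r}^{(t)},\mathbf{v}_l\rangle$, \emph{not} with $[c_{j,r}]_+$; you cannot substitute the positive part into the identity. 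A direct check shows the per-neuron term is $\geq c_{j,r}$ for all $c_{j,r}\in\mathbb{R}$, and in particular can be strictly negative (equal to $c_{j,r}$) whenever $c_{j,r}<0$ and $|a_{j,r}|\geq|c_{j,r}|$. Hence ``summing only the positive contributions'' is illegitimate: the valid uniform-in-$\boldsymbol{\xi}$ lower bound is $\sum_{j,r}c_{j,r}=\langle\sum_{j,r}\mathbf{w}_{j,r}^{(t)},\mathbf{v}_l\rangle$, and the negative terms must be accounted for.

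This matters because your choice $\mathbf{v}_l\propto j^{*}\boldsymbol{\mu}^{*}$ in the feature span forces large cancellation. By Proposition~\ref{prop:F.3}(1), for each $r$ the sign of $\langle\mathbf{w}_{+1,r}^{(t)},\mathbf{u}_l\rangle$ is inherited from initialization, so roughly half of the $j=+1$ neurons have $c_{+1,r}\approx +c\,\gamma_{+1,r,\mathbf{u}_l}^{(t)}$ and the other half $\approx -c\,\gamma_{+1,r,\mathbf{u}_l}^{(t)}$; the $j=-1$ neurons contribute only $O(\sigma_0\|\boldsymbol{\mu}_l\|+\eta\|\boldsymbol{\mu}_l\|^2/m)$. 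Thus $\sum_{j,r}c_{j,r}$ is of lower order than $\sum_r\gamma_{j^*,r,\boldsymbol{\mu}_{l^*}}^{(t)}$, and no amount of angular separation from Condition~\ref{condition_XOR}(4) repairs this --- the cancellation is between neurons of the \emph{same} feature direction but opposite initialization sign, not between $\mathbf{a}_l$ and $\mathbf{b}_l$.

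The approach of \citet{kou2023benign} and \citet{meng2023benign} that the paper invokes takes $\mathbf{v}_l$ aligned with the \emph{noise-memorization} component of the weights (essentially $\mathbf{v}_l\propto\sum_{j,r}\mathbf{w}_{j,r}^{(t)}$), so that $\sum_{j,r}\langle\mathbf{w}_{j,r}^{(t)},\mathbf{v}_l\rangle$ is governed by $\sum_{j,r,i}\bar{\rho}_{j,r,i}^{(t)}=\Omega(mn)$. The harmful-overfitting hypothesis $n_{s,l'}\leq\hat{C}_2\sigma_p^4 d/\|\boldsymbol{\mu}_{l'}\|_2^4$ is exactly what makes this noise contribution dominate $\sum_r\gamma_{j,r,\boldsymbol{\mu}_l}^{(t)}=\Theta(m\,\tau_l n\,\mathrm{SNR}_l^2)$ after scaling $\|\mathbf{v}_l\|_2\leq 0.01\sigma_p$; the feature direction plays no role in the construction of $\mathbf{v}_l$.
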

\textit{Proof of Lemma \ref{appXOR:lem5.8:in kou}.} See Lemma 5.8 in \citet{kou2023benign} or Theorem 3.2 in \citet{meng2023benign} for a proof, where we utilize a large enough $\hat{C}_2$ in the condition given in the second bullet point ($ n_{s, {l^{\prime}}} \leq \dfrac{\hat{C}_2 \sigma_p^4 d}{\|\boldsymbol{\mu}_{l^{\prime}}\|_2^4 }$) to control the norm of $\mathbf{v}_l$.
\begin{lemma}
(Proposition 2.1 in \citet{devroye2018total}). The TV distance between $\mathcal{N}\left(0, \sigma_p^2 \mathbf{I}_d\right)$ and $\mathcal{N}\left(\mathbf{v}_l, \sigma_p^2 \mathbf{I}_d\right)$ is smaller than $\|\mathbf{v}_l\|_2 / 2 \sigma_p$.
\label{appXOR:leme.2:in devroye}\end{lemma}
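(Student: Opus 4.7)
The plan is to prove this classical total-variation bound via Pinsker's inequality combined with the closed-form KL divergence for isotropic Gaussians sharing a common covariance. First I would invoke Pinsker's inequality, $\mathrm{TV}(P,Q) \leq \sqrt{\tfrac{1}{2}\mathrm{KL}(P \,\|\, Q)}$, which is a standard result and needs no re-derivation here. This reduces the task to upper-bounding $\mathrm{KL}(\mathcal{N}(\mathbf{0}, \sigma_p^2 \mathbf{I}_d) \,\|\, \mathcal{N}(\mathbf{v}_l, \sigma_p^2 \mathbf{I}_d))$.

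Next I would compute this KL divergence in closed form. Writing out the log density ratio of the two Gaussians, the quadratic $\|x\|_2^2$ terms cancel because both covariances are $\sigma_p^2 \mathbf{I}_d$, leaving only a linear-in-$x$ term and a constant. Taking expectation under $\mathcal{N}(\mathbf{0},\sigma_p^2 \mathbf{I}_d)$ annihilates the linear piece and produces $\mathrm{KL} = \|\mathbf{v}_l\|_2^2/(2\sigma_p^2)$. Substituting into Pinsker's bound yields exactly $\mathrm{TV} \leq \|\mathbf{v}_l\|_2/(2\sigma_p)$, as claimed.

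An alternative route that avoids Pinsker is to exploit the rotational invariance of the isotropic Gaussian: an orthogonal change of variables aligning $\mathbf{v}_l$ with the first coordinate axis reduces the $d$-dimensional TV to the one-dimensional TV between $\mathcal{N}(0,\sigma_p^2)$ and $\mathcal{N}(\|\mathbf{v}_l\|_2,\sigma_p^2)$, which equals $2\Phi\!\bigl(\|\mathbf{v}_l\|_2/(2\sigma_p)\bigr)-1$; the elementary inequality $2\Phi(x)-1 \leq x$ for $x \geq 0$ (from concavity of $\Phi$ on $[0,\infty)$ together with $2\phi(0)=\sqrt{2/\pi}<1$) then closes the argument and in fact gives a slightly tighter constant. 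There is no real obstacle here; both approaches are textbook, and the only care needed is the short KL computation or, equivalently, the one-dimensional tail inequality.
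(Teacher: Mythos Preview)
Your proposal is correct: the Pinsker-plus-KL route gives exactly $\mathrm{TV}\le \|\mathbf{v}_l\|_2/(2\sigma_p)$, and your alternative one-dimensional reduction via rotational invariance is also valid (and indeed yields the sharper constant $\sqrt{2/\pi}$). The paper itself does not supply an argument here at all---it simply defers to Proposition~2.1 of \citet{devroye2018total}---so your write-up is strictly more self-contained than what appears in the paper; either of your two approaches would be a perfectly acceptable inline proof.
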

\textit{Proof of Lemma \ref{appXOR:leme.2:in devroye}.} See Proposition 2.1 in \citet{devroye2018total} for a proof. \par
Now we take a look at $L_{\mathcal{D}^*}^{0-1}\left(\mathbf{W}^{(t)}\right)$, by (\ref{eqXOR:f.22}) we have:
\begin{equation}
\begin{aligned}
L_{\mathcal{D}^*}^{0-1}\left(\mathbf{W}^{(t)}\right) &=\tau^*_1 \cdot \mathbb{P}_{(\mathbf{x}, y) \sim \mathcal{D}_{\boldsymbol{\mu}_1}^*}\left[y \cdot f(\mathbf{W}, \mathbf{x}) < 0\right] + \tau^*_2 \cdot \mathbb{P}_{(\mathbf{x}, y) \sim \mathcal{D}_{\boldsymbol{\mu}_{2}}} \left[y \cdot f(\mathbf{W}, \mathbf{x}) < 0 \right] \\
& \geq \tau^*_{l^\prime} \cdot \mathbb{P}_{(\mathbf{x}, y) \sim \mathcal{D}_{\boldsymbol{\mu}_{l^{\prime}}}^*}[y \cdot f(\mathbf{W}, \mathbf{x}) < 0] \\
& \geq 0.5 \tau^*_{l^\prime} \cdot \mathbb{P}_{(\mathbf{x}, y) \sim \mathcal{D}_{\boldsymbol{\mu}_{l^{\prime}}}^*}\Big(\left|\sum_r \sigma\left(\left\langle\mathbf{w}_{1, r}^{(t)}, \boldsymbol{\xi}\right\rangle\right)-\sum_r \sigma\left(\left\langle\mathbf{w}_{-1, r}^{(t)}, \boldsymbol{\xi}\right\rangle\right)\right| \\
& \phantom{\geq 0.5 \tau^*_{l^\prime} \cdot \mathbb{P}_{(\mathbf{x}, y) \sim \mathcal{D}_{\boldsymbol{\mu}_{l^{\prime}}}^*}\Big(}\geq \hat{C}_6 \max \left\{\sum_r \gamma_{1, r, \boldsymbol{\mu}_{l^{\prime}}}^{(t)}, \sum_r \gamma_{-1, r, \boldsymbol{\mu}_{l^{\prime}}}^{(t)}\right\}\Big) \\
& = 0.5 \tau^*_{l^\prime} \cdot P(\Omega_{\boldsymbol{\xi}}),
\end{aligned}
\label{eqXOR:E.32}\end{equation}
where $\Omega_{\boldsymbol{\xi}}:=\left\{\boldsymbol{\xi}|| g(\boldsymbol{\xi}) \mid \geq \hat{C}_6 \max \left\{\sum_r \gamma_{1, r, \boldsymbol{\mu}_{l^{\prime}}}^{(t)}, \sum_r \gamma_{-1, r, \boldsymbol{\mu}_{l^{\prime}}}^{(t)}\right\}\right\}$. The last inequality holds since we can always have a corresponding $y$ to make a wrong prediction if given $\boldsymbol{\xi}$, the $\left|\sum_r \sigma\left(\left\langle\mathbf{w}_{1, r}^{(t)}, \boldsymbol{\xi}\right\rangle\right)-\sum_r \sigma\left(\left\langle\mathbf{w}_{-1, r}^{(t)}, \boldsymbol{\xi}\right\rangle\right)\right|$ is large enough.

Next, we seek a lower bound of $P(\Omega_{\boldsymbol{\xi}})$. By Lemma \ref{appXOR:lem5.8:in kou}, we have that $\sum_j[g(j \boldsymbol{\xi}+\mathbf{v}_l)-g(j \boldsymbol{\xi})] \geq$ $4 \hat{C}_6 \max _{j, l}\left\{\sum_r \gamma_{j, r, \boldsymbol{\mu}_l}^{(t)}\right\}$. Then by pigeon's hole principle, there must exist one of the $\boldsymbol{\xi}, \boldsymbol{\xi}+\mathbf{v}_l$, $-\boldsymbol{\xi},-\boldsymbol{\xi}+\mathbf{v}_l$ belongs $\Omega_{\boldsymbol{\xi}}$. So we have proved that $\Omega_{\boldsymbol{\xi}} \cup-\Omega_{\boldsymbol{\xi}} \cup \Omega_{\boldsymbol{\xi}}-\{\mathbf{v}_l\} \cup-\Omega_{\boldsymbol{\xi}}-\{\mathbf{v}_l\}=\mathbb{R}^d$. Therefore at least one of $P(\Omega_{\boldsymbol{\xi}}), P(-\Omega_{\boldsymbol{\xi}}), P(\Omega_{\boldsymbol{\xi}}-\{\mathbf{v}_l\}), P(\Omega_{\boldsymbol{\xi}}-\{\mathbf{v}_l\}), P(-\Omega_{\boldsymbol{\xi}}-\{\mathbf{v}_l\})$ is greater than 0.25. By  the definition of TV distance, we have:
$$
\begin{aligned}
|P(\Omega_{\boldsymbol{\xi}})-P(\Omega_{\boldsymbol{\xi}}-\mathbf{v}_l)| & =\left|\mathbb{P}_{\boldsymbol{\xi} \sim \mathcal{N}\left(0, \sigma_p^2 \mathbf{I}_d\right)}(\boldsymbol{\xi} \in \Omega_{\boldsymbol{\xi}})-\mathbb{P}_{\boldsymbol{\xi} \sim \mathcal{N}\left(\mathbf{v}_l, \sigma_p^2 \mathbf{I}_d\right)}(\boldsymbol{\xi} \in \Omega_{\boldsymbol{\xi}})\right| \\
& \leq \operatorname{TV}\left(\mathcal{N}\left(0, \sigma_p^2 \mathbf{I}_d\right), \mathcal{N}\left(\mathbf{v}_l, \sigma_p^2 \mathbf{I}_d\right)\right) \\
& \leq \frac{\|\mathbf{v}_l\|_2}{2 \sigma_p} \\
& \leq 0.02.
\end{aligned}
$$
Also, notice that $P(-\Omega_{\boldsymbol{\xi}})=P(\Omega_{\boldsymbol{\xi}})$, we have $4P(\Omega_{\boldsymbol{\xi}}) \geq 1 - 2 \cdot 0.02$. Thus $ L_{\mathcal{D}^*}^{0-1}\left(\mathbf{W}^{(t)}\right) \geq 0.5 \tau^*_{l^\prime} \cdot 0.24 = 0.12 \cdot \tau^*_{l^\prime}$. The proofs of Lemma \ref{appXOR:lemma for thm} complete.
\par

Similar to the proof process in Appendix \ref{App:Label Complexity-based Test Error Analysis}, our main focus is to verify whether the NAL algorithms satisfy the condition stated in the first bullet point of Lemma \ref{appXOR:lemma for thm}. Conversely, it is highly probable that Random Sampling satisfies the condition stated in the second bullet point. The following proposition validates this intuition.
\begin{proposition}
    When Lemma \ref{appXOR:Lem:order_pool_app} holds, and the sampling size of algorithm satisfies $ \dfrac{\hat{C}_1 \sigma_p^4 d}{\|\boldsymbol{\mu}_2 \|_2^4} - \dfrac{p n_0}{2} \leq n^*=\Theta(\widetilde{n} - n_0) \leq \widetilde{n} - n_0$, we have the following:
    \begin{itemize}
        \item The number of data with strong feature patch $n_{s,1}$ satisfies $ n_{s, 1} \geq \dfrac{\hat{C}_1 \sigma_p^4 d}{\|\boldsymbol{\mu}_1\|_2^4}, \forall s \in \{0, 1\}$.
        \item The number of data with weak feature patch $n_{s,2}$ before querying and after \textbf{Random Sampling} satisfies $n_{s, 2} \leq \dfrac{\hat{C}_2 \sigma_p^4 d}{\|\boldsymbol{\mu}_2\|_2^4 }, \forall s \in \{0, 1\}$.
       \item  The total number of data with weak feature patch $n_{1,2}$ after \textbf{Uncertainty Sampling} and \textbf{Diversity Sampling} satisfies $ n_{1, 2} \geq \dfrac{\hat{C}_1 \sigma_p^4 d}{\|\boldsymbol{\mu}_2\|_2^4} $ .
    \end{itemize}
For the sake of coherence, here $\hat{C}_1$ and $\hat{C}_2$ are some constants shared with Theorem \ref{theory XOR}.
\label{appXOR:Prop:num_n12}\end{proposition}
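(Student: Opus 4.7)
The plan is to mirror the proof of Proposition \ref{app:Prop:num_n12} almost verbatim, since the statement is purely a counting claim about the composition of labeled/pool sets and does not depend on whether the underlying feature geometry is linearly separable or XOR. The only ingredients needed are the binomial-tail bound on feature occurrences from Lemma \ref{lem:rho n}, the querying-order result from Lemma \ref{appXOR:Lem:order_pool_app}, and the size conditions imposed by Condition \ref{condition_XOR} together with Definition \ref{def_XORdata}. Below I sketch the three bullets in turn.

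For the first bullet, I would invoke Lemma \ref{lem:rho n} on $\mathcal{D}_{n_0}$, which yields $n_{0,2} \leq \tfrac{3}{2} p\, n_0$ with probability $\geq 1-\delta$, and hence $n_{0,1} = n_0 - n_{0,2} \geq (1-\tfrac{3}{2}p)n_0$. Condition \ref{condition_XOR} together with the feature-norm constraints in Definition \ref{def_XORdata} is chosen precisely so that $(1-\tfrac{3}{2}p)n_0 \geq \hat C_1\sigma_p^4 d\,\|\boldsymbol\mu_1\|_2^{-4}$ for a suitably large constant $\hat C_1$; since $n_{1,1}\geq n_{0,1}$ always, this gives both $s=0$ and $s=1$ in a single stroke.

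For the second bullet, the same lemma applied to the final labeled set under Random Sampling (viewed directly as an i.i.d.\ draw from $\mathcal D$, as justified in the remark after Lemma \ref{lem:rho n}) yields $n_{1,2}^{(\mathrm{random})} \leq \tfrac{3}{2}p\,\widetilde n$. The assumption $\widetilde n = O(p^{-1}\sigma_p^4 d\,\|\boldsymbol\mu_2\|_2^{-4})$ from Condition \ref{condition_XOR} then delivers $n_{s,2}\leq \hat C_2\sigma_p^4 d\,\|\boldsymbol\mu_2\|_2^{-4}$ for a constant $\hat C_2$, for both $s=0$ (the initial set) and $s=1$ (the Random-sampled set).

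For the third bullet, I would use Lemma \ref{appXOR:Lem:order_pool_app}, which states $\mathbf{X}_{\mathcal{P}}^1 \prec^{(t)} \mathbf{X}_{\mathcal{P}}^2$ with probability at least $1-\Theta(\delta')$, so that both Uncertainty Sampling and Diversity Sampling, when asked for the top $n^*$ samples, exhaust $\mathbf{X}_{\mathcal{P}}^2$ before touching any $\boldsymbol\mu_1$-equipped sample. Combined with the initial count $n_{0,2}\geq \tfrac{1}{2} p\, n_0$ from Lemma \ref{lem:rho n}, this gives $n_{1,2} \geq \tfrac{1}{2} p\, n_0 + n^*$, and the lower-bound condition $n^* \geq \hat C_1\sigma_p^4 d\,\|\boldsymbol\mu_2\|_2^{-4} - \tfrac{1}{2} p\, n_0$ closes the inequality. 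The main (and essentially only) obstacle is a bookkeeping one: I must ensure the pool $\mathcal P$ actually contains at least $n^*$ samples with the $\boldsymbol\mu_2$ feature, which is where the pool-size hypothesis $|\mathcal P|=\Omega(p^{-1}\sigma_p^4 d\,\|\boldsymbol\mu_2\|_2^{-4}, p^{-1}\log(1/\delta))$ from Proposition \ref{app:propXOR:sample stage:learning on sample} enters — another application of Lemma \ref{lem:rho n} to $\mathcal P$ guarantees $|\mathbf{X}_\mathcal{P}^2|\geq \tfrac{1}{2}p\,|\mathcal P|\geq n^*$, completing the argument. A final union bound over the three high-probability events delivers the claim with probability at least $1-\Theta(\delta+\delta')$.
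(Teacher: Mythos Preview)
Your proposal is correct and follows essentially the same approach as the paper: apply Lemma \ref{lem:rho n} to bound the feature counts in the initial set and the Random-sampled set, then use Lemma \ref{appXOR:Lem:order_pool_app} together with the lower bound on $n^*$ to handle the NAL case. You are in fact slightly more careful than the paper, since you explicitly verify via the pool-size hypothesis that $|\mathbf{X}_{\mathcal P}^2|\geq n^*$, a point the paper leaves implicit; note also that the paper's proof has the labels ``second bullet'' and ``third bullet'' swapped relative to the statement, but the content matches yours.
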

\textit{Proof of Proposition \ref{appXOR:Prop:num_n12}.} 
According to the conditions stated in Definition \ref{def_XOR}, we have $(1-\dfrac{3}{2}p)n_{0} \geq \frac{\hat{C}_1 \sigma_p^4 d}{\|\boldsymbol{\mu}_1\|_2^4}$ for a large constant $\hat{C}_1$. By substituting the results of $n_{p}$ for $n_0$ from Lemma \ref{lem:rho n}, as well as the definition of $n_{s, l}$, we obtain the following:
\[
n_{1, 1} \geq n_{0, 1} \geq (1-\dfrac{3}{2}p)n_{0} \geq \dfrac{\hat{C}_1 \sigma_p^4 d}{\|\boldsymbol{\mu}_1\|_2^4}.
\]
For the second bullet, by Lemma \ref{lem:rho n}, Lemma \ref{appXOR:Lem:order_pool_app} and conditions $n^* \geq \dfrac{\hat{C}_1 \sigma_p^4 d}{\|\boldsymbol{\mu}_2 \|_2^4} - \dfrac{p n_0}{2} $, we have:
\[
n_{1,2} \geq \dfrac{p n_0}{2} + n^* \geq \dfrac{\hat{C}_1 \sigma_p^4 d}{\|\boldsymbol{\mu}_2\|_2^4} 
\]
Furthermore, by using Lemma \ref{lem:rho n} and the condition $\widetilde{n} \leq \dfrac{2\hat{C}_2 \sigma_p^4 d}{3p \|\boldsymbol{\mu}_2 \|_2^4}$, the third bullet point is satisfied straightforwardly. 

Based on the results of Lemma \ref{appXOR:lemma for thm} and Proposition \ref{appXOR:Prop:num_n12}, the conclusions of Proposition \ref{prop_XOR_ini} and Theorem \ref{theory XOR} follow directly.

\section{Attribution of Lion Images}
In Figure \ref{fig:lions}, a collection of various lion images found on Google is presented. Due to the challenge of accurately determining the copyright attribution of these images, specific acknowledgments to individual websites or sources cannot be provided here. However, we express our gratitude to all creators, and sincerely hope that they do not find any offense in the use of their work for illustrative purposes in our paper.

% You can have as much text here as you want. The main body must be at most $8$ pages long.
% For the final version, one more page can be added.
% If you want, you can use an appendix like this one, even using the one-column format.
%%%%%%%%%%%%%%%%%%%%%%%%%%%%%%%%%%%%%%%%%%%%%%%%%%%%%%%%%%%%%%%%%%%%%%%%%%%%%%%
%%%%%%%%%%%%%%%%%%%%%%%%%%%%%%%%%%%%%%%%%%%%%%%%%%%%%%%%%%%%%%%%%%%%%%%%%%%%%%%

\end{document}